\documentclass[11pt,a4paper]{article}
\usepackage[T1]{fontenc}
\usepackage[margin=23mm,headheight=15pt]{geometry}
\usepackage{amsmath,amssymb,amsthm,mathtools}
\usepackage{graphicx,booktabs,tabularx,array,longtable,caption,alphalph}
\usepackage[numbers,sort&compress]{natbib}
\usepackage{xcolor,microtype,enumitem,fancyhdr}
\definecolor{inkblue}{HTML}{173D59}
\usepackage[breaklinks,colorlinks,linkcolor=inkblue,citecolor=inkblue,urlcolor=inkblue]{hyperref}
\usepackage[capitalize,nameinlink,noabbrev]{cleveref}
\hypersetup{pdftitle={Appearance Compensation in Volume Rendering},pdfauthor={Jianghong Zhou},pdfsubject={Density identifiability, physical ambiguity, appearance capacity and finite measurements},bookmarksopen=true,bookmarksnumbered=true}
\newtheorem{theorem}{Theorem}[section]
\newtheorem{lemma}[theorem]{Lemma}
\newtheorem{corollary}[theorem]{Corollary}
\newtheorem{proposition}[theorem]{Proposition}
\theoremstyle{definition}\newtheorem{assumption}[theorem]{Assumption}
\theoremstyle{remark}
\newcolumntype{Y}{>{\raggedright\arraybackslash}X}
\newcommand{\R}{\mathbb R}
\newcommand{\Sph}{\mathbb S}
\newcommand{\cR}{\mathcal R}
\newcommand{\cH}{\mathcal H}
\newcommand{\cV}{\mathcal V}
\newcommand{\cA}{\mathcal A}
\newcommand{\eps}{\varepsilon}
\newcommand{\dd}{\,\mathrm d}
\newcommand{\norm}[1]{\left\|#1\right\|}
\newcommand{\avg}[1]{\left\langle#1\right\rangle}
\newcommand{\supp}{\operatorname{supp}}
\newcommand{\sym}{\operatorname{sym}}
\newcommand{\Id}{\operatorname{Id}}

\newcommand{\Div}{\operatorname{div}}
\newcommand{\F}{\mathcal F}
\newcommand{\T}{\mathsf T}
\newcommand{\one}{\boldsymbol 1}
\numberwithin{equation}{section}
\graphicspath{{figures/}}
\makeatletter
\renewcommand{\@pnumwidth}{2.7em}
\renewcommand{\@tocrmarg}{3.7em}
\renewcommand*{\l@section}{\@dottedtocline{1}{0em}{2.8em}}
\renewcommand*{\l@subsection}{\@dottedtocline{2}{2.8em}{3.5em}}
\makeatother
\setlist{topsep=5pt,itemsep=4pt,parsep=0pt}
\begin{document}
\raggedbottom
\pagenumbering{roman}
\hypersetup{pageanchor=false}
\begin{titlepage}
\centering
\vspace*{15mm}
{\sffamily\small\color{inkblue} RESEARCH MONOGRAPH / PREPRINT}\par
\vspace{13mm}
{\Huge\bfseries Appearance Compensation\\[5pt] in Volume Rendering\par}
\vspace{15mm}
{\Large Jianghong Zhou\par}
\vspace{8mm}
{October 10, 2026\par}
\vspace{13mm}
\begin{minipage}{.93\textwidth}
\small
\begin{abstract}
Can a volume that reproduces every external RGB view still have an incorrect density? We study this question in emission--absorption rendering with an independently parameterized, physically bounded effective source. Appearance compensates a density change by replacing the radiance that it removes from the rest of each ray. We characterize the density information surviving this compensation, while retaining a common spatial appearance field across observations.

Our first result is an optimal density-only gain after finite-degree spherical-harmonic source refitting. A coupled tensor gauge and physical localized packets identify the sharp limiting nonlinear inverse coefficient as its reciprocal. Our second result realizes positive gain at every fixed finite degree for a residual set of isotropic spatial sources at one fixed density. At the same references, growing angular capacity hides a fixed nonzero density error while retaining physicality and any prescribed finite set of smoothness bounds. Local identifiability and the absence of a capacity-uniform recovery modulus therefore coexist. Our third result constructs an exact physical SH2 ambiguity: central density doubles and relative density $L^2$ error exceeds $14.52\%$, yet every external RGB ray is unchanged.

We connect these continuum results to shared finite representations, continuous pixel measurements and density-sensitive acquisition. Complete-class certificates separate SH0, SH1 and SH2 fits at one learned density. Finite-camera results expose the distinct role of spatial capacity, and transmission measurements separate otherwise identical RGB fields. Controlled training tests and captured-smoke profiles supply mechanism evidence with explicit observation scopes. Together the results make appearance capacity, spatial resolution and measurement design part of the conditions for recovering density, rather than properties that can be inferred from rendering accuracy alone.
\end{abstract}
\end{minipage}
\vfill
{\small\color{inkblue} Complete statements, proof ideas, technical proofs and experimental evidence}\par
\end{titlepage}
\hypersetup{pageanchor=true}
\pdfbookmark[0]{Contents}{contents}
\tableofcontents
\clearpage
\pagenumbering{arabic}
\part{Research question, contributions and model}

\section{Introduction}\label{m:intro}
Does agreement from every viewpoint determine a volume's density? The question matters when a reconstructed field is reused. Compositing queries transmission, and volumetric measurement queries where absorption occurs. These operations depend on density even when the captured RGB is reproduced exactly.

NeRF \cite{mildenhall2020} and 3D Gaussian splatting \cite{kerbl2023} fit geometry and directional appearance jointly. NeRF++ \cite{zhang2020} analyzes shape--radiance ambiguity and relates successful reconstruction to a preference for smooth view dependence. This motivates a quantitative question: \emph{how much density evidence survives the appearance model's best explanation?}

In emission--absorption rendering on a black background, added density $h\geq0$ blocks the attenuated radiance $U$ arriving from behind each point. Increasing the effective source $j$ by $hU$ replaces precisely that radiance:
\begin{equation}\label{m:exactcomp}
 \cR(\sigma+h,j+hU)=\cR(\sigma,j).
\end{equation}
The identity retains the changed attenuation. Its transport proof and physical bounds are given in \cref{lem:transport}. Whether a finite-degree appearance class can supply this compensation is the central issue: density evidence remains where its angular functions cannot imitate the blocked radiance.

Three results establish how appearance capacity changes what RGB can determine about density.
\begin{itemize}
\item \textbf{Low degree already permits exact ambiguity.} Bounded SH2 appearance makes two volumes with doubled central density agree on every external RGB ray. Additional views, pixel averaging and image-only losses preserve this equality.
\item \textbf{Recovery and hiding coexist at one scene.} The sharp compensated gain $\kappa_L$ determines local nonlinear conditioning. A topologically generic source family admits localized recovery at each fixed degree, with known exterior and degree-dependent neighborhoods, while increasing degree hides a fixed density error. The limiting inverse coefficient grows faster than every polynomial.
\item \textbf{Training learns compensation; attenuation reveals it.} Appearance interventions and complete-class refits expose compensation at the same learned density. Captured-smoke profiles show how isotropic refitting flattens density-scale sensitivity while opacity changes. Switching the background with source fixed separates the exact pair through transmission.
\end{itemize}
Together, these findings make appearance capacity part of the conditions for density recovery. They motivate evaluating representations by the physical quantities preserved under refitting, and designing observations for the intended reuse.

\begin{figure}[t]\centering
\includegraphics[width=\textwidth]{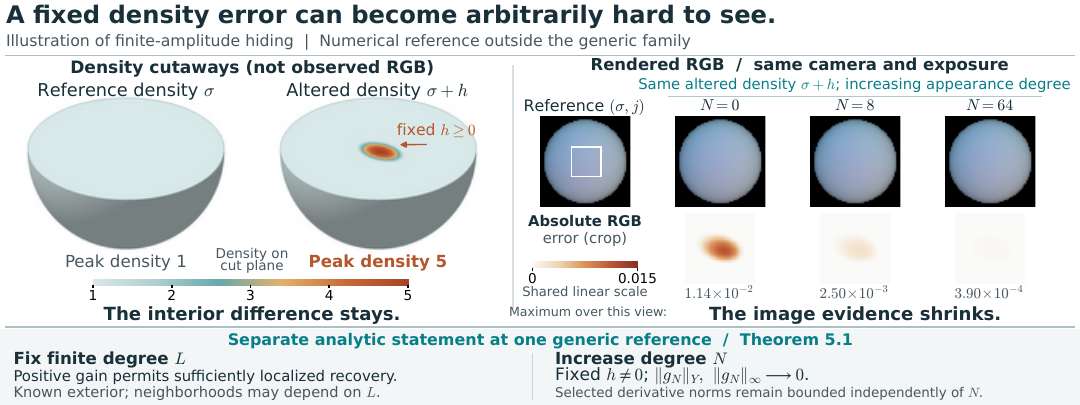}
\caption{\textbf{Appearance compensation at fixed incorrect density.} The density addition is unchanged across the appearance sequence; even $N=0$ includes a compensating source. Error crops share one linear scale and report sampled maxima. This is an affine-reference illustration, separate from the residual source class in \cref{m:coexistence}. The latter is established analytically by the source-realization argument.}\label{m:teaser}
\end{figure}

\subsection{What the theory changes about reconstruction}
Image formation maps a density and an appearance field to a collection of images. Density recovery asks whether this map distinguishes densities after appearance is allowed to change. These are different requirements: agreement of images controls a physical field only through the allowed explanations of those images. The central scientific object is therefore the best residual that remains after a \emph{shared} appearance field has absorbed the effect of a density change.

Three distinctions organize the analysis. First, an opaque surface location is a first-intersection problem, whereas a volume density controls both emission weight and transmission along an entire ray. Second, positive local gain at one finite angular degree does not supply a bound uniform over increasing degrees. Third, complete-ray observations and a finite camera array leave different freedoms to the spatial appearance field. Each distinction has a theorem-level consequence in this paper.

The positive and negative results answer complementary questions at one reference. The positive result identifies when every sufficiently localized density change leaves a quantitatively detectable signal after all permitted source changes. The negative result fixes a nonzero density change and approximates its exact compensation with increasing angular degree. The order of these quantifiers is the reason the conclusions coexist. A separate SH2 construction shows that exact all-view ambiguity can already occur at a low finite degree in a structured physical scene.

For downstream use, this moves the evaluation target from images alone to the quantity being reused. Density-dependent transmission, compositing and material measurement need evidence that constrains density across image-compatible explanations. An angular regularizer or a training schedule may prefer one explanation; the recovery theorems instead characterize information contained in the declared observations and representation.

\subsection{Contributions and the specific mathematical advances}
\begin{enumerate}[label=\textbf{\arabic*.}]
\item \textbf{Optimal density conditioning under shared angular refitting.} We derive a weighted great-circle residual that is sharp over the full spatial source class, and prove its role as the limiting nonlinear inverse coefficient. The substantive passage is from a frozen density-only projection to an actual localized nonlinear problem with source gauge freedom and physical constraints (\cref{m:thmfrozen,m:thmnonlinear,m:thmfragility}).
\item \textbf{A same-reference separation of identifiability and accuracy.} We realize generic positive gain by perturbing actual isotropic spatial sources at fixed density, and combine it with physical finite-amplitude hiding at those same references. This establishes a single-scene coexistence statement and super-polynomial growth of the limiting inverse coefficient (\cref{m:coexistence,m:superpoly}).
\item \textbf{Exact finite-degree physical ambiguity.} A triangular attenuated Abel--Volterra construction produces genuine SH2 alternatives. A continuous physical enclosure certifies the exact solution at doubled central density, rather than only a sampled fit or an infinitesimal null direction (\cref{m:thmseparation,m:twofold}).
\item \textbf{Shared-representation and finite-observation consequences.} We make the spatial sharing and observation operator explicit, derive query-dependent physical image classes, and give full-class residual gaps and continuous-renderer recovery/ambiguity certificates at fixed reference. We also state when smooth finite models transfer the continuum guarantee to positive pixel averages, and when spatial freedom instead preserves exact finite-data ambiguity (\cref{long:finite,rs:theorem,long:pixeltransfer,long:gridcomp}).
\item \textbf{Evidence tied to what is being identified.} Fixed-density refit certificates isolate angular capacity; interventions identify learned compensation; captured-smoke profiles demonstrate isotropic density-scale compensation; paired background measurements identify transmission under source preservation. The experiments distinguish these claims from generalization or convergence claims (\cref{m:numerics,long:evidence,m:discussion}).
\end{enumerate}
The contributions are the stated recovery results, physical constructions and model-specific certified consequences. Their proofs use established transport calculus, tensor analysis, approximation, transversality, convex duality and statistical testing. Specific uses are cited where they enter; no new general-purpose version of those tools is needed for the claims.

\section{A map of the claims and their evidence}\label{long:claims}
This section makes the paper's assertions explicit before the technical development. \textbf{T} denotes an analytic theorem or corollary, \textbf{C} a theorem with a computer-assisted enclosure for the stated instance, and \textbf{E} an empirical finding. The type concerns the evidence supporting the statement; a numerical illustration is never used in place of a continuum proof. The source and density classes in the linked statements are part of each claim.

\small
\begin{longtable}{@{}p{.06\textwidth}p{.06\textwidth}p{.60\textwidth}>{\raggedright\arraybackslash}p{.19\textwidth}@{}}
\toprule ID & Type & Claim and information supplied & Statement / proof\\\midrule\endfirsthead
\toprule ID & Type & Claim and information supplied & Statement / proof\\\midrule\endhead
\bottomrule\endfoot
\multicolumn{4}{@{}l}{\textbf{Core continuum theory}}\\[3pt]
C1&T&The compensated weighted great-circle gain is the optimal frozen density-only lower bound, including in any prescribed nonempty support ball.&\cref{m:thmfrozen,thm:frozen}\\[6pt]
C2&T&Positive gain gives local nonlinear density recovery with fixed exterior, small support, physical margins and the complete permitted finite-degree source class.&\cref{m:thmnonlinear,thm:nonlinear}\\[6pt]
C3&T&Physical two-scale packets make $1/\kappa_L$ a necessary inverse and asymptotic deterministic minimax coefficient. Shrinking admissible neighborhoods attains it.&\cref{m:thmfragility,sec:fragility,cap:S}\\[6pt]
C4&T&At fixed smooth density, a residual class of physical isotropic spatial sources has positive gain at every point in a fixed interior compact set and every finite degree.&\cref{m:coexistence,cap:G}\\[6pt]
C5&T&At the same references, a fixed nonzero density addition is hidden in both $Y$ and maximum RGB by physical finite-degree sources as degree grows, with finite smoothness controls.&\cref{m:coexistence,cap:R,cap:synthesis}\\[6pt]
C6&T&The optimal local inverse coefficients grow faster than every polynomial for these smooth references, with fixed-degree localization taken first.&\cref{m:superpoly,cap:S}\\[6pt]
\multicolumn{4}{@{}l}{\textbf{Physical constructions and consequences}}\\[3pt]
C7&T&A quadratic reference has exact, genuinely SH2, arbitrarily local physical partners with different densities and the same complete oriented-ray RGB.&\cref{m:thmseparation,sec:abel,app:physical}\\[6pt]
C8&C&An exact partner doubles central density and has relative density $L^2$ separation in $[.1452219,.1459197]$, with continuous source and absorption margins.&\cref{m:twofold,app:finitecertificate}\\[6pt]
C9&T/C&Image averaging and fixed image-only processing preserve exact equality. Further physical families and higher-optical-depth instances extend the construction.&\cref{m:rgbfiber,app:extensions}\\[6pt]
C10&C&Specified same-norm calibrations quantify the conditioning loss: $23.12$ to $82.92$ at the quadratic reference, and more than $16.13\times$ in an asymmetric reference.&\cref{app:calibration,app:radius,app:circlecertificate}\\[6pt]
\multicolumn{4}{@{}l}{\textbf{Shared finite representations and measurements}}\\[3pt]
C11&T&Finite shared-query image classes depend on query geometry. Antipodal SH1 queries can erase information retained by repeated directions; physical interpolation has explicit resource conditions.&\cref{q:Q1,q:antipodal,q:twolayer,q:pairdegree,q:circledegree}\\[6pt]
C12&C&At one learned density and fixed finite renderer, optimal physical SH0--SH1 and SH1--SH2 maximum-RGB residual intervals are strictly separated.&\cref{m:querytable,lr:dual,app:learnedrefit}\\[6pt]
C13&C&One fixed 27-node continuous-ray model has local full-class SH1 density stability and exact local SH3 partners at the same reference and all 384 rays.&\cref{rs:continuouslocal,rs:capacitytransitionproof}\\[6pt]
C14&C&In that model, a finite-amplitude alternative exceeds $.10$ density $L^2$ separation; separate pairwise and prior/noise certificates specify what a fit can resolve.&\cref{rs:coupledwitness,rs:continuouspairwise,rs:priornoise}\\[6pt]
C15&T&Relative sampling preserves a full-appearance density gain with all bounded source changes retained, including source kernels and changing-rank directions.&\cref{rs:theorem,rs:sharedproof,long:pixeltransfer}\\[6pt]
C16&T&Fixed finite rays or suitable continuous pixels admit exact local SH0 compensation with enough spatial freedom; a sufficiently fine but fixed physical color grid can inherit it.&\cref{rs:spatialfreedom,rs:pixelspatial,long:gridcomp}\\[6pt]
C17&T&A specified full-ray covering proof has a camera-count obstruction. A one-camera, two-parameter physical counterexample shows why that obstruction is not a universal inverse-problem lower bound.&\cref{long:cameracost}\\[6pt]
C18&T/C&Source-preserving background pairs identify transmission, distinguish the exact SH2 pair, and give a necessary/sufficient cost bracket for a declared finite-model reference-exclusion target.&\cref{bg:footprint,ac:costtheorem}\\[6pt]
C19&C&Under an explicitly assumed independent Gaussian readout law, physical pairs yield finite-sample density-risk lower bounds; these quantify statistical ambiguity for the specified measurements.&\cref{rs:gaussianpairs,rs:gaussianproof}\\[6pt]
\multicolumn{4}{@{}l}{\textbf{Experimental findings and interpretation}}\\[3pt]
C20&E&Training on controlled volumes learns appearance that masks density errors. Matched true-density controls distinguish the true-color intervention from directional ablation alone.&\cref{app:trainedcompensation,app:densityoracle}\\[6pt]
C21&E&Captured-smoke SH0 refits flatten scale sensitivity: test PSNR varies by at most $.33$ dB for sampled scales $1.5,2,3$, while predicted maximum opacity change grows from $.15$ to $.38$.&\cref{smoke:scaleprofile,smoke:grayreference}\\[6pt]
C22&E&Native-field refits show local prediction and modest renderer-specific compensation. Matched smoke and neural-field controls do not support a reproducible additional angular-compensation effect on real scenes.&\cref{smoke:angularreplication,app:officialrefit,app:sharedshdiagnostic}\\[6pt]
\end{longtable}
\normalsize

\subsection{Which components carry the theoretical contribution?}
The leading claims are C1--C8. C1--C3 identify the optimal density quantity and connect it to the actual nonlinear inverse problem. C4--C6 place positive local recovery and finite-amplitude hiding at a common physically realizable reference, with explicit order of quantifiers. C7--C8 establish exact low-degree physical instances. These are the central theorem-level statements of the work.

C9--C19 develop the consequences for measurement, representation and reuse. Some are direct corollaries of the core theory; others are application-specific constructions or certified finite models. Their interest lies in the precise physical or shared-representation statement, not in claiming invention of projection, interpolation, Sard's theorem, matrix perturbation, or a two-point testing inequality. C20--C22 test specified manifestations of compensation and delimit empirical transfer.

\subsection{How to read the monograph}
For the main theoretical argument, read \cref{m:model,m:criterion,m:capacitytheorem,m:separation}. Each central theorem has a proof sketch there and a link to its full technical proof. For finite representations and practical observation scopes, read \cref{long:finite,long:measurement}. For the evidence, read \cref{m:numerics,long:evidence}; the supporting appendices retain complete primary outcomes and the controls needed to interpret them. The full contents and the claim identifiers provide complementary routes through the paper.

\section{Related work}\label{m:related}
\paragraph{Appearance and geometry in learned fields.}
NeRF++ \cite{zhang2020} analyzes shape--radiance ambiguity and appearance priors. Ref-NeRF \cite{verbin2022} treats reflective-scene artifacts through structured appearance and normal regularization. VDN-NeRF \cite{zhu2023} studies the directional-capacity/geometry tradeoff, including the distortions caused by insufficient capacity. Fang et al.\ \cite{fang2023} estimate SH color from density and images to regularize geometry. PINF \cite{chu2022} uses opacity penalties and progressive capacity for fluid reconstruction; BayesRays \cite{goli2024} estimates learned-field uncertainty. These studies establish appearance modeling as a central part of geometric reconstruction.

\paragraph{Identifiability in rendering and tomography.}
Wang--Miller \cite{wang2026} relate texture parallax, surface misalignment and angular fitting capacity in a first-hit surface model. Joint attenuation/source identification has a broader inverse-problem history: Stefanov \cite[Sections~3, 5 and 7]{stefanov2014} analyzes planar emission tomography, deriving joint transport identities, conditional local recovery and radial nonuniqueness. Holman--Richardson \cite{holman2023} give recovery conditions for piecewise-constant attenuation, while Courdurier et al.\ \cite{courdurier2015} study additional single-scattering data. These results provide the setting for asking how geometry information depends on the admissible appearance class.

\paragraph{Finite angular sources and density information.}
Attenuated tensor tomography characterizes finite angular sources and potential kernels at fixed optical parameters \cite{psu2012,assylbekov2012,monard2018,balmonard2019}; tensor decomposition and extension estimates support stability analysis \cite{sharafutdinov1994,pz2016}. Our central object is the optimal \emph{density-only} information remaining after shared finite-degree source refitting. The resulting theory combines a sharp nonlinear inverse coefficient, physically realizable source genericity at fixed density, and finite-amplitude capacity hiding at the same reference. The bounded SH2 construction gives an exact finite-degree endpoint. Proofs develop the source-gauge, realization and physical-margin arguments around this question.

\subsection{Relation of the contributions to established results}
Joint source/attenuation nonuniqueness and shape--radiance ambiguity are established phenomena. Stefanov's planar identification analysis supplies transport and weighted-system background \cite{stefanov2014}; attenuated tensor tomography supplies the source-gauge setting \cite{balmonard2019,sharafutdinov1994}. The question answered here is the density signal remaining after shared, finite-degree directional sources are allowed to vary in a three-dimensional volume. The sharp density coefficient, physical source-realization argument and same-reference quantifiers specify that contribution.

The first-hit surface theory of Wang and Miller \cite{wang2026} studies parallax-induced angular demands for misaligned opaque surfaces. A volume changes the attenuation weight continuously along the ray. This paper keeps that changed weight in the nonlinear construction and in the stability proof, with source/color bounds checked on the resulting fields. Consequently, its exact SH2 pair, tensor-gauged local recovery and physically generic coexistence statements concern a different observation operator and unknown.

General finite-measurement inverse theory provides stability-transfer principles under specified priors and hypotheses \cite{alberti2021finite}. Our finite-model sections retain appearance nonuniqueness and the density-only target, and state the pixel geometry and constants needed to apply such reasoning. Conversely, refining a shared SH0 spatial color class after fixing the measurements can restore exact data ambiguity. Both statements make the model/measurement order essential.

The resulting theory connects the amount of surviving density information to realizable physical alternatives. Claims C1--C8 supply the sharp coefficient, establish generic fixed-degree recovery and increasing-capacity hiding at the same reference, and construct an exact low-degree endpoint. This combination turns the appearance--geometry ambiguity into quantitative statements about recoverability, conditioning and physical reuse.

\section{The volume problem}\label{m:model}
Let $\Omega=B_1\subset\R^3$. All external \emph{oriented} lines are observed, with black background and an independently specified effective source. A line is $x=p+td$, $p\perp d$, with endpoints $\pm\ell(p)$ and $\ell(p)=\sqrt{1-|p|^2}$. The three-channel data are
\begin{equation}\label{m:render}
 \cR(\sigma,j)(p,d)=\int_{-\ell}^{\ell}T(p+td,d)j(p+td,d)\dd t,
\end{equation}
where $T$ is entry-to-point exponential attenuation. The scalar density is shared across channels, and
\begin{equation}\label{m:physical}
 \sigma\geq0,\quad0\leq j_k\leq\sigma,\quad
 j_k(x,\cdot)\in\cH_{\leq L}(\Sph^2).
\end{equation}
Color is $c=j/\sigma$ where $\sigma>0$; in vacuum $j=0$ and color is irrelevant. The independent unknowns encode a learned rendering representation: density controls attenuation, while an appearance function supplies directional radiance. At positive density, freely refitting $j$ under \eqref{m:physical} is equivalent to refitting bounded color $c\in[0,1]^3$. SH degree specifies the angular freedom; the finite experiments additionally impose shared spatial coefficients.

\paragraph{Why directional sources?}
NeRF represents directional radiance as a learned output \cite{mildenhall2020}; SH voxels make its angular capacity explicit. Directional effective sources also arise from single scattering under fixed incident illumination \cite{pbr2023}. Here \emph{physical} means nonnegative attenuation and bounded color, as in \eqref{m:physical}. This representation-level class isolates what appearance freedom can remove from density evidence (\cref{app:effectivephysical}).

For the positive theorem, spatial coefficients are $C^6$ up to the boundary of $\overline\Omega$, hence uniformly bounded there. Unknown changes have compact support in a small ball $B_r(x_0)$; the exterior is fixed. Near that ball the reference has strict positive margins for $\sigma$, $j_k$, and $\sigma-j_k$. A fixed jump to exterior vacuum is permitted. These hypotheses provide physical perturbation directions and smooth interior transport weights.

The data norm, using unitary Fourier transforms in the two offset coordinates, is
\begin{equation}\label{m:Y}
 \norm g_Y^2=\frac1{4\pi^2}\int_{\Sph^2}\int_{d^\perp}
 |\xi|\,|\F_{2,p}g(\xi,d)|^2\dd\xi\,\mathrm dS(d).
\end{equation}
Line integration smooths spatial fluctuations by half a derivative. The weight $|\xi|$ compensates for that smoothing: $Y$ is the natural offset-Sobolev ray norm that compares density and data at the same spatial scale \cite{assylbekovstefanov2020}. Inverse coefficients measure density $L^2$ error per unit $Y$ error. Pixel RMS and maximum RGB are reported separately, with their own units.

Write $U(x,d)$ for the attenuated radiance from $x$ to ray exit. Transport identities \cite[Sections~3 and 5]{stefanov2014} give
\begin{align}
 D\cR_*[h,q]&=\int T_*(q-U_*h)\dd t,\label{m:linear}\\
 \cR_2-\cR_1&=\int T_1[(j_2-j_1)-U_2(\sigma_2-\sigma_1)]\dd t.
 \label{m:exact}
\end{align}
\Cref{lem:transport} proves both identities with the changed attenuation retained exactly. Source coordinates make the physical constraint meaningful even in vacuum. The factor $U_*h$ is the radiance hidden by a density increase, and $q$ is the emission available to replace it.

An admissible local class consists of these physical $C^6$ fields, with prescribed exterior, compactly supported changes, and bounded rescaled coefficient norms. Spatial bandwidth is unrestricted. The radius-$.01$ class in \cref{m:separation} specifies these bounds numerically; \cref{sec:nonlinear} gives the general rescaling.

\subsection{Notation and the three observation regimes}
\begin{center}\small
\begin{tabularx}{\textwidth}{@{}p{.19\textwidth}Y@{}}\toprule
Symbol & Meaning and role\\\midrule
$\sigma,j,c$ & Scalar attenuation density, RGB effective source, and color $c=j/\sigma$ at positive density. The source bound is $0\leq j_k\leq\sigma$.\\
$T,U$ & Entry-to-point transmission and point-to-exit attenuated remaining radiance. $U$ is determined by the reference volume.\\
$L,\cH_{\le L}$ & Maximum angular degree and its spherical-harmonic source class. This leaves the spatial source free unless a finite model is stated.\\
$x_0,C_n$ & The localization point and the great circle of directions perpendicular to a spatial frequency normal $n$.\\
$E_L,\kappa_L$ & Residual after optimal weighted circle refitting, and its minimum over all normals.\\
$Y,\|\cdot\|_2$ & Offset half-derivative norm of complete ray data, and spatial density $L^2$ norm.\\
$\lambda_L,C_E$ & Joint gauged gain and full operator perturbation cost. The absorption test compares these two quantities.\\
$s,c_{\rm fin}$ & Finite density and appearance coordinates when a finite spatial representation is specified. These are not camera indices.\\\bottomrule
\end{tabularx}\end{center}

\textbf{Complete-ray theory} observes every oriented exterior line and uses $Y$ or explicitly stated maximum RGB. \textbf{Certified finite models} specify a spatial basis and either an exact continuous ray integral or an exact finite compositor. \textbf{Captured-image experiments} report held-out camera errors in their declared encoding and resolution. These regimes answer different questions. A coefficient in $Y$ is not inserted into a pixel-noise formula without the transfer hypotheses in \cref{long:pixeltransfer}.

\clearpage
\part{Continuum density information and physical ambiguity}
\section{Density information after angular compensation}\label{m:criterion}
Freeze $T_0(d)=T_*(x_0,d)$, $U_0(d)=U_*(x_0,d)$, and define $C_n=\{d\in\Sph^2:d\cdot n=0\}$. Circle averages are normalized by $1/(2\pi)$. The restrictions of $\cH_{\leq L}(\Sph^2)$ are exactly the circle polynomials $\cV_L(C_n)$. Define
\begin{align}
 E_L(x_0,n)&=\sum_k\min_{p\in\cV_L(C_n)}
 \avg{T_0^2|U_{0,k}-p|^2}_n,\label{m:E}\\
 \kappa_L(x_0)^2&=\min_{n\in\Sph^2}E_L(x_0,n).\label{m:kappa}
\end{align}

\begin{theorem}[Exact frozen criterion]\label{m:thmfrozen}
For $A_0(h,q)=\int T_0(q-U_0h)\dd t$ and compactly supported coefficients of source degree at most $L$,
\begin{equation}\label{m:sharp}
 \norm{A_0(h,q)}_Y\geq\kappa_L(x_0)\norm h_2.
\end{equation}
This constant is optimal even if all changes must be supported in any one fixed nonempty open ball.
\end{theorem}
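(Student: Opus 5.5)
The plan is to reduce \eqref{m:sharp} to a pointwise inequality in spatial frequency, using the Fourier slice theorem for the X-ray transform. Since $T_0$ and $U_0$ depend only on $d$, we have $A_0(h,q)(p,d)=X\bigl[T_0(d)q(\cdot,d)-T_0(d)U_0(d)h\bigr](p,d)$, where $X$ is the X-ray transform in direction $d$ applied to compactly supported integrands. With the unitary convention, the slice theorem gives $\F_{2,p}(Xf)(\xi,d)=\sqrt{2\pi}\,\hat f(\xi)$ for $\xi\in d^\perp$. Hence
\[
 \norm{A_0(h,q)}_Y^2=\frac{1}{2\pi}\sum_k\int_{\Sph^2}\int_{d^\perp}|\xi|\,T_0(d)^2\bigl|\hat q_k(\xi,d)-U_{0,k}(d)\hat h(\xi)\bigr|^2\dd\xi\,\mathrm dS(d).
\]
Next I change variables from the incidence set $\{(d,\xi):\xi\perp d\}$ to $\xi\in\R^3$ together with $d\in C_{n}$, where $n=\xi/|\xi|$. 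The coarea formula gives $\mathrm dS(d)\,\dd\xi=|\xi|^{-1}\dd\xi_{\R^3}\,\dd\theta_{C_n}$, so the factor $|\xi|$ cancels exactly; this is why $Y$ is the right norm. After normalizing circle averages by $1/(2\pi)$, the constants should collapse to $\sum_k\int_{\R^3}\avg{T_0^2|\hat q_k(\xi,\cdot)-U_{0,k}\hat h(\xi)|^2}_{n}\dd\xi$. I will do this constant bookkeeping explicitly, since sharpness of the constant $1$ depends on it.

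For the lower bound, fix $\xi\neq0$. The map $d\mapsto\hat q_k(\xi,d)$ lies in $\cH_{\leq L}$, because the Fourier transform in $x$ acts coefficientwise on the spherical-harmonic expansion. Its restriction to $C_n$ therefore lies in $\cV_L(C_n)$, and so does $\hat q_k/\hat h(\xi)$ when $\hat h(\xi)\neq0$. By the definition \eqref{m:E}, the integrand is at least $E_L(x_0,n)|\hat h(\xi)|^2$, which is at least $\kappa_L(x_0)^2|\hat h(\xi)|^2$ by \eqref{m:kappa}. Plancherel then gives \eqref{m:sharp}. The minima in \eqref{m:E} exist because $\cV_L(C_n)$ is finite-dimensional, and $E_L$ is continuous in $n$ because the spaces $\cV_L(C_n)$ rotate continuously; compactness of $\Sph^2$ then makes $\kappa_L$ a true minimum attained at some $n_*$.

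For optimality inside an arbitrary ball $B$, I will construct a packet. Choose $n_*$ attaining $\kappa_L$, and choose optimal circle polynomials $p_k^*\in\cV_L(C_{n_*})$. Extend each $p_k^*$ to some $P_k\in\cH_{\leq L}(\Sph^2)$; this is possible because restriction onto circle polynomials is surjective. Set $h_\lambda(x)=\phi(x)\cos(\lambda n_*\cdot x)$ with $\phi\in C_c^\infty(B)$, and set $q_{k}(x,d)=h_\lambda(x)P_k(d)$. Both are real, supported in $B$, and of source degree at most $L$. For this choice the integrand equals $|\hat h_\lambda(\xi)|^2F(\xi/|\xi|)$, where $F(n)=\sum_k\avg{T_0^2|P_k-U_{0,k}|^2}_n$. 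The function $F$ is continuous in $n$, satisfies $F(-n)=F(n)$ (since $C_{-n}=C_n$), and has $F(n_*)=\kappa_L^2$.

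As $\lambda\to\infty$, the normalized measure $|\hat h_\lambda|^2\dd\xi/\norm{h_\lambda}_2^2$ concentrates in the cones around $\pm n_*$. The ratio $\norm{A_0}_Y^2/\norm{h_\lambda}_2^2$ therefore tends to $\kappa_L^2$, which shows the constant cannot be improved. I expect the main obstacle to be this last step: making the concentration quantitative. That requires Schwartz tail estimates for $\hat\phi$ combined with uniform continuity of $F$. The Jacobian normalization in the change of variables is the other point that needs care.
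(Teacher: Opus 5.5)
Your proposal is correct and follows essentially the same route as the paper. The paper also uses the slice factor $\sqrt{2\pi}$ and the coarea identity $\int_{\Sph^2}\delta(\xi\cdot d)f(d)\,\mathrm dS(d)=(2\pi/|\xi|)\avg{f}_{\xi^\perp}$ to get the exact frequency-wise circle-average identity, then pointwise minimization over $\cV_L(C_n)$ with Plancherel for the bound, and for sharpness real packets $\phi(y)\cos(Nn_0\cdot y)$ with $q_k=P_kh_N$ concentrating at $\pm n_0$. One step you use tacitly should be stated: $\hat q_k(\xi,\cdot)/\hat h(\xi)$ is a complex circle polynomial, but because $T_0^2$ and $U_{0,k}$ are real, the real and imaginary parts decouple, so the weighted minimum over complex $p$ equals $E_L$ and your lower bound holds unchanged.
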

\paragraph{Why a great circle?}
A spatial plane wave has frequency $\xi$. Integrating it along $d$ cancels its oscillations unless $\xi\cdot d=0$: only directions tangent to the wavefront remain. On the sphere these directions form the great circle $C_{\xi/|\xi|}$. SH polynomials restrict to degree-$L$ trigonometric polynomials on that circle. Thus the best source refit removes precisely those modes.

\paragraph{Proof mechanism.}
Fourier slice and coarea express the squared data norm as the integral of
$\avg{T_0^2|\widehat q-U_0\widehat h|^2}_{\xi^\perp}$ over spatial frequency. Projecting out each circle polynomial proves the bound. For sharpness, lift an optimal circle polynomial to $P(d)$ and choose real compactly supported packets $h_N=\phi\cos(Nn_0\cdot x)$, $q_N=P h_N$. Their frequency normals approach a minimizing $\pm n_0$. The full proof is \cref{thm:frozen}.

\paragraph{Explicit computation.}
For $e_a=e^{ia\theta}$, $|a|\leq L$, form the weighted Gram matrix $G_{ab}=\avg{T_0^2\bar e_a e_b}$ and $b_{k,a}=\avg{T_0^2\bar e_aU_{0,k}}$. Weighted projection (\cref{eq:gram}) gives
\begin{equation}\label{m:gram}
 E_L=\sum_k\left[\avg{T_0^2U_{0,k}^2}-b_k^*G^{-1}b_k\right].
\end{equation}
When $T_0=t_0$ is constant this is $t_0^2$ times the circle Fourier energy above order $L$. For nonconstant $T_0$, taking the unweighted tail of $T_0U_0$ is incorrect: the nuisance space is $T_0\cV_L$. A QR residual is preferable to subtracting nearly equal quantities in \eqref{m:gram}.

We abbreviate $\kappa_L(x_0)$ to $\kappa_L$ only while the base point is fixed. For one fixed reference, nested source spaces imply
\begin{equation}\label{m:ladder}
 \kappa_{L+1}\leq\kappa_L,\qquad\kappa_L\longrightarrow0.
\end{equation}
Thus increasing capacity can only remove density evidence at the same scene. With a uniform circle derivative bound $S_s$, the remaining signal obeys $\kappa_L\leq t_+S_s/(L+1)^s$ (\cref{cor:ladder,eq:tailrate}).

\subsection{A sufficient nonlinear theorem}
For $L\geq1$, write a source using trace-inclusive symmetric tensors of ranks $L$ and $L-1$. The attenuation couples their potential parts. A zero-boundary potential preserves source data; choosing the correct coupled gauge makes both positive-rank representatives interior solenoidal. A Euclidean ball extension estimate controls the boundary contribution of their zero extensions. These steps make the estimate uniform over the full spatial source class: invisible source directions must be controlled before perturbing the transport weights.

Let $\lambda_L>0$ be a conservative gain for the full gauged density/source unknown, and let $C_E$ bound the full operator's departure from its frozen reference. The gauge condition is $\sqrt{3L}\,r\norm{\sigma_1}_\infty/\pi<1$. \Cref{sec:nonlinear,app:ball} define the rescaled Fourier norm and extension constants. These are finite at each fixed rank and numerically explicit here through rank one; the general sufficient radius is qualitative in rank, whereas \eqref{m:gram} computes $\kappa_L$ for every $L$.

\begin{theorem}[Local nonlinear density recovery]\label{m:thmnonlinear}
Under \cref{m:model}'s field assumptions, if $\kappa_L(x_0)>0$, then for every fixed finite $L$ there is a sufficiently small unknown-support ball and a nonempty relative $C^6$ neighborhood in which density is unique. More explicitly, the gauge smallness condition and $C_E<\lambda_L$ imply
\begin{equation}\label{m:nonlinearbound}
 \norm{\sigma_2-\sigma_1}_2
 \leq\frac{\norm{\cR_2-\cR_1}_Y}{\kappa_L(1-C_E/\lambda_L)}.
\end{equation}
\end{theorem}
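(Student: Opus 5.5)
The argument starts from the exact difference identity \eqref{m:exact}, which \cref{lem:transport} proves with the changed attenuation retained. Set $h=\sigma_2-\sigma_1$ and $q=j_2-j_1$. Both are supported in $B_r(x_0)$ and $q(x,\cdot)\in\cH_{\le L}$. Then
\[
\cR_2-\cR_1=A(h,q):=\int T_1\bigl(q-U_2h\bigr)\dd t .
\]
This is linear in $(h,q)$ once the weights $T_1,U_2$ are fixed. I write $A=A_0+(A-A_0)$, where $A_0$ is the frozen operator of \cref{m:thmfrozen} with weights $T_0=T_*(x_0,\cdot)$ and $U_0=U_*(x_0,\cdot)$ at the base point. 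The plan is to prove a lower bound $\norm{A_0(h,q)}_Y\ge\lambda_L\norm{(h,q)}_{\rm gauged}$ in a norm that also controls the source, together with $\norm{(A-A_0)(h,q)}_Y\le C_E\norm{(h,q)}_{\rm gauged}$. Then $\norm{A(h,q)}_Y\ge(\lambda_L-C_E)\norm{(h,q)}_{\rm gauged}$. The final step converts this into the density bound \eqref{m:nonlinearbound}, and gives uniqueness when the data agree.

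\textbf{Step 1 (gauge).} The frozen estimate \eqref{m:sharp} only bounds $h$. It does not bound $q$, and $q$ contains invisible directions: potential parts of the rank-$L$ and rank-$(L-1)$ tensors carry zero data. An uncontrolled $q$ would make the perturbation $A-A_0$ impossible to absorb. I therefore decompose $q$ into trace-inclusive symmetric tensors $f_L,f_{L-1}$. Because of the attenuation, the kernel of the weighted transform consists of pairs $(f_L,f_{L-1})=(\sym\nabla v+\dots,\ \sym\nabla w+\sigma_1 v+\dots)$ with $v,w$ vanishing on $\partial B_r$. Subtracting such a pair leaves $\cR$ unchanged. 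I choose $(v,w)$ by solving the coupled elliptic system that makes both remainders solenoidal in $B_r$. Solvability uses a Poincaré-type bound: the coupling term $\sigma_1 v$ is a contraction when $\sqrt{3L}\,r\norm{\sigma_1}_\infty/\pi<1$, where $\pi/r$ is the first Dirichlet eigenvalue scale of the ball. This is the stated gauge condition.

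\textbf{Step 2 (frozen coercivity).} For gauged solenoidal $(h,f_L,f_{L-1})$ I extend by zero to $\R^3$. The ball extension estimate of \cref{app:ball} controls the boundary contribution, so the Fourier-transformed tensors stay essentially tangential. Fourier slice and coarea, as in the proof of \cref{m:thmfrozen}, then give
\[
\norm{A_0}_Y^2\ \ge\ \int\avg{T_0^2|\widehat q-U_0\widehat h|^2}_{\xi^\perp}\dd\xi .
\]
Splitting off the projection of $U_0\widehat h$ onto $T_0\cV_L(C_n)$ yields $\kappa_L\norm h_2$. The orthogonal remainder controls the solenoidal source, because a solenoidal tensor is determined by its great-circle restrictions. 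Together these give the joint gain $\lambda_L>0$ with $\lambda_L\le\kappa_L$, with the density component bounded below by $\kappa_L$.

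\textbf{Step 3 (perturbation).} Next I bound $C_E$, the $Y$-cost of replacing $T_1,U_2$ by their frozen values. On $B_r$, the $C^6$ reference and the relative $C^6$ neighborhood make $T_1-T_0$ and $U_2-U_0$ of size $O(r)$ plus a term proportional to the neighborhood radius, in the rescaled norms. The $Y$ norm is a half-derivative norm, so I apply a commutator/multiplier estimate for ray transforms with smooth weights. Six derivatives suffice to treat the weights as bounded multipliers on the relevant Sobolev scale. Shrinking $r$ and the neighborhood forces $C_E<\lambda_L$.

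The last step assembles the density bound. Orthogonality in Step 2 gives $\norm{A(h,q)}_Y\ge\kappa_L\norm h_2-C_E\norm{(h,q)}_{\rm gauged}$. Combining this with $\norm{(h,q)}_{\rm gauged}\le\norm{A(h,q)}_Y/(\lambda_L-C_E)$ should yield $\kappa_L(1-C_E/\lambda_L)\norm h_2\le\norm{\cR_2-\cR_1}_Y$, which is \eqref{m:nonlinearbound}.

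I expect Step 2 to be the main obstacle. Closing the coupled rank-$L$ and rank-$(L-1)$ gauge and bounding the zero-extension boundary terms must hold uniformly over the whole spatial source class, so that no invisible direction escapes before the weights are perturbed. My first attempt would be to prove coercivity rank by rank and track the extension constants explicitly, as the paper indicates it does through rank one.
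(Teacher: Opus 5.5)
Your proposal is correct and follows the paper's proof. Its ingredients are a coupled zero-boundary potential gauge built with the exact attenuation $T_1$ under $\sqrt{3L}\,r\norm{\sigma_1}_\infty/\pi<1$; ball-projection and circle-frame bounds giving the joint frozen gain $\lambda_L$; a weight-perturbation cost $C_E$ absorbed through $\lambda_L$; and the sharp $\kappa_L$ estimate kept for the numerator. Your final assembly subtracts $C_E$ times the gauged norm from the $\kappa_L$ bound and then bounds that norm by $\norm{\cR_2-\cR_1}_Y/(\lambda_L-C_E)$; this rearranges the paper's chain $\norm{g}_Y\geq(1-C_E/\lambda_L)\norm{A_0z}_Y\geq\kappa_L(1-C_E/\lambda_L)\norm{h}_2$ and gives the same constant, and the gauge step only needs zero-boundary potentials to be invisible, not to exhaust the kernel.
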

The argument first bounds the entire gauged perturbation by $\norm z\leq\norm{A_0z}/\lambda_L$, then applies the sharp density estimate. The relative perturbation cost $C_E/\lambda_L$ vanishes as the support and reference neighborhood shrink, revealing $1/\kappa_L$ as the limiting inverse coefficient (\cref{thm:nonlinear,cap:S}).

\subsection{A necessary bound for the actual problem}
Freeze neither the packet envelope nor its oscillation count. Let
$h_{r,N}=r^{-3/2}\phi((x-x_0)/r)\cos(Nn_0\cdot(x-x_0)/r)$, normalized in $L^2$, and set $q_{r,N}=P(d)h_{r,N}$. The true derivative obeys
\begin{equation}\label{m:packet}
 \norm{D\cR_*[h_{r,N},q_{r,N}]}_Y
 \leq\sqrt{\kappa_L^2+C_{\rm wp}/N^2}+C_{\rm sp}r.
\end{equation}
The two error terms control finite frequency and spatial support (\cref{eq:twoscales}); taking $r\to0$, $N\to\infty$ gives the matching limit $\kappa_L$. Choosing amplitude last preserves the strict physical box and $C^6$ neighborhood.

\begin{theorem}[Nonlinear fragility]\label{m:thmfragility}
Any local density inverse in the admissible infinite-dimensional class has $Y$-to-$L^2$ Lipschitz coefficient at least $1/\kappa_L(x_0)$. The same coefficient lower-bounds the asymptotic deterministic minimax error divided by noise radius. If $\kappa_L=0$, no finite Lipschitz coefficient exists.
\end{theorem}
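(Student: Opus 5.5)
The plan is to convert the two-scale packet bound \eqref{m:packet} into a statement about the actual nonlinear map, by building pairs of admissible physical volumes whose densities differ by a fixed $L^2$ amount and whose data differ by only slightly more than $\kappa_L$ times that amount in $Y$. Fix a reference $(\sigma_*,j_*)$ in the admissible class with strict margins near $x_0$. Choose $n_0$ minimizing $E_L(x_0,\cdot)$ and lift each channel's optimal circle polynomial to $P_k\in\cH_{\le L}$. With the packets $h_{r,N}$, $q_{r,N}=P h_{r,N}$ of \eqref{m:packet}, set $\sigma_\epsilon=\sigma_*+\epsilon h_{r,N}$ and $j_\epsilon=j_*+\epsilon q_{r,N}$. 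The packets are bounded in sup norm for fixed $(r,N)$, and the margins for $\sigma$, $j_k$ and $\sigma-j_k$ are strict. So for $\epsilon$ small, chosen last as the paper indicates, the perturbed field lies in the strict physical box, in the prescribed relative $C^6$ neighborhood and in the fixed-exterior class. The support lies in $B_{Cr}(x_0)$.

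\textbf{Step 1 (linearization).} Differentiability of $\cR$ in $Y$ at the reference gives $\cR(\sigma_\epsilon,j_\epsilon)-\cR(\sigma_*,j_*)=\epsilon D\cR_*[h_{r,N},q_{r,N}]+o(\epsilon)$ for fixed $(r,N)$. I would obtain this from the exact identity \eqref{m:exact} of \cref{lem:transport}, whose right-hand side equals the linear term plus a quadratic term $\epsilon^2\int (T_\epsilon-T_*)\cdots$. That quadratic term is bounded in $Y$ using the smoothness of $h_{r,N}$ at fixed $(r,N)$. Since $\|h_{r,N}\|_2=1$, any local inverse $\Phi$ with $Y$-to-$L^2$ Lipschitz constant $M$ satisfies
\[
\epsilon=\|\sigma_\epsilon-\sigma_*\|_2\le M\bigl(\epsilon\sqrt{\kappa_L^2+C_{\rm wp}/N^2}+\epsilon C_{\rm sp}r+o(\epsilon)\bigr).
\]
Divide by $\epsilon$ and let $\epsilon\to0$, then $N\to\infty$, then $r\to0$. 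This gives $M\ge1/\kappa_L$. Keeping $r$ and $N$ bounded at the first limit is the reason for that order.

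\textbf{Step 2 (minimax and degenerate case).} For the minimax statement, note that the two hypotheses $(\sigma_*,j_*)$ and $(\sigma_\epsilon,j_\epsilon)$ produce data within $Y$-distance $\delta=\epsilon(\kappa_L+\eta)$, with $\eta\to0$ along the limits. A standard two-point argument then applies: given datum $g$ equal to the midpoint, both hypotheses are consistent with noise radius $\delta/2$. Any estimator therefore errs by at least $\epsilon/2$ on one of them, so the worst-case error divided by noise radius is at least $1/(\kappa_L+\eta)$, asymptotically as $\delta\to0$. If $\kappa_L=0$, the same sequence has data difference $o(\epsilon)$ relative to a density difference $\epsilon$, so no finite $M$ exists.

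\textbf{Main obstacle.} The hard part is proving \eqref{m:packet} itself with constants independent of $N$ and $r$, that is, with transport weights unfrozen. The frequency-error term $C_{\rm wp}/N^2$ comes from the Fourier slice and coarea computation in \cref{thm:frozen}. The Gaussian-like envelope spreads the frequency normals over a cap of width $O(1/N)$ around $\pm n_0$. Because $E_L(x_0,\cdot)$ is smooth and minimized at $n_0$, this costs $O(N^{-2})$, which would be verified by expanding in the frequency coordinate. The spatial term $C_{\rm sp}r$ comes from replacing $T_*(x,d)$ and $U_*(x,d)$ by their values at $x_0$ on the support. It uses the $C^6$ (in fact $C^1$) bounds on the weights together with a $Y$-boundedness estimate for the ray transform with smooth weights applied to $L^2$-normalized fields. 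I expect this weighted $X$-ray continuity in $Y$ to be the technical core. I would first try treating it as a pseudodifferential operator of order $-1/2$, with $Y$ compensating the order, and commuting the smooth weight difference of size $O(r)$ on the support.
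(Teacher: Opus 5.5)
Your proposal is correct and is essentially the paper's proof. It uses the same $L^2$-normalized packets $h_{r,N}$ with $q_{r,N}=Ph_{r,N}$, chooses the amplitude last so that the exact transport identity leaves an $O(\epsilon^2)$ remainder in $Y$ at fixed $(r,N)$, lets $N\to\infty$ and $r\to0$ in \eqref{m:packet}, and closes the minimax bound with the midpoint two-point argument; your iterated limit in $\epsilon$ is equivalent to the paper's diagonal choice of amplitudes $a_j$.

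For the step you flag as the obstacle, the paper uses no pseudodifferential calculus. It rescales to unit scale, where the cutoff weight difference $\chi(y)[W(x_0+ry,d)-W(x_0,d)]$, with $W=T_*(P-U_*)$, has spatial $C^4$ norm $O(r)$ and the $Y$ norm is dilation invariant, and then applies the Fourier--Wiener estimate \eqref{eq:weightbound}; sup-norm smallness of the weight on the support would not by itself control the $L^2\to Y$ norm. Likewise, the $C_{\rm wp}/N^2$ term comes from quadratic growth at $\pm n_0$ of the fixed-$P$ residual $F(n)=\avg{|T_0(P-U_0)|^2}_n\geq E_L(x_0,n)$, not from $E_L$ itself.
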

Thus physical shrinking packets transfer the frozen optimum to nonlinear recovery (\cref{sec:fragility}).

\subsection{Why symbol positivity alone is insufficient}
The great-circle residual determines a necessary local density signal, but the source itself has potential directions that integrate to zero. A positive density component of the principal symbol therefore cannot be promoted directly to a full nonlinear inverse. The proof first represents finite angular functions by trace-inclusive tensors, uses a coupled potential gauge, and controls extension through the support boundary. The resulting joint estimate absorbs weight variation before the sharp density estimate is applied.

This is why the denominator in \cref{m:thmnonlinear} contains $1-C_E/\lambda_L$, even though the numerator is governed by the sharper density constant $\kappa_L$. Replacing the joint absorption condition by a comparison with $\kappa_L$ would remove a necessary proof step. On the other side, physical localized packets show that the remaining $1/\kappa_L$ cannot be improved in the shrinking-neighborhood limit. The proof addresses all permitted shared source changes, rather than testing only a chosen compensation envelope.

\section{Local identifiability without uniform stability}\label{m:capacitytheorem}
The preceding estimates distinguish two limits: shrinking a local recovery class at fixed degree, and increasing appearance degree within a fixed smoothness neighborhood. The following result places both at the \emph{same} reference.

Fix smooth $\sigma\geq0$ on $\overline\Omega$ and compact sets $K,D\subset\Omega$ with $K\Subset\operatorname{int}D$, with $\operatorname{int}K\ne\varnothing$. Let $j_b$ be a smooth physical isotropic RGB source and let
\begin{align}
 X_D&=\{v\in C^\infty(\overline\Omega;\R^3):\supp v\subset D\},\label{m:XD}\\
 \mathcal A_\mu&=\{j_b+v:v\in X_D,\ \mu<j_k<\sigma-\mu\text{ on }D\}\label{m:Amu}
\end{align}
be nonempty, for $\mu>0$. Genericity is topological, not probabilistic: it means dense residual in the $C^\infty$ Fr\'echet topology of this fixed-support source class. The density stays fixed while isotropic spatial sources vary.

For $\Omega_i=(e_i\times d)\cdot\nabla_d$, use the mixed seminorm
\begin{equation}\label{m:mixed}
 \norm f_{m,s}=\max_{k,|\alpha|\leq m,|w|\leq s}
 \sup_{\overline\Omega\times\Sph^2}|D_x^\alpha\Omega_w f_k|,
\end{equation}
where $w$ is an ordered word, including the empty word. At fixed $L$, let $C_{\mathrm{opt},L}(r,\eps)$ be the supremum of density $L^2$/data $Y$ ratios over physical pairs in the full coefficient $C^6$ $\eps$-neighborhood with changes supported in $\overline{B_r(x_0)}$ and known reference exterior. These classes include arbitrarily small multiples of every smooth supported degree-$L$ direction.

\begin{theorem}[Same-reference capacity separation]\label{m:coexistence}
There is a dense residual $\mathcal G\subset\mathcal A_\mu$ such that each reference $(\sigma,j)$ with $j\in\mathcal G$ has both properties:
\begin{enumerate}
\item For every fixed finite $L$ and $x_0\in K$, $\kappa_L(x_0)>0$ and density admits sufficiently small-support local nonlinear recovery with known exterior. Along shrinking neighborhoods with uniform $C_E\to0$ and $\overline{B_{2r}(x_0)}\subset\operatorname{int}D$,
\begin{equation}\label{m:sharpC}
 C_L^\sharp(x_0):=\lim_i C_{\mathrm{opt},L}(r_i,\eps_i)
 =\kappa_L(x_0)^{-1}.
\end{equation}
\item For any fixed nonzero $h\geq0$ smooth and compactly supported in $\operatorname{int}K$, physical sources $j_N\in\cH_{\leq N}$ at density $\sigma+h$, with $j_N=j$ outside $K$, satisfy
\begin{equation}\label{m:hiding}
 \begin{gathered}
 g_N=\cR(\sigma+h,j_N)-\cR(\sigma,j),\\
 \norm{g_N}_Y+\norm{g_N}_\infty\longrightarrow0.
 \end{gathered}
\end{equation}
The density distance stays $\norm h_2>0$, and every fixed finite list of derivative norms has a common bound independent of $N$. Moreover, inside \emph{any} prescribed neighborhood defined by finitely many such seminorms, one can first choose a sufficiently small, fixed $h=a h_0\ne0$ and then obtain \eqref{m:hiding} as $N$ increases.
\end{enumerate}
Consequently no density recovery modulus tending to zero can hold independently of angular degree in any such fixed neighborhood.

Both conclusions also hold in the scalar source class rendered as $j=\mathbf a f$, with fixed $\mathbf a\in(0,1]^3$, $\max_k a_k=1$, throughout the volume. Genericity uses the scalar analogue of $\mathcal A_\mu$; $C_{\mathrm{opt},L}$ uses the restricted local class and every $j_N$ has these same proportions.
\end{theorem}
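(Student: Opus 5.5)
The proof separates into a genericity argument for part 1 and a constructive approximation argument for part 2, joined at the end by the order of quantifiers.

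\textbf{Part 1.} The plan is to show that the set $\mathcal G_L$ of $j\in\mathcal A_\mu$ with $\kappa_L(x_0)>0$ for all $x_0\in K$ is open and dense, for each fixed $L$, and then set $\mathcal G=\bigcap_L\mathcal G_L$.

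\emph{Openness.} $E_L(x_0,n)$ depends continuously on $(x_0,n)$ and on $j$ in a finite $C^k$ topology, through $T_0$ and $U_0$. Since $K\times\Sph^2$ is compact, positivity of the minimum persists under small perturbations.

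\emph{Density.} Suppose $E_L(x_0,n)=0$. Then every channel of $T_0U_{0,k}$ restricted to $C_n$ lies in $T_0\cV_L(C_n)$, so $U_{0,k}|_{C_n}$ is a trigonometric polynomial of degree at most $L$. Because $T$ does not depend on $j$, the map $v\mapsto U[j_b+v](x_0,\cdot)$ is affine in the isotropic source. Its linear part sends $v$ to the attenuated forward ray transform $\int_0^{\ell}e^{-\int_0^t\sigma}v(x_0+td)\dd t$ from $x_0$.

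I would use a transversality/Sard argument in the style of a parametric transversality theorem.
\begin{itemize}
\item Choose finitely many bumps $v_1,\dots,v_M$ supported in $D\setminus K$, or near $x_0$ within $D$.
\item Their contributions to $U(x_0,\cdot)$ on each circle can independently excite the circle Fourier modes $L+1,\dots,L+m$, with $m$ large.
\item The bad condition then has codimension exceeding the dimension $5$ of $(x_0,n)\in K\times\Sph^2$.
\item Hence for almost every parameter in $\R^M$ the perturbed source has no bad $(x_0,n)$.
\end{itemize}
Small parameters keep the strict bounds $\mu<j_k<\sigma-\mu$.

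\emph{Main obstacle.} The delicate point is the rank of this evaluation map, uniformly over $(x_0,n)$. Bumps located behind $x_0$ along directions in $C_n$ produce localized angular spikes in $U_0$, which have nonzero high circle modes. A finite covering of $K\times\Sph^2$ then gives uniformity. Once $\kappa_L>0$, \cref{m:thmnonlinear} gives local recovery, and \cref{m:thmfragility} gives the matching lower bound. Together they yield \eqref{m:sharpC}, since $C_E\to0$ forces the upper coefficient $1/(\kappa_L(1-C_E/\lambda_L))\to1/\kappa_L$.

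\textbf{Part 2.} I would start from the exact compensation \eqref{m:exactcomp}: the pair $(\sigma+h,\,j+hU)$ with $U=U[\sigma,j]$ gives identical data. The remaining steps are as follows.
\begin{itemize}
\item The source $j+hU$ is smooth, equals $j$ outside $\supp h\subset\operatorname{int}K$, and is physical. This needs $hU_k\leq h$, i.e.\ $U_k\leq1$, together with the margin $j_k<\sigma-\mu$ at points where $h>0$. If $U_k\leq1$ fails, use a small multiple $h=ah_0$ and the margin $\mu$.
\item Truncate $hU$ in angle by a smooth spherical-harmonic cutoff (de la Vall\'ee Poussin type) to obtain $q_N\in\cH_{\leq N}$. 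Because $\Omega_w$ commutes with SH projections, every mixed seminorm $\norm{\cdot}_{m,s}$ stays bounded uniformly in $N$, and $q_N\to hU$ in all $C^k$.
\item Set $j_N=j+q_N$. Since $j$ is isotropic and $q_N$ has degree at most $N$, $j_N\in\cH_{\leq N}$. For large $N$ the uniform convergence keeps $0\leq j_{N,k}\leq\sigma+h$, using the strict margins.
\item By \eqref{m:exact}, $g_N=\int T_1(q_N-hU)\dd t$. Thus $\norm{g_N}_\infty\to0$ directly, and $\norm{g_N}_Y\to0$ because $Y$ is bounded by an $H^{1/2}$-type norm of the source difference, whose smooth norms go to zero.
\end{itemize}
For the neighborhood statement, pick $h=ah_0$ with $a$ small enough that $(\sigma+h,j+hU)$ lies in the prescribed seminorm neighborhood with room to spare. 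The approximants $j_N$ then remain in that neighborhood. This already yields the failure of any capacity-uniform modulus.

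\textbf{Scalar class.} With $j=\mathbf a f$, one has $U=\mathbf a u$, so the compensating source $hU=\mathbf a(hu)$ keeps the proportions $\mathbf a$, and the truncation preserves them. The genericity argument of part 1 runs unchanged with scalar bumps.
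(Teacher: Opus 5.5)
Your architecture is the paper's. It uses parametric transversality and Sard over the five-dimensional point--normal base, with a finite cover and a Baire intersection over $L$. It obtains \eqref{m:sharpC} by squeezing between \cref{m:thmnonlinear} and \cref{m:thmfragility}. It builds the hiding sequence from the exact compensation $j+hU$ followed by angular approximation.

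The step that is not established is the one you flag yourself: surjectivity of the parameter derivative onto the high circle moments, uniformly near every $(x_0,n)$. That each localized bump yields a spike with \emph{some} nonzero high mode does not show that finitely many bumps span all $2m$ real moments. Without that spanning, the transversality theorem gives nothing.

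The paper closes this step with an exact construction rather than an approximation. Take $\chi\in C_c^\infty(r_-,r_+)$ with $\int\chi=1$ and $r_+<\operatorname{dist}(x_0,D^c)$. The isotropic shell source $v_H(x_0+td)=\chi(t)\exp\bigl(\int_0^t\sigma(x_0+sd)\dd s\bigr)H(d)$ cancels the attenuation, so $U_{v_H}(x_0,\cdot)=H$ for every smooth $H$. Taking $H$ to be spherical lifts of $2\cos m_q\theta$ and $2\sin m_q\theta$, $m_q=L+1,L+2,L+3$, in the first channel only, makes the moment derivative the identity on $\R^6$. Six components already exceed the base dimension five, so letting $m$ grow is unnecessary.

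If you prefer spikes, you need a counting argument. Place more than $2(L+m)$ thin bumps at distinct angles in the plane $x_0+n^\perp$. A nonzero real trigonometric polynomial with frequencies in $\{L+1,\dots,L+m\}$ has at most $2(L+m)$ zeros, so the limiting moment vectors span $\R^{2m}$. Spanning is stable under the thin-bump error and under small changes of $(x_0,n)$.

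Smaller points:
\begin{itemize}
\item Drop the hedge about $U_k\le1$. Since $0\le j_k\le\sigma$, transport comparison gives $0\le U_k\le1-e^{-\int\sigma}\le1$ on every ray. This is exactly what makes the claim hold for an \emph{arbitrary} fixed $h\ge0$; your fallback of shrinking $h$ would prove only a weaker statement.
\item With a non-positive smooth filter, physicality of $j_N$ holds only once $\norm h_\infty$ times the uniform approximation error falls below the margin $\mu$ on $D\supset\supp h$. That suffices here. The paper's positive kernel \eqref{m:positiveoperator} instead gives $0\le P_N\le1$, hence physicality for every $N$ with no slack required. Its unit mass and commutation also give the $N$-uniform mixed bounds directly.
\item The $Y$ estimate is really the weighted-transform bound \eqref{eq:weightbound}. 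It needs spatial $C^4$ convergence of $T_{\sigma+h}$ times the angular error, uniformly in direction; your smooth convergence supplies this, so the vague ``$H^{1/2}$-type'' phrasing should be replaced by it.
\item In the scalar class, the lower half of \eqref{m:sharpC} needs the fragility packets to stay in the restricted class. Since $U_{\mathbf af}=\mathbf aU_f$, the optimal circle polynomials are $a_kp$ for one scalar $p$. Hence the increment $\mathbf aP\,h_{\rm packet}$ is admissible and $\kappa_{L,\mathbf af}=\norm{\mathbf a}_2\kappa^{\rm sc}_{L,f}$.
\end{itemize}
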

\begin{proof}[Proof mechanism]
Use sine/cosine moments at frequencies $L+1,L+2,L+3$ of one source channel: six real components over the five-dimensional $(x,n)$ base. A spatial shell controls all six. A finite cover, Sard and Baire give generic positive gain at every finite degree. The coupled gauge in \cref{m:thmnonlinear} and physical packets in \cref{m:thmfragility} give \eqref{m:sharpC}, including within fixed channel proportions.

For the negative result, apply the positive spherical operator \cite{yang2011}
\begin{equation}\label{m:positiveoperator}
 P_N=\cV_N U:=\int_{\Sph^2}(N+1)
 \left(\frac{1+d\cdot e}{2}\right)^N U(x,e)\frac{\mathrm dS(e)}{4\pi},
\end{equation}
and set $j_N=j+hP_N$. Its unit-mass kernel preserves $0\leq P_N\leq1$ and commutes with spatial and rotation derivatives. The exact residual is $X_{T_{\sigma+h}(P_N-U)}h$. Uniform spatial $C^4$ convergence controls $Y$; uniform convergence also controls maximum RGB. Scaling $h=a h_0$ once fits any finite list of neighborhood constraints. The complete arguments are in \cref{cap:G,cap:S,cap:R,cap:synthesis}.
\end{proof}

\begin{corollary}[Super-polynomial local conditioning]\label{m:superpoly}
At a reference in \cref{m:coexistence}, fix $x_0$. For every $p>0$,
\begin{equation}\label{m:superpolyformula}
 \frac{C_L^\sharp(x_0)}{(L+1)^p}\longrightarrow\infty.
\end{equation}
\end{corollary}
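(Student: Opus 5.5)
By \eqref{m:sharpC}, $C_L^\sharp(x_0)=\kappa_L(x_0)^{-1}$ for every finite $L$. The claim is therefore equivalent to $(L+1)^p\kappa_L(x_0)\to0$ for every $p>0$. The plan has two parts: show super-polynomial decay of $\kappa_L$ from the circle-approximation structure of \eqref{m:E}, and check that the reference of \cref{m:coexistence} satisfies the needed smoothness.

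\emph{Step 1: the frozen data are smooth on the sphere.} The density $\sigma$ is smooth on $\overline\Omega$ and $j\in\mathcal G\subset\mathcal A_\mu$ is a smooth isotropic source. Hence the frozen transmission $T_0(d)=T_*(x_0,d)$ and the remaining radiance $U_0(d)=U_*(x_0,d)$ are line integrals of smooth integrands along the chord through the interior point $x_0$ in direction $d$. The chord endpoints $\pm\ell$ depend smoothly on $d$, because $x_0$ lies strictly inside $B_1$ and every chord through it crosses the sphere transversally. A fixed jump to exterior vacuum only truncates at those smooth endpoints. Thus $T_0$ and $U_{0,k}$ are $C^\infty$ on $\Sph^2$, and every angular derivative $\Omega_w$ of them is bounded uniformly in $d$.

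\emph{Step 2: rate for $\kappa_L$.} Choose $n$ arbitrarily; since $\kappa_L^2\le E_L(x_0,n)$, a single great circle suffices. The weight obeys $0<T_0\le t_+\le1$. Bound the weighted minimum in \eqref{m:E} by the unweighted best uniform approximation:
\[
E_L(x_0,n)\le t_+^2\sum_k\min_{p\in\cV_L(C_n)}\sup_{C_n}|U_{0,k}-p|^2 .
\]
By Jackson's theorem for trigonometric polynomials on the circle, an $s$-times differentiable function has best degree-$L$ error at most $C_s\|\partial_\theta^sU_{0,k}\|_\infty/(L+1)^s$. This is exactly the bound $\kappa_L\le t_+S_s/(L+1)^s$ recorded after \eqref{m:ladder} (\cref{cor:ladder,eq:tailrate}). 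Step 1 gives $S_s<\infty$ for every $s$, where $S_s$ uses the tangential derivative along $C_n$, which is a combination of the $\Omega_i$. Given $p$, take $s=\lceil p\rceil+1$. Then
\[
\frac{C_L^\sharp(x_0)}{(L+1)^p}=\frac{1}{(L+1)^p\kappa_L}\ge\frac{(L+1)^{s-p}}{t_+S_s}\longrightarrow\infty .
\]
Positivity of $\kappa_L$, which makes the reciprocal finite, is part (1) of \cref{m:coexistence}.

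\emph{Main obstacle.} The delicate point is Step 1: showing that $U_0$ is $C^\infty$ in $d$ and not merely $C^6$. The positive theorem only assumes $C^6$ coefficients, which would give only a polynomial rate. I would use the standing hypotheses of \cref{m:coexistence}, namely smooth $\sigma$ and $C^\infty$ perturbations $v\in X_D$, and differentiate under the integral, keeping the endpoint-derivative terms. This is also where the order of limits enters: $C_L^\sharp$ is defined with the fixed-$L$ localization taken first, so it equals $\kappa_L^{-1}$ exactly, and no uniformity in $L$ of the neighborhoods is required.
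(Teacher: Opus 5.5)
Your proposal is correct and follows the paper's own proof: you identify $C_L^\sharp=\kappa_L^{-1}$ via \eqref{m:sharpC}, use closed-ball smoothness of the reference to get $S_s<\infty$ for every $s$, apply the angular-tail bound \eqref{eq:tailrate}, and choose $s>p$. One small discrepancy: your Jackson route gives a sup-norm bound with a constant $C_s$, whereas \eqref{eq:tailrate} comes from Fourier truncation and Parseval with constant one, so the two bounds are not ``exactly'' the same, but either rate suffices for the conclusion.
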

\begin{proof}
Smoothness makes $S_s<\infty$ for each fixed integer $s$. Positivity and the angular-tail bound give $C_L^\sharp\geq(L+1)^s/(t_+S_s)$. Choose $s>p$.
\end{proof}
First shrink the recovery neighborhood at fixed $L$, then compare degrees. Genericity uses the smooth-source topology, recovery uses coefficient $C^6$ neighborhoods, and hiding preserves prescribed finite mixed-derivative bounds (\cref{app:capacityproofs}). We next show exact cancellation at finite degree in a quadratic scene.

\begin{figure*}[t]
\centering\includegraphics[width=\textwidth]{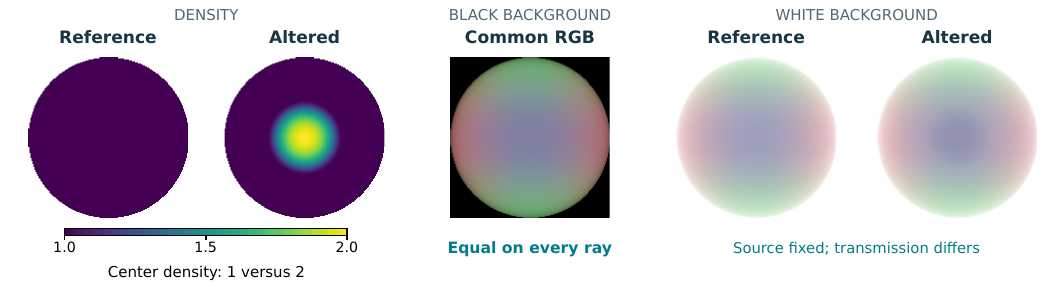}
\caption{\textbf{Same RGB, different response to a background change.} Central density doubles while SH2 appearance preserves every black-background ray (\cref{m:twofold}). White-background composites reveal the changed transmission with each source held fixed. Density and RGB panels share their respective scales.}\label{m:certifiedpair}
\end{figure*}

\subsection{The coexistence statement in quantifier order}
The reference is selected once from the residual physical source class. After fixing a degree $L$, one may choose a sufficiently small support and coefficient neighborhood in which density recovery holds. The allowed radius and the inverse constant may depend on $L$. For hiding, a finite list of smoothness requirements is fixed first, then one nonzero density addition is chosen, and only the angular degree grows. The addition does not shrink along the hiding sequence.

This order has two consequences. First, the generic theorem is stronger than finding a different favorable reference for each degree. Second, it does not turn generic local recovery into a noise guarantee uniform over model capacity. The super-polynomial coefficient growth in \cref{m:superpoly} concerns the limiting local coefficients, whereas the fixed-amplitude sequence in \cref{m:hiding} directly rules out a degree-independent modulus of recovery on the stated neighborhood.

\subsection{Where the source-genericity proof gains its force}
The genericity proof perturbs an actual spatial source. At a fixed point and circle, a source supported on an interior spherical shell can prescribe six real moments of the remaining radiance. Local circle frames form a rank-six bundle over the five-dimensional point/normal base. A finite cover supplies one finite-dimensional perturbation family that is onto in every fiber; Sard's theorem removes the common zeros, and a countable Baire intersection handles every finite degree. Strict physical margins permit these signed source directions at small amplitude. Thus the source can remain isotropic while its \emph{transported} angular radiance becomes sufficiently rich. The construction and topology are proved in \cref{cap:G}.

\section{Exact ambiguity at a quadratic reference}\label{m:separation}
The generic theorem establishes capacity-dependent hiding across a source family. The following separate construction establishes its strongest finite-degree endpoint: SH2 already suffices for exact, physical all-ray ambiguity. Use the isotropic reference
\begin{equation}\label{m:reference}
 \sigma_*=1,\qquad j_{*,k}=\tfrac14+\tfrac12x_k^2.
\end{equation}
At its center $T_0=e^{-1}$ and $U_{0,k}=A+Bd_k^2$, where $A=(1-e^{-1})/4$, $B=(2-5e^{-1})/2$. Circle variances give
\begin{equation}\label{m:numbers}
 \kappa_0(0)=\kappa_1(0)=e^{-1}B/\sqrt6\simeq.01206016,
\end{equation}
while $\kappa_L(0)=0$ for $L\geq2$ (\cref{eq:quadraticgain}). Fixed-color density gain has inverse $23.12$ in the same norm, versus $82.92$ after SH1 refitting: a factor $3.59$ deterioration. An independently specified radius-$.01$ Fourier-prior class has optimal inverse coefficient between $82.91765$ and $263.057$, with an outward-interval certificate (\cref{app:calibration,app:radius}). In the asymmetric scene, certified frozen inverse gains satisfy $C_{\rm fixed}<52.316$ and $C_{\rm SH3}>843.909$, a deterioration exceeding $16.13\times$ in the same norms (\cref{app:circlecertificate}). 

\begin{theorem}[Exact local SH2 alternatives]\label{m:thmseparation}
For every $0<r\leq.01$ and $0<\eps\leq1/6000$, the reference \eqref{m:reference} has a distinct physical density $1+\eps\psi(4|x|^2/r^2)$ and a genuinely degree-two source, with unchanged exterior and identical RGB on all oriented lines. The source is in $[.2,.8]$. At fixed $r$, alternatives approach the reference in every finite smooth norm as $\eps\to0$.
\end{theorem}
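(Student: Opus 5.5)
The plan is to derive the exact construction from the transport difference identity \eqref{m:exact}, but with a rotation-covariant ansatz that keeps the compensating source in $\cH_{\leq2}$. The naive choice $j+hU_*$ from \eqref{m:exactcomp} does not work. At the quadratic reference, $U_*(x,d)=\int_0^{\ell(x,d)}e^{-s}j_*(x+sd)\dd s$ depends on $d$ through the exit length $\ell(x,d)$, so it is not a spherical polynomial of degree two away from $x=0$.

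Set $\sigma_2=1+h$ with $h=\eps\psi(4|x|^2/r^2)$, supported in $B_{r/2}$, and take
\[
j_{2,k}=j_{*,k}+a(|x|)+b(|x|)\,(x\cdot d)^2+c(|x|)\,x_k d_k\,(x\cdot d)+e(|x|)\,d_k^2 .
\]
This is the most general degree-$\leq2$ family compatible with the reference's symmetry under coordinate permutations and reflections, up to the pieces I expect to be forced to vanish. By \eqref{m:exact} with state 1 the perturbed volume and state 2 the reference, all-ray equality is equivalent to
\[
\int T_1\bigl[(j_2-j_*)-U_*h\bigr]\dd t=0 \quad\text{on every oriented line.}
\]
Because $T_1$ is $e^{-(t+\ell)}$ times a radial attenuation factor, each line integral reduces, through the offset $|p|$ and the angle between $d$ and the coordinate axes, to a weighted Abel transform in $\rho=|x|$. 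Expanding in the finitely many angular monomials in $d$ that occur, I expect a system of attenuated Abel equations for $(a,b,c,e)$. This system is triangular: the coefficient of the highest monomial involves only one unknown, and each subsequent one involves only previously determined unknowns plus a new one. After the standard Abel inversion, each equation becomes a second-kind Volterra equation on $[0,r/2]$ in $\rho$, with kernel built from the smooth radial weights of $T_1$ and $U_*$.

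I would then solve these Volterra equations by Picard iteration from the outer radius inward. Since the data vanish for $\rho\geq r/2$, the solutions inherit compact support and leave the exterior unchanged. The forcing is linear in $h$, and the kernels depend smoothly on $\eps$ through $T_1$. The solutions are therefore $O(\eps)$ in every $C^m$ norm at fixed $r$, which gives the approach to the reference in every finite smooth norm. Smoothness at the origin should follow because all forcing terms are smooth functions of $\rho^2$, which the Abel/Volterra solution operator preserves.

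The remaining checks are physicality and genuine degree two. The reference satisfies $j_*\in[.25,.25+.5\cdot10^{-4}]$ on $B_{.01}$, so $j_2\in[.2,.8]$ and $j_2\leq\sigma_2$ follow from explicit $O(\eps)$ bounds on $a,b,c,e$ once $\eps\leq1/6000$ and $r\leq.01$. I would obtain those bounds by bounding the Volterra kernels with Gronwall and using $U_*\leq1$. Genuineness comes from the frozen computation \eqref{m:numbers}. At the center, $\kappa_1(0)>0$ while $\kappa_2(0)=0$. Hence any exact compensation must have a nonvanishing degree-two part, since otherwise \cref{m:thmfrozen} combined with the linearization \eqref{m:linear} at small $\eps$ would contradict equality. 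Alternatively, I would read this off directly from the leading coefficient $e(0)\approx\eps\,e^{-1}B\,\psi(0)/\text{const}\ne0$.

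The main obstacle is the triangularity of the reduction to Abel--Volterra equations. I have to verify that the angular dependence of $U_*h$ along lines projects onto exactly the span of the chosen monomials, with no leftover terms that no degree-two source can match. If leftover terms appear, I would enlarge the ansatz by radial multiples of the remaining degree-$\leq2$ invariants until the system closes.
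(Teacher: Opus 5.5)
Your overall route is the paper's: radial profiles, coefficientwise attenuated Abel equations along lines, Abel inversion to second-kind Volterra equations solved from the outer radius inward, smooth $\eps$-dependence, and explicit physical bounds. But the ansatz you wrote down cannot close, and that is exactly the step that carries the construction. On a line $x=p+td$ with $p\perp d$ and $v=|p|^2$, you have $x\cdot d=t$ and $x_k=p_k+td_k$. Your four terms therefore contribute only to the ray monomials:
\begin{itemize}
\item $1$, from your profile $a$ and from $b(|x|)(x\cdot d)^2=b\,t^2$;
\item $p_kd_k$ and $d_k^2$, from $c\,x_kd_k(x\cdot d)=c\,(p_kd_kt+d_k^2t^2)$ and from $e\,d_k^2$.
\end{itemize}
Nothing produces $p_k^2$. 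Yet the target $\int T_{1+h}U_*h\dd t$, which equals $\cR(1,j_*)-\cR(1+h,j_*)$, has $p_k^2$-coefficient $\tfrac12\Delta I_0(v)$ in the notation of \eqref{eq:raycoeff}. This is because $j_{*,k}(p+td)=\tfrac14+\tfrac12(p_k+td_k)^2$, and $\Delta I_0(v)>0$ whenever $v<r^2/4$. For fixed $v>0$, $p_k^2$ is not in the span of $1,p_kd_k,d_k^2$ within one channel: on $d_k=0$ the latter three are constant while $p_k^2$ ranges over $[0,v]$. So no choice of your profiles gives all-ray equality. The term you discarded, $q_B(u)x_k^2$ with $u=|x|^2$, has the same symmetries and is not forced to vanish; some such term is needed to produce $p_k^2$. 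With $\Delta j_k=q_0(u)+q_B(u)x_k^2+q_D(u)x_kd_k+q_C(u)d_k^2$, the contributions to $1,p_k^2,p_kd_k,d_k^2$ are $K_0q_0$, $K_0q_B$, $2K_1q_B+K_0q_D$ and $K_2q_B+K_1q_D+K_0q_C$, with $K_m$ as in \eqref{eq:Km}. Solving in the order $q_0,q_B,q_D,q_C$ puts $K_0$ in every diagonal slot. Its kernel $\cosh z$ equals one on the diagonal, so each Abel-inverted equation has the form $(\Id+V_0)q=\text{known}$. In your system, by contrast, the $p_kd_k$ equation is driven by $K_1c$ alone. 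Its kernel $-\sqrt{u-v}\sinh z$ vanishes on the diagonal, so even that equation is of the first kind, not second kind as you claim.

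Two further steps need repair.
\begin{itemize}
\item The degree-two argument through \cref{m:thmfrozen} does not work as stated. The frozen bound concerns $A_0$, not the actual linearization on a support ball of fixed radius $r$. A vanishing degree-two part therefore gives no contradiction unless you invoke a genuine SH1 uniqueness result at that radius, such as \cref{prop:radius}. Even then it applies only for sufficiently small $\eps$, whereas the claim covers every $\eps\le1/6000$. Your estimate $e(0)\approx\cdots$ is heuristic. The paper instead writes $P_m=\eps Q_m(v)\psi(v/S)+\eps\int_v^SL_m(v,u)\psi(u/S)\dd u$. Using that $\psi$ is nonincreasing and that $J_1,J_2$ are strictly integral, it shows $q_C(v)/(\eps\psi(v/S))\to\tfrac12\int_0^{\sqrt{1-S}}e^{-t}t^2\dd t>0$ as $v\uparrow S$. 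Since $q_C d_k^2$ is the only degree-two term, this gives genuine degree two for each admissible $\eps$.
\item The bound $j_2\in[.2,.8]$ needs a constant uniform in $r\le.01$; the paper obtains $|\Delta j_k|<300\eps$. The targets have $v$-derivatives of size $\eps/s$ with $s=r/2$. Only the Abel inverse cancels this factor, since its $(w-v)^{-1/2}$ integral over $[v,S]$ is at most $2s$. A Gronwall bound in $\rho$ has to track this scaling explicitly to give an $r$-uniform constant.
\end{itemize}
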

Here $\psi(z)=e^{-z/(1-z)}$ for $z<1$ and zero otherwise. Four radial profiles multiplying $1,x_k^2,x_kd_k,d_k^2$ solve a triangular Abel--Volterra system with the changed attenuation retained exactly (\cref{sec:abel,app:physical}).

\begin{theorem}[Certified twofold central density]\label{m:twofold}
At \eqref{m:reference}, the density $\sigma_2(x)=1+\psi(4|x|^2)$ has a smooth SH2 source, unchanged outside $B_{.5}$, with identical RGB on every oriented line and
\begin{equation}\label{m:twofoldbox}
 j_{2,k}>.2062,\qquad \sigma_2-j_{2,k}\geq\tfrac14.
\end{equation}
Its relative density $L^2$ separation lies in $[.1452219,.1459197]$ and $\sigma_2(0)=2$.
\end{theorem}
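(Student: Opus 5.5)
The construction would reuse the triangular Abel--Volterra ansatz of \cref{m:thmseparation} at amplitude $\eps=1$ and radius scale $r=1$ in the profile argument, so the perturbation is $h=\psi(4|x|^2)$, supported in $B_{.5}$, with $\sigma_2(0)=1+\psi(0)=2$. The source is sought in the radial SH2 form
\[
 j_{2,k}(x,d)=j_{*,k}(x)+a(|x|)+b(|x|)x_k^2+c(|x|)x_kd_k+e(|x|)d_k^2 ,
\]
with the four profiles vanishing for $|x|\geq.5$, so the exterior is unchanged.

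By the exact difference identity \eqref{m:exact}, equality of all RGB rays is equivalent to the vanishing of $\int T_1[(j_2-j_*)-U_2h]\dd t$ on every oriented line. Here $U_2$ is the remaining radiance of the \emph{new} volume; using $U_2$ rather than $U_*$ is what keeps the changed attenuation. Because the volume is radially symmetric and each channel is quadratic in $(x_k,d_k)$, I will expand both the integrand and $U_2$ along a line with offset $|p|$ in the same four angular/spatial monomials. Each coefficient then becomes an attenuated Abel transform in the radial variable. The system is triangular: the $d_k^2$ profile $e$ is determined first, then $c$, then $b$, then $a$. Each step is a Volterra equation of the second kind in $s=|x|^2$ whose kernel involves only the already-known density $\sigma_2$. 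Invertibility of these equations, with smooth compactly supported solutions, is the content of \cref{sec:abel}, and I will invoke it directly; it gives existence and the exact data identity for any smooth radial $h$.

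The quantitative claims then become enclosure problems for explicit one-dimensional functions. The profiles solve Volterra equations on $s\in[0,.25]$. I will discretize with interval arithmetic, for example with Taylor-model or Picard iteration with validated remainder bounds on subintervals, to obtain rigorous enclosures of $a,b,c,e$ and their derivatives. Physicality reduces to a minimum over $x$ and $d$ of the explicit quadratic form in $d$. Optimizing over $d_k^2\in[0,1]$ and $x_k^2\leq|x|^2$ gives one-dimensional lower bounds
\[
 j_{2,k}>.2062,\qquad \sigma_2-j_{2,k}\geq\tfrac14 .
\]
The second bound should hold with room to spare, since the density gain $\psi$ outgrows the source gain. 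I will check both on a certified interval cover of $[0,.5]$. The relative $L^2$ separation is
\[
 \frac{\norm{\psi(4|\cdot|^2)}_2}{\norm{1}_{L^2(B_1)}},
\]
a radial integral of $\psi^2$ computed in closed form up to a rigorously enclosed quadrature, which yields the interval $[.1452219,.1459197]$. Smoothness follows from $\psi$ being flat at $z=1$ and from the Volterra solutions inheriting that flatness.

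I expect the main obstacle to be the validated enclosure of the Volterra solutions at this large amplitude, where the attenuation $e^{-\int\sigma_2}$ is no longer a perturbation of $e^{-1}$. The source margin $.2062$ is close to the reference minimum $1/4$ minus the compensation. I would first use a high-order Chebyshev approximation with a contraction-mapping a posteriori bound. If that is too loose near $s=0$, where the Abel kernel is singular, I would substitute $t$-coordinates along the ray so that the kernel becomes smooth.
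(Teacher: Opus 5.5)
Your skeleton is the paper's. The exact four-profile Abel--Volterra construction of \cref{sec:abel} at $\eps=1$, $s=1/2$, $S=1/4$ gives all-ray equality, and the quantitative claims then reduce to a computer-assisted enclosure plus a rigorous radial quadrature of $3\int_0^{1/2}r^2\psi(4r^2)^2\dd r$. The real difference is how the Volterra solution is certified.

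The paper computes no approximate solution. It takes the explicit proxy $\widehat P=\psi_SJ_0$, $\widehat D=\psi_SJ_1$, $\widehat C=\psi_SJ_2/2$, whose source correction \eqref{eq:nproxypositive} is manifestly nonnegative and at most $\psi_S$ times a multiplier below one. Thus $\widehat j_k\geq1/4$ everywhere, and $\sigma_2-\widehat j_k\geq5/8$ on $B_{1/2}$, hold analytically. The only certified quantity is then a sup bound on the exact-minus-proxy profiles. The proxy residual is enclosed cellwise after the cancellation in \eqref{eq:nreskernels} and propagated by local resolvent bounds \eqref{eq:npropagate}. This gives source error below $.043733$, so $.2062$ is simply $1/4$ minus that error.

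Your Chebyshev/contraction route would give tighter profiles. But you must additionally enclose the Abel-inverted right-hand sides $g_m$, and certify the sign of your approximant over the $(|x|^2,x_k,d_k)$ box instead of getting positivity for free. Your remark that $.2062$ is ``$1/4$ minus the compensation'' is therefore off. The compensation is nonnegative up to the certified error; the proxy gives about $.16$--$.20$ at the center.

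Several details need correcting.
\begin{itemize}
\item Your triangular order is reversed. By \eqref{eq:triangular}, the coefficients of $1,p_k^2,p_kd_k,d_k^2$ are $K_0q_0$, $K_0q_B$, $2K_1q_B+K_0q_D$, $K_2q_B+K_1q_D+K_0q_C$. So the isotropic and $x_k^2$ profiles (both multiples of one $P$) come first, then the $x_kd_k$ profile. The $d_k^2$ profile comes last, since its equation contains all three. Error bounds must propagate $P\to D\to C$.
\item These are first-kind Abel equations. They become second kind only after the Abel inverse $\mathcal B$ turns $K_0$ into $\Id+V_0$ with smooth kernels $V_m=-\mathcal M[A_m]$. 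The weak singularity sits on the diagonal $u=v$, not at $s=0$, and passing to ray coordinates does not change the kind.
\item \Cref{thm:ambiguity} assumes $r\leq.01$ and $\eps\leq1/6000$. You need the observation that Abel inversion and the factorially convergent Volterra resolvent hold for every finite $\eps$ and $S<1$, with smallness entering only the physical box.
\item Your $T_1$/$U_2$ formulation is, by the transport identity, just $\cR(\sigma_2,\Delta j)=\cR(1,j_*)-\cR(\sigma_2,j_*)$, which is the form actually solved.
\item The global margin $1/4$ is attained by the unchanged reference at $|x_k|=1$, outside the changed ball.
\item The paper also certifies $q_C(0)>.0531$, so the source is genuinely degree two. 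Include a lower bound on $e(0)$ in your enclosure.
\end{itemize}
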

A segmented interval residual bound transfers the physical box from an analytic proxy to the exact Volterra solution (\cref{app:finitecertificate}). \Cref{app:extensions} extends the construction to other optical thicknesses and common nonradial perturbations.

\begin{corollary}[RGB-only objectives cannot separate the pair]\label{m:rgbfiber}
Every exterior camera subset, fixed pixel average and common image postprocessing preserves the equality. RGB-only objectives with fixed auxiliary inputs are constant on the pair; image-only density estimation has worst-case error at least half its separation.
\end{corollary}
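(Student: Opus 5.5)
The plan is to reduce all three claims to one fact. By \cref{m:twofold}, the two volumes $(\sigma_*,j_*)$ and $(\sigma_2,j_2)$ have identical data $\cR(\sigma_*,j_*)(p,d)=\cR(\sigma_2,j_2)(p,d)$ for every oriented exterior line $(p,d)$. Every observation considered in the corollary is a functional of this ray function.

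\textbf{Camera subsets and pixel averages.} A pinhole or orthographic exterior camera samples the ray function on a subset of the oriented-line manifold. A fixed pixel average is integration of the ray function against a fixed footprint measure on lines. Such a measure depends only on camera geometry and pixel response, not on the volume. Pointwise equality of the integrands therefore gives equality of every such image, for any subset of cameras.

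\textbf{Postprocessing and objectives.} A common image postprocessing map $\Phi$ (tone mapping, encoding, cropping, filtering) is applied to equal inputs and so yields equal outputs. An RGB-only objective with fixed auxiliary inputs $a$ has the form $\mathcal L(\Phi(\text{images}),a)$. It therefore takes the same value on both members of the pair, and so does any regularization-free data term. The one point to verify is that auxiliary inputs such as camera poses or masks are held fixed and do not depend on density. This is exactly what ``fixed auxiliary inputs'' stipulates.

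\textbf{Worst-case error.} Any image-only estimator $\hat\sigma$ receives the same input $I$ under both hypotheses and outputs a single $\hat\sigma(I)$. The triangle inequality gives
\[
\norm{\sigma_2-\sigma_*}_2\le\norm{\hat\sigma(I)-\sigma_*}_2+\norm{\hat\sigma(I)-\sigma_2}_2 .
\]
Hence the maximum of the two errors is at least half the separation. Combining this with the certified interval in \cref{m:twofold} quantifies the bound. The identical ray function also holds in maximum RGB, so the same argument applies in any error norm for density.

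\textbf{Main obstacle.} The step needing care is formalizing ``fixed pixel average'' so that it covers continuous pixels. The footprint must be a finite measure on lines whose supporting lines are exterior, so that the rays are among those covered by \cref{m:twofold}. Measurability is immediate from continuity of $\cR$ on smooth fields. A randomized estimator is handled the same way: its output distribution is identical under both hypotheses, so the expected maximal error is again at least half the separation.
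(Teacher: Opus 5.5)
Your proposal is correct and follows essentially the same route as the paper's proof in the observation-equivalence appendix. Pointwise equality of the full oriented-ray RGB from \cref{m:twofold} is pushed through camera restriction, fixed footprint integration, and common postprocessing and objectives with field-independent auxiliary inputs. The triangle inequality applied to the common estimator output then gives the half-separation bound. One small sharpening concerns randomized estimators: averaging the triangle inequality over the shared output distribution yields $\max_{i}\mathbb E\norm{\widehat\sigma-\sigma_i}_2\ge\tfrac12\norm{\sigma_1-\sigma_2}_2$. This is the form the paper states, and it is stronger than the bound on the expected maximum of the two errors that your wording gives.
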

Adding views or resolution cannot distinguish this pair under the same effective-source and background conditions (\cref{app:extensions}).

\paragraph{Why minimize over every circle?}
For the quadratic reference at $x\ne0$, the circle with normal $n=x/|x|$ has $x\cdot d=0$ and constant exit length. Its backlight is degree two, so $\kappa_2(x)=0$ throughout the interior even when the full spherical function has higher harmonics (\cref{sec:abel}). This tangent circle explains both the all-normal minimum in \eqref{m:kappa} and the exceptional structure of this quadratic reference. The generic references in \cref{m:coexistence} have positive gain at every finite degree.

\begin{figure*}[t]
\centering\includegraphics[width=\textwidth]{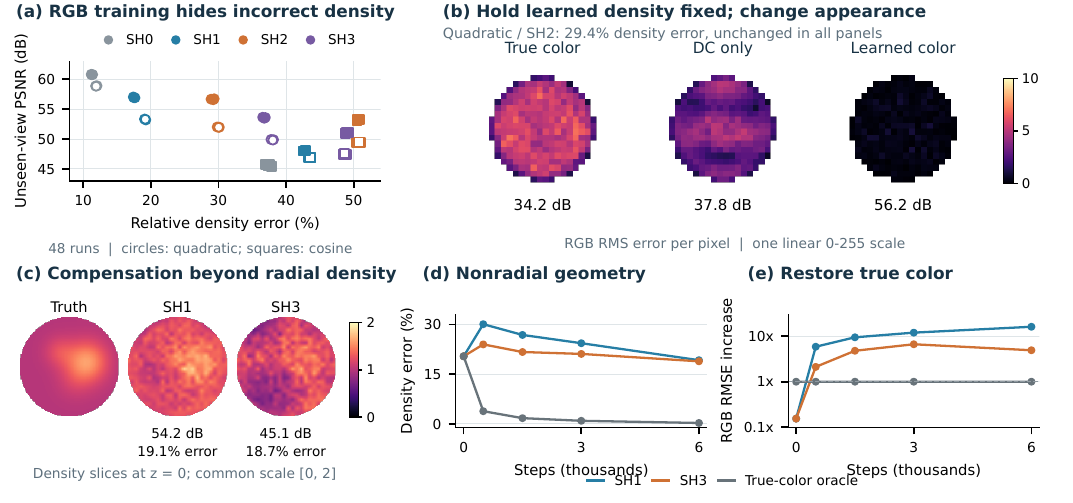}
\caption{\textbf{RGB optimization learns appearance that conceals density error.} (a) All $48$ random starts: open/filled markers use $100/400$ training views. (b) One prescribed SH2 checkpoint and unseen camera; density is identical in the three error maps, true-color replacement exposes density compensation, and DC-only color measures angular reliance (\cref{app:densityoracle}). The same density is certified in \cref{m:querytable}. (c) Nonradial density slices at the first seed, with unseen-view PSNR and 3D density error. (d,e) Nonradial trajectories: means and full ranges of two completed seeds per arm. The true-color oracle supplies appearance during training. All image errors use independent test rays.}\label{m:trainedcompensation}
\end{figure*}

\clearpage
\part{Shared representations, finite data and acquisition}
\section{What changes when observations are finite?}\label{long:finite}
Complete rays constrain a common spatial source through many intersecting lines. A finite camera array leaves fewer constraints, and a shared representation couples those constraints in a model-dependent way. The appropriate question is whether a density change can be explained by \emph{all} permitted shared appearance coefficients, with the camera and spatial model held fixed.

\subsection{Query geometry controls the appearance image class}
The finite-query theory in \cref{q:Q1,q:antipodal} characterizes the entire rendered image class. For two queries, repeated directions can impose a common response that retains a density signal, whereas antipodal directions admit SH1 physical separation of the two directional values. This is a statement about a specified shared-query operator, not a prescription to point ordinary cameras in arbitrary directions. The two-layer calculation in \cref{q:twolayer} retains the shared coefficients and attenuation weights, making the distinction exact.

Uniform resources matter. Interpolating arbitrary values at finitely many directions is not enough unless the resulting source obeys its physical range and the degree bound. The regular-circle construction in \cref{q:circledegree} uses positive interpolation to preserve those bounds. The finite-class certificate in \cref{q:certificate} then quantifies the residual left by insufficient angular resources at one fixed density.

\subsection{A continuous-renderer transition at one fixed reference}
For one 27-node trilinear density/source representation and all 384 prescribed continuous rays, the following two statements hold simultaneously:
\begin{itemize}
\item Over the full physical SH1 source class, density changes with nodal norm at most $10^{-4}$ satisfy a raw maximum-RGB gain of at least $3.3116\times10^{-4}$ per unit spatial density $L^2$ (\cref{eq:continuouslocalgain}).
\item Every nodal density change of norm at most $10^{-8}$ has an exactly matching physical SH3 source (\cref{rs:capacitytransitionproof}).
\end{itemize}
The first statement bounds every permissible refit. The second solves the source response at the changed density using a certified right inverse, and then checks physical margins. It does not infer an exact nonlinear partner from a numerically small singular value. Both are continuous-integration statements for the same scene and observations. They identify a capacity-dependent change of local identifiability, while leaving the intermediate SH2 case unspecified.

The same reference also has a finite-amplitude alternative with density separation exceeding $.10$ that passes a stated one-gray SH1 fit test while SH0 rejects it (\cref{rs:coupledwitness,rs:coupledcontinuous}). The small local inverse neighborhood and this distant accepted alternative coexist without contradiction. Pairwise bounds in \cref{rs:continuouspairwise} state what two admissible reconstructions can disagree about at a much smaller noise scale.

\subsection{Spatial capacity can hide finite data even at SH0}
Angular restriction by itself does not remove all appearance freedom. For finitely many distinct rays, localized spatial source functions can control their responses; suitable continuous pixel geometries also admit an exact source right inverse (\cref{rs:spatialfreedom,rs:pixelspatial}). The source remains one shared field. Under strict physical margins, this right inverse can compensate sufficiently small density changes exactly on the declared finite measurement set.

\Cref{long:gridcomp} transfers this observation to physical trilinear \emph{colors}, retaining $j=\sigma c$. Once the measurement design is fixed, a sufficiently fine grid is selected and then held fixed. Every sufficiently small allowed density change has a same-grid isotropic color correction. The proof uses the exact color response at the changed density; it does not confuse a trilinear effective source with a trilinear color.

The two orders of design now become explicit. Fixing a model first and choosing informative measurements can permit recovery. Fixing finitely many measurements first and enlarging the spatial color class can recover exact ambiguity. A declaration of finite dimension is insufficient without specifying which model and which observations are fixed.

\section{From continuum stability to pixels and noise}\label{long:measurement}
\subsection{A relative bound that retains all appearance changes}
Write a finite rendering map as $F(s,c)=b(s)+B(s)c$, with bounded allowed appearance coefficients, and fix $(s_0,c_0)$. Its exact difference splits into
\begin{equation}\label{long:split}
\begin{aligned}
g&=f+v,\\
f&=B(s_0)(c-c_0),\\
v&=b(s)-b(s_0)+[B(s)-B(s_0)]c.
\end{aligned}
\end{equation}
Only $f$ belongs to the reference finite image space. The density-dependent remainder $v$ contains source directions that become visible as density changes. Finite-dimensional norm equivalence controls derivatives of $f$ by its image amplitude; density differentiation controls $v$ by density distance. This retains arbitrarily large permitted source-only changes and the reference source kernel. The complete estimates are in \cref{rs:theorem,long:pixeltransfer}.

For a smooth finite model lying inside the local $Y$ recovery class, let $H$ be density $L^2$ distance, $R$ maximum ray RGB, and let the constants in \cref{long:pixeltransfer} be $C_Y,A_p,A_q,D_p,D_q$. That appendix proves
\begin{equation}\label{long:pixelmain}
\begin{gathered}
R\ge\gamma H,\\
e_{\rm pix}\ge[\gamma-(\delta+\rho)(A_q\gamma+D_q)]H,\\
\gamma=\frac{2}{TD_p+\sqrt{T^2D_p^2+4TA_p}},\qquad T=3a^2C_Y^2.
\end{gathered}
\end{equation}
Here $a$ bounds offset support, $\delta$ is coverage in the declared line metric, and $\rho$ bounds positive pixel footprints. When the bracket is positive, two fits to the same data have density separation bounded linearly by their summed fit and certified forward-error budgets. The statement has no additive sampling floor. The transfer constants depend on the fixed smooth model; $\gamma$ is a maximum-RGB gain, not the sharp $Y$ coefficient $\kappa_L$.

\subsection{Pixel density, camera coverage and statistical repetition}
Small pixels do not imply small $\delta$: each pinhole center samples a two-dimensional family in the four-dimensional line space. For the unit-ball metric used in \cref{long:pixeltransfer}, $M$ exterior camera centers providing $\delta<1$ coverage must satisfy
\begin{equation}\label{long:cameramain}
\delta^2\ge\frac{4(M-1)}{M^2},\qquad M\ge2.
\end{equation}
The proof in \cref{long:cameracost} uses only central lines and spherical-cap area. If the nonzero reference source image is offset-supported within radius $b$, its derivative/amplitude constant obeys $A_q\ge1/b$. Thus the sufficient pixel certificate requires $\delta+\rho<b$. These are necessary costs of that sufficient covering strategy. A smooth two-parameter physical example, also proved there, recovers density amplitude and unknown color from two rays of a single camera; no universal camera lower bound is asserted.

Measurement repetition answers a different question. Under the explicit independent Gaussian law in \cref{rs:gaussianpairs}, certified physical mean-image pairs give necessary repetition counts for a stated density-error risk. Under bounded deterministic errors, \cref{rs:priornoise} instead shows that one-gray data can leave the entire small density prior unresolved. A positive noiseless derivative gain therefore does not by itself guarantee useful accuracy at a chosen noise level.

\subsection{Measurements that remove the source nuisance}
With the effective source preserved across a background change, black/white readings differ by transmission. The continuous physical SH2 pair has a certified gap under this intervention: the central-ray gap exceeds $1/32$, and a specified common finite footprint has gap greater than $1/56$. A per-reading error below $1/224$ separates the pair, including $1/255$ (\cref{bg:footprint}). This is the source-preserving matting principle \cite{smithblinn1996}, applied to a pair that RGB alone cannot separate.

A separate finite-model acquisition certificate makes cost explicit. For its declared 27-node model, source class, candidate ray pool and reference-exclusion objective, the number of extra scalar readings needed to exclude every density more than $.10$ away obeys
\begin{equation}\label{long:costmain}
6\le C_{\rm ref}(.10)\le152.
\end{equation}
The lower bound supplies a whole compatible physical family for every design below the threshold; the upper bound controls every compatible density for a specified 76-pair design. Both sides use the same reference, noise budget and target (\cref{ac:costtheorem}). The interval is a certified cost bracket, not an exact optimal camera design.

\clearpage
\part{Evidence and scientific implications}
\section{Appearance compensation in learned fields}\label{m:numerics}
We ask whether RGB training learns the compensation predicted by the theory. The experiments first expose it by changing appearance at fixed learned density, then certify the best achievable fit as angular capacity changes. Nonradial fields and density-derived transmission/depth measurements test its consequences beyond the quadratic construction.

\subsection{Appearance compensation learned from RGB}\label{m:trained}
We train $48$ dense $32^3$ Plenoxels-style fields \cite{plenoxels2022}: two isotropic-color scenes, $100/400$ views, SH0--SH3, three seeds each. Each uses $1{,}500$ Adam updates in $B_{.8}$ with known exterior. Evaluation uses $7{,}936$ rays from $32$ unseen directions and $8{,}192$ geometry points in $B_{.75}$. \Cref{m:trainedcompensation}a includes every run; \cref{app:training} specifies the physical color constraints and settings.

Across $36$ SH1--SH3 runs, test PSNR is $46.85$--$57.04$ dB with $17.4$--$51.0\%$ density error. At each fixed learned density, restoring true color raises RGB RMSE $4.63$--$22.77\times$. Six matched color-only fits at exact density separate this effect: true-color replacement leaves below $.00011$ gray levels of error, versus $2.46$--$12.56$ at their learned-density counterparts (\cref{app:densityoracle}). Directional removal also raises error at true density, measuring angular reliance.

For quadratic/$400$-view/SH2, the three-seed mean is $56.66$ dB with $29.2\%$ density error; true-color replacement gives $34.24$ dB. The prescribed first checkpoint and camera appear in \cref{m:trainedcompensation}b. \Cref{m:querytable} certifies complete physical refits at this same density; \cref{app:trainedcompensation} reports all runs.

\subsection{The mechanism beyond radial density}\label{m:nonradialtraining}
A nonradial $24^3$ volume has three displaced density bumps and spatially varying isotropic color. With $100$ training and $32$ test views, we compare SH1, SH3 and a density-only oracle supplied with true color: two $6{,}000$-step seeds and a separate three-seed $30{,}000$-step cohort per arm (\cref{app:nonradialtraining,app:gpulong}).

At $6{,}000$ steps, the directional runs attain $45.09$--$54.26$ dB with $18.70$--$19.30\%$ density error; true-color replacement raises RMSE $4.89$--$16.32\times$ (\cref{m:trainedcompensation}c--e). At $30{,}000$ steps, SH1 density improves to $7.95$--$8.13\%$ and appearance compensation persists in both degrees: replacement raises RMSE $4.29$--$12.44\times$. The $800$-ray training-monitor MSE is $1.50$--$1.69\times10^{-6}$ for SH1 and $1.58$--$1.71\times10^{-5}$ for SH3; these are different optimization endpoints (\cref{app:gpulong}).

\begin{figure*}[t]
\centering\includegraphics[width=\textwidth]{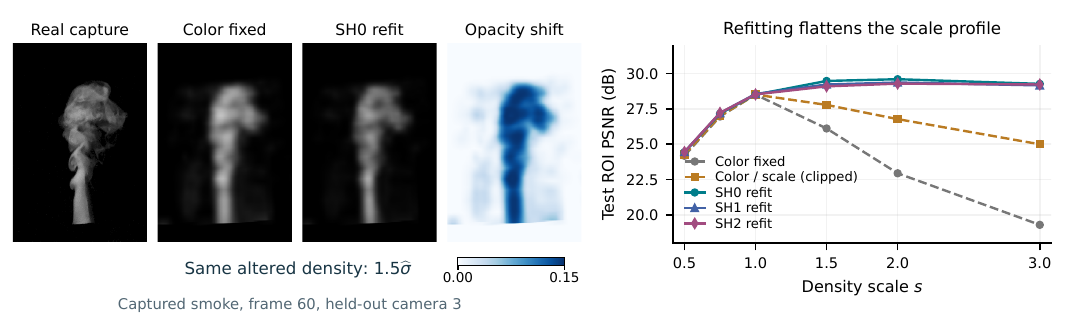}
\caption{\textbf{Color refitting flattens density-scale sensitivity in captured smoke.} Left: held-out capture, fixed/refitted SH0 at $1.5\widehat\sigma$, and predicted opacity change. Right: the complete scale sweep at the same time; fixed color is the SH0 fit at $s=1$. The unit-color reference sets the brightness ceiling responsible for the low-scale decline (\cref{smoke:ceiling}). Shared crops and intensity scales; $384\times216$ evaluation rays. All three times and controls: \cref{smoke:scaleprofile}.}\label{m:capturedsmoke}
\end{figure*}

\paragraph{Captured smoke: image fit versus density scale.}
Three-camera ScalarFlow unit-color fits~\cite{eckert2019,chu2022} define frozen densities scanned at $s\in\{.5,.75,1,1.5,2,3\}$. At all three times, $s=1.5,2,3$ gives SH0 test-PSNR span $\leq.33$ dB, fixed-color loss $6.53$--$6.83$ dB, and peak predicted opacity change $.15\to.38$ from $s=1$ (\cref{smoke:scaleprofile}). At $s=.5$, the ceiling $c\leq1\Rightarrow R\leq1-T$ forces $93.8$--$95.3\%$ of SH0 MSE (\cref{smoke:ceiling}). Interior-color references ($c=.75$) give $.69$--$1.09$ dB SH0 spans across $s=.75,1,1.5$ (\cref{smoke:grayreference}). This measures isotropic compensation; the following same-density certificates isolate angular capacity.

\subsection{Certifying compensation at the learned density}\label{m:querytest}
Freeze the $32^3$ density in \cref{m:trainedcompensation}b and bound the optimal physical SH0--SH2 fit on $179{,}200$ training rays. The $256$-step renderer is linear in color: feasible fits and dual witnesses bound its optimal maximum-RGB residual, retaining all spatial sharing (\cref{lr:dual,app:learnedrefit}).

\begin{table}[t]
\caption{\textbf{Certified capacity gaps at one learned density.} The $32^3$ density in \cref{m:trainedcompensation}b stays fixed. Bounds apply to the optimal maximum-RGB error over each complete physical appearance class on all training rays, in linear gray levels ($1/255$).}\label{m:querytable}
\centering\small
\begin{tabularx}{\linewidth}{@{}Ycc@{}}\toprule
Appearance class & Lower & Upper\\\midrule
All physical SH0 & $1.1228$ & $1.8820$\\
All physical SH1 & $0.4418$ & $0.9888$\\
All physical SH2 & $0$ & $0.4093$\\\bottomrule
\end{tabularx}
\end{table}

\Cref{m:querytable} certifies two strict gaps: $1.1228>0.9888$ separates SH0 from SH1, and $0.4418>0.4093$ separates SH1 from SH2. On these fitting rays, one-gray tolerance rejects every physical SH0 fit but accepts SH1. Density and spatial sharing stay fixed: the gaps isolate angular capacity independently of solver convergence. Untouched test PSNR is $47.10/52.90/60.55$ dB for the three refits; the original MSE-trained SH2 field gives $56.73$ dB (\cref{app:learnedrefit}).

An independent $27$-node Q1 construction pairs certified local SH1 recovery with a finite density error exceeding $.10$ in spatial $L^2$, under continuous ray integration. At one shared observation with one-gray noise, SH0 rejects that alternative density while SH1 accepts it (\cref{rs:coupledwitness,rs:continuouslocal}).

At this same reference and all $384$ continuous rays, every nodal density change $\|h\|_\infty\le10^{-8}$ has an exactly matching physical SH3 source (\cref{eq:capacitytransition,rs:capacitytransitionproof}). Thus angular capacity changes local identifiability with the scene and observations fixed.

Under the density prior and a small RGB neighborhood, any two physical SH1 fits satisfy an explicit density stability bound (\cref{eq:continuouspairwise,rs:continuouspairwiseproof}).

One-gray noise leaves the full $10^{-4}$ density prior compatible under SH0: its minimax error equals the prior radius (\cref{rs:priornoise}).

Spatial capacity also matters: near a reference with strict physical margins, unrestricted smooth SH0 sources exactly compensate sufficiently small density changes on any fixed finite ray set (\cref{rs:spatialfreedom}). Continuous pixels admit the same ambiguity under the conditions of \cref{rs:pixelspatial}.

\subsection{Controlled evidence for the capacity results}\label{m:capacityevidence}
\paragraph{One fixed density error.}
At the affine reference, fix $\|h\|_2=.319774$. Maximum RGB discrepancy is $34.91$ gray levels without source compensation, $3.577$ at $N=0$, and $1.896/1.539/1.296$ at $N=2/3/4$. Refinement to $N=64$ reduces the $N=0$ residual another $28.7\times$ to $.125$ levels, and $Y$ by $31.2\times$, with common derivative bounds (\cref{app:positiveR}). Isotropic compensation and angular refinement provide distinct reductions.

\paragraph{Directional packet consistency.}\label{m:packettest}
Prescribed nonradial packets test $\sqrt{E_L(x_0,n)}$ with $q=h p_L$, at fixed envelope and normal. At radius $.02$, SH1 errors are $1.32$--$2.33\%$ and SH3 errors $13.86$--$31.00\%$; at $.08$, SH3 errors reach $59.68$--$72.41\%$ (\cref{app:weightedpacket}).

\section{What the experiments establish}\label{long:evidence}
The strongest experiment for isolating angular capacity fixes density, spatial sharing, observations and physical bounds, then compares complete appearance classes. The certified intervals in \cref{m:querytable} supply that comparison. Training interventions answer whether an optimizer has learned compensation. Real images answer whether related behavior is visible in captured data. These roles do not require every experiment to instantiate every theorem hypothesis.

\begin{center}\small
\begin{tabularx}{\textwidth}{@{}p{.23\textwidth}YY@{}}\toprule
Evidence & Established result & Interpretation\\\midrule
Analytic physical pair & Every exterior ray agrees; density changes finitely & A mathematical impossibility for RGB-only separation under the stated rendering model\\
Same-density class certificates & Strict SH0--SH1 and SH1--SH2 residual gaps & Angular capacity alone changes feasible fitting accuracy in this fixed finite model\\
Synthetic interventions and density oracle & True-color replacement exposes errors at the learned density & Learned appearance compensates density; directional ablation alone is not density-specific\\
Captured-smoke scale profiles & SH0 refitting preserves similar held-out PSNR while predicted opacity changes & Empirical isotropic compensation around a fitted reference, without density ground truth\\
Matched real-smoke angular tests & Four predefined aggregate endpoints do not pass & Ordinary angular fitting is insufficient evidence of a reproducible density-edit-specific gain\\
Native neural-field controls & Local prediction and modest refit gains; shared angular corrections need not generalize & The theoretical existence of compensation does not prescribe optimizer or cross-camera behavior\\\bottomrule
\end{tabularx}\end{center}

\subsection{The matched neural-field diagnosis}
Two matched controls use a photo-trained Nerfacto Fern reference and target its renderings. Replacing its color head by coarse shared SH colors gives poor unchanged-density fits, around 16 dB at SH2. Ordinary angular improvement then mixes density compensation with improved approximation of the reference's texture. A second control preserves the native texture exactly at zero correction and adds physically bounded shared angular modulation. Across its two grids and two density edits, angular terms reduce fitting-ray RMSE by $5.8$--$12.4\%$ but increase held-out RMSE by $1.4$--$6.8\%$. The density-scale comparisons pass their final refinement test; the ramp comparisons do not.

Native R/D/H color-head refits still improve the fixed-renderer full-image RMSE by $7.6$--$12.3\%$. These parameter classes are not SH degrees. The numerical record therefore supports modest compensation and a transfer limitation in the matched angular control, rather than a blanket failure of neural radiance fields. The complete 36-control scope and all 21 native-head outcomes are in \cref{app:officialrefit,app:sharedshdiagnostic}.

\subsection{Physical meaning of the captured-image evidence}
The captured-smoke experiment compares multiple fitted emission--absorption explanations of the same images. Its opacity differences are predictions of these explanations, not independently measured opacity. The scale-profile evidence is primarily isotropic. A brighter or dimmer reference color changes which density scales can compensate while retaining $c\le1$, which is why the interior-color control is included. Matched cross-camera angular tests further separate ordinary SH fitting gains from the gain specifically added after a density edit. This interpretation preserves the empirical value of the scale profiles while keeping the stronger angular-capacity claim with the theoretical constructions and certified same-density experiments that establish it.

\section{Three consequences for reconstruction}\label{m:discussion}
\paragraph{1. High test PSNR does not certify density.}
The SH2 pair is indistinguishable under complete noiseless RGB coverage; trained fields exhibit compensation on unseen views. Reuse for editing, prediction or measurement therefore requires evidence about the queried density-dependent quantity, beyond image fidelity.

\paragraph{2. Appearance capacity is an information budget.}
Fix the reference, rays, image weights and geometry parameterization. Let $A$ and $B_L$ map geometry and appearance increments to weighted images, with nested appearance ranges and inactive constraints. Variable projection \cite{golubpereyra2003} gives the profiled Gauss--Newton curvature
\begin{equation}\label{m:profilecurvature}
 G_L=A^*(I-P_{B_L})A,\qquad G_{L+1}\preceq G_L,
\end{equation}
where $P_{B_L}$ projects onto the appearance image range (\cref{cap:reduced,cap:psd}). New directions reduce curvature strictly when they explain previously uncompensated geometry signal; low-to-high schedules restrict these directions. \Cref{m:ladder} gives the continuous counterpart. Sharing also matters: an exact two-ray model retains density evidence under repeated directions but loses it under antipodal SH1 queries (\cref{q:twolayer}). \Cref{app:nativepilot} checks the native-readout prediction on held-out rays. At fixed resolution, \cref{rs:theorem} transfers a positive full-query density bound to finite samples while retaining all bounded appearance refits, including changing-rank cases.

\paragraph{3. Transmission measurements reveal hidden information.}
Keeping effective source unchanged during a black/white background switch gives $F_1-F_0=T$, as in triangulation matting \cite{smithblinn1996}. This directly measures attenuation: \cref{bg:footprint} separates the exact pair with finite pixels and one-gray-level reading error; \cref{app:acquisition} gives density-error certificates at specified spatial resolution.

\subsection{Implications for modeling, evaluation and acquisition}
\textbf{Modeling.} Angular degree and spatial resolution jointly determine which density changes appearance can explain. Selecting a compact representation is therefore an inverse-problem assumption as well as a computational choice. The relevant capacity is the appearance image space induced by shared coefficients and actual queries, rather than parameter count alone.

\textbf{Evaluation.} A reconstruction intended for compositing should be evaluated on transmission-sensitive observations; a reconstruction intended for volumetric measurement needs evidence about density in its declared physical model. RGB test error remains a valuable measure of view synthesis, but the exact pair shows why it cannot certify density even with complete coverage. A useful diagnostic should compare image-compatible fields or hold density fixed while certifying the entire allowed appearance class.

\textbf{Acquisition.} More observations help only if they constrain the density signal surviving appearance refitting. Spatial and angular priors, pixel footprints, camera coverage, forward-integration error and noise must be specified together. Source-preserving transmission measurements provide one explicit way of changing that signal. The mathematical lesson is an observable-specific design principle, not a universal recommendation for one optimizer, SH schedule or camera count.

\subsection{Scope of the conclusions}\label{m:scope}
Continuum recovery assumes complete rays, known exterior, smooth fields and degree-dependent local neighborhoods. Genericity is topological; the affine illustration is a separate reference. Effective sources, fixed microscopic scattering and sorted Gaussian splatting impose different constraints. Packet tests fix the source envelope and normal; $\kappa_L$ measures reference sensitivity rather than calibrated density error. Synthetic training uses isotropic truth and finite budgets; captured smoke supplies no independent density truth. Matched smoke and neural-field controls do not establish a reproducible additional angular-compensation effect (\cref{smoke:angularreplication,app:sharedshdiagnostic}). Certificates concern their stated grids and rays. \Cref{app:limitations,app:officialrefit,app:capturedsmoke} details these scopes. Our contribution is the recovery theory and physical constructions, using established analytical tools.

\subsection{Conclusion}
RGB constrains density through the appearance class used to explain it. Exact SH2 ambiguity defeats complete-view certification, while same-reference recovery and hiding show that local identifiability alone gives no capacity-uniform accuracy guarantee. The fixed-density certificates isolate angular compensation; captured smoke demonstrates isotropic compensation. These results motivate evaluating physical quantities across RGB-compatible fits and designing density-sensitive measurements for field reuse. The statements identify the physical information a rendering representation preserves, and provide a basis for evaluating and acquiring the information required by its intended reuse.

\clearpage
\part{Technical proofs and complete supporting evidence}
\appendix
\renewcommand{\thesection}{\AlphAlph{\value{section}}}

\section{Model, admissible fields, and data topology}\label{sec:model}
Let $\Omega=B_1\subset\R^3$. An oriented line is $x=p+td$, with $d\in\Sph^2$ and $p\in d^\perp$. For $|p|<1$, put $\ell(p)=\sqrt{1-|p|^2}$. With zero background, the data are
\begin{equation}\label{eq:forward}
 \cR(\sigma,j)(p,d)=\int_{-\ell(p)}^{\ell(p)}
 \exp\left(-\int_{-\ell(p)}^t\sigma(p+sd)\dd s\right)j(p+td,d)\dd t.
\end{equation}
There are three source channels and one shared scalar density. Assume
\begin{equation}\label{eq:physical}
 \sigma\geq0,\qquad 0\leq j_k(x,d)\leq\sigma(x),\qquad
 j_k(x,\cdot)\in\cH_{\leq L}(\Sph^2).
\end{equation}
Where $\sigma>0$, the color is $c_k=j_k/\sigma$ and has the same angular degree. In vacuum, $j=0$ and color is immaterial; no division is used there.

For an interior point, let $\tau_+(x,d)$ be its exit distance and define
\begin{align}
 T(x,d)&=\exp\left(-\int_{\text{entry}}^x\sigma\dd t\right),\label{eq:T}\\
 U(x,d)&=\int_0^{\tau_+(x,d)}
 \exp\left(-\int_0^s\sigma(x+ud)\dd u\right)j(x+sd,d)\dd s.\label{eq:U}
\end{align}
Here $U$ is the radiance from the ray portion beyond $x$. It is not an externally controlled illumination. It satisfies $(d\cdot\nabla-\sigma)U=-j$ and has zero exit value.

\begin{assumption}[Local physical class]\label{ass:class}
The reference density and the finitely many spatial source coefficients are $C^6$ on $\overline\Omega$. A fixed jump to exterior vacuum is allowed. Near an interior point $x_0$, there are strict margins $\sigma_*\geq s_*>0$, $j_{*,k}\geq\mu>0$, and $\sigma_*-j_{*,k}\geq\mu$, uniformly in direction. Only globally $C^6$ coefficient perturbations supported in $\overline{B_r(x_0)}\Subset\Omega$ are unknown. The exterior is fixed. Candidate fields satisfy \eqref{eq:physical} and lie in a relative $C^6$ neighborhood of the reference.
\end{assumption}
The up-to-boundary assumption is part of the field class, not merely pointwise interior smoothness: coefficient derivatives through order six are bounded on the closed ball. Chord lengths are at most two, giving finite optical thickness. On each compact interior ball the chord exit map has bounded derivatives, so differentiation of the transport integrals yields interior $C^4$ regularity of $T,U$, uniformly in direction. Fixed vacuum or saturated colors outside the unknown region are permitted. The margin inside the region is needed for two-sided physical perturbations.

All Fourier transforms are unitary: $\F_m f(\xi)=(2\pi)^{-m/2}\int e^{-ix\cdot\xi}f(x)\dd x$. For a data difference define
\begin{equation}\label{eq:Y}
 \norm{g}_Y^2=\frac1{4\pi^2}\int_{\Sph^2}\int_{d^\perp}
 |\xi|\,|\F_{2,p}g(\xi,d)|^2\dd\xi\,\mathrm dS(d).
\end{equation}
RGB components use the Euclidean squared norm. If the absolute reference data are not in $Y$, observations are understood in the affine space $\cR(\sigma_*,j_*)+Y$. Smooth interior-supported differences belong to this space. The half-derivative normalization is consistent with sharp ray-transform stability theory \cite{assylbekovstefanov2020}; numerical values below depend on this normalization.

\begin{lemma}[Transport identities]\label{lem:transport}
For admissible fields, with $h=\sigma_2-\sigma_1$ and $q=j_2-j_1$,
\begin{align}
 D\cR_*[h,q]&=\int T_*(q-U_*h)\dd t,\label{eq:derivative}\\
 \cR_2-\cR_1&=\int T_1(q-U_2h)\dd t.\label{eq:exactdifference}
\end{align}
\end{lemma}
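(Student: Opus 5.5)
The plan is to work ray by ray: fix an oriented line $x=p+td$, $t\in[-\ell,\ell]$, and reduce both identities to one-dimensional ODE calculus along it. Write $T_i(t)=\exp(-\int_{-\ell}^t\sigma_i)$ for the entry-to-point transmission. By \eqref{eq:U}, $U_i(t)=\int_t^{\ell}\exp(-\int_t^s\sigma_i)j_i(s)\dd s$ is the remaining radiance. These satisfy
\begin{equation*}
 T_i'=-\sigma_iT_i,\qquad U_i'=\sigma_iU_i-j_i,\qquad T_i(-\ell)=1,\qquad U_i(\ell)=0.
\end{equation*}
They also give $\cR_i=U_i(-\ell)=T_iU_i|_{t=-\ell}$. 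Because the unknown changes are compactly supported in $B_r(x_0)$ and the fields are bounded and piecewise $C^6$ along each chord, all functions involved are absolutely continuous in $t$. Integration by parts is therefore legitimate, including across the fixed vacuum jump.

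\textbf{Exact difference.} I will consider the mixed product $\Phi(t)=T_1(t)U_2(t)$. At $t=-\ell$ it equals $U_2(-\ell)=\cR_2$, and at $t=\ell$ it vanishes. Differentiating gives
\begin{equation*}
 \Phi'=-\sigma_1T_1U_2+T_1(\sigma_2U_2-j_2)=T_1(hU_2-j_2).
\end{equation*}
Integrating over $[-\ell,\ell]$ yields $\cR_2=\int T_1(j_2-hU_2)\dd t$. Separately, $\cR_1=\int T_1j_1\dd t$ is just \eqref{eq:forward}. Subtracting the two gives \eqref{eq:exactdifference} with $q=j_2-j_1$.

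\textbf{Derivative.} I will apply the exact formula to $(\sigma_*+\eps h,j_*+\eps q)$ against $(\sigma_*,j_*)$. This gives
\begin{equation*}
 \cR_\eps-\cR_*=\eps\int T_*\bigl(q-U_\eps h\bigr)\dd t.
\end{equation*}
Dividing by $\eps$ and letting $\eps\to0$ reduces the claim to $U_\eps\to U_*$ uniformly on the chord. This holds because $U_\eps$ depends continuously on $(\sigma,j)$ through the explicit exponential integral, with a bound of order $C\eps(\|h\|_\infty+\|q\|_\infty)$ since chord length is at most two. The result is \eqref{eq:derivative}.

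The main obstacle is the topology in which $D\cR_*$ is a derivative. Pointwise convergence on each ray is immediate, but the paper measures data in $Y$. To make the derivative meaningful there, I would note the following. The remainder $\int T_*(U_*-U_\eps)h\dd t$ is $\eps$ times a ray integral of a function compactly supported in $\overline{B_r(x_0)}$ whose $C^4$ norm is $O(\eps)$, using the interior $C^4$ regularity of $T,U$ stated after \cref{ass:class}. The half-derivative $Y$ norm of the ray transform of such a function is bounded by a Sobolev norm of that function. So the remainder is $o(\eps)$ in $Y$, and the identity holds as a Fréchet derivative into the affine space $\cR_*+Y$.
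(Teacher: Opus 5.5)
Your proof is correct and is essentially the paper's argument. The paper applies the integrating factor $T_1$ to $(d\cdot\nabla-\sigma_1)(U_2-U_1)=hU_2-q$ with zero exit value, which is your $T_1U_2$ computation combined with $\cR_1=\int T_1j_1\dd t$, and it obtains \eqref{eq:derivative} by differentiating the exact identity; your remark on the $Y$-topology mirrors how the paper later justifies the quadratic remainder \eqref{eq:remainder} via \eqref{eq:weightbound}.
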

\begin{proof}
These are the classical joint-identification identities of Stefanov \cite[Proposition 3.1 and \S5, (5.1)--(5.4)]{stefanov2014}, with ray orientation reversed. In our exit-value convention, subtraction gives $(d\cdot\nabla-\sigma_1)(U_2-U_1)=hU_2-q$ with zero exit value. Its integrating factor proves \eqref{eq:exactdifference}; differentiation gives \eqref{eq:derivative}.
\end{proof}

\section{The exact frozen density gain}\label{sec:frozen}
Fix $x_0$. Abbreviate $T_0(d)=T_*(x_0,d)$ and $U_0(d)=U_*(x_0,d)$. Let $t_- =\min T_0>0$ and $t_+=\max T_0$. For a normal $n$, write $\avg{f}_n=(2\pi)^{-1}\int_{C_n}f\dd\theta$. The restriction of $\cH_{\leq L}(\Sph^2)$ to $C_n$ is exactly the space $\cV_L(C_n)$ of trigonometric polynomials of degree at most $L$. Indeed, in a circle frame $e_1,e_2$, the real and imaginary parts of $(d\cdot e_1+i d\cdot e_2)^m$ lift each frequency $m\leq L$.

Define
\begin{align}
 E_L(n)&=\sum_{k=1}^3\min_{p\in\cV_L(C_n)}
 \avg{T_0^2|U_{0,k}-p|^2}_n, & \kappa_L^2&=\min_n E_L(n),\label{eq:gain}\\
 A_0(h,q)&=\int T_0(d)[q(y,d)-U_0(d)h(y)]\dd t.\label{eq:frozenop}
\end{align}
The minimum exists by compactness and continuity. The source coefficients may be complex after spatial Fourier transformation.

\begin{theorem}[Sharp frozen density estimate]\label{thm:frozen}
For compactly supported $L^2$ coefficients with source degree at most $L$,
\begin{equation}\label{eq:frozenbound}
 \norm{A_0(h,q)}_Y\geq\kappa_L\norm{h}_2.
\end{equation}
On any fixed nonempty open ball, $\kappa_L$ is the largest constant valid for all such supported coefficients. Hence $\kappa_L>0$ is equivalent to a uniform frozen density estimate after arbitrary degree-$L$ source refitting.
\end{theorem}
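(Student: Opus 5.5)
The proof is a Fourier slice computation followed by a pointwise projection in frequency. For fixed $d$, the frozen operator is an X-ray transform of the function $f_d(y)=T_0(d)[q(y,d)-U_0(d)h(y)]$, since $T_0,U_0$ depend only on $d$. The Fourier slice theorem in the two offset coordinates gives $\F_{2,p}A_0(\cdot,d)(\xi)=\sqrt{2\pi}\,\widehat{f_d}(\xi)$ for $\xi\in d^\perp$. Here $\widehat{\ }$ is the unitary 3D transform, and $\widehat q(\xi,d)$ is the transform of the spatial coefficients contracted with the harmonics in $d$. Inserting this into \eqref{eq:Y} gives
\[
\norm{A_0}_Y^2=\frac{1}{2\pi}\int_{\Sph^2}\int_{d^\perp}|\xi|\,T_0(d)^2\,|\widehat q(\xi,d)-U_0(d)\widehat h(\xi)|^2\dd\xi\,\mathrm dS(d).
\]
Next I rewrite the integral over the incidence set $\{(\xi,d):\xi\perp d\}$ as an integral over $\xi\in\R^3$ with $d$ running over the great circle $C_{\xi/|\xi|}$. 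The coarea identity for this set has density exactly $|\xi|^{-1}$ relative to $\dd\xi\,\dd\theta$. It cancels the weight $|\xi|$, and with the $1/(2\pi)$ circle normalization it yields
\[
\norm{A_0}_Y^2=\int_{\R^3}\avg{T_0^2|\widehat q(\xi,\cdot)-U_0\widehat h(\xi)|^2}_{\xi/|\xi|}\dd\xi .
\]
I will check this constant on a single example rather than trust the bookkeeping. The check is that, with $T_0\equiv1$, $U_0\equiv1$ and $q=0$, the identity reduces to the standard isometry of the half-derivative X-ray transform.

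For the lower bound I fix $\xi$ with $\widehat h(\xi)\neq0$. The function $d\mapsto\widehat q(\xi,d)/\widehat h(\xi)$ restricted to $C_n$, $n=\xi/|\xi|$, lies in the complexified space $\cV_L(C_n)$. Minimizing over that space channelwise, and splitting real and imaginary parts since $U_0$ is real, gives an integrand of at least $E_L(n)|\widehat h(\xi)|^2\geq\kappa_L^2|\widehat h(\xi)|^2$. Plancherel then gives \eqref{eq:frozenbound}.

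For sharpness I pick a minimizing normal $n_0$ and an optimal circle polynomial $p_k\in\cV_L(C_{n_0})$ for each channel. I lift each $p_k$ to $P_k\in\cH_{\leq L}$ using the real and imaginary parts of $(d\cdot e_1+id\cdot e_2)^m$, as noted above. Then I take $h_N=\phi(x)\cos(Nn_0\cdot x)$ with $\phi\in C_c^\infty$ supported in the prescribed ball, and set $q_{N,k}=P_k h_N$. The ratio is
\[
\frac{\norm{A_0(h_N,q_N)}_Y^2}{\norm{h_N}_2^2}=\int F(\xi)\,w_N(\xi)\dd\xi,
\]
where $F(\xi)=\sum_k\avg{T_0^2|P_k-U_{0,k}|^2}_{\xi/|\xi|}$ and $w_N=|\widehat{h_N}|^2/\norm{h_N}_2^2$ is a probability density. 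As $N\to\infty$, $w_N$ concentrates near $\pm Nn_0$ in direction, because $\widehat{h_N}=\tfrac12[\widehat\phi(\xi-Nn_0)+\widehat\phi(\xi+Nn_0)]$ and $\widehat\phi$ is Schwartz. $F$ depends only on $\xi/|\xi|$, is continuous on $\Sph^2$, is bounded, and is even in $\xi$ because $C_n=C_{-n}$. Moreover $F(\pm n_0)=E_L(n_0)=\kappa_L^2$. So the ratio tends to $\kappa_L^2$, and no constant larger than $\kappa_L$ works on the ball. The equivalence statement follows immediately from these two parts.

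The main obstacle is the coarea/slice normalization: getting the factor linking $\dd\xi\,\mathrm dS(d)$ on the incidence manifold to $\dd\xi_{\R^3}\,\dd\theta$ exactly right, so that the constant is $\kappa_L$ with no stray $2\pi$. A secondary point is the continuity of $E_L(n)$ in $n$, which both the bound and the limit use. I would obtain it from the Gram formula \eqref{m:gram} in a continuously rotated circle frame. The weighted Gram matrix has smallest eigenvalue at least $t_-^2$ times that of the unweighted one, which is $1$, so its inverse is continuous in $n$. This also gives existence of the minimizing normal.
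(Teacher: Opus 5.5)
Your proposal is correct and follows the paper's proof essentially step for step: the Fourier slice plus coarea identity $\norm{A_0(h,q)}_Y^2=\int_{\R^3}\avg{T_0^2|\widehat q-U_0\widehat h|^2}_{\xi^\perp}\dd\xi$, pointwise projection onto the (complexified) circle polynomials $\cV_L(C_{\xi/|\xi|})$ followed by Plancherel for the bound, and, for sharpness, real packets $\phi\cos(Nn_0\cdot y)$ with lifted optimal polynomials $P_k$, whose spectra concentrate at $\pm n_0$, where the continuous residual energy of $T_0(P-U_0)$ equals $\kappa_L^2$. Your constant check and the real/imaginary splitting are right. The only care needed is to argue continuity of $E_L$ in local circle frames, since there is no global continuous frame on $\Sph^2$; sharpness itself needs only continuity of the fixed function $n\mapsto\sum_k\avg{T_0^2|P_k-U_{0,k}|^2}_n$.
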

\begin{proof}
The Fourier slice relation and coarea give the exact identity
\begin{equation}\label{eq:slice}
 \norm{A_0(h,q)}_Y^2=\int_{\R^3}
 \avg{T_0^2|\widehat q(\xi,d)-U_0(d)\widehat h(\xi)|^2}_{\xi^\perp}\dd\xi.
\end{equation}
Specifically the slice factor is $\sqrt{2\pi}$, and
$\int_{\Sph^2}\delta(\xi\cdot d)f(d)\,\mathrm dS(d)=(2\pi/|\xi|)\avg f_{\xi^\perp}$.
At each nonzero frequency, minimize over the permitted circle polynomial. The resulting lower bound is $E_L(\xi/|\xi|)|\widehat h(\xi)|^2$. Plancherel proves \eqref{eq:frozenbound}.

For sharpness choose a minimizing normal $n_0$, its optimal real circle polynomials $p_k$, and real spherical lifts $P_k\in\cH_{\leq L}$. For a nonzero real $\phi\in C_c^\infty$ in the ball, set
$h_N(y)=\phi(y)\cos(Nn_0\cdot y)$ and $q_{N,k}(y,d)=P_k(d)h_N(y)$.
The two Fourier packets concentrate in directions $\pm n_0$. The residual circle energy of $T_0(P-U_0)$ is continuous and has the same value $\kappa_L^2$ at both normals. The cross term tends to zero, so the data-to-density ratio tends to $\kappa_L$. Multiplying by an arbitrarily small $N$-dependent factor preserves that ratio and makes any prescribed finite smooth norm small.
\end{proof}

\subsection{An explicit weighted formula and the angular ladder}\label{sec:formula}
Let $e_a(\theta)=e^{ia\theta}$, $-L\leq a\leq L$, and $w=T_0^2$. Form
\begin{equation}\label{eq:gram}
 G_{ab}=\avg{w\overline e_a e_b}_n,\qquad
 b_{k,a}=\avg{w\overline e_a U_{0,k}}_n.
\end{equation}
Then $G$ is positive definite, with condition number at most $(t_+/t_-)^2$, and
\begin{equation}\label{eq:schur}
 E_L(n)=\sum_k\left[\avg{w|U_{0,k}|^2}_n-b_k^*G^{-1}b_k\right].
\end{equation}
This is a weighted least-squares Schur complement. In computation, a QR-based residual avoids subtracting nearly equal positive numbers.

If $T_0=t_0$ is constant and $u_{k,m}(n)$ are the circle Fourier coefficients of $U_{0,k}$, Parseval gives
\begin{equation}\label{eq:tail}
 \kappa_L^2=t_0^2\min_n\sum_k\sum_{|m|>L}|u_{k,m}(n)|^2.
\end{equation}
For varying $T_0$, the nuisance space is $T_0\cV_L$, not $\cV_L$. Thus the ordinary Fourier tail of $T_0U_0$ is generally incorrect.

\begin{corollary}[One fixed reference]\label{cor:ladder}
For the same $T_0,U_0$, $\kappa_{L+1}\leq\kappa_L$ and $\kappa_L\to0$. If
$S_s^2=\sup_n\sum_k\avg{|\partial_\theta^sU_{0,k}|^2}_n<\infty$ for an integer $s\geq1$, then
\begin{equation}\label{eq:tailrate}
 \kappa_L\leq\frac{t_+S_s}{(L+1)^s}.
\end{equation}
\end{corollary}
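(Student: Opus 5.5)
The monotonicity is immediate from nestedness. Since $\cV_L(C_n)\subset\cV_{L+1}(C_n)$ for every normal $n$, each channel minimum in \eqref{eq:gain} can only decrease when $L$ increases. Hence $E_{L+1}(n)\le E_L(n)$ pointwise in $n$. Taking the minimum over $n$ gives $\kappa_{L+1}\le\kappa_L$.

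For the rate, I would bound $E_L(n)$ from above using one admissible competitor: the unweighted Fourier truncation $p_k=\Pi_LU_{0,k}$ on $C_n$. This is the degree-$\le L$ partial sum of the circle Fourier series in a circle frame. The weight satisfies $w=T_0^2\le t_+^2$, so
\[
 E_L(n)\le t_+^2\sum_k\avg{|U_{0,k}-\Pi_LU_{0,k}|^2}_n
 = t_+^2\sum_k\sum_{|m|>L}|u_{k,m}(n)|^2 .
\]
Parseval for the $s$-th derivative gives $\avg{|\partial_\theta^sU_{0,k}|^2}_n=\sum_m m^{2s}|u_{k,m}(n)|^2$. The tail is therefore at most $(L+1)^{-2s}\avg{|\partial_\theta^sU_{0,k}|^2}_n$. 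Summing over $k$ and taking the supremum over $n$ gives $E_L(n)\le t_+^2S_s^2/(L+1)^{2s}$ for every $n$. Then $\kappa_L^2=\min_nE_L\le t_+^2S_s^2/(L+1)^{2s}$, which is \eqref{eq:tailrate}. The derivative $\partial_\theta$ along $C_n$ is the restriction of a rotation vector field, so $S_s$ is finite whenever $U_0$ is $C^s$ on the sphere.

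For $\kappa_L\to0$ without a derivative hypothesis, I would use only the fact that $U_0$ is continuous on $\Sph^2$. This is guaranteed by the interior $C^4$ regularity noted after \cref{ass:class}. For one fixed normal $n_1$ (any single normal suffices, since $\kappa_L^2\le E_L(n_1)$), the same competitor gives $E_L(n_1)\le t_+^2\sum_k\sum_{|m|>L}|u_{k,m}(n_1)|^2$. This tends to zero as the tail of a convergent Parseval series of an $L^2$ function on the circle. In fact the interior $C^4$ regularity already makes $S_1<\infty$, so the rate statement with $s=1$ also gives the limit.

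The only point needing care is that the weighted minimum in \eqref{eq:gain} is bounded by the weighted error of \emph{any} competitor in $\cV_L(C_n)$. The Fourier truncation is admissible because $\cV_L(C_n)$ is exactly the restriction of $\cH_{\le L}$. This is why the weight enters only through $t_+$, even though the optimal nuisance space is $T_0\cV_L$ and not $\cV_L$. The remaining step is to justify $\sup_nS$ uniformly over normals, using smoothness of $U_0$ on the sphere. I do not expect a genuine obstacle here.
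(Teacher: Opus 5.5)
Your proposal is correct and follows the paper's proof. The paper also gets monotonicity from nestedness of $\cV_L(C_n)$, gets $\kappa_L\to0$ from approximation on a single fixed circle, and proves \eqref{eq:tailrate} using the unweighted Fourier truncation as an admissible competitor with $T_0^2\le t_+^2$, $|m|\ge L+1$ and Parseval; finiteness of $S_s$ is a hypothesis of the statement, so the uniformity over normals you flag at the end needs no separate justification.
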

\begin{proof}
The approximation spaces are nested. Approximation on any one fixed circle gives convergence to zero. The ordinary Fourier truncation of $U_0$ is an admissible competitor in \eqref{eq:gain}; its error, multiplied by $t_+^2$, is bounded using $|m|\geq L+1$ and Parseval.
\end{proof}
The inequalities need not be strict. For a comparison of actual inverse problems, the reference must belong to every compared angular class. A different reference for each $L$ does not establish a fixed-scene trend.

\subsection{The $L=0$ connection to classical identification}
For isotropic sources, \eqref{eq:schur} reduces to
\begin{equation}\label{eq:variance}
 E_0(n)=\sum_k\left(\avg{T_0^2U_{0,k}^2}_n
 -\frac{\avg{T_0^2U_{0,k}}_n^2}{\avg{T_0^2}_n}\right).
\end{equation}
In a planar scalar problem the frequency-normal set consists of two opposite directions. With weights $w_+,w_->0$ and radiances $U_+,U_-$, its normalized counterpart is
\begin{equation}\label{eq:twod}
 \min_a\frac{w_+(U_+-a)^2+w_-(U_--a)^2}{2}
 =\frac{w_+w_-}{2(w_++w_-)}(U_+-U_-)^2.
\end{equation}
This is the same opposite-direction degeneracy appearing in Stefanov's characteristic analysis \cite[Sections 4--5]{stefanov2014}. Three-dimensional frequency normals supply a circle instead of two directions, and finite-degree sources replace constants by $\cV_L$. This is an extension of the classical mechanism. A vanishing leading gain does not exclude weaker estimates with derivative loss or support-dependent injectivity.

\section{From the criterion to local nonlinear stability}\label{sec:nonlinear}
A density-only frozen estimate cannot absorb arbitrary perturbations of the source weight unless a source representative is controlled as well. This section supplies that control without assuming the desired density injectivity.

Translate $x_0$ to the origin and set $y=x/r$. Let $h(y)=(\sigma_2-\sigma_1)(ry)$, $q(y,d)=(j_2-j_1)(ry,d)$, and
$g(p,d)=[\cR_2-\cR_1](rp,d)/r$. All differences are zero extended from $B_1$.

\subsection{A trace-aware tensor gauge}
For $L\geq1$ there is a unique representation on $|d|=1$,
\begin{equation}\label{eq:tensorrep}
 q(y,d)=F_L(y)[d^L]+F_{L-1}(y)[d^{L-1}],
\end{equation}
where symmetric tensors may have traces. Each tensor accounts for one parity of the spherical-harmonic expansion; multiplication by the metric lifts lower degrees by two. The dimensions sum to $(L+1)^2$. For $L=0$ retain only $F_0$. Norms on tensors are Euclidean Frobenius norms and symmetrization is orthogonal averaging.

Write $D$ for the symmetric gradient, $i s=\sym(g_{\mathrm E}\otimes s)$, and $\alpha(y)=r\sigma_1(ry)$. For potentials $P=(p,s)$ of ranks $L-1,L-2$, define
\begin{equation}\label{eq:gauge}
 G_\alpha(p,s)=(Dp-\alpha i s,\ Ds-\alpha p),
 \qquad G_0(p,s)=(Dp,Ds).
\end{equation}
For $L=1$, $p$ is scalar and the conventions are explicitly
\begin{equation}\label{eq:gaugeL1}
 G_\alpha p=(\nabla p,-\alpha p),\quad G_0p=(\nabla p,0),\quad
 q^s=(F_1-\nabla p)\cdot d+(F_0+\alpha p).
\end{equation}
The scalar attenuation term is retained. For $L=0$ there is no potential gauge: $q=F_0$ is unchanged and no $\gamma_L$ condition is imposed. Evaluated on the sphere, the general gauge is $(d\cdot\nabla_y-\alpha)P(y,d)$. A zero-boundary potential has zero weighted ray integral because $d\cdot\nabla_yT_1(ry,d)=-\alpha T_1(ry,d)$. Attenuation therefore couples adjacent ranks; independent projections of each SH coefficient would not preserve the data.

\begin{lemma}[Small-optical-scale gauge]\label{lem:gauge}
If $\gamma_L=\sqrt{3L}\norm\alpha_\infty/\pi<1$, each $L^2$ tensor pair has a representative $F^s=F-G_\alpha P$, with $P\in H_0^1(B_1)$, that is divergence free in both positive ranks inside the ball and has exactly the same weighted source data. Rank zero has no divergence condition.
\end{lemma}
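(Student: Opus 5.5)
We want $P=(p,s)\in H_0^1(B_1)$ with $F-G_\alpha P$ divergence free in both positive ranks. Writing $\delta$ for the divergence (the formal adjoint of $-D$), this is the coupled elliptic system
\begin{equation*}
 \delta D p-\delta(\alpha\, i s)=\delta F_L,\qquad \delta D s-\delta(\alpha p)=\delta F_{L-1},
\end{equation*}
posed weakly in $H_0^1$. Data preservation is immediate. On the sphere $G_\alpha P=(d\cdot\nabla_y-\alpha)P(y,d)$, and $d\cdot\nabla_yT_1(ry,d)=-\alpha T_1$, so $T_1G_\alpha P=d\cdot\nabla_y(T_1P)$. Its integral along each line vanishes because $P$ has zero trace. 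The whole task is therefore solvability of the system.

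The plan is a perturbation of the decoupled operator $G_0=(Dp,Ds)$. For tensors in $H_0^1$, Korn/Sharafutdinov-type identities give $\norm{Dp}_2^2\geq\tfrac12\norm{\nabla p}_2^2$ up to trace terms that are nonnegative after integration by parts. Poincaré on $B_1$ then gives $\norm{\nabla p}_2\geq\pi\norm p_2$, with $\pi$ the first Dirichlet eigenvalue constant $j_{1/2}$ of the unit ball.

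We then consider the bilinear form $a(P,Q)=\langle G_\alpha P,G_\alpha Q\rangle$ on $H_0^1\times H_0^1$. Its Euler--Lagrange equation is exactly the divergence-free condition in both ranks, and $F^s=F-G_\alpha P$ is the $L^2$-orthogonal projection of $F$ off the closed range of $G_\alpha$. By Lax--Milgram it suffices to show coercivity:
\begin{equation*}
 \norm{G_\alpha P}_2\geq\norm{G_0P}_2-\norm{\alpha}_\infty\bigl(\norm{i s}_2+\norm p_2\bigr)\geq c\norm{G_0P}_2 .
\end{equation*}
For this we need the operator norm of the symmetrized metric multiplication $i$ on rank-$(L-2)$ tensors, measured against $D$. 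The estimate should take the form $\norm{is}_2^2+\norm p_2^2\leq \tfrac{3L}{\pi^2}\norm{G_0P}_2^2$. The factor $3L$ comes from combining $\norm{i}\leq\sqrt{\cdot}$ with rank counting in the symmetrization, and summing over the two coupled components. The factor $\pi^{-2}$ comes from Poincaré together with the identity $\norm{Dp}\geq\norm{\nabla p}/\sqrt{\text{rank}}$ in the symmetric-gradient average. This yields $c=1-\gamma_L>0$ and closed range.

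For $L=1$ the check is explicit. There $G_\alpha p=(\nabla p,-\alpha p)$, and the condition reduces to a single Dirichlet problem $-\Delta p+\alpha^2p=-\Div F_1+\dots$. That problem is coercive even without smallness, which is consistent with the general statement. Rank zero imposes no condition.

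The main obstacle is the sharp constant: proving the combined inequality with exactly $\sqrt{3L}/\pi$. This requires a careful trace-inclusive Korn inequality for symmetric tensors of rank $m$. My first attempt would be Fourier/Plancherel on zero extensions. That gives $\norm{Dp}^2=\int|\sym(\xi\otimes\widehat p)|^2$, and a lower bound $|\sym(\xi\otimes \widehat p)|^2\geq|\xi|^2|\widehat p|^2/(m+1)$ from averaging over slots. Poincaré would then be used on the zero extension, and $\norm{i s}\leq\sqrt{3}\norm s$ would be bounded with some slack absorbed into the stated constant.
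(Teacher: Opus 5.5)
Your first paragraph states the right system, $G_0^*(F-G_\alpha P)=0$. Its weak form is $\langle G_\alpha P,G_0Q\rangle=\langle F,G_0Q\rangle$ for all $Q\in H_0^1(B_1)$. Your Lax--Milgram step, however, solves a different equation. The symmetric form $a(P,Q)=\langle G_\alpha P,G_\alpha Q\rangle$, with right side $\langle F,G_\alpha Q\rangle$, has Euler--Lagrange equation $G_\alpha^*F^s=0$, i.e.\ $F^s\perp G_\alpha(H_0^1)$. Since $G_\alpha^*F=(-\delta F_L-\alpha F_{L-1},\,-\delta F_{L-1}-\alpha\,i^*F_L)$, with $i^*$ the trace adjoint to $i$, this gives $\delta F^s_L=-\alpha F^s_{L-1}$ and $\delta F^s_{L-1}=-\alpha\,i^*F^s_L$. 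That is an attenuation-coupled condition, not divergence-freeness in each rank.

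Your own $L=1$ check exposes the problem. The symmetric problem is $-\Delta p+\alpha^2p=-\Div F_1-\alpha F_0$ (the term hidden in your ``$\dots$''). Then $\Div F^s_1=\Div F_1-\Delta p=-\alpha(F_0+\alpha p)=-\alpha F^s_0$, which is nonzero in general. The lemma, and its use through \cref{lem:ball}, needs $F^s\perp G_0(H_0^1)$. That is the \emph{oblique} projection onto $(G_0H_0^1)^\perp$ along $G_\alpha H_0^1$, not the orthogonal projection off $\operatorname{ran}G_\alpha$. As written, your argument produces the wrong gauge.

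The repair is the paper's formulation: keep the test functions $G_0Q$. The form $b(P,Q)=\langle G_\alpha P,G_0Q\rangle$ is not symmetric, but
\[
 b(P,P)\ge\norm{G_0P}_2^2-\norm{(G_\alpha-G_0)P}_2\norm{G_0P}_2\ge(1-\gamma_L)\norm{G_0P}_2^2 .
\]
So Lax--Milgram in the equivalent norm $\norm{G_0P}_2$ yields $G_0^*F^s=0$ directly. For $L=1$ it reduces to $\Delta p=\Div F_1$, with no smallness needed. The input required is therefore the perturbation bound $\norm{(G_\alpha-G_0)P}_2\le\gamma_L\norm{G_0P}_2$, not a lower bound on $\norm{G_\alpha P}_2$.

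To land exactly on $\gamma_L$, two corrections to your bookkeeping are needed.
\begin{enumerate}
\item Combine the two components in $\ell^2$: $\norm{(G_\alpha-G_0)P}_2^2\le\norm\alpha_\infty^2(3\norm s_2^2+\norm p_2^2)\le3\norm\alpha_\infty^2\norm P_2^2$. Going through $\norm{is}_2+\norm p_2$ loses a factor $\sqrt2$.
\item Drop the Korn factor $\tfrac12$, which is not available once $Dp$ has rank $k\ge3$. The correct pointwise bound is $|\sym(\xi\otimes\hat p)|^2\ge|\xi|^2|\hat p|^2/k$, from the identity $|\sym(\xi\otimes p)|^2=(|\xi|^2|p|^2+(k-1)|\iota_\xi p|^2)/k$. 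Combined with the Dirichlet eigenvalue $\pi^2$ of $B_1$, it gives $\norm P_2\le\sqrt L\norm{G_0P}_2/\pi$. Your Fourier fallback is the right route.
\end{enumerate}

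Finally, vanishing of the weighted line integrals of $G_\alpha P$ for $P\in H_0^1$ should be obtained by approximation with $C_c^\infty$ potentials. An $H_0^1$ field has no trace on each individual line.
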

\begin{proof}
Solve
\begin{equation}\label{eq:weakgauge}
 \avg{G_\alpha P,G_0Q}_{B_1}=\avg{F,G_0Q}_{B_1}
 \quad\text{for all }Q\in H_0^1(B_1).
\end{equation}
For a symmetric rank-$(m-1)$ tensor $p$,
$|\sym(\xi\otimes p)|^2=(|\xi|^2|p|^2+(m-1)|\iota_\xi p|^2)/m$.
Fourier transformation after zero extension, followed by the Dirichlet Poincar\'e inequality on the unit three-ball, gives
$\norm P_2\leq\sqrt L\norm{G_0P}_2/\pi$.
Also $\norm{i s}\leq\sqrt3\norm s$. Hence
$\norm{(G_\alpha-G_0)P}_2\leq\gamma_L\norm{G_0P}_2$.
The bilinear form is coercive in the equivalent $H_0^1$ norm $\norm{G_0P}_2$, so Lax--Milgram applies. Equation \eqref{eq:weakgauge} says $G_0^*F^s=0$. Approximation by smooth zero-boundary potentials proves the weighted line identity. The representative is an auxiliary difference, and need not satisfy the physical box.
\end{proof}

\subsection{The boundary term and a joint estimate}
Let $P_\xi=\Id-\xi\xi^\T/|\xi|^2$ and $\Pi_m(\xi)=P_\xi^{\otimes m}$ on symmetric tensors; $\Pi_0=\Id$.
\begin{lemma}[Ball projection and circle frame]\label{lem:ball}
For every finite $m$ there is $C_m<\infty$, with $C_0=1$, such that an interior-solenoidal $F\in L^2(B_1;S^m)$ satisfies
\begin{equation}\label{eq:ballproj}
 \norm{\Pi_m\widetilde F}_2\geq C_m^{-1}\norm F_{L^2(B_1)},
\end{equation}
where $\widetilde F$ is its zero extension. On transverse tensors,
\begin{equation}\label{eq:frame}
 \avg{|F[d^m]|^2}_n\geq2^{-m}|F|^2.
\end{equation}
The full circle-frame eigenvalue list is $2^{-m}\binom mj$, $0\leq j\leq m$.
\end{lemma}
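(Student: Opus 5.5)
The two claims are independent. I will prove the circle-frame bound \eqref{eq:frame} by an explicit diagonalization. The ball projection bound \eqref{eq:ballproj} follows the classical solenoidal-extension argument of Sharafutdinov \cite{sharafutdinov1994}: the longitudinal part of the zero extension is controlled by its transverse part, through the exterior problem.

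\textbf{Circle frame.} Take a transverse symmetric rank-$m$ tensor $F$, so $F$ lives on the plane $n^\perp$, and set $w=e_1+ie_2$. Then $d=\cos\theta\,e_1+\sin\theta\,e_2=\tfrac12(we^{-i\theta}+\bar we^{i\theta})$. Expanding multilinearly gives
\[
 F[d^m]=2^{-m}\sum_{k=0}^m\binom mk a_k\,e^{i(m-2k)\theta},
 \qquad a_k=F[w^k\bar w^{m-k}].
\]
The frequencies $m-2k$ are distinct, so orthogonality on the circle yields
$\avg{|F[d^m]|^2}_n=4^{-m}\sum_k\binom mk^2|a_k|^2$.

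The vectors $w/\sqrt2$ and $\bar w/\sqrt2$ form a Hermitian orthonormal basis of $n^\perp\otimes\mathbb C$. In this basis, a symmetric tensor has component $2^{-m/2}a_k$ repeated $\binom mk$ times. Hence $|F|^2=2^{-m}\sum_k\binom mk|a_k|^2$.

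Both quadratic forms are therefore diagonal in the coordinates $a_k$. Dividing gives the eigenvalues $4^{-m}\binom mk^2/(2^{-m}\binom mk)=2^{-m}\binom mk$ for $0\le k\le m$. Their minimum is $2^{-m}$, which proves \eqref{eq:frame} together with the full eigenvalue list. The case $m=0$ is trivial.

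\textbf{Ball projection.} For $m=0$ we have $\Pi_0=\Id$, and Plancherel gives $C_0=1$. For $m\ge1$, split the zero extension in Fourier space as $\widetilde F=\widetilde F^s+Dv$. Here $\widehat{\widetilde F^s}=\Pi_m\widehat{\widetilde F}$, and $v$ is the decaying rank-$(m-1)$ potential obtained by dividing the longitudinal components by $\xi$. The Fourier symbol makes $\norm{\nabla v}_{2}\lesssim\norm{\widetilde F}_2$. The parts are orthogonal, so
\[
 \norm F^2_{L^2(B_1)}=\avg{\widetilde F,\widetilde F^s}+\avg{F,Dv}_{B_1}.
\]
The first term is at most $\norm F\,\norm{\Pi_m\widetilde F}$.

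For the second term, use that $F$ is interior solenoidal. Then $\avg{F,DQ}_{B_1}=0$ for every $Q\in H^1_0(B_1)$, so $\avg{F,Dv}_{B_1}$ depends only on the trace $v|_{\partial B_1}$. Write $v=(v-Ev)+Ev$, where $E$ is a bounded $H^{1/2}(\partial B_1)\to H^1(B_1)$ extension of that trace. This gives $|\avg{F,Dv}_{B_1}|\le C\norm F\,\norm{v}_{H^{1/2}(\partial B_1)}$.

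It remains to bound the trace by the transverse part. Outside the ball, $\widetilde F=0$, so $Dv=-\widetilde F^s$ on $\R^3\setminus\overline B_1$. I will prove an exterior Korn--Poincar\'e inequality
\[
 \norm{v}_{H^{1/2}(\partial B_1)}\le C\norm{Dv}_{L^2(\R^3\setminus B_1)}
\]
for potentials $v$ whose gradient lies in $L^2$ and which decay at infinity. The kernel of $D$ consists of Killing tensors, which are polynomials. The decay condition excludes all of them except zero.

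Combining the three estimates gives $\norm F^2\le(1+C)\norm F\,\norm{\Pi_m\widetilde F}$, which is \eqref{eq:ballproj}.

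\textbf{Main obstacle.} The exterior inequality is the main difficulty. I would first try a compactness/contradiction argument on an annulus $B_R\setminus B_1$, using Korn's inequality for the symmetric gradient of symmetric tensors on bounded Lipschitz domains (the finite-rank version in \cite{sharafutdinov1994}). I would pair this with a weighted Hardy inequality at infinity to rule out a nonzero limit.

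The qualitative content is injectivity, and it can be checked directly. Suppose $\Pi_m\widehat{\widetilde F}=0$. Then $Dv=0$ outside the ball, so the decaying $v$ vanishes there. Thus $v\in H^1_0(B_1)$, and solenoidality gives $\norm F^2=\avg{F,Dv}=0$.

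This makes finiteness of $C_m$ at each fixed rank plausible. The compactness route gives no explicit value, which is consistent with the paper's remark that the constants are explicit only through rank one.
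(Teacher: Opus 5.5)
Your circle-frame computation for \eqref{eq:frame} is essentially the paper's own: expand in the Hermitian basis $(e_1\pm ie_2)/\sqrt2$, use the distinct frequencies $m-2k$, and read off the ratios $2^{-m}\binom mk$.

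For \eqref{eq:ballproj} you take a genuinely different route. The paper argues by duality. It solves a Neumann problem for $D$ on the bounded annulus $A=B_2\setminus\overline B_1$, with data $\ell_F$, and glues $V=F$ on $B_1$, $V=Du$ on $A$ and $V=0$ outside. It then checks that $V$ is globally solenoidal, so $\Pi_mV=V$, and concludes from $\norm F_2^2=\avg{\Pi_m\widetilde F,V}$. You instead split $\widetilde F$ itself and control the pairing of $F$ with its longitudinal potential. Both arguments rest on the same fact: $F$ sees a potential only through its boundary values modulo Killing tensors.

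The one step you leave open is the exterior inequality $\norm v_{H^{1/2}(\partial B_1)}\le C\norm{Dv}_{L^2(\R^3\setminus B_1)}$. It is true, but your sketch does not yet close it. A Hardy inequality bounds $v$ by $\nabla v$, not by $Dv$. You would therefore still need an exterior Korn estimate at infinity, for instance by a cut-off and the whole-space identity $\norm{Dw}_2^2\geq m^{-1}\norm{\nabla w}_2^2$, followed by Sobolev.

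You can avoid infinity altogether. Let $k$ attain the infimum in Korn's inequality modulo Killing tensors on $A$. Since $k$ is a polynomial, it is defined on $B_2$ with $Dk=0$ there. Let $E:H^1(A)\to H^1(B_2)$ be an extension operator. Then $(v-k)-E((v-k)|_A)$ restricts to $H^1_0(B_1)$, and interior solenoidality gives
$\avg{F,Dv}_{B_1}=\avg{F,D(v-k)}_{B_1}=\avg{F,DE((v-k)|_A)}_{B_1}$.
This is bounded by $e_mK_m\norm F_2\norm{Dv}_{L^2(A)}=e_mK_m\norm F_2\norm{\widetilde F^s}_{L^2(A)}$. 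Your first term is at most $\norm F_2\norm{\widetilde F^s}_{L^2(B_1)}$. Cauchy--Schwarz over $B_1$ and $A$ then gives exactly the paper's constant $C_m=\sqrt{1+(e_mK_m)^2}$.

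With this Killing subtraction, your primal argument uses precisely the paper's two annular inputs: Korn modulo Killing tensors, and polynomial extension of Killing tensors. It needs no decay or exterior analysis. What your version buys is transparency: it shows the bound comes from the Helmholtz splitting of the zero extension and the boundary trace of its potential. The paper's dual construction never needs that splitting, and it produces an explicit globally solenoidal extension with the same norm control.
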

The proof of \eqref{eq:ballproj}, including the Killing-field compatibility on a fixed Euclidean annulus, is in \cref{app:ball}. Interior solenoidality does not imply that the zero extension is solenoidal. For \eqref{eq:frame}, use the Hermitian orthonormal basis $e_\pm=(e_1\pm i e_2)/\sqrt2$. The normalized symmetric products have distinct frequencies $m-2j$ and squared amplitudes $2^{-m}\binom mj$; Parseval gives the assertion.

Let $\overline C_L=\max_{0\leq m\leq L}C_m$, $b_L=2^{-L/2}/\overline C_L$, and
\begin{equation}\label{eq:jointconstant}
 u_0=\max_n\avg{T_0^2|U_0|^2}_n^{1/2},\qquad
 \lambda_L=\frac{\kappa_L}{\sqrt{1+((\kappa_L+u_0)/(t_-b_L))^2}}.
\end{equation}
The two ranks in \eqref{eq:tensorrep} have opposite frequency parity, so the unweighted circle cross term vanishes. Equations \eqref{eq:slice}, \eqref{eq:ballproj}, and \eqref{eq:frame} give
$\norm{I_{T_0}q^s}_Y\geq t_-b_L\norm{F^s}_2$.
Using the density estimate and the triangle inequality then yields
\begin{equation}\label{eq:joint}
 \norm{A_0(h,q^s)}_Y\geq\lambda_L
 (\norm h_2^2+\norm{F^s}_2^2)^{1/2}.
\end{equation}
Unlike $\kappa_L$, this joint constant is not claimed to be optimal.

\subsection{An explicit weight perturbation bound}
For a compactly supported matrix weight $E(y,d)$ with finite spatial Fourier-Wiener norm, set
\begin{equation}\label{eq:algebra}
 \cA(E)=(2\pi)^{-3/2}\int_{\R^3}\sup_d
 \norm{\F_3E(\eta,d)}_{\mathrm{op}}\dd\eta.
\end{equation}
Then, for $I_Ez=\int E(y,d)z(y)\dd t$, the first estimate below holds whenever $\cA(E)<\infty$. For the second, assume additionally that $E$ is spatially $C^4$ with compact support and derivatives bounded uniformly in $d$:
\begin{equation}\label{eq:weightbound}
 \norm{I_Ez}_Y\leq\cA(E)\norm z_2,\qquad
 \cA(E)\leq\frac1{8\pi}\int\sup_d
 \norm{(1-\Delta_y)^2E(y,d)}_{\mathrm{op}}\dd y.
\end{equation}
To verify the first estimate, expand $E$ by Fourier inversion and apply Minkowski to constant angular multipliers acting on $e^{i\eta\cdot y}z$. The coarea identity cancels the $|\xi|$ in \eqref{eq:Y} before the translation $\xi\mapsto\xi-\eta$; no bounded trace of a general $L^2$ function on one plane is assumed. The second estimate follows from integration by parts and $\int_{\R^3}(1+|\eta|^2)^{-2}\dd\eta=\pi^2$.

Use \eqref{eq:exactdifference} to write $g=A_W(h,F)$, where
\begin{equation}\label{eq:W}
 W(y,d)(h,F)=T_1(ry,d)\big(F_L[d^L]+F_{L-1}[d^{L-1}]-U_2(ry,d)h\big).
\end{equation}
Let $W_0$ use $T_0,U_0$ instead. Choose $r$ so that $\overline{B_{2r}(x_0)}\subset\Omega$, and a smooth $\chi=1$ on $B_1$, supported in $B_2$. Thus the rescaled physical weight is defined on the cutoff support; define $C_E=\cA(\chi(W-W_0))$.

\begin{theorem}[Local nonlinear density stability]\label{thm:nonlinear}
Under \cref{ass:class}, suppose $\kappa_L(x_0)>0$. For a pair of candidates satisfying $\gamma_L<1$ when $L\geq1$ and $C_E<\lambda_L$,
\begin{equation}\label{eq:localstability}
 \norm{(\sigma_2-\sigma_1)(r\,\cdot)}_2
 \leq\frac{\norm{[\cR_2-\cR_1](r\,\cdot,\cdot)/r}_Y}{\kappa_L(1-C_E/\lambda_L)}.
\end{equation}
For every fixed finite $L$ with $\kappa_L(x_0)>0$, a sufficiently small radius and a nonempty relative $C^6$ neighborhood satisfy these conditions uniformly for all pairs. Equal complete data imply equal density in that class. No source uniqueness is asserted.
\end{theorem}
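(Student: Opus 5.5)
The argument combines the exact difference identity of \cref{lem:transport}, the coupled gauge of \cref{lem:gauge}, the joint frozen estimate \eqref{eq:joint}, and the weight perturbation bound \eqref{eq:weightbound}.

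\textbf{Rescaling and gauge.} Work in the rescaled coordinates $y=x/r$ with $h,q,g$ as above. By \eqref{eq:exactdifference}, $g=A_W(h,F)$, where $W$ is given by \eqref{eq:W} and $F$ is the tensor representation \eqref{eq:tensorrep} of $q$. For $L\geq1$ and $\gamma_L<1$, apply \cref{lem:gauge} with $\alpha=r\sigma_1(r\cdot)$. This replaces $F$ by the interior-solenoidal representative $F^s=F-G_\alpha P$ with $P\in H_0^1(B_1)$. The substitution leaves $g$ unchanged, because zero-boundary potentials have zero $T_1$-weighted line integrals: $d\cdot\nabla_yT_1=-\alpha T_1$. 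The density component $h$ is untouched. Hence $g=A_W(h,F^s)$.

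\textbf{Absorption of the weight change.} Since $h$ and $F^s$ are supported in $\overline{B_1}$ and $\chi=1$ there, write
\[
g=A_0(h,q^s)+I_{\chi(W-W_0)}(h,F^s).
\]
The first estimate in \eqref{eq:weightbound} gives $\norm{I_{\chi(W-W_0)}z}_Y\leq C_E\norm z_2$ with $z=(h,F^s)$. Combining this with \eqref{eq:joint} yields
\[
\norm g_Y\geq(\lambda_L-C_E)\norm z_2 .
\]
This controls the full gauged unknown: $\norm z_2\leq\norm g_Y/(\lambda_L-C_E)$.

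\textbf{The sharp density step.} Apply \cref{thm:frozen} to $A_0(h,q^s)$. The degree of $q^s$ is still at most $L$, so
\[
\kappa_L\norm h_2\leq\norm{A_0(h,q^s)}_Y\leq\norm g_Y+C_E\norm z_2
\leq\norm g_Y\Bigl(1+\frac{C_E}{\lambda_L-C_E}\Bigr)=\frac{\norm g_Y}{1-C_E/\lambda_L}.
\]
This is \eqref{eq:localstability}. Equal data give $h=0$; nothing is claimed for the source, since $F^s$ is only a gauge representative.

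\textbf{Uniformity in small neighborhoods (main obstacle).} It remains to show that for fixed $L$, a small $r$ and a relative $C^6$ neighborhood make $\gamma_L<1$ and $C_E<\lambda_L$ hold for all pairs. The first is easy: $\gamma_L\leq\sqrt{3L}\,r\sup\sigma_1/\pi$, and $\sigma_1$ is uniformly bounded on the neighborhood.

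For $C_E$, use the second bound in \eqref{eq:weightbound}. It requires four spatial $y$-derivatives of $\chi(W-W_0)$, bounded uniformly in $d$ on $B_2$. The difference $T_1(ry,d)-T_0(d)$, and similarly $U_2(ry,d)-U_0(d)$, splits into two parts:
\begin{itemize}
\item Variation of the reference over $B_{2r}(x_0)$. This is $O(r)$ in $C^4_y$: each $y$-derivative brings a factor $r$, and \cref{ass:class} gives uniform interior $C^4$ regularity of $T_*,U_*$.
\item The deviation of $(\sigma_1,j_1)$ and $(\sigma_2,j_2)$ from the reference. This is controlled by the $C^6$ neighborhood size, through differentiation of the transport integrals \eqref{eq:T}--\eqref{eq:U} along chords with smooth exit map.
\end{itemize}
The angular factors $d^L$ and $d^{L-1}$ are bounded, so $C_E\to0$ as $r$ and the neighborhood shrink.

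Meanwhile $\lambda_L$ in \eqref{eq:jointconstant} depends only on the frozen data $T_0,U_0$ and on $C_m$ for $m\leq L$. These are fixed and positive by \cref{lem:ball}, so $\lambda_L$ is a fixed positive number. The delicate bookkeeping is that the $C^4$ bounds on $T_1$ and $U_2$ must be uniform in $d$ and over the neighborhood, including chords that leave the unknown ball and hit the fixed exterior jump. I would handle this by noting that only $y\in B_2$, that is $x\in B_{2r}(x_0)$, enters $C_E$, and the exit map is smooth there.
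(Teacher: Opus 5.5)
Your proposal is correct and follows the paper's proof. You gauge with the exact attenuation $T_1$, split $g=A_0z+I_{\chi(W-W_0)}z$ for the same $z=(h,F^s)$, absorb the $C_E$ term through the joint bound before applying the sharp $\kappa_L$ estimate, and get uniformity from $C_E\to0$, $\gamma_L\to0$ as $r\to0$ together with $C^6$ continuity. The only difference is bookkeeping: you first derive $\norm z_2\le\norm g_Y/(\lambda_L-C_E)$ and substitute it back, whereas the paper writes $\norm{I_{\chi(W-W_0)}z}_Y\le(C_E/\lambda_L)\norm{A_0z}_Y$ directly, and both yield the identical constant $1/[\kappa_L(1-C_E/\lambda_L)]$.
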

\begin{proof}
Apply \cref{lem:gauge} with the exact attenuation $T_1$ in \eqref{eq:W}. It leaves both $h$ and the data unchanged. For the same $z=(h,F^s)$, \eqref{eq:joint} gives $\norm z_2\leq\norm{A_0z}_Y/\lambda_L$. Subtract the perturbation bound before applying the sharp density estimate:
\[
 \norm g_Y\geq(1-C_E/\lambda_L)\norm{A_0z}_Y
 \geq\kappa_L(1-C_E/\lambda_L)\norm h_2.
\]
The original absorption condition remains $C_E<\lambda_L$.
At the reference, $\chi[W_*(ry,d)-W_0(d)]\to0$ in spatial $C^4$, uniformly in $d$, as $r\to0$. Equation \eqref{eq:weightbound} makes $C_E\to0$, and $\gamma_L\to0$ as well. Continuity of the forward integrals and their derivatives under $C^6$ coefficient perturbations preserves the strict inequalities in a small relative neighborhood. The physical margins persist under small $C^0$ changes. Scaling the offset by $r$ gives a factor $r^{3/2}$ in both the density $L^2$ norm and the data norm, so the same constant holds in unscaled coordinates.
\end{proof}

This theorem proves more than symbol ellipticity, but less than a large-domain uniqueness theorem. It controls the full operator by a frozen estimate and a quantified perturbation. The reference/exterior information and smoothness bounds are priors; they are not recovered from RGB by this argument.

\subsection{Physical nonempty examples for every finite order}\label{sec:positiveexamples}
For each fixed $L$, set $M=L+1$, $\sigma_*=1$, and $j_{*,k}=1/2+x_k^M/4$ on $B_1$. These fields have isotropic color and strict physical margins. At the center,
$T_0=e^{-1}$ and $U_{0,k}=(1-e^{-1})/2+B_Md_k^M$, where
$B_M=\tfrac14\int_0^1e^{-t}t^M\dd t>0$.
Only the two highest circle frequencies remain after degree-$L$ projection. Their squared energy is $B_M^2 2^{1-2M}(1-n_k^2)^M$. Since $\sum_k(1-n_k^2)=2$, convexity gives the minimum at $n_k^2=1/3$ (with equality throughout for $M=1$). Thus
\begin{equation}\label{eq:everyL}
 \kappa_L(0)=e^{-1}B_{L+1}6^{-L/2}>0.
\end{equation}
These are different reference scenes for different $L$. They refute a universal claim that every finite degree beyond a fixed threshold destroys local identifiability, but do not describe a fixed-scene improvement.

\section{Fragility of the actual volume problem}\label{sec:fragility}
Frozen optimality alone is not an algorithm-independent statement about a spatially varying nonlinear problem. We now localize the envelope and increase its oscillation count simultaneously.

\begin{theorem}[True-operator and nonlinear obstruction]\label{thm:fragility}
Assume the margins and smoothness in \cref{ass:class}. Let $\mathcal U$ contain a relative $C^6$ neighborhood of the reference among physical, degree-$L$, compact interior perturbations. Then any estimate
\begin{equation}\label{eq:inverseC}
 \norm{\sigma_2-\sigma_1}_2\leq C\norm{\cR_2-\cR_1}_Y
 \quad\text{on }\mathcal U
\end{equation}
requires $C\geq1/\kappa_L(x_0)$. If $\kappa_L(x_0)=0$, no finite such constant exists. This conclusion does not by itself imply exact nonlinear nonuniqueness.
\end{theorem}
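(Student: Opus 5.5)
We prove the bound by exhibiting, inside $\mathcal U$, pairs of physical candidates whose density-to-data ratio approaches $1/\kappa_L(x_0)$. We use the two-scale packets of \eqref{m:packet} and pass from the linearization to the exact difference \eqref{eq:exactdifference} by taking the amplitude small last. Fix a minimizing normal $n_0$ for \eqref{eq:gain}, optimal real circle polynomials $p_k$, and real lifts $P_k\in\cH_{\le L}$. Fix a nonzero real $\phi\in C_c^\infty(B_1)$. For $r>0$ and $N\in\mathbb N$, set $h_{r,N}=r^{-3/2}\phi((x-x_0)/r)\cos(Nn_0\cdot(x-x_0)/r)$ and $q_{r,N,k}=P_k(d)h_{r,N}$. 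These fields are smooth, compactly supported in $B_r(x_0)$, of degree $\le L$ in $d$, and have $\norm{h_{r,N}}_2\to\norm\phi_2/\sqrt2$ as $N\to\infty$. The candidate pair is the reference $(\sigma_*,j_*)$ and $(\sigma_*+ah_{r,N},\,j_*+aq_{r,N})$ with $a>0$.

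\emph{Step 1: the linearized ratio.} We establish \eqref{m:packet}. After rescaling $y=(x-x_0)/r$, the weights $T_*(x_0+ry,d)$ and $U_*(x_0+ry,d)$ differ from $T_0,U_0$ by $O(r)$ in spatial $C^4$, uniformly in $d$, on the cutoff support. The interior $C^4$ regularity of $T,U$ under \cref{ass:class} gives this. The weight bound \eqref{eq:weightbound} then shows that the difference $D\cR_*[h,q]-A_0(h,q)$ has $Y$ norm at most $C_{\rm sp}r\norm{h}_2$, using the same $r^{3/2}$ scaling as in the proof of \cref{thm:nonlinear}. For the frozen part we use the slice identity \eqref{eq:slice}. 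The Fourier transform of $h_{r,N}$ concentrates near $\pm Nn_0/r$ with width $O(1/r)$, so its normal lies within angle $O(1/N)$ of $\pm n_0$. The cross term between the two lobes decays rapidly in $N$. The integrand is $\avg{T_0^2|P-U_0|^2}_{\xi^\perp}|\widehat h|^2$, and this circle energy is smooth in the normal with value $\kappa_L^2$ at $\pm n_0$. Expanding it to second order around its minimum gives $\kappa_L^2+C_{\rm wp}/N^2$ after normalization. Letting $N\to\infty$ and then $r\to0$, the ratio $\norm{D\cR_*[h,q]}_Y/\norm h_2$ tends to $\kappa_L$.

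\emph{Step 2: physicality and the nonlinear remainder.} With $r,N$ fixed, all fields are smooth, and $\norm{aq_{r,N}}_\infty+\norm{ah_{r,N}}_\infty=O(a)$. The strict margins $\sigma_*\ge s_*$, $j_{*,k}\ge\mu$ and $\sigma_*-j_{*,k}\ge\mu$ near $x_0$ therefore survive for small $a$. The $C^6$ distance is also $O(a)$, so the perturbed pair lies in $\mathcal U$ once $a$ is small. By \eqref{eq:exactdifference},
\[
 \cR_2-\cR_1-aD\cR_*[h,q]=a\int\big[(T_1-T_*)(q-U_*h)-T_1(U_2-U_*)h\big]\dd t ,
\]
where $T_1=T_*$ because candidate 1 is the reference. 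The weight difference $U_2-U_*$ is $O(a)$ in spatial $C^4$. \eqref{eq:weightbound} then bounds the remainder by $O(a^2)\norm h_2$ in $Y$. Dividing by $a\norm h_2$, the exact data-to-density ratio equals the linearized ratio plus $O(a)$. Choosing $a\to0$ last shows that any $C$ in \eqref{eq:inverseC} satisfies $1/C\le\sqrt{\kappa_L^2+C_{\rm wp}/N^2}+C_{\rm sp}r+o(1)$. Hence $C\ge1/\kappa_L$. If $\kappa_L=0$, the same argument makes the ratio arbitrarily small, so no finite constant exists.

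\emph{Main obstacle.} The delicate step is the uniform bound in Step 1 on the spatial-variation error. The $Y$ norm has a half-derivative weight, while the packets have frequency $N/r$. We must check that the estimate $\norm{I_Ez}_Y\le\cA(E)\norm z_2$ has no $N$-dependent loss, since otherwise $r$ and $N$ cannot be decoupled. The first inequality in \eqref{eq:weightbound} handles this: the coarea identity absorbs $|\xi|$ before the frequency translation, and $\cA(\chi(W_*-W_0))=O(r)$ is independent of $N$. The only remaining care is the order of limits: first $a\to0$ at fixed $(r,N)$, then $N\to\infty$, then $r\to0$. This order keeps every candidate inside the fixed neighborhood $\mathcal U$ with its physical box.
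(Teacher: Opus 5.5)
Your proposal is correct and takes essentially the same approach as the paper: the same two-scale packets $h_{r,N}$ and $q_{r,N}=P\,h_{r,N}$, the slice identity combined with a second-order bound on the circle energy near the minimizing normal, the $N$-independent $O(r)$ weight-perturbation estimate from \eqref{eq:weightbound}, and an $O(a^2)$ nonlinear remainder at fixed $(r,N)$ with the amplitude chosen last. The differences are cosmetic: you take the iterated limit $a\to0$ at each fixed $(r,N)$ where the paper uses the diagonal choice $a_j=\min\{b_j/j,1/[j(1+K_j)]\}$, and you dispose of the cross term between the two Fourier lobes by rapid decay, whereas the paper's constant $C_{\rm wp}=16D_0\norm{\nabla\phi}_2^2$ absorbs it directly.
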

\begin{proof}
Choose the minimizing normal and spherical polynomial $P$ from \cref{thm:frozen}. Put
$F(n)=\avg{|T_0(P-U_0)|^2}_n$. It is even, $C^2$, and has a minimum $\kappa_L^2$ at $\pm n_0$. Therefore
\begin{equation}\label{eq:Dpacket}
 D_0=\sup_{n\ne\pm n_0}
 \frac{F(n)-\kappa_L^2}{\min(|n-n_0|,|n+n_0|)^2}<\infty.
\end{equation}
Let $\phi\in C_c^\infty(B_1)$ be real, $\norm\phi_2=1$, and
$H_N(y)=\phi(y)\cos(Nn_0\cdot y)/c_N$, where $c_N$ is the normalizing norm. For all large $N$, $c_N\geq1/2$. Define physical perturbation directions
\begin{equation}\label{eq:packet}
 h_{r,N}(x)=r^{-3/2}H_N((x-x_0)/r),\qquad q_{r,N}(x,d)=P(d)h_{r,N}(x).
\end{equation}
They are real and $\norm{h_{r,N}}_2=1$. Fourier concentration and
$|\xi/|\xi|-n_0|\leq2|\xi-Nn_0|/N$ give
\begin{equation}\label{eq:packetfrozen}
 \kappa_L^2\leq\norm{I_{T_0(P-U_0)}H_N}_Y^2
 \leq\kappa_L^2+C_{\rm wp}/N^2,
 \quad C_{\rm wp}=16D_0\norm{\nabla\phi}_2^2.
\end{equation}
For $W(x,d)=T_*(x,d)[P(d)-U_*(x,d)]$, a cutoff of
$W(x_0+ry,d)-W(x_0,d)$ has spatial $C^4$ norm $O(r)$. Equation \eqref{eq:weightbound} implies an operator error at most $C_{\rm sp}r$. The physical data scaling is $r^{-1/2}G((p-p_0)/r,d)$, whose $Y$ norm equals $\norm G_Y$. Consequently,
\begin{equation}\label{eq:twoscales}
 (\kappa_L-C_{\rm sp}r)_+\leq\norm{D\cR_*[h_{r,N},q_{r,N}]}_Y
 \leq\sqrt{\kappa_L^2+C_{\rm wp}/N^2}+C_{\rm sp}r.
\end{equation}
Let $r_j\to0$, $N_j\to\infty$. The derivative gains tend to $\kappa_L$.

It remains to justify a nonlinear physical sequence. For each fixed smooth direction $v_j=(h_j,q_j)$, there are finite $K_j$ and $b_j>0$ such that
\begin{equation}\label{eq:remainder}
 \norm{\cR(\theta_*+av_j)-\cR(\theta_*)-aD\cR_*v_j}_Y\leq K_j a^2
 \quad (|a|\leq b_j),
\end{equation}
where $\theta=(\sigma,j)$. This follows from \eqref{eq:exactdifference}, the $C^4$ differentiability of $U_{\theta_*+av_j}$ in $a$, and \eqref{eq:weightbound}. Choose $b_j$ also to enforce the physical margins and the neighborhood bound for both signs. With
$a_j=\min\{b_j/j,1/[j(1+K_j)]\}$, the fields converge in $C^6$ to the reference and their nonlinear data-to-density ratios converge to $\kappa_L$. Applying \eqref{eq:inverseC} proves the claim. The constants $K_j$ are not assumed uniform over shrinking packets.
\end{proof}

\begin{corollary}[Any-estimator small-noise bound]\label{cor:minimax}
For deterministic $Y$-bounded noise, let
\begin{equation}\label{eq:minimaxdef}
 M_{\mathcal U}(\delta)=\inf_{\widehat\sigma}
 \sup_{\theta\in\mathcal U,\ \norm e_Y\leq\delta}
 \norm{\widehat\sigma(\cR(\theta)+e)-\sigma_\theta}_2.
\end{equation}
Then $\liminf_{\delta\downarrow0}M_{\mathcal U}(\delta)/\delta\geq1/\kappa_L(x_0)$, with infinity when $\kappa_L=0$.
\end{corollary}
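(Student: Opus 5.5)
The plan is a standard two-point (modulus-of-continuity) argument built on the nonlinear physical sequence from the proof of \cref{thm:fragility}. That proof supplies physical pairs $\theta_j=\theta_*+a_jv_j\in\mathcal U$ with reference $\theta_*\in\mathcal U$. Their density gaps are $H_j=a_j\norm{h_{r_j,N_j}}_2=a_j>0$. Their data gaps $R_j=\norm{\cR(\theta_j)-\cR(\theta_*)}_Y$ satisfy $R_j/H_j\to\kappa_L(x_0)$, and both gaps tend to zero. When $\kappa_L=0$, the same sequence has $R_j/H_j\to0$.

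\textbf{Two-point step.} Fix $j$ and set $\delta_j=R_j/2$. Let $m$ be the midpoint $(\cR(\theta_j)+\cR(\theta_*))/2$ in the affine data space $\cR(\theta_*)+Y$. Then $e=m-\cR(\theta_j)$ and $e'=m-\cR(\theta_*)$ both have $Y$ norm $\delta_j$. Under these errors the two scenes produce the same observation. Hence any estimator $\widehat\sigma$ satisfies
\[
\max\bigl(\norm{\widehat\sigma(m)-\sigma_{\theta_j}}_2,\norm{\widehat\sigma(m)-\sigma_{\theta_*}}_2\bigr)\ge H_j/2
\]
by the triangle inequality. This gives $M_{\mathcal U}(\delta)\ge H_j/2$ for every $\delta\ge\delta_j$, so $M_{\mathcal U}(\delta_j)/\delta_j\ge H_j/R_j\to1/\kappa_L$.

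\textbf{Passing to all small $\delta$.} A liminf over all $\delta\downarrow0$ needs more than the discrete sequence $\delta_j$. The fix is to use the scaling freedom in the amplitude: for each fixed direction $v_j$, vary $a\in(0,a_j]$. The remainder bound \eqref{eq:remainder} gives $R(a)=a\norm{D\cR_*v_j}_Y+O(K_ja^2)$, which is continuous in $a$ and tends to zero. So for every $\delta$ below some threshold $\delta^{(j)}$ there is an $a$ with $R(a)=2\delta$. The ratio $H/R$ at that $a$ is within $o(1)$ of $1/\norm{D\cR_*v_j}_Y$. By \eqref{eq:twoscales}, this quantity is at least $1/(\sqrt{\kappa_L^2+C_{\rm wp}/N_j^2}+C_{\rm sp}r_j)$.

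\textbf{Conclusion.} For each $j$, the previous step yields $\liminf_{\delta\downarrow0}M_{\mathcal U}(\delta)/\delta\ge(1-o(1))/(\sqrt{\kappa_L^2+C_{\rm wp}/N_j^2}+C_{\rm sp}r_j)$. Letting $j\to\infty$ gives $1/\kappa_L$. If $\kappa_L=0$, the right side diverges, so the liminf is infinite. The only delicate point is membership of the scaled fields in $\mathcal U$ for all smaller amplitudes. This holds because the physical margins and the $C^6$ neighborhood bound were enforced for all $|a|\le b_j$ in \cref{thm:fragility}.
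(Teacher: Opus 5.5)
Your proposal is correct and follows essentially the paper's argument: a midpoint two-point bound along one scaled physical packet direction, valid for every sufficiently small $\delta$, followed by a limit over directions whose derivative gain approaches $\kappa_L(x_0)$. The paper simply skips your intermediate-value step. It fixes a direction with gain at most $\kappa_L+\eta$, uses \eqref{eq:remainder} to bound the data change by $a(\kappa_L+2\eta)$, and sets $a=2\delta/(\kappa_L+2\eta)$ directly, so it needs only an upper bound on the data gap, not $R(a)=2\delta$ or continuity of $a\mapsto\cR(\theta_*+av_j)$.
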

\begin{proof}
Fix $\eta>0$ and one physical perturbation direction of unit density norm with derivative gain at most $\kappa_L+\eta$. For all sufficiently small $a>0$, \eqref{eq:remainder} bounds its nonlinear data change by $a(\kappa_L+2\eta)$. For every sufficiently small $\delta$, take $a=2\delta/(\kappa_L+2\eta)$. The midpoint of the two data sets is within $\delta$ of each. Every estimator makes density error at least $a/2$ on one of the two fields. First take $\liminf$ in $\delta$, then $\eta\downarrow0$. This order proves a bound for all small noise levels, not merely a selected subsequence.
\end{proof}

Combining \cref{cor:ladder,cor:minimax}, the necessary asymptotic inverse coefficient for a fixed scene cannot decrease with $L$. When $S_s>0$, it is at least $(L+1)^s/(t_+S_s)$. The limit is taken at fixed $L$ first; no useful uniform finite-noise cutoff or finite-pixel risk follows without additional frequency and sampling estimates. These are classical packet and two-point principles applied to the explicit source-refitted density gain.

\section{Same-reference capacity separation: complete arguments}\label{app:capacityproofs}
We prove the source-genericity and capacity-hiding statements used in \cref{m:coexistence}, retaining the sharp nonlinear density coefficient. The setting is the unit ball, complete oriented lines, black background and effective sources, either free across RGB or constrained to fixed channel proportions. Density and finite-degree source coefficients are smooth on the closed ball; a fixed exterior jump to vacuum is permitted. All differences use \eqref{eq:Y}. The RGB source class is \eqref{m:XD}--\eqref{m:Amu}, with compact sets $K,D\subset\Omega$ with $K\Subset\operatorname{int}D$ and nonempty interior for $K$.

\begin{figure}[htbp]
\centering\includegraphics[width=\textwidth]{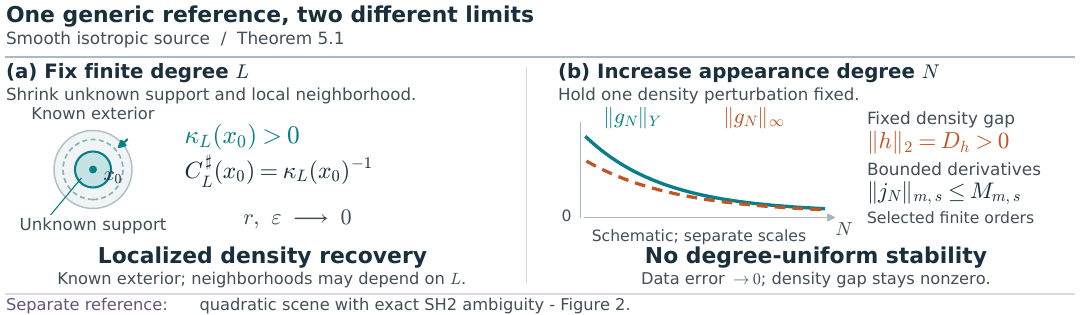}
\caption{\textbf{The two limits in the analytic same-reference theorem.} All drawings and curves are schematic, with separate scales and no measured rate. The fixed-degree neighborhood limit gives $C_L^\sharp(x_0)=1/\kappa_L(x_0)$. Increasing degree then hides a fixed density difference within selected finite derivative bounds. The lower note identifies the quadratic exact SH2 example as a separate reference. The numerical affine-source montage in \cref{m:teaser} illustrates R and is outside $\mathcal G$.}\label{cap:overviewfigure}
\end{figure}

\subsection{Physical hiding with the actual positive approximants (R)}\label{cap:R}
This construction does not require genericity. Fix any physical smooth reference with finite source degree $L_0$ and a nonzero $h\in C_c^\infty(\operatorname{int}K)$, $h\geq0$. Its remaining radiance $U$ is smooth on a neighborhood of $K\times\Sph^2$, since the exit distance is smooth there and the coefficients extend smoothly to the physical boundary. No division by density is used, including in vacuum. The transport comparison gives $0\leq U_k\leq1$.

For $N\geq L_0$ define $P_N=\cV_NU$ by \eqref{m:positiveoperator} and $j_N=j+hP_N$. This is the classical positive spherical de la Vall\'ee Poussin operator \cite[(4)--(7)]{yang2011}: it preserves constants and has degree at most $N$, but is not the degree-$N$ orthogonal projection. For our normalization its mass and second chord moment are
\begin{equation}\label{cap:moments}
 (N+1)\int_0^1u^N\dd u=1,\qquad
 4(N+1)\int_0^1(1-u)u^N\dd u=\frac4{N+2}.
\end{equation}
Positivity gives $0\leq P_N\leq1$ and hence
\begin{equation}\label{cap:box}
 j_N\geq0,\qquad \sigma+h-j_N=(\sigma-j)+h(1-P_N)\geq0.
\end{equation}
Any source/absorption slack of the reference on the changed support is preserved.

The zonal operator commutes with spatial derivatives and rotations, hence with every ordered rotation word. Applying its contraction and approximate-identity property to this finite list of derivatives yields
\begin{equation}\label{cap:contraction}
 \norm{P_N}_{m,s;K}\leq\norm U_{m,s;K},\qquad
 \norm{P_N-U}_{m,s;K}\longrightarrow0.
\end{equation}
Uniformity over $x\in K$ follows from compactness: the kernel mass outside $|d-e|<\delta$ is at most $4/((N+2)\delta^2)$. This also verifies the mixed-derivative statement, rather than only convergence at each spatial point.

Leibniz' rule now supplies the global bound
\begin{equation}\label{cap:jetbound}
 \norm{j_N}_{m,s}\leq\norm j_{m,s}+2^m\norm h_{C^m}\norm U_{m,s;K}.
\end{equation}
Outside $K$ the source does not change. The density $\sigma+h$ is fixed. Thus every prescribed finite collection of derivative orders has common finite bounds independent of $N$. This is the quantifier order $\forall(m,s)\ \exists M_{m,s}\ \forall N$, not one bound for every derivative order.

The classical transport identity in \cref{lem:transport} gives $\cR(\sigma+h,j+hU)=\cR(\sigma,j)$: the reference $U$ solves the modified exit-value equation. Linearity in the source at density $\sigma+h$ then gives
\begin{equation}\label{cap:residual}
 g_N:=\cR(\sigma+h,j_N)-\cR(\sigma,j)
 =X_{T_{\sigma+h}(P_N-U)}h.
\end{equation}
Choose a fixed smooth $\chi=1$ near $\supp h$, supported in $\operatorname{int}K$, and put $w_\chi=\chi T_{\sigma+h}$. Extended by zero, $E_N=w_\chi(P_N-U)$ is spatially compact and $C^4$. The weight estimate \eqref{eq:weightbound} gives, with $D_{4,N}=\norm{P_N-U}_{4,0;K}$,
\begin{align}
 \norm{g_N}_Y&\leq C_\chi D_{4,N}\norm h_2\longrightarrow0,\label{cap:Ybound}\\
 C_\chi&=\frac{169\sqrt3\,|\supp\chi|}{8\pi}
 \max_{|\alpha|\leq4}\norm{D_x^\alpha w_\chi}_\infty.\label{cap:Cchi}
\end{align}
Indeed the product coefficients in $(1-\Delta)^2$ are bounded in total by $1+2\cdot3\cdot4+3^2\cdot16=169$. The $\sqrt3$ converts channel maxima to the RGB norm. This finite constant may depend on the fixed $h$, but not on $N$. Separately,
\begin{equation}\label{cap:RGBbound}
 \norm{g_N}_\infty\leq\beta_h\norm{P_N-U}_{0,0;K}\to0,\qquad
 \beta_h=\sup_{\text{rays}}\int T_{\sigma+h}h\dd t\leq1.
\end{equation}
Small maximum RGB alone would not imply \eqref{cap:Ybound}; its proof uses spatial $C^4$ convergence.

\paragraph{Optional sufficient rate.}
Let $H_{4,2}$ bound the intrinsic spherical Hessians of all $D_x^\alpha U_k$, $|\alpha|\leq4$. At fixed geodesic distance $\theta$, the zonal kernel cancels the first-order Taylor term. The second-order remainder is at most $\theta^2H_{4,2}/2$. Since $\theta\leq\pi|d-e|/2$, \eqref{cap:moments} implies
\begin{equation}\label{cap:rate}
 D_{4,N}\leq\frac{\pi^2H_{4,2}}{2(N+2)}.
\end{equation}
This classical $O(N^{-1})$ sufficient estimate is not an optimal degree law, nor a numerical degree certificate without bounds on $H_{4,2}$ and $C_\chi$. For any specified finite derivative orders, enough finite spatial regularity also suffices; the closed-ball $C^\infty$ hypothesis supplies all such orders simultaneously.

\paragraph{Any prescribed finite-derivative neighborhood.}
Fix $0\ne h_0\geq0$ in $C_c^\infty(\operatorname{int}K)$. For each constraint
\[
 \norm{\sigma'-\sigma}_{C^{m_i}}+\norm{j'-j}_{m_i,s_i}<\delta_i,
 \qquad \delta_i>0,
\]
set $Q_i=\norm{h_0}_{C^{m_i}}(1+2^{m_i}\norm U_{m_i,s_i;K})$. Choose one $0<a<\min_i\delta_i/Q_i$ and set $h=a h_0$. By \eqref{cap:jetbound}, every $(\sigma+h,j+hP_N)$ meets every constraint. Both data norms tend to zero, but density distance is always $a\norm{h_0}_2>0$. Any density modulus tending to zero uniformly in degree is therefore impossible on this fixed neighborhood. The order is neighborhood, then fixed amplitude, then increasing degree.

For simultaneous bounded noise radii $\eta_Y,\eta_\infty>0$, choose $N$ making both data differences at most twice their respective radius. The common midpoint observation is consistent with either field; every estimator incurs density error at least $\norm h_2/2$ for one of them. Restriction to camera subsets or nonnegative normalized pixel footprints contracts the maximum-RGB discrepancy, but no such contraction of $Y$ is assumed.

\subsection{Realizable genericity in an isotropic source class (G)}\label{cap:G}
The fixed-support space $X_D$ in \eqref{m:XD} is closed in $C^\infty(\overline\Omega;\R^3)$: a convergent sequence of functions zero outside $D$ has the same property in its limit. It is Fr\'echet and Baire; $\mathcal A_\mu$ is an open subset of its affine translate. Compact strict slack makes it open. We do not use the varying-support inductive-limit topology. At fixed $\sigma$, the remaining-radiance map $j\mapsto U_j$ is linear, while $T_\sigma$ is fixed and has positive lower bound $t_-$ on $K\times\Sph^2$.

\begin{theorem}[Generic positive compensated gain]\label{cap:generic}
For each finite $L\geq0$, the set
\[
 \mathcal G_L=\{j\in\mathcal A_\mu:\inf_{x\in K}\kappa_{L,j}(x)>0\}
\]
is open and dense. Thus $\mathcal G=\bigcap_{L\geq0}\mathcal G_L$ is dense residual. Each of its members is a single isotropic reference with positive gain at every fixed finite degree.

The same assertion holds inside a fixed-chromaticity class. Fix $\mathbf a\in(0,1]^3$, $\max_k a_k=1$, and a smooth physical scalar $f_b$ on the entire volume. Replace $\mathcal A_\mu$ by the nonempty scalar class
\[
 \mathcal A_\mu^{\rm sc}=\{f=f_b+v:v\in C^\infty(\overline\Omega),\ \supp v\subset D,\ \mu<f<\sigma-\mu\text{ on }D\},
\]
rendered as $j=\mathbf a f$ throughout $\Omega$, including the known exterior of $D$. Genericity is relative to this scalar Fr\'echet class.
\end{theorem}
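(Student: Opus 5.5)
We will prove openness by continuity and compactness. Density will come from a finite-dimensional transversality argument that perturbs only one source channel. The residual statement then follows from Baire's theorem.

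\textbf{Openness.} The density $\sigma$ is fixed, so $T_\sigma$ is a fixed smooth weight with $T_\sigma\geq t_->0$ on $K\times\Sph^2$. The map $j\mapsto U_j$ is linear and continuous from the Fr\'echet topology into $C^0(K\times\Sph^2)$; in fact it is continuous into $C^k$ for every $k$ by \eqref{eq:U}. The Gram matrix \eqref{eq:gram} has condition number at most $(t_+/t_-)^2$, so the Schur complement \eqref{eq:schur} is jointly continuous in $(x,n,j)$. Here $n$ ranges over $\Sph^2$, and $E_L(x,n)=E_L(x,-n)$. Hence $(x,j)\mapsto\kappa_{L,j}(x)$ is continuous. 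Since $K$ is compact, the set $\{j:\inf_K\kappa_{L,j}>0\}$ is open in $\mathcal A_\mu$. Once each $\mathcal G_L$ is shown dense, $\mathcal G=\bigcap_L\mathcal G_L$ is dense residual by Baire's theorem, because $\mathcal A_\mu$ is an open subset of the Fr\'echet space $j_b+X_D$.

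\textbf{Reduction to a vector bundle section.} Since $T_0^2>0$, $E_L(x,n)=0$ forces $U_{0,1}|_{C_n}\in\cV_L(C_n)$. In particular, the six real Fourier moments of $U_{j,1}(x,\cdot)|_{C_n}$ at frequencies $L+1,L+2,L+3$ all vanish. These moments are computed in any orthonormal circle frame $e_1,e_2$ of $n^\perp$. A change of frame, or the replacement $n\to-n$, acts on them by an invertible linear map, so their common vanishing is well defined. They therefore form a smooth section $\Phi_j$ of a rank-six real vector bundle over the five-dimensional base $K\times\Sph^2$ (or over $K\times\mathbb{RP}^2$). It suffices to show that $j$ can be perturbed arbitrarily little so that $\Phi_j$ has no zeros on this base.

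\textbf{Transversality.} Fix $j\in\mathcal A_\mu$ and a point $(x_1,n_1)$. Choose a local frame trivializing the bundle near $(x_1,n_1)$. We claim there are finitely many isotropic scalar functions $v_1,\dots,v_6$ with the following properties. Each is supported in $D$, on a thin spherical shell $\{s_0<|y-x_1|<s_0+\epsilon\}$ with $s_0$ small enough that the shell stays inside $\operatorname{int}D$; this is possible because $K\Subset\operatorname{int}D$. Each is placed in channel one, or as $\mathbf a v_i$ in the scalar class. The linear map $t\mapsto\sum_i t_i\,\Phi_{(v_i,0,0)}(x_1,n_1)\in\R^6$ is onto.

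The mechanism is as follows. The remaining radiance of a shell source at $x_1$ in direction $d$ is $\int T v(x_1+sd)\,\mathrm ds$ with a positive smooth weight. This is approximately $\epsilon$ times an attenuation factor times the angular profile of $v$ on the shell. Hence the angular profiles $d\mapsto U_{v}(x_1,d)$ range over an approximation of all smooth functions on $\Sph^2$. Their restrictions to $C_{n_1}$ can therefore realize any six prescribed Fourier moments, after a small correction that a finite-dimensional argument handles.

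Surjectivity is an open condition, so it persists on a neighborhood of $(x_1,n_1)$. Cover the compact base by finitely many such neighborhoods and collect all the corresponding $v_i$ into one family $v_1,\dots,v_M$. The map
\[
(t,x,n)\mapsto\Phi_{j+\sum_i t_iv_i}(x,n)
\]
is affine in $t$. It is therefore a submersion onto the fiber in each chart, for every $t$ in a small ball.

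Parametric transversality, which is Sard's theorem applied to the projection of the zero set (a manifold of dimension $M+5-6$) onto the $t$-ball, then shows the following. For almost every small $t$, the section $\Phi_{j+\sum t_iv_i}$ is transverse to the zero section. Since $5<6$, transversality means it has no zeros at all. Such a section has no zero on the compact base, so $\kappa_L>0$ there and hence $\inf_K\kappa_L>0$.

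Taking $|t|$ small keeps the strict margins $\mu<j_k<\sigma-\mu$ on $D$. The perturbation lies in $X_D$, and it converges to $j$ in the Fr\'echet topology as $t\to0$. This proves density. Every $v_i$ is isotropic and enters as $\mathbf a v_i$, so the scalar fixed-chromaticity class is handled verbatim.

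\textbf{Main obstacle.} We expect the hardest step to be the claim that isotropic shell sources at the fixed density realize every six-moment vector on $C_n$. We would first prove it in the thin-shell limit with frozen attenuation. The remaining radiance is then essentially the restriction of the shell profile to directions, and we would choose a profile $\psi(d)$ whose Fourier modes on $C_{n_1}$ of frequencies $L+1,L+2,L+3$ are prescribed. We would then absorb the $O(\epsilon)$ errors and the variation of the attenuation weight by a finite-dimensional stability argument: a small perturbation of a surjective linear map onto $\R^6$ remains surjective.
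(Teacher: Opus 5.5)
Your proposal is correct and follows essentially the paper's argument: the same six first-channel moments at frequencies $L+1,L+2,L+3$ forming a rank-six section over a five-dimensional base, isotropic shell perturbations made fiberwise onto by a finite cover, Sard/parametric transversality on an open neighborhood of $K\times\Sph^2$, and Baire. The step you flag as the main obstacle is exact rather than approximate: taking $v_H(x_1+td)=\chi(t)\exp\bigl(\int_0^t\sigma(x_1+sd)\,\mathrm ds\bigr)H(d)$ with $\int\chi=1$ (equivalently, dividing $H$ by the positive weight you identify) gives $U_{v_H}(x_1,\cdot)=H$ exactly, so no thin-shell limit is needed. The paper also proves openness through the explicit bound $|E_{L,j}^{1/2}-E_{L,j'}^{1/2}|\le2\sqrt3\norm{j-j'}_\infty$ rather than through joint continuity of the Schur complement.
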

\begin{proof}
For $q=1,2,3$, put $m_q=L+q$. In a local oriented frame $(u(n),v(n))$ of $n^\perp$, use $d(\theta)=u\cos\theta+v\sin\theta$ and moments of the \emph{first channel alone}:
\begin{equation}\label{cap:six}
 \alpha_q(x,n)=\avg{U_{j,1}(x,d(\theta))\cos m_q\theta},\qquad
 \beta_q(x,n)=\avg{U_{j,1}(x,d(\theta))\sin m_q\theta}.
\end{equation}
The components $F_j=(\alpha_1,\beta_1,\alpha_2,\beta_2,\alpha_3,\beta_3)$ define a rank-six bundle section over $(\operatorname{int}D)\times\Sph^2$. A frame rotation by $\phi$ rotates the $q$th pair by $m_q\phi$. This is the direct sum of three distinct frequency bundles; its norm and zero set are intrinsic. The six modes are mutually orthogonal, orthogonal to $\cV_L$, and have squared norm $1/2$. Bessel in the first channel gives
\begin{equation}\label{cap:Bessel}
 E_{L,j}(x,n)\geq2t_-^2|F_j(x,n)|^2.
\end{equation}

At $(x_0,n_0)$ choose $0<r_-<r_+<\operatorname{dist}(x_0,D^c)$ and $\chi\in C_c^\infty(r_-,r_+)$ with integral one. For scalar $H\in C^\infty(\Sph^2)$, define a spatial shell source
\begin{equation}\label{cap:shell}
 v_H(x_0+td)=\chi(t)\exp\!\left(\int_0^t\sigma(x_0+sd)\dd s\right)H(d).
\end{equation}
Here $t=|y-x_0|$ and $d=(y-x_0)/|y-x_0|$ locate the spatial point $y$; $v_H$ is isotropic and has smooth zero extension. Attenuation cancels exactly, giving $U_{v_H}(x_0,d)=H(d)$. Choose the six spherical polynomials $2\operatorname{Re}(d\cdot u_0+i d\cdot v_0)^{m_q}$ and $2\operatorname{Im}(d\cdot u_0+i d\cdot v_0)^{m_q}$. Their circle restrictions are $2\cos m_q\theta$ and $2\sin m_q\theta$. In the RGB class use $\delta j=(v_H,0,0)$, giving the identity moment derivative. In the scalar class use $\delta f=v_H$, hence $\delta j=\mathbf a v_H$, giving $a_1\Id$. Both are surjective; signed directions preserve the physical class at sufficiently small amplitude.

Continuity preserves each such nonzero minor on a neighborhood. A finite cover of $K\times\Sph^2$ supplies one finite list of spatial directions $v_1,\ldots,v_q$ whose moment derivative is onto over an open neighborhood $B$ of that compact set. Write these directions in RGB coordinates, with $v_i=\mathbf a w_i$ for scalar perturbations $w_i$ in the constrained class. A small parameter ball $P\subset\R^q$ ensures $j_p=j+\sum_i p_iv_i$ stays in the chosen physical source class and any prescribed relative Fr\'echet neighborhood. This uses finitely many seminorm constraints and physical slack.

The section $(p,x,n)\mapsto F_{j_p}(x,n)$ is transverse to zero because its parameter derivative alone is onto. Its zero set is a smooth manifold of dimension $q+5-6=q-1$. Every point of its projection to $P$ is critical. Sard's theorem in countably many charts shows that its image has measure zero. Some arbitrarily small $p$ therefore gives no zeros on $B$. Compactness and \eqref{cap:Bessel} give a uniform positive gain on $K$, proving density. This argument occurs on the open base $B$, not on a potentially nonsmooth compact $K$.

The same finite-cover argument applies in each of the two source spaces; the scalar space is itself closed and Baire. For openness, path length at most two gives $\norm{U_j-U_{j'}}_\infty\leq2\norm{j-j'}_\infty$ in channel maxima. Each $E^{1/2}$ is distance to the fixed closed nuisance space $(T_\sigma\cV_L)^3$. Its 1-Lipschitz property gives
\[
 |E_{L,j}^{1/2}-E_{L,j'}^{1/2}|\leq2\sqrt3\norm{j-j'}_\infty
\]
uniformly on the compact base. For $j=\mathbf a f$, $\norm{j-j'}_\infty=\norm{f-f'}_\infty$. Taking minima preserves the bound. Baire proves the countable-intersection statement in each source class.
\end{proof}
Stefanov \cite[Section~5]{stefanov2014} establishes scalar joint source/attenuation recovery under his hypotheses; all-finite-order generic tensor results are also known \cite{cekic2024}. Here the source-realizability step uses one physical isotropic spatial source at fixed three-dimensional density. The fixed-chromaticity assertion follows from surjectivity \emph{inside} the scalar class, rather than intersecting an ambient RGB residual set with a thin subspace.

\paragraph{Fixed channel proportions throughout recovery and hiding.}
Let $\kappa^{\rm sc}_{L,f}$ be the scalar weighted-circle gain. Since $U_{\mathbf a f}=\mathbf a U_f$ on every remaining ray and weighted projection is linear,
\begin{equation}\label{cap:chromaticity}
 \kappa_{L,\mathbf a f}=\norm{\mathbf a}_2\kappa^{\rm sc}_{L,f}.
\end{equation}
The identity holds for independent RGB refits and for tied polynomials $p_k=a_kp$, whose optimal scalar $p$ attains all three channel minima. It is a channel-norm factor. On $D$, $j_k>a_k\mu$ and $\sigma-j_k>\mu$, so the full nonlinear theorem applies. Restricting its local class to $j'=\mathbf a f'$ retains its upper bound. The sharp packets also remain in that class: choose the optimal scalar lift $P$ and source increment $q=\mathbf a P h_{\rm packet}$. Thus the squeeze in \cref{cap:S} gives the same limiting coefficient $1/(\norm{\mathbf a}_2\kappa^{\rm sc}_{L,f})$ for the restricted class.

For hiding, \cref{cap:R} gives $j_N=\mathbf a(f+h\cV_NU_f)$, with $0\leq f+h\cV_NU_f\leq\sigma+h$. The same fixed $h$, support and finite derivative constraints are retained. Scalar $Y$ discrepancies acquire factor $\norm{\mathbf a}_2$ and maximum-channel discrepancies factor $\max_k a_k=1$. Hence both norms tend to zero with fixed chromaticity, including grayscale $\mathbf a=(1,1,1)$. The shared proportions apply on the entire optical path, including the known exterior; spatial and directional intensity remain adjustable.

\subsection{The sharp limiting nonlinear coefficient (S)}\label{cap:S}
At fixed $L,x_0$ and $\kappa_L(x_0)>0$, use precisely the local class of \cref{ass:class}. The coupled gauge and ball projection of \cref{sec:nonlinear,app:ball} give, for the \emph{same} $z=(h,F^s)$,
\begin{align}
 g&=A_0z+I_Ez,\qquad \norm{I_Ez}_Y\leq C_E\norm z_2,\label{cap:threebounds}\\
 \norm{A_0z}_Y&\geq\kappa_L\norm h_2,\qquad
 \norm{A_0z}_Y\geq\lambda_L\norm z_2.\notag
\end{align}
If $C_E<\lambda_L$ and the original gauge condition holds, then
\begin{equation}\label{cap:sharpbound}
 \norm g_Y\geq(1-C_E/\lambda_L)\norm{A_0z}_Y
 \geq\kappa_L(1-C_E/\lambda_L)\norm h_2.
\end{equation}
The source representative need not be physical; the original two candidate fields do. The absorption threshold remains $C_E<\lambda_L$, not $C_E<\kappa_L$.

Let $\mathcal U_L(r,\eps)$ be the \emph{full} physical relative coefficient $C^6$ neighborhood with perturbations supported in $\overline{B_r(x_0)}$ and fixed exterior. Define
\begin{equation}\label{cap:Copt}
 C_{\mathrm{opt},L}(r,\eps)=
 \sup_{\substack{\theta_1,\theta_2\in\mathcal U_L(r,\eps)\\\sigma_1\ne\sigma_2}}
 \frac{\norm{\sigma_2-\sigma_1}_2}{\norm{\cR(\theta_2)-\cR(\theta_1)}_Y},
\end{equation}
with infinity for a zero denominator. Fix once the norm of the finite real-SH coefficient vector and its derivatives through spatial order six. These classes contain small multiples of every smooth compactly supported degree-$L$ direction. Density and data norms both rescale by $r^{3/2}$, so \eqref{cap:Copt} is unchanged by the proof's coordinate rescaling.

Take $r_i\downarrow0$ with $\overline{B_{2r_i}(x_0)}\subset\operatorname{int}D$, then choose $\eps_i\downarrow0$ such that the gauge condition holds uniformly and
$c_i=\sup_{\theta_1,\theta_2\in\mathcal U_L(r_i,\eps_i)}C_E\to0$.
Such choices exist by the spatial $C^4$ weight estimate and coefficient $C^6$ continuity used in \cref{thm:nonlinear}. For all sufficiently large $i$,
\begin{equation}\label{cap:squeeze}
 \frac1{\kappa_L(x_0)}\leq C_{\mathrm{opt},L}(r_i,\eps_i)
 \leq\frac1{\kappa_L(x_0)(1-c_i/\lambda_L)}.
\end{equation}
The upper side is \eqref{cap:sharpbound}. For the lower side, apply \cref{thm:fragility} separately inside each positive-radius class: packet support may be smaller than $r_i$, and amplitude is chosen after frequency and nonlinear remainder constants. This preserves both physicality and the prescribed $C^6$ radius. Squeezing proves \eqref{m:sharpC}. Every limit here keeps $L,x_0$ fixed. No spatial bandlimit or finite primitive representation can be inserted without rechecking the packet inclusion.

\subsection{Proof of the synthesis and its consequence}\label{cap:synthesis}
Choose one $j\in\mathcal G$ from \cref{cap:generic}. It is isotropic, hence belongs to every finite SH reference class. Strict source/absorption margins on $D$ and finite density give both-sided physical perturbations and positive transmittance near each $x_0$. Apply the gauge, annular and weight estimates, and then \cref{cap:S}, separately at each fixed $L,x_0$. This proves part 1 of \cref{m:coexistence}, including its small-support and known-exterior qualifications. Part 2 follows from \cref{cap:R} at this \emph{same} reference, with one fixed $h$ or the single positive amplitude chosen after any prescribed finite list of neighborhood constraints. A modulus tending to zero would contradict the fixed density separation and vanishing data. These are compatible assertions because their neighborhoods and order of limits differ.

At fixed $L$ and \emph{matching spatial derivative order}, the field-based and finite-coefficient seminorms are equivalent, with constants depending on $L$ and the chosen angular order. This follows from norm equivalence in the finite-dimensional angular space, applied uniformly to each spatial derivative. It does not equate the $C^\infty$ Fr\'echet topology of genericity with a $C^6$ topology, or arbitrary mixed-derivative neighborhoods with coefficient $C^6$ neighborhoods. G selects a smooth reference; S subsequently uses its own $C^6$ neighborhood; R uses its explicitly specified finite seminorm constraints.

For completeness, at this reference and fixed $x_0$, define
\begin{equation}\label{cap:Ss}
 S_s^2=\sup_n\sum_k\avg{|\partial_\theta^s U_k(x_0,d(\theta))|^2}_n.
\end{equation}
Closed-ball smoothness and compactness of the circle-frame bundle give $S_s<\infty$ for every fixed integer $s\geq1$. The angular-tail estimate \eqref{eq:tailrate} and positivity yield
\begin{equation}\label{cap:superproof}
 0<\kappa_L(x_0)\leq\frac{t_+S_s}{(L+1)^s},\qquad
 C_L^\sharp(x_0)\geq\frac{(L+1)^s}{t_+S_s}.
\end{equation}
Here $S_s>0$, since $S_s=0$ would contradict positive gain. For any real $p>0$, choose a fixed integer $s>p$ and divide by $(L+1)^p$. This proves \cref{m:superpoly}. The constants depend on the reference, point and derivative order. The conclusion is stronger than a single polynomial lower bound, but gives no evaluated practical degree threshold, training-error law or common recovery-neighborhood size. The exact SH2 construction in \cref{thm:ambiguity} remains a separate finite-degree nonuniqueness statement.

\section{A physical separation between angular degrees one and two}\label{sec:separation}
Return to \eqref{m:reference}, and write $a=1/4$, $b=1/2$. At the center,
\begin{equation}\label{eq:AB}
 t_0=e^{-1},\qquad U_{0,k}=A+B d_k^2,\qquad
 A=a(1-e^{-1}),\quad B=b(2-5e^{-1}).
\end{equation}
Evenness makes the additional degree-one modes orthogonal to $U_0$ on each circle. Since
$\avg{d_k^2}_n=(1-n_k^2)/2$ and
$\operatorname{Var}_n(d_k^2)=(1-n_k^2)^2/8$, we have
\begin{equation}\label{eq:quadraticgain}
 \kappa_0(0)=\kappa_1(0)=\frac{e^{-1}(2-5e^{-1})}{2\sqrt6}
 \simeq0.0120601579,\qquad\kappa_L(0)=0\ (L\geq2).
\end{equation}
The necessary local inverse coefficient is at least $82.91765$ for $L=0,1$. This is distinct from the conservative joint coefficient used for a sufficient finite-radius estimate.

\begin{proposition}[Explicit degree-one prior class]\label{prop:radius}
For $0<r\leq0.01$, let each candidate have globally $C^6$ changes supported in $\overline{B_r}$ and source degree at most one. For $y=x/r$ define, with zero extension,
\begin{equation}\label{eq:prior}
 a_i(y)=\sigma_i(ry)-1,\quad
 \beta_i(y,d)=(j_i-j_*)(ry,d)-U_*(ry,d)a_i(y).
\end{equation}
Suppose $\cA(a_i)\leq0.1$ and $\cA(\beta_i)\leq0.1$ for both candidates, using the direction supremum inside the Fourier integral. Then the physical box holds and
\begin{equation}\label{eq:279}
 \norm{\sigma_2-\sigma_1}_{L^2(\Omega)}
 \leq263.057\norm{\cR_2-\cR_1}_Y.
\end{equation}
\end{proposition}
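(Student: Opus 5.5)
The plan is to rerun the proof of \cref{thm:nonlinear} at the quadratic reference with $L=1$, with every constant evaluated explicitly. The prior \eqref{eq:prior} is arranged so that the perturbation cost $C_E$ is controlled directly by the given Fourier--Wiener bounds, instead of by the $C^4$ continuity argument.

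\textbf{Physical box.} First check admissibility. The sup norm of a compactly supported function is bounded by its $\cA$ norm, since $|f(y)|\le(2\pi)^{-3/2}\int|\F_3 f|$. Hence $|\sigma_i-1|\le0.1$ and $|j_i-j_*-U_*a_i|\le0.1$. On $B_{.01}$ we have $j_{*,k}\in[1/4,1/4+5\cdot10^{-5}]$ and $0\le U_*\le1$. The source bounds $0\le j_{i,k}\le\sigma_i$ then follow from $|j_i-j_*|\le0.2$ against the margins $1/4$ and $3/4-0.1$. This step is routine.

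\textbf{Rescaling and data identity.} Rescale by $y=x/r$ as in \cref{sec:nonlinear}, with $\alpha=r\sigma_1(r\cdot)$. This gives $\norm\alpha_\infty\le0.011$, so $\gamma_1=\sqrt3\norm\alpha_\infty/\pi<1$ comfortably. Write the exact difference \eqref{eq:exactdifference} as $g=A_0z+I_Ez$. The key choice is to express $q-U_2h$ through the prior variables: $j_2-j_1=\beta_2-\beta_1+U_*(a_2-a_1)$. The weight therefore splits into three parts. The first is the frozen weight $T_0(F^s-U_0h)$. The second is a term $(T_1(r\cdot)-T_0)$, whose spatial variation over a ball of radius $r\le .01$ is $O(r)$ with explicit constants, because $\sigma_*=1$ and $j_*$ is quadratic. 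The third is the density-dependent term $(U_2-U_*)h$, which is bounded by the $\beta,a$ priors via the linear transport bound $\norm{U_2-U_*}\lesssim2\norm{j_2-j_*}+2\norm{\sigma_2-1}$ in $\cA$. Here I would use that $\cA$ is a Banach algebra under pointwise products, and bound $T_1$ factors by exponentials of $\cA(a_i)$.

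\textbf{Gauge and joint gain.} Next apply \cref{lem:gauge} with $L=1$, using the scalar form \eqref{eq:gaugeL1}. Then compute $\lambda_1$ from \eqref{eq:jointconstant}. The inputs are $\kappa_1=0.0120601579$, $t_-=t_+=e^{-1}$ at the center, $u_0$ from $U_0=A+Bd_k^2$, and $b_1=2^{-1/2}/\overline C_1$. This requires the rank-one ball extension constant $C_1$ from \cref{app:ball}, which the paper states is numerically explicit through rank one.

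\textbf{Assembly and main obstacle.} With $C_E<\lambda_1$, inequality \eqref{eq:localstability} gives $\norm{\sigma_2-\sigma_1}_2\le\norm g_Y/(\kappa_1(1-C_E/\lambda_1))$. Scaling back by $r^{3/2}$ on both sides, as in \cref{thm:nonlinear}, removes $r$. The target is $1/(\kappa_1(1-C_E/\lambda_1))\le263.057$, which means $C_E/\lambda_1\le1-82.918/263.057\approx0.685$. The main obstacle is proving this quantitative inequality. The rough $0.1$ priors enter $C_E$ multiplied by $e^{-1}$-sized weights, and the result must be compared with the small $\lambda_1$. I expect to need the sharper weighted estimate \eqref{eq:weightbound} directly on the Fourier side, not the $(1-\Delta)^2$ bound. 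I would also evaluate the rational and exponential constants in outward-rounded interval arithmetic to certify the final number.
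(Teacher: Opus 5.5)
Your architecture is the paper's: rescale, gauge the degree-one source, absorb the operator error into a joint gain, then apply the sharp density estimate. For $L=1$ the weak problem in \cref{lem:gauge} reduces to $\Delta p=\Div F_1$, so no smallness is needed, and your physical-box step is fine. The gap is quantitative, and here it is decisive. The constant $263.057$ is essentially saturated by the paper's own bounds: $1/(\kappa_1(0)[1-M_{\rm total}(.01)/k_J])=263.0563\ldots$ with $M_{\rm total}(.01)/k_J\approx.685$. Every ingredient therefore has to be at least as sharp as the paper's, and your joint gain is not.

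With your inputs ($u_0\approx.1193$, and $b_1=1/2$ from $C_1=\sqrt2$), \eqref{eq:jointconstant} gives $\lambda_1\approx.00981$. The paper instead uses the explicit constant $k_J=t_0\sqrt{\min(\mu_f,1/4)}\approx.011401$ of \eqref{eq:l1joint}. It comes from the exact circle identity \eqref{eq:l1circle} at the quadratic reference. Parity isolates the transverse vector block, giving a factor $\tfrac12$ times another $\tfrac12$ from the rank-one ball estimate. The $(h,q_0)$ block is then bounded below by determinant over trace, $\mu_f=a_f/(1+V^2+a_f)$, instead of by the triangle inequality behind \eqref{eq:jointconstant}. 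With $\lambda_1$ in place of $k_J$, even the paper's majorant $M_{\rm total}(.01)<.0078076$ gives a coefficient of about $405$. Reaching $263.057$ with $\lambda_1$ would need $C_E\le.00672$, which is below the cost of the reference variation alone ($\approx.0060$) plus the prior-dependent terms ($\approx.0018$).

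Your perturbation bound also loses. With the cutoff, $C_E=\cA(\chi(W-W_0))$ charges the first-order reference variation $r\,y\cdot d$ on the support of $\chi$, where $|y|\chi(y)$ exceeds one. It then also pays the gap between a Fourier $L^1$ norm and a supremum, and evaluating $\cA$ directly on the Fourier side removes neither loss. The paper drops the cutoff and uses a tensor-moment multiplier estimate: on $B_1$, multiplying a direction-only weight by $(y\cdot d)^m|y|^{2n}$ does not increase its $L^2\to Y$ bound. It combines this with the exact expansion \eqref{eq:exactS} of $T_*$ and $T_*U_*$ in $x=ry$, which retains the first-order cancellation, and nonnegative-coefficient series for the remainders, to obtain $M_{\rm ref}(r)$ in \eqref{eq:majorantref}.

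The candidate-dependent terms use $T_1=T_*L_1$ with $\cA(L_1-1)\le e^{2r\rho_s}-1$. They also use the exact extension \eqref{eq:Uextension} of $U_2-U_*$, whose transport source is exactly $\beta_2$; that is why $\beta_i$ subtracts $U_*a_i$. Your bound through $\cA(j_2-j_*)$ and $\cA(\sigma_2-1)$ counts the $U_*a_2$ contribution twice instead of cancelling it, so it exceeds $H(r)$. Because $C_E=O(r)$, your route does give a coefficient below $263.057$ for sufficiently small $r$. It does not give it uniformly on $0<r\le.01$ as stated. For that you need the sharper constants above evaluated at $r=.01$, together with monotonicity of every majorant in $r$.
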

The complete scalar majorant and its outward interval evaluation are in \cref{app:radius}. These are field-norm priors, not an arbitrary ten-percent pointwise neighborhood. Certifying them for a learned field is an additional task. The previously chosen practical radius screen $r\geq0.05$ was not met; failure of that sufficient bound does not establish failure of injectivity.

\subsection{An exact nonlinear compensation construction}\label{sec:abel}
Fix $0<r\leq0.01$, put $s=r/2$, $R=r^2$, $S=s^2$, and let
\begin{equation}\label{eq:bump}
 \psi(z)=\begin{cases}\exp[-z/(1-z)],&0\leq z<1,\\0,&z\geq1,\end{cases}
 \qquad \rho_\eps(u)=1+\eps\psi(u/S).
\end{equation}
Set $\sigma_2(x)=\rho_\eps(|x|^2)$. The density perturbation is smooth and supported strictly inside the known-exterior ball. For an oriented line $p+td$, let $v=|p|^2$, $\ell(v)=\sqrt{1-v}$ and
\begin{align}
 H_\rho(v)&=\int_0^{\ell(v)}\rho(v+t^2)\dd t,&
 Z_\rho(v,t)&=\int_0^t\rho(v+w^2)\dd w,\nonumber\\
 T_\rho(v,t)&=e^{-H_\rho(v)-Z_\rho(v,t)},&
 I_m^\rho(v)&=\int_{-\ell(v)}^{\ell(v)}T_\rho(v,t)t^m\dd t.\label{eq:moments}
\end{align}
Use $m=0,1,2$ and $\Delta I_m=I_m^1-I_m^\rho$. Rendering the unchanged source at radial density $\rho$ gives
\begin{equation}\label{eq:raycoeff}
 \cR_k(\rho,j_*)=aI_0^\rho+bp_k^2I_0^\rho
 +2bp_kd_kI_1^\rho+bd_k^2I_2^\rho.
\end{equation}
The data retain channel-specific angular and spatial structure; they are not a scalar radial image.

Seek a single three-dimensional correction of the form
\begin{equation}\label{eq:fourprofile}
 \Delta j_k(x,d)=q_0(u)+q_B(u)x_k^2+q_D(u)x_kd_k+q_C(u)d_k^2,
 \qquad u=|x|^2.
\end{equation}
All four profiles will vanish for $u\geq S$. This source has degree at most two. Define
\begin{equation}\label{eq:Km}
 K_mq(v)=\int_{-\sqrt{R-v}}^{\sqrt{R-v}}T_\rho(v,t)t^mq(v+t^2)\dd t.
\end{equation}
The four coefficients contributed by \eqref{eq:fourprofile}, in the order $1,p_k^2,p_kd_k,d_k^2$, are
\begin{equation}\label{eq:triangular}
 K_0q_0,\quad K_0q_B,\quad 2K_1q_B+K_0q_D,\quad
 K_2q_B+K_1q_D+K_0q_C.
\end{equation}
Matching these with the difference in \eqref{eq:raycoeff} is a triangular system. Its weights use the changed density exactly.

\subsection{The diagonal inverse and explicit profiles}
For $0\leq v\leq u\leq R$ put $z(v,u)=\int_0^{\sqrt{u-v}}\rho(v+t^2)\dd t$. Pairing positive and negative $t$ yields
\begin{align}
 K_mq&=e^{-H_\rho}A_mq,&
 A_mq(v)&=\int_v^R\frac{a_m(v,u)q(u)}{\sqrt{u-v}}\dd u,\label{eq:Am}\\
 a_0&=\cosh z,&a_1&=-\sqrt{u-v}\sinh z,&a_2&=(u-v)\cosh z.\nonumber
\end{align}
The three kernels are smooth on the closed triangle; $a_0(v,v)=1$, while $a_1,a_2$ have zero diagonal. In particular the odd moment is not zero: entry attenuation is not even in $t$.

The right Abel inverse is
\begin{equation}\label{eq:AbelB}
 \mathcal Bg(v)=-\frac1\pi\frac\dd{\dd v}\int_v^R\frac{g(w)}{\sqrt{w-v}}\dd w.
\end{equation}
For data vanishing near $R$, it also equals $-\pi^{-1}\int_v^R g'(w)/\sqrt{w-v}\dd w$. It is injective on this terminal support class. Indeed, if $Cg(v)=\int_v^R g(w)/\sqrt{w-v}\dd w$ and $\mathcal Bg=0$, then $Cg$ is constant and its terminal value is zero. Applying $C$ once more and using the beta integral yields $\pi\int_v^R g(w)\dd w=0$, hence $g=0$. This justifies recovering the original Abel equation after applying $\mathcal B$. Define
\begin{equation}\label{eq:Vm}
 F_m(v,u)=\int_0^1\frac{a_m(v+(u-v)\theta,u)}{\sqrt{\theta(1-\theta)}}\dd\theta,
 \qquad V_m(v,u)=-\pi^{-1}\partial_vF_m(v,u).
\end{equation}
Use $V_m$ also for integration of its kernel from $v$ to $R$. Fubini and the beta integral give
$\mathcal BA_0=\Id+V_0$, $\mathcal BA_1=V_1$, $\mathcal BA_2=V_2$.
If $M_0=\max|V_0(v,u)|$, then
$\norm{V_0^n}_{\infty\to\infty}\leq M_0^nR^n/n!$. Thus
$(\Id+V_0)^{-1}=\sum_{n\geq0}(-V_0)^n$ converges in operator norm. This is the classical Volterra resolvent mechanism, not a small-norm contraction requirement \cite{quinto1983,dehoop2017}.

Define
\begin{align}
 P_m&=(\Id+V_0)^{-1}\mathcal B[e^{H_\rho}\Delta I_m],
 &J_m&=(\Id+V_0)^{-1}V_m\quad(m=1,2),\label{eq:PJ}\\
 q_0&=aP_0,&q_B&=bP_0,\nonumber\\
 q_D&=2bP_1-2J_1q_B,&q_C&=bP_2-J_2q_B-J_1q_D.\label{eq:profiles}
\end{align}
Here $P_m$ is defined for $m=0,1,2$. Equations \eqref{eq:PJ}--\eqref{eq:profiles} specify the alternative field by ordinary integrals and a convergent series, without any fitting to a finite set of rays.

\begin{theorem}[Exact physical degree-two ambiguity]\label{thm:ambiguity}
For every $0<r\leq0.01$ and $0<\eps\leq1/6000$, the reference \eqref{m:reference} and the field $(\sigma_2,j_2)=(\rho_\eps,j_*+\Delta j)$ defined above have different densities, the same exterior of $B_r$, and exactly equal RGB data on every external oriented line. The source perturbation is smooth, supported in $\overline{B_s}$, and has genuinely nonzero degree-two content. Uniformly over position and direction, $0.2\leq j_{2,k}\leq0.8<\sigma_2$. For fixed $r$, the alternatives approach the reference in every finite $C^k$ norm as $\eps\downarrow0$.
\end{theorem}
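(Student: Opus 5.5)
The plan is to verify the construction of \cref{sec:abel} in four steps: exact ray identity, smoothness and support, genuine degree two, and the physical box with $C^k$ convergence.

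\textbf{Exact data identity.} Since $\sigma_2=\rho_\eps(|x|^2)$ is radial, every oriented line is determined by $v=|p|^2$ and the in-line coordinate $t$. Rendering the reference source at density $\rho$ gives \eqref{m:render} as \eqref{eq:raycoeff}. The data from the corrected source $j_*+\Delta j$ at density $\rho$ are therefore
\[
\cR_k(\rho,j_*)+K_0q_0+K_0q_B\,p_k^2+(2K_1q_B+K_0q_D)\,p_kd_k+(K_2q_B+K_1q_D+K_0q_C)\,d_k^2 .
\]
This follows from substituting $x_k=p_k+td_k$ and $u=v+t^2$ into \eqref{eq:fourprofile}. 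The same expression at $\rho\equiv1$ is the reference data. Equality on all lines then reduces to four scalar Abel equations in $v\in[0,R]$:
\[
K_0q_0=a\Delta I_0,\qquad K_0q_B=b\Delta I_0,\qquad 2K_1q_B+K_0q_D=2b\Delta I_1,\qquad K_2q_B+K_1q_D+K_0q_C=b\Delta I_2 .
\]
For $v\ge R$ both sides vanish, because $\rho=1$ on the whole chord and the profiles vanish there. Using \eqref{eq:Am}, multiply by $e^{H_\rho}$ and apply $\mathcal B$. This gives $\mathcal BA_0=\Id+V_0$ and $\mathcal BA_m=V_m$, and the system becomes $(\Id+V_0)q=\cdots$. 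The profiles \eqref{eq:profiles} solve it exactly via the convergent Neumann series. Injectivity of $\mathcal B$ on terminally supported data recovers the original Abel equations. This needs $e^{H_\rho}\Delta I_m$ and $A_mq$ to vanish near $v=R$, indeed for $v\ge S$.

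\textbf{Smoothness and support.} I would show that $e^{H_\rho}\Delta I_m$ is smooth in $v$ and vanishes for $v\ge S$. It is smooth in $v$ because the integrand is smooth in $(v,t)$ and the endpoints $\pm\ell(v)$ stay away from the perturbation. It vanishes for $v\ge S$ because the line then misses $\supp(\rho-1)$. Smoothness of the kernels $a_m$ on the closed triangle then gives smooth $P_m,J_m$ supported in $u\le S$. Smoothness of $q(|x|^2)$ as a function of $x$ follows because the profiles are smooth in $u$ up to $u=0$.

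\textbf{Genuine degree two.} The term $q_C(u)d_k^2$ carries trace-free degree-two content unless $q_C\equiv0$. I would argue that $q_C\equiv0$ for small $\eps$ would contradict $\kappa_1(x)>0$ near the origin, which would otherwise give uniqueness at degree one. Alternatively, I would compute the $\eps$-linear term. It equals the first-order compensation $hU_*$, and at the center $U_*$ has $Bd_k^2$ with $B\ne0$ by \eqref{eq:AB}.

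\textbf{Physical box and convergence (main obstacle).} All maps $\eps\mapsto(\Delta I_m,P_m,J_m)$ are smooth, and at $\eps=0$ everything vanishes, so $\|\Delta j\|_{C^k}=O_{r,k}(\eps)$. This yields the finite-norm convergence and the genuine density difference $\eps\psi\ne0$. The quantitative bound $0.2\le j_{2,k}\le0.8$ needs an explicit constant. Since $j_{*,k}\in[1/4,3/4]$ on $B_s$, I need $|\Delta j_k|\le0.05$ whenever $\eps\le1/6000$ and $r\le.01$. The plan is to bound $|\mathcal B[e^{H}\Delta I_m]|$ by $\eps$ times an explicit constant. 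Write $\Delta I_m$ using $|T_1-T_\rho|\le T_1\,|\int(\rho-1)|\le2\eps$, and control the $v$-derivative through the Abel form $-\pi^{-1}\int g'/\sqrt{w-v}$. I expect the derivative near the bump edge to be the delicate piece, because the $\eps\psi$-induced derivative has size about $\eps/s$, times a length of order $s$. Then bound the resolvent norm by $e^{M_0R}$, with $M_0=O(1)$ since $z=O(r)$. Finally, bound $J_m$, which has small kernels since $a_1,a_2$ vanish on the diagonal. The generous ratio $0.05$ versus $1/6000$ should absorb these constants; if direct estimates are too lossy, I would use an outward interval evaluation as in \cref{app:finitecertificate}. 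Since $\sigma_2\ge1>0.8$, the absorption bound follows.
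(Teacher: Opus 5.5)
Your data-identity, smoothness/support and physical-box steps follow the paper's route: Abel inverse, Volterra resolvent, and explicit kernel and target bounds as in \cref{app:physical}. The genuine gap is the degree-two step. The theorem asserts $q_C\not\equiv0$ for \emph{every} $\eps\in(0,1/6000]$ and every $r\le.01$, and neither of your routes reaches that range.

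Your first route fails for lack of a usable uniqueness result. $\kappa_1(0)>0$ gives degree-one density uniqueness only in two ways:
\begin{itemize}
\item via \cref{thm:nonlinear}, whose radius and neighbourhood are merely ``sufficiently small'' and so are not available at a fixed $r\le.01$;
\item via \cref{prop:radius}, whose Fourier-norm caps the alternative is known to satisfy only for $\eps$ below an unquantified, $r$-dependent threshold. \Cref{cor:separation} explicitly declines to certify them even at $\eps=10^{-4}$.
\end{itemize}

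Your second route rests on a false identity: the $\eps$-linear part of $\Delta j$ is not $hU_*$. The exit length $\tau_+(x,d)$ depends on $x\cdot d$, so $hU_*$ is not of the four-profile form \eqref{eq:fourprofile}. The linear coefficient is only the unique four-profile source with the same attenuated ray transform as $hU_*$. The two differ by an element of that transform's kernel, namely gauge terms $(d\cdot\nabla-1)\phi$ with $\phi$ vanishing on the boundary. Such terms can carry degree-two content at the centre: for $\phi=\phi_0+\phi_1\cdot d$ they contain $d^{\T}(\nabla\phi_1)d$. So $B\ne0$ in \eqref{eq:AB} says nothing about $q_C$. A corrected linear argument would use linearized degree-one injectivity, obtainable from \cref{prop:radius} by scaling, but it would still show $q_C\not\equiv0$ only for small $\eps$.

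The missing idea is a nonperturbative asymptotic at the edge of the support, valid for each fixed $\eps$.
\begin{itemize}
\item Exact transport pseudolinearization rewrites the target as \eqref{eq:targetAbel}. This is an Abel integral of $\eps\psi(u/S)$ against a smooth kernel $b_m$ whose diagonal is $Q_m(v)=\int_0^{\ell(v)}e^{-w}w^m\dd w>0$.
\item Abel composition and the resolvent then give \eqref{eq:Pmexpansion}: $P_m=\eps Q_m\psi(v/S)+\eps\int_v^SL_m\psi(u/S)\dd u$ with bounded $L_m$.
\item Since $\psi$ is positive and nonincreasing on $[0,1)$, that integral is at most $\|L_m\|_\infty(S-v)\psi(v/S)$.
\item Since $J_1,J_2$ are Volterra integral operators with bounded kernels, $J_2q_B$ and $J_1q_D$ are also $O(\eps(S-v)\psi(v/S))$.
\item Dividing $q_C=bP_2-J_2q_B-J_1q_D$ by $\eps\psi(v/S)$ and letting $v\uparrow S$ gives the positive limit $bQ_2(S)$ of \eqref{eq:nonzeroqC}, for every admissible $\eps$ and $r$.
\end{itemize}

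Two smaller corrections. First, $J_m$ does not vanish at $\eps=0$; only the targets and $P_m$ do, which still gives $\|\Delta j\|_{C^k}=O(\eps)$. Second, the box budget is not generous. The paper's bound $|\Delta j_k|<300\eps$ meets $0.05$ exactly at $\eps=1/6000$. So the estimates $\Delta\tau\le2\eps s$ and $|\partial_v\Delta\tau|\le4\eps/s$, and the moving-endpoint terms, must be carried out as carefully as in \cref{app:physical}. A single interval evaluation of one instance cannot cover the whole two-parameter $(r,\eps)$ range.
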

\begin{proof}
By construction $K_0P_m=\Delta I_m$ and $J_m=K_0^{-1}K_m$. Substituting \eqref{eq:profiles} into \eqref{eq:triangular} gives exactly
$a\Delta I_0,b\Delta I_0,2b\Delta I_1,b\Delta I_2$. Equation \eqref{eq:raycoeff} proves all-ray equality, including reversed orientation. Rays with $v\geq S$ miss every changed coefficient.

The targets $\Delta I_m$ are smooth and flat at $S$, and vanish above $S$. The Abel inverse and the smooth Volterra resolvent preserve this upper support and regularity. Smoothness in $u$ at zero implies smoothness of the composed spatial profiles at the origin. Parameter dependence is smooth and all targets vanish at $\eps=0$. The explicit physical bounds and proof of nonzero degree-two content are given in \cref{app:physical}; neither is inferred from the positive theorem or from finite sampling.
\end{proof}

\begin{corollary}[Separation at one reference]\label{cor:separation}
For any fixed $0<r\leq0.01$, the reference \eqref{m:reference} has a degree-one local class with density uniqueness, including \cref{prop:radius}. Enlarging only the angular class to degree two introduces arbitrarily close physical fields of unequal density and identical full data. Every degree-$L$ class with $L\geq2$ contains these alternatives.
\end{corollary}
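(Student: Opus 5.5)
The corollary combines two earlier results at the same fixed radius $r\in(0,0.01]$: \cref{prop:radius} for the degree-one side and \cref{thm:ambiguity} for the degree-two side. The remaining work is to check that both are statements about the same reference and the same kind of local class, and that the second is inherited by every larger degree.

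\textbf{Degree-one uniqueness.} Fix $r$. The reference \eqref{m:reference} has $\sigma_*=1$ and isotropic source. At the center, \eqref{eq:quadraticgain} gives $\kappa_1(0)>0$. \Cref{prop:radius} then provides an explicit degree-one class: $C^6$ changes supported in $\overline{B_r}$, together with the Fourier--Wiener priors $\cA(a_i),\cA(\beta_i)\le 0.1$. On this class the Lipschitz bound \eqref{eq:279} holds. Equal complete RGB data give $\norm{\cR_2-\cR_1}_Y=0$, hence $\sigma_1=\sigma_2$. If one also wants a $C^6$ neighborhood version, \cref{thm:nonlinear} applies at $x_0=0$, since $\kappa_1(0)>0$ and \cref{ass:class} holds with margins $\sigma_*-j_{*,k}\ge 1/4$ and $j_{*,k}\ge 1/4$. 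Either way we obtain a degree-one local class containing the reference in which density is unique.

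\textbf{Degree-two alternatives.} For the same $r$ and any $0<\eps\le1/6000$, \cref{thm:ambiguity} produces $(\rho_\eps,j_*+\Delta j)$. It has the following properties:
\begin{itemize}
\item the same exterior of $B_r$;
\item changes supported in $\overline{B_{r/2}}$;
\item identical data on every oriented line;
\item the physical box $0.2\le j_{2,k}\le 0.8<\sigma_2$;
\item a density that differs from the reference, since $\eps\psi\ne0$.
\end{itemize}
The last clause of that theorem says these fields converge to the reference in every finite $C^k$ norm as $\eps\downarrow0$. So they are arbitrarily close, and in particular eventually lie in any prescribed relative $C^6$ neighborhood of the reference. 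The genuinely nonzero degree-two content, proved in \cref{app:physical}, shows they lie outside the degree-one class. This is consistent with the first part: only the angular class was enlarged. Since $\cH_{\le2}\subset\cH_{\le L}$ for $L\ge2$, the same pairs belong to every degree-$L$ class with $L\ge2$, with the same support and physical bounds.

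\textbf{Main point to check.} The delicate step is ensuring that ``arbitrarily close'' is measured in the same sense as the degree-one class, namely the $C^6$ neighborhood or the $\cA$-priors of \cref{prop:radius}. For the $\cA$-priors, I would argue as follows. The rescaled differences $a(y)$ and $\beta(y,d)$ of the alternatives are smooth and compactly supported, and they tend to zero in $C^4$ uniformly in $d$ as $\eps\to0$. The bound \eqref{eq:weightbound} then gives $\cA\to0$. Hence, apart from the degree restriction, the alternatives satisfy every prior constraint of the degree-one class for small $\eps$. This shows that the separation is caused by angular degree alone.
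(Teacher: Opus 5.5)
Your proposal is correct and follows the paper's own proof: it combines \cref{prop:radius} with \cref{thm:ambiguity}, uses compact-support $C^4$ convergence and \eqref{eq:weightbound} to show that the exact alternatives satisfy the norm caps $\cA\leq0.1$ of \eqref{eq:prior} for small $\eps$, and takes the genuinely nonzero degree-two component as the only distinguishing condition. One caveat on your optional aside: \cref{thm:nonlinear} only supplies \emph{some} sufficiently small radius, not every fixed $r\leq0.01$, so the statement as written rests on the \cref{prop:radius} route.
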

\begin{proof}
Use \cref{prop:radius,thm:ambiguity}. For fixed $r$, compact-support $C^4$ convergence controls the Fourier norms in \eqref{eq:prior} by \eqref{eq:weightbound}. Hence sufficiently small exact alternatives satisfy the same numerical norm caps. Their degree-two component is the distinguishing condition. This argument does not assert that the particular $\eps=10^{-4}$ example has numerically certified Fourier budgets.
\end{proof}

This is not a universal transition at $L=2$. Isotropic radial ambiguity already rules out universal degree-one uniqueness, while \eqref{eq:everyL} provides other references with positive local gain at any fixed finite degree.

\subsection{Why moving away from the center does not restore the symbol}
The conjecture that $\kappa_2(x)$ becomes positive off center is false for this reference. At $x\ne0$, choose $n=x/|x|$. For every $d\in C_n$, $x\cdot d=0$ and the exit length is $\sqrt{1-|x|^2}$, constant on that circle. Thus
\begin{equation}\label{eq:tangentcircle}
 U_k(x,d)=(a+bx_k^2)J_0+2bx_kd_kJ_1+bd_k^2J_2,
 \quad J_m=\int_0^{\sqrt{1-|x|^2}}e^{-t}t^m\dd t.
\end{equation}
It is a degree-two polynomial on the chosen circle, so $\kappa_L(x)=0$ for all interior $x$ and $L\geq2$. The center follows from \eqref{eq:AB}. The same reasoning extends to radial densities and the four-profile family: along a tangent direction, $|x+td|^2=|x|^2+t^2$, so all radial integral coefficients are direction independent on that circle.

The whole-sphere function may have higher harmonics away from the center. That does not imply a positive minimum over circles. Conversely, $\kappa_L(x)=0$ alone characterizes neither the complete nonlinear kernel nor which prescribed supports admit exact ambiguities.

\section{Proof of the Euclidean tensor ball estimate}\label{app:ball}
We give the boundary argument because replacing an interior-solenoidal field by a globally solenoidal zero extension would invalidate the nonlinear estimate. The construction specializes classical symmetric-tensor elliptic theory \cite{sharafutdinov1994,pz2016}.

\begin{lemma}[Two classical inputs on the fixed annulus]\label{lem:annularinputs}
Let $A=B_2\setminus\overline{B_1}$ and let $D$ act on rank-$(m-1)$ symmetric tensors, $m\geq1$.
\begin{enumerate}
\item For a finite $K_m$, Korn coercivity holds modulo the full Killing kernel:
\begin{equation}\label{eq:korn}
 \inf_{k\in\ker D}\norm{v-k}_{H^1(A)}\leq K_m\norm{Dv}_{L^2(A)}.
\end{equation}
\item Every weak Killing tensor on $A$ extends as a Killing polynomial across $B_1$ to $B_2$.
\end{enumerate}
\end{lemma}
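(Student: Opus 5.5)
The plan treats the two items separately and reduces each to classical facts about the symmetric gradient $D$ on rank-$(m-1)$ tensors. The first is the finite-dimensionality and polynomial structure of $\ker D$. The second is an elliptic estimate for $D$ on a bounded Lipschitz domain.

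\textbf{Item 2.} Start from a weak solution $Dv=0$ in $\mathcal D'(A)$. Because $D^*D$ is elliptic in the sense of Douglis--Nirenberg (overdetermined elliptic), $v$ is smooth in the interior of $A$. Next, use the classical identity that $m$ applications of a suitable symmetrized derivative annihilate $\ker D$. Concretely, every component of $\nabla^{m}v$ is a linear combination of derivatives of components of $Dv$; for $m=2$ this is $\partial_i\partial_j v_k = \partial_j e_{ik}+\partial_i e_{jk}-\partial_k e_{ij}$, and in general one uses Sharafutdinov's formula \cite{sharafutdinov1994}. Hence $\nabla^{m}v=0$ on $A$. Since $A$ is connected (it is a three-dimensional annulus), $v$ coincides with a single polynomial of degree at most $m-1$ on all of $A$. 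That polynomial is defined on $\R^3$ and satisfies $Dv=0$ identically, because $Dv$ is a polynomial vanishing on an open set. So it extends as a Killing polynomial to $B_2$. The only care needed is to pass from weak to smooth solutions first, which is done by mollifying on compact subsets of $A$.

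\textbf{Item 1.} I would follow the standard Korn argument, in three steps.
\begin{enumerate}
\item Establish the Ne\v{c}as-type estimate $\|v\|_{L^2(A)}\lesssim \|v\|_{H^{-1}(A)}+\|\nabla v\|_{H^{-1}(A)}$ on the Lipschitz domain $A$.
\item Combine it with the identity above, which expresses $\nabla^{m}v$ through $m-1$ derivatives of $Dv$. Iterating the Ne\v{c}as inequality $m-1$ times gives the first Korn inequality
\[
\|v\|_{H^1(A)}\le C\big(\|Dv\|_{L^2(A)}+\|v\|_{L^2(A)}\big).
\]
Pz\v{e}ms\v{l}\'a--Zeng-type treatments \cite{pz2016} supply this for symmetric tensors of any rank.
\item Remove the lower-order term by a contradiction/compactness argument. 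Suppose there were $v_n\perp\ker D$ (in $H^1$) with $\|v_n\|_{H^1}=1$ and $\|Dv_n\|\to0$. Rellich compactness gives $v_n\to v$ in $L^2$. The first Korn inequality then upgrades this to convergence in $H^1$. The limit satisfies $Dv=0$, so $v\in\ker D$ by item 2, while also $v\perp\ker D$ and $\|v\|_{H^1}=1$. This is a contradiction. The argument uses that $\ker D$ is finite-dimensional, which follows from item 2 since it consists of polynomials of bounded degree, and hence closed.
\end{enumerate}
This yields \eqref{eq:korn} with some finite $K_m$. No explicit value is claimed.

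I expect the main obstacle to be the first Korn inequality for higher rank: the Ne\v{c}as step on a Lipschitz domain together with the bookkeeping of the rank-$m$ derivative identity. To get it I would try to quote the classical results directly. If that fails, the fallback is to use the smooth annular geometry to extend $v$ across $\partial A$ by reflection and apply the Fourier-side estimate on $\R^3$.
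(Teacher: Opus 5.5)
\paragraph{Comparison with the paper's proof.} Your item~2 and the compactness step of item~1 follow the paper's argument. For item~2: interior ellipticity gives smoothness, $\nabla^m v=0$ and connectedness of $A$ give a single polynomial, and its Killing identity persists across $B_1$. For item~1, a normalized violating sequence converges, via Rellich and the lower-order estimate, to a Killing tensor orthogonal to the kernel. The routes differ only in how you obtain $\norm{v}_{H^1(A)}\le C(\norm{Dv}_{L^2(A)}+\norm{v}_{L^2(A)})$, which is usually called the \emph{second} Korn inequality. The paper takes it from regular ellipticity of the tensor Neumann system on the smooth annulus, proved by Paternain--Zhou (the reference \cite{pz2016}), through the standard coercive estimate for a regular elliptic boundary problem. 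You take the Lions--Ne\v{c}as route instead: an algebraic factorization $\nabla^m v=L(\nabla^{m-1}Dv)$ combined with Ne\v{c}as-type inequalities. Your route is more elementary, works on any bounded Lipschitz domain, and shows that only the algebraic structure of $D$ matters. The paper's route is shorter because the boundary analysis is packaged in the cited ellipticity.

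\paragraph{Steps that need tightening.} For $m\ge3$ you cannot simply iterate the $H^{-1}$--$L^2$ inequality, because $\nabla^m v$ is only known to lie in $H^{-(m-1)}(A)$. You need the integer-index family $\norm{f}_{H^{1-k}(A)}\lesssim\norm{f}_{H^{-k}(A)}+\norm{\nabla f}_{H^{-k}(A)}$, $1\le k\le m-1$ (Amrouche and Girault's extension of the Lions lemma), applied in turn to $\nabla^{m-1}v,\dots,\nabla v$.

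The rank-$(m-1)$ factorization is best justified by linear algebra rather than attributed to a formula in \cite{sharafutdinov1994}. The map $S^m(\R^3)\otimes S^{m-1}(\R^3)\to S^{m-1}(\R^3)\otimes S^m(\R^3)$ sending $\nabla^m v$ to $\nabla^{m-1}Dv$ is injective exactly when no nonzero Killing polynomial is homogeneous of degree $m$. That is the degree bound the paper quotes from Heil, and any left inverse of this map serves as your $L$. To avoid circularity, cite that bound directly in item~2 and derive the identity from it, rather than the reverse.

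Your reflection fallback is not routine. A plain reflection does not control $D(Ev)$ by $Dv$; already for vector fields a special Nitsche-type extension is needed.
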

The tensor Neumann system is regular elliptic by \cite[Lemma 3.3 and the preceding argument, pp.~1911--1913]{pz2016}. Its standard second Korn estimate controls $H^1$ by $\norm{Dv}_2+\norm v_2$; compactness removes the last term on the $L^2$-orthogonal complement of $\ker D$. Otherwise a normalized violating sequence converges in $L^2$ to a Killing tensor orthogonal to the kernel, hence to zero, contradicting that estimate. Equation \eqref{eq:korn} is this consequence, not a verbatim assertion of the cited lemma.

For the second input, weak solutions are smooth by interior ellipticity. The flat rank-$(m-1)$ Killing equation gives $\nabla^m k=0$, hence polynomial components of degree at most $m-1$ \cite[Corollary 7.3, p.~81]{heil2017}. Connectedness identifies one polynomial on the annulus; its Killing identity extends across the inner ball. These two inputs use the unchanged Euclidean metric. The general extension in \cite[Proposition 3.2]{pz2016}, which permits an exterior metric change, is not substituted for this argument.

\paragraph{Extension functional and projection bound.}
Let $E:H^1(A)\to H^1(B_2)$ be a bounded extension agreeing with $v$ on $A$, and choose $e_m$ with $\norm{D(Ev)}_{L^2(B_1)}\leq e_m\norm v_{H^1(A)}$. For interior-solenoidal $F$, define
\begin{equation}\label{eq:ellF}
 \ell_F(v)=-\int_{B_1}\avg{F,D(Ev)}\dd x.
\end{equation}
It is independent of the interior extension, since the difference has zero trace on $\partial B_1$. If $k$ is Killing on $A$, use its polynomial extension to see $\ell_F(k)=0$. Equation \eqref{eq:korn} gives
$|\ell_F(v)|\leq e_mK_m\norm F_2\norm{Dv}_{L^2(A)}$.
Lax--Milgram on $H^1(A)/\ker D$ supplies $u$ with
\begin{equation}\label{eq:Neumann}
 \int_A\avg{Du,Dv}\dd x=\ell_F(v),\qquad
 \norm{Du}_{L^2(A)}\leq e_mK_m\norm F_2.
\end{equation}
Set $V=F$ on $B_1$, $V=Du$ on $A$, and $V=0$ outside $B_2$. For every global compactly supported test tensor $z$, solenoidality of $F$ and \eqref{eq:Neumann} give
$\int_{\R^3}\avg{V,Dz}=0$. Thus $V$ is globally solenoidal, including both interfaces, and $\norm V_2\leq C_m\norm F_2$ with $C_m=\sqrt{1+(e_mK_m)^2}$. Since $\Pi_mV=V$,
\begin{equation}\label{eq:dualball}
 \norm F_2^2=\avg{\widetilde F,V}
 =\avg{\Pi_m\widetilde F,V}\leq\norm{\Pi_m\widetilde F}_2 C_m\norm F_2.
\end{equation}
This proves \eqref{eq:ballproj}. Rank zero is immediate. Constants for general $m$ are fixed-domain operator norms, not explicit decimal certificates.

For $m=1$ one can take $C_1=\sqrt2$. If $(\Id-\Pi_1)\widetilde v=\nabla p$, then $p$ is harmonic inside and outside the ball with common trace. A spherical harmonic of degree $\ell$ has interior Dirichlet energy $\ell|p_{\ell m}|^2$ and exterior energy $(\ell+1)|p_{\ell m}|^2$. Hence $E_{\rm in}\leq E_{\rm total}/2$. Orthogonality gives
$E_{\rm total}=\avg{v,\nabla p}_{B_1}\leq\norm v_2\sqrt{E_{\rm in}}$, so $E_{\rm total}\leq\norm v_2^2/2$ and $\norm{\Pi_1\widetilde v}_2^2\geq\norm v_2^2/2$. Weak approximation extends the calculation to $L^2$ fields.

\section{Physical bounds and genuine degree two}\label{app:physical}
The estimates here justify the physical part of \cref{thm:ambiguity} without relying on a finite mesh. Throughout $0<r\leq.01$, $s=r/2$, $R=r^2$, and $0<\eps\leq1/6000$.

Since $|\psi|\leq1$ and $|\psi'|\leq4/e<2$, we have $1\leq\rho\leq2$, $|\rho'|\leq8\eps/R$. Write
$z=\sqrt{u-v}\,k(v,u)$ with $k=\int_0^1\rho(v+(u-v)\zeta^2)\dd\zeta$.
Then $k\leq2$, $|\partial_vk|\leq16\eps/(3R)$, $\sinh z\leq3\sqrt{u-v}$, and $\cosh z<2$. Differentiating the smooth kernels in \eqref{eq:Am} gives
\begin{align}
 |\partial_va_0|&\leq3+16\eps,&
 |\partial_va_1|&\leq\tfrac72+\tfrac{32}3\eps,&
 |\partial_va_2|&\leq2+3R+16\eps R.\label{eq:kernelbounds}
\end{align}
All are below four. In \eqref{eq:Vm}, differentiation contributes an additional $(1-\theta)$; the beta integral is $\pi/2$. Thus $|V_m(v,u)|\leq2$ and
\begin{equation}\label{eq:resolventbounds}
 \norm{(\Id+V_0)^{-1}}_{\infty\to\infty}\leq e^{2R}<2,
 \qquad\norm{J_m}_{\infty\to\infty}\leq4R.
\end{equation}

To control the differentiated target in the Abel inverse, let
$B_\psi(v)=\int_0^\infty\psi((v+t^2)/S)\dd t$. Then
$|B_\psi|\leq s$, $|B_\psi'|\leq2/s$, and $H_\rho=\ell+\eps B_\psi$.
The additional past optical depth is
\begin{equation}\label{eq:deltatau}
 \Delta\tau(v,t)=\eps\left(B_\psi(v)+\int_0^t\psi((v+w^2)/S)\dd w\right).
\end{equation}
On each physical chord, $0\leq\Delta\tau\leq2\eps s$ and $|\partial_v\Delta\tau|\leq4\eps/s$. Including the moving endpoints when differentiating the moments gives
\begin{equation}\label{eq:targetbounds}
 |\Delta I_m|\leq4\eps s,\qquad
 |\Delta I_m'|\leq11\eps/s,\qquad
 \norm{\partial_v(e^{H_\rho}\Delta I_m)}_\infty\leq96\eps/s.
\end{equation}
For the last inequality, $e^{H_\rho}<8$ and the bracket
$11+4s^2+8\eps s$ is below twelve. The entry endpoint contributes zero to the first difference; the exit contribution is bounded by $2\eps s$. It follows from \eqref{eq:AbelB} that
$\norm{\mathcal B[e^{H_\rho}\Delta I_m]}_\infty\leq192\eps$, and hence $\norm{P_m}_\infty\leq384\eps$.
Equation \eqref{eq:profiles} gives
\begin{equation}\label{eq:qbounds}
 \norm{q_0}_\infty\leq96\eps,\quad
 \norm{q_B}_\infty\leq192\eps,\quad
 \norm{q_D}_\infty<385\eps,\quad
 \norm{q_C}_\infty<193\eps.
\end{equation}
Thus for every position and direction in the changed region,
\begin{equation}\label{eq:300}
 |\Delta j_k|\leq(96+192r^2+385r+193)\eps<300\eps.
\end{equation}
Globally $j_{2,k}\in[1/4-300\eps,3/4+300\eps]\subset[.2,.8]$, while $\sigma_2\geq1$. For the concrete pair $r=.01$, $\eps=10^{-4}$, the source enclosure is $[.22,.78]$ and the central density difference is exactly $10^{-4}$.

For support, smoothness, and nonzero degree two, use a second expression for the target. Set
$G_m(v,t)=\int_t^{\ell(v)}e^{-w}w^m\dd w$,
$B_t(v)=\int_0^t\psi((v+w^2)/S)\dd w$, and $Q_m(v)=G_m(v,0)>0$.
Exact transport pseudolinearization gives
\begin{equation}\label{eq:targetAbel}
 e^{H_\rho(v)}\Delta I_m(v)=\eps\int_v^S
 \frac{b_m(v,u)\psi(u/S)}{\sqrt{u-v}}\dd u,
\end{equation}
where, with $t=\sqrt{u-v}$,
$b_m(v,u)=[e^{-\eps B_t(v)}G_m(v,t)+e^{\eps B_t(v)}G_m(v,-t)]/2$.
Pairing the two terms makes the kernel smooth with diagonal $Q_m(v)$. Abel composition and the Volterra resolvent imply
\begin{equation}\label{eq:Pmexpansion}
 P_m(v)=\eps Q_m(v)\psi(v/S)+\eps\int_v^S L_m(v,u)\psi(u/S)\dd u
\end{equation}
for bounded smooth kernels $L_m$. This proves flat support and smooth dependence on $\eps$. Since $J_1,J_2$ are strictly integral Volterra operators and $\psi$ is positive and nonincreasing below $S$, division by $\eps\psi(v/S)$ makes each integral remainder $O(S-v)$. Therefore
\begin{equation}\label{eq:nonzeroqC}
 \lim_{v\uparrow S}\frac{q_C(v)}{\eps\psi(v/S)}
 =b\int_0^{\sqrt{1-S}}e^{-t}t^2\dd t>0.
\end{equation}
No other term of \eqref{eq:fourprofile} has degree two in $d$, so this component cannot cancel. Dividing by the positive scalar density preserves angular degree.

\section{An explicit degree-one radius certificate}\label{app:radius}
This appendix supplies the numerical constants in \cref{prop:radius}. It concerns a prior class of fields; it does not certify all input norms of an arbitrary learned model.

For $q=q_0+q_1\cdot d$, circle orthogonality at the quadratic reference gives
\begin{equation}\label{eq:l1circle}
 \avg{|W_0(h,q)|^2}_n=t_0^2\left(
 |q_0-v(n)h|^2+\tfrac12\sum_k|P_nq_{1,k}|^2
 +\tfrac{B^2}8\sum_k(1-n_k^2)^2|h|^2\right),
\end{equation}
where $v_k(n)=A+(B/2)(1-n_k^2)$. Put
\begin{equation}\label{eq:l1joint}
 a_f=B^2/6,\quad V^2=3(A+B/2)^2,\quad
 \mu_f=\frac{a_f}{1+V^2+a_f},\quad
 k_J=t_0\sqrt{\min(\mu_f,1/4)}.
\end{equation}
The determinant/trace bound for the $(h,q_0)$ block and the rank-one ball estimate in \cref{app:ball} give a full joint lower bound $k_J$. Numerically $k_J\simeq.01140136991$, smaller than the sharp density constant \eqref{eq:quadraticgain}.

For this special degree-one case no smallness is needed to choose the source gauge. Solve $\Delta\varphi_k=\Div q_{1,k}$ with zero boundary trace and replace
$q_{1,k}$ by $q_{1,k}-\nabla\varphi_k$ and $q_{0,k}$ by $q_{0,k}+r\sigma_1(ry)\varphi_k$. This preserves the exact data and makes the vector component interior solenoidal.

We bound the actual ray operator directly using tensor moments instead of a spatial cutoff. On $B_1$, multiplication of a direction-only matrix weight by $(y\cdot d)^m|y|^{2n}$ has no larger $L^2\to Y$ norm than the original angular supremum bound. Indeed, place $y_{i_1}\cdots y_{i_m}|y|^{2n}z$ in an ordered tensor, contract with $d^{\otimes m}$, and use Fourier slice, coarea, and Plancherel. The same estimate remains valid after multiplication by a field with finite norm \eqref{eq:algebra}, at the cost of that norm. This follows by Fourier inversion and Minkowski, including for modulated coefficient vectors.

Set $\delta_r=1-\sqrt{1-2r^2}$ and define the following positive scalar majorants:
\begin{align}
 p_0&=\sqrt{3a^2+4ab+4b^2},\qquad q_0^\flat=\sqrt{3a^2+10ab+25b^2},\nonumber\\
 e_t&=e^{r+\delta_r}-1-r,\qquad e_h=e^{2\delta_r}-1,\nonumber\\
 q_{\rm rem}&=br^2+2br(r+\delta_r)+4b\delta_r+b(r+\delta_r)^2,\nonumber\\
 R_S&=t_0\{3br^2+br^3+e_t(p_0+2br+br^2)\}\nonumber\\
 &\quad+t_0^2\{q_{\rm rem}+e_h(q_0^\flat+4br+q_{\rm rem})\},\nonumber\\
 m_S&=r[2bt_0(1-2t_0)+\sqrt3at_0]+R_S,\nonumber\\
 m_Q&=\sqrt2t_0(e^{r+\delta_r}-1),\nonumber\\
 M_{\rm ref}(r)&=\sqrt{m_S^2+m_Q^2},\qquad N_T(r)=t_0e^{r+\delta_r}.\label{eq:majorantref}
\end{align}
For clarity, these are operator majorants, not values of the Fourier weight norm $C_E$. Their derivation retains the first-order cancellation in $T_*U_*$. With $x=ry$, $s_x=x\cdot d$, and $\delta=1-\sqrt{1-|x|^2+s_x^2}$, exact integration yields
\begin{align}
 T_*&=t_0e^{\delta-s_x},\qquad T_*U_*=t_0e^{\delta-s_x}P-t_0^2e^{2\delta}Q,\nonumber\\
 P&=a\one+2bd^2+2bx d+bx^2,\nonumber\\
 Q&=a\one+5bd^2+4b(xd-s_xd^2)+Q_{\rm rem},\nonumber\\
 Q_{\rm rem}&=bx^2-2bxd(s_x+\delta)-4b\delta d^2+b(s_x+\delta)^2d^2.\label{eq:exactS}
\end{align}
Products here are componentwise. The linear term after subtracting the center weight is
$2bt_0(1-2t_0)(xd-s_xd^2)-at_0s_x\one$.
The nonnegative series of $1-\sqrt{1-w}$, and the tensor-moment bound, majorize $\delta$ by $\delta_r$. Expanding $e^{\delta-s_x}=1-s_x+\text{remainder}$ and $e^{2\delta}=1+\text{remainder}$ gives \eqref{eq:majorantref}, including
$\norm{A_{W_*-W_0}z}_Y\leq M_{\rm ref}(r)\norm z_2$.

For the prior budgets $\rho_s=\rho_b=.1$ in \eqref{eq:prior}, define
\begin{align}
 E_s&=e^{2r\rho_s},\qquad H(r)=\frac{\rho_b[1-e^{-2r(1-\rho_s)}]}{1-\rho_s},\nonumber\\
 N_0&=t_0\sqrt{2+3A^2+2AB+B^2},\nonumber\\
 M_{\rm total}(r)&=M_{\rm ref}(r)+(E_s-1)(N_0+M_{\rm ref}(r))
 +N_T(r)E_sH(r).\label{eq:majoranttotal}
\end{align}
To verify the nonlinear terms, on $B_1$ write
\[
T_1(ry,d)=T_*(ry,d)L_1(y,d),\qquad
L_1=\exp\left[-r\int_0^2a_1(y-td)\dd t\right].
\]
In the unital Fourier algebra, $\cA(L_1)\leq E_s$ and $\cA(L_1-1)\leq E_s-1$.
The exact forward-radiance difference has the extension
\begin{equation}\label{eq:Uextension}
 G_2(y,d)=r\int_0^2e^{-rt}
 \exp\left[-r\int_0^ta_2(y+ud)\dd u\right]\beta_2(y+td,d)\dd t,
\end{equation}
which agrees with $U_2(ry,d)-U_*(ry,d)$ on $B_1$. The norm is at most $H(r)$ by translation invariance and submultiplicativity. Applying the tensor-moment multiplier estimate to
\[\begin{aligned}
W-W_0={}&(W_*-W_0)+(L_1-1)W_*\\
&{}-T_*L_1G_2[\text{density column}].
\end{aligned}\]
proves the operator error bound $M_{\rm total}$. It applies to all supported coefficient vectors, including the gauge-modified representative, without imposing new priors on that auxiliary representative.

All relevant majorants are nondecreasing on $0\leq r\leq.01$: the series defining them have nonnegative coefficients, and $H(r)$ is increasing. Exact outward interval evaluation gives
\begin{equation}\label{eq:intervals}
\begin{gathered}
 k_J>.011401369910273487,\qquad
 M_{\rm total}(.01)<.00780756,\\
 \frac{1}{\kappa_1(0)[1-M_{\rm total}(.01)/k_J]}<263.057.
\end{gathered}
\end{equation}
Retaining the sharp density estimate in the absorption step gives
\[
 \|g\|_Y\geq(1-M_{\rm total}/k_J)\|A_0z\|_Y
 \geq\kappa_1(0)(1-M_{\rm total}/k_J)\|h\|_2.
\]
The enclosed coefficient is $263.0562873469795\ldots$. Monotonicity of the majorants proves \eqref{eq:279} for every smaller radius. Fourier inversion of the prior caps also gives $.9\leq\sigma_i\leq1.1$ and $.075\leq j_{i,k}\leq.425+.5r^2$ inside the unknown ball, so the physical box is strict. Small multiples of any compact smooth density direction belong to the class; it is not a disguised fixed-density class.

The evaluator uses integer intervals $[l,u]/10^{40}$ with outward rounding. Square roots use integer square roots. For nonnegative arguments, exponentials use the degree-$64$ Taylor polynomial and tail bound $q^{65}/65!\,(1-q/66)^{-1}$ for upper endpoint $q<66$; negative exponentials are reciprocal intervals. Strict inequalities compare integer endpoints. This certifies the scalar arithmetic conditional on the analytic estimates and input norm caps. It is not proof-assistant verification of the theorem or interval verification of a general renderer.

\section{A certified finite-amplitude physical ambiguity}\label{app:finitecertificate}
This section proves \cref{m:twofold}. It supplies a physical exact pair at $s=1/2$, $S=1/4$, $\eps=1$; the elevenfold voxel comparison in \cref{app:integercertificate} concerns a separate finite representation. The method uses classical Abel inversion, residual bounds, and layerwise propagation \cite{dehoop2017}. The new result is the verified physical finite-amplitude instance, rather than a new Volterra inversion principle.

\subsection{Exact equations and an analytic proxy}
Write $\psi_S(u)=\psi(u/S)$. The construction in \cref{sec:abel} is algebraically exact for every finite positive $\eps$ and $S<1$: its smooth Volterra kernels have factorially convergent resolvents. Smallness in \cref{app:physical} was used only for its physical-box estimates. Put $P=q_0/a=q_B/b$, $D=q_D$, $C=q_C$, with $a=1/4$, $b=1/2$.

For $0\leq w\leq u\leq S$, set $\delta=u-w$, $t=\sqrt\delta$, and
\begin{align}
 e(w,u)&=\eps\int_0^1\psi_S\big(w+(u-w)y^2\big)\dd y,&
 e_w(w,u)&=\eps\int_0^1(1-y^2)\psi_S'\big(w+(u-w)y^2\big)\dd y,\label{eq:noptical}\\
 k&=1+e,& z&=tk,\qquad z_e=te.\nonumber
\end{align}
For $a_0=\cosh z$, $a_1=-t\sinh z$, $a_2=\delta\cosh z$, their first-coordinate derivatives are
\begin{align}
 A_0&=\operatorname{sinhc}(z)(-k^2/2+\delta k e_w),\nonumber\\
 A_1&=k\operatorname{sinhc}(z)/2+\cosh(z)(k/2-\delta e_w),\label{eq:nAk}\\
 A_2&=-\cosh z+\delta A_0,\qquad
 c_e=\cosh z_e,\qquad c_{e,w}=\operatorname{sinhc}(z_e)(-e^2/2+\delta e e_w).\nonumber
\end{align}
Here $\operatorname{sinhc}(0)=1$ gives the removable diagonal values. Define a positive averaging operator of mass $1/2$,
\begin{equation}
 \mathcal M[f](v,u)=\frac1\pi\int_0^1
 \frac{(1-\theta)f(v+(u-v)\theta,u)}{\sqrt{\theta(1-\theta)}}\dd\theta.
\end{equation}
The kernels $V_m$ in \eqref{eq:Vm} are $-\mathcal M[A_m]$.

Let $\ell(w)=\sqrt{1-w}$ and
\begin{equation}
 L_0=e^{-\ell},\quad L_1=(1+\ell)e^{-\ell},\quad
 L_2=(2+2\ell+\ell^2)e^{-\ell},\quad J_m=m!-L_m.
\end{equation}
Thus $L_0'=e^{-\ell}/(2\ell)$, $L_1'=e^{-\ell}/2$, $L_2'=\ell e^{-\ell}/2$.
The paired target kernels in \eqref{eq:targetAbel} are
$b_0=a_0-L_0c_e$, $b_1=a_0+a_1-L_1c_e$, and
$b_2=2a_0+2a_1+a_2-L_2c_e$. Setting $W_m=-\mathcal M[\partial_w b_m]$, Abel composition gives
\begin{align}
 (\Id+V_0)P&=g_0,\nonumber\\
 (\Id+V_0)D+V_1P&=g_1,\label{eq:nblock}\\
 (\Id+V_0)C+V_1D+\tfrac12V_2P&=\tfrac12g_2,\nonumber\\
 g_m(v)&=\eps\left[J_m(v)\psi_S(v)+\int_v^SW_m(v,u)\psi_S(u)\dd u\right].\nonumber
\end{align}
These equations specify the exact source. Flat support at $S$ and spatial smoothness follow as in \cref{app:physical}. The terminal-support Abel inverse is injective, so solving \eqref{eq:nblock} matches all four original line-polynomial coefficients. This proves all-ray equality, including reversed lines, before testing physicality.

Use the explicit proxy $\widehat P=\eps\psi_SJ_0$, $\widehat D=\eps\psi_SJ_1$, $\widehat C=\eps\psi_SJ_2/2$. Its source correction is
\begin{equation}\label{eq:nproxypositive}
 \widehat j_k-j_{1,k}=\eps\psi_S(|x|^2)
 \int_0^{\ell(|x|^2)}e^{-t}\left[a+b(x_k+td_k)^2\right]\dd t\geq0.
\end{equation}
The multiplier of $\eps\psi_S$ is bounded above by
$(a+bS)J_0(0)+sJ_1(0)+bJ_2(0)<1$. Consequently $\widehat j_k\geq1/4$ and, inside $B_s$,
$\sigma_2-\widehat j_k\geq3/4-S/2=5/8$. Outside $B_s$ both reference margins are at least $1/4$.

\subsection{Residual enclosures on the complete integration domain}
Let $r$ be the left side of \eqref{eq:nblock} at the proxy minus its right side. Algebraic cancellation gives
\begin{equation}\label{eq:nresidual}
 r_m(v)=\eps\int_v^S\psi_S(u)\mathcal M[F_m](v,u)\dd u,
\end{equation}
where the three components, indexed by $P,D,C$, are
\begin{align}
 F_P&=A_0L_0(u)-L_0'(w)c_e-L_0(w)c_{e,w},\nonumber\\
 F_D&=A_0L_1(u)+A_1L_0(u)-L_1'(w)c_e-L_1(w)c_{e,w},\label{eq:nreskernels}\\
 F_C&=\tfrac12\big[A_0L_2(u)+2A_1L_1(u)+A_2L_0(u)-L_2'(w)c_e-L_2(w)c_{e,w}\big].\nonumber
\end{align}
In particular, the cancellation is performed before taking absolute values.

Partition $[0,S]$ into $N$ closed cells $I_i=[ih,(i+1)h]$, $h=S/N$. For each $i\leq j$, enclose \eqref{eq:nAk} and \eqref{eq:nreskernels} on the entire rectangle $w\in I_i$, $u\in I_j$, restricted to $w\leq u$. To enclose \eqref{eq:noptical}, split $y\in[0,1]$ into $Q$ cells $[l,r]$. Its argument belongs to
\begin{equation}
 [(1-l^2)w_-+l^2u_-,\ (1-r^2)w_++r^2u_+].
\end{equation}
The bump decreases. With $\zeta=S/(S-z)$, its derivative magnitude is
$\zeta^2e^{1-\zeta}/S$, which has its sole maximum at $\zeta=2$. Endpoint enclosures and the possible interior maximum $4/(eS)$ therefore bound both integrands on every cell. For $e_w$, multiply by the exact integral of $1-y^2$. Positivity and mass $1/2$ of $\mathcal M$ then give rational bounds
\begin{equation}
 |V_m(v,u)|\leq M^m_{ij},\qquad
 |\mathcal M[F_m](v,u)|\leq R^m_{ij}
 \quad(v\in I_i,\ u\in I_j,\ v\leq u).
\end{equation}
The maxima include every $w$-cell between $i$ and $j$. Thus
\begin{equation}\label{eq:nrho}
 |r_m(v)|\leq\rho_{m,i}:=\eps\sum_{j\geq i}h\psi_S(jh)R^m_{ij},\qquad v\in I_i.
\end{equation}
Including the whole diagonal cell is conservative. These envelopes cover the points between partition endpoints as well as every optical and angular integration variable.

\subsection{Segmented propagation and physicality}
Let $E_{P,i},E_{D,i},E_{C,i}$ bound the profile errors on $I_i$. Solve from $i=N-1$ down to zero and put $a_i=\exp(hM^0_{ii})$. The scalar local Volterra resolvent has norm at most $a_i$ by its factorial series. Induction gives the following continuous bounds:
\begin{align}
 E_{P,i}&=a_i\left[\rho_{P,i}+\sum_{j>i}hM^0_{ij}E_{P,j}\right],\nonumber\\
 E_{D,i}&=a_i\left[\rho_{D,i}+hM^1_{ii}E_{P,i}
            +\sum_{j>i}h(M^1_{ij}E_{P,j}+M^0_{ij}E_{D,j})\right],\label{eq:npropagate}\\
 E_{C,i}&=a_i\left[\rho_{C,i}+h(\tfrac12M^2_{ii}E_{P,i}+M^1_{ii}E_{D,i})
            +\sum_{j>i}h(\tfrac12M^2_{ij}E_{P,j}+M^1_{ij}E_{D,j}+M^0_{ij}E_{C,j})\right].\nonumber
\end{align}
To verify this recurrence, bound the already solved tail on every later cell; on the current cell absorb the remaining $V_0$ integral using its local resolvent. The second and third components additionally use the already bounded earlier components. The off-diagonal sums account for accumulated segment error.

\begin{table}[t]
\caption{Outward bounds for the exact profiles relative to the analytic proxy at $s=1/2$, $\eps=1$, $N=Q=32$. Upper values are rounded up and lower values down.}\label{tab:continuumcertificate}
\centering
\begin{tabular}{@{}lrrr@{}}\toprule
Component & $P$ & $D$ & $C$\\\midrule
Residual upper bound & $.046538$ & $.038327$ & $.021660$\\
Exact profile-error upper bound & $.049959$ & $.044609$ & $.027105$\\\midrule
\multicolumn{3}{@{}l}{Exact-source lower bound $j_{2,k}$} & $.2062$\\
\multicolumn{3}{@{}l}{Exact absorption margin $\sigma_2-j_{2,k}$} & $.25$\\
\multicolumn{3}{@{}l}{Nonzero degree-two coefficient $q_C(0)$} & $.0531$\\\bottomrule
\end{tabular}
\end{table}

The rational evaluation yields \cref{tab:continuumcertificate}; in particular the maximum profile error is strictly below $.05$. On a radial cell, set $r_i=\sqrt{(i+1)h}$. The source error is at most
\begin{equation}
 (a+br_i^2)E_{P,i}+r_iE_{D,i}+E_{C,i}<.043733.
\end{equation}
Combining with \eqref{eq:nproxypositive} proves $j_{2,k}>.2062$ and the global absorption margin $1/4$. Also
$q_C(0)\geq J_2(0)/2-E_{C,0}>.0531$. At $x=0$ this yields a nonzero degree-two angular component; division by the positive density preserves its degree. This completes the physical part of \cref{m:twofold}.

The exact relative density change squared is $3\int_0^{1/2}r^2\psi(4r^2)^2\dd r$. Monotone endpoint bounds on $512$ radial cells, with exact integration of $r^2$, give the outward square-root interval
\begin{equation}
 [.14522190727469929229,\ .14591961734810516549].
\end{equation}
The changed support has volume fraction $1/8$, and central density doubles exactly.

\paragraph{Arithmetic and reproducibility.}
Every decision uses rational endpoints on the integer scale $10^{40}$. Exponentials use the degree-$64$ series with analytic tails as in \cref{app:radius}; far-tail bump values use monotonic bounds. The hyperbolic functions use their degree-$32$ series in $z^2$ with positive factorial tails. Integer square roots enclose geometric endpoints. No external floating-point elementary function determines acceptance. The supplementary checker uses only Python's standard library and reproduces the full interval tables. This is a computer-assisted enclosure conditional on the stated analytic identities, rather than proof-assistant verification.

The $N=Q=8$ enclosure already gives $j_{2,k}>.1434$ and a positive degree-two lower bound; the stated refined enclosure gives the stronger physical margins. This continuous physical certificate concerns the twofold pair. The elevenfold comparison in \cref{app:integercertificate} is a separate finite-model result.

\section{Observation equivalence and extensions of the physical pair}\label{app:extensions}
\subsection{Image objectives and image-only estimators}
Let $F=(\sigma,j)$ denote a field and $\cR(F)$ its full oriented-ray RGB. If $\cR(F_1)=\cR(F_2)$, then $\Phi(\cR(F_1);z)=\Phi(\cR(F_2);z)$ for any function $\Phi$ and fixed auxiliary input $z$. Expectations over a field-independent distribution of cameras, prompts, augmentations, and noise preserve this equality. This proves \cref{m:rgbfiber}, including RGB-only reconstruction supervision. Three-dimensional labels and geometric penalties use additional information and need not agree.

More explicitly, a camera outside the volume selects a subset of external oriented lines, whether orthographic or perspective. A pixel may integrate them against any fixed footprint measure. Equality on every ray implies equality after this linear integration, and applying the same deterministic tone mapping, quantizer, feature extractor, or learned image loss preserves equality. Identical observation-noise kernels give identical observation distributions as well. Thus additional external views and arbitrarily fine resolution cannot distinguish this pair under the same acquisition model. Changing background, illumination, or adding direct transmission measurements changes the observation model and can break the equivalence.

For a differentiable exact-data path $F(s)$ and any image cotangent $g$, the chain rule gives
\begin{equation}\label{eq:fibertangent}
 \langle D\cR(F)^*g,F'(s)\rangle
 =\langle g,D\cR(F)F'(s)\rangle=0.
\end{equation}
The RGB score-distillation update has this pullback form \cite{poole2023}. Its parameter-space values may differ across an observation fiber because the renderer Jacobians differ; \eqref{eq:fibertangent} concerns their action along the fiber, not equality of training trajectories. DreamFusion additionally changes shading and uses orientation/opacity regularizers, so the full algorithm is outside this fixed-observation corollary.

An estimator based only on the common RGB has the same output distribution $\widehat\sigma$ under either field. The triangle inequality yields
\begin{equation}\label{eq:exactpairminimax}
 \max_{i=1,2}\mathbb E\norm{\widehat\sigma-\sigma_i}_2
 \geq\tfrac12\norm{\sigma_1-\sigma_2}_2.
\end{equation}
For \cref{m:twofold}, the right side exceeds $.0726\norm{\sigma_1}_2$. This noiseless two-point limitation holds within the SH2 class. It is an elementary consequence of the exact pair, not a claim that any particular foundation model empirically attains this error.

\subsection{A family of differently textured physical scenes}
The same profiles $P,D,C$ certified in \cref{app:finitecertificate} cover a family of spatial textures. The matrices below may have unrelated eigenspaces across RGB channels.

\begin{proposition}[Quadratic-source family]\label{prop:quadraticfamily}
For each channel let
\begin{equation}
 f_k(x)=\tfrac14+\ell_k\cdot x+x^TQ_kx,\qquad
 |\ell_k|\leq.01,\quad0\preceq Q_k\preceq\tfrac12\Id.
\end{equation}
Use $\sigma_1=1$, $\sigma_2=1+\psi(4|x|^2)$, $j_{1,k}=f_k$, and
\begin{equation}\label{eq:quadraticfamily}
 j_{2,k}=f_k+P f_k+D(\ell_k\cdot d+2x^TQ_kd)+2C d^TQ_kd.
\end{equation}
Then both fields have exactly the same full-ray RGB and satisfy $j_{2,k}>.1955$ and $\sigma_2-j_{2,k}\geq.24$. If any $Q_k$ has nonzero trace-free part, the changed source has genuine degree two.
\end{proposition}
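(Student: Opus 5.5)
The plan is to reduce the family to the certified quadratic-reference construction by linearity in the source. At fixed density, rendering is linear in $j$, so it suffices to show, channel by channel, that the four-profile correction reproduces the change in $\cR$ caused by replacing $\sigma_1=1$ by $\sigma_2$. Expand $f_k(p+td)=f_k(p)+t(\ell_k\cdot d+2p^TQ_kd)+t^2d^TQ_kd$. The reference data at radial density $\rho$ are then $f_k(p)I_0^\rho+(\ell_k\cdot d+2p^TQ_kd)I_1^\rho+d^TQ_kd\,I_2^\rho$. This generalizes \eqref{eq:raycoeff}, where $a+bp_k^2$, $2bp_kd_k$ and $bd_k^2$ play these roles. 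The target differences are therefore $f_k(p)\Delta I_0$, $(\ell_k\cdot d+2p^TQ_kd)\Delta I_1$ and $d^TQ_kd\,\Delta I_2$.

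Next I will compute the contribution of the correction \eqref{eq:quadraticfamily} along the line, using $x=p+td$ and $|x|^2=v+t^2$. The term $Pf_k(x)$ contributes $K_0P\,f_k(p)+K_1P\,(\ell_k\cdot d+2p^TQ_kd)+K_2P\,d^TQ_kd$. The term $D(\ell_k\cdot d+2x^TQ_kd)$ contributes $K_0D\,(\ell_k\cdot d+2p^TQ_kd)+2K_1D\,d^TQ_kd$. The term $2C\,d^TQ_kd$ contributes $2K_0C\,d^TQ_kd$.

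Matching coefficients of the three independent line polynomials gives three conditions:
\begin{itemize}
\item $K_0P=\Delta I_0$;
\item $K_1P+K_0D=\Delta I_1$;
\item $K_2P+2K_1D+2K_0C=\Delta I_2$.
\end{itemize}
After the Abel/Volterra reduction $\mathcal BK_m=e^{-H}\cdot(\text{the }V_m\text{ structure})$, these are exactly the three rows of \eqref{eq:nblock}, in the normalization $P=q_0/a$, $D=q_D$, $C=q_C$. The third row multiplied by two gives $(\Id+V_0)2C+2V_1D+V_2P=g_2$. I will check carefully that the factors of $2$ and $\tfrac12$ agree with the specialization $\ell_k=0$, $Q_k=b\,e_ke_k^T$, which recovers \eqref{eq:fourprofile}. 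Injectivity of the terminal-support Abel inverse then gives exact equality on all oriented lines. As in \cref{thm:ambiguity}, the identical profiles $P,D,C$ from \cref{app:finitecertificate} serve every channel, since the equations do not involve $f_k$.

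For physicality, I will write $P=\widehat P+e_P$, and similarly for $D$ and $C$, with the error bounds $E_{P,i},E_{D,i},E_{C,i}$ of \cref{tab:continuumcertificate}. The proxy part equals $\psi_S\int_0^{\ell}e^{-t}f_k(x+td)\dd t$, by the same Taylor identity as \eqref{eq:nproxypositive}. It is nonnegative because $f_k\geq\tfrac14-.01\cdot1>0$ on the ball. It is bounded above by a quantity that depends on $\|Q_k\|\leq\tfrac12$ and $|\ell_k|\leq.01$.

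The error part is at most $|f_k|E_P+|\ell_k\cdot d+2x^TQ_kd|E_D+2|d^TQ_kd|E_C$. On $B_{1/2}$ this gives $|f_k|\leq\tfrac14+.005+\tfrac18$, $|\ell_k+2Q_kx|\leq .01+\tfrac12$ and $2d^TQ_kd\leq1$. With the tabulated bounds $.05$, $.045$ and $.028$, the total lies a little below $.055$. Subtracting it from the minimum of $f_k$ (about $.24$, since $x^TQ_kx\geq0$) gives the source lower bound near $.1955$. Using the cellwise radii $r_i$ instead of the global $\tfrac12$ is what I expect to be the delicate, certificate-level step. If the crude global bound fails to reach $.1955$, I will reuse the segmented cell enclosure exactly as in \cref{app:finitecertificate}.

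The absorption margin follows the same pattern. Outside $B_{1/2}$, $\sigma_2-f_k\geq1-(\tfrac14+.01+\tfrac12)=.24$. Inside, $\sigma_2=1+\psi$, and the added source is at most $\psi\cdot(\text{proxy multiplier}<1)$ plus the error, which is absorbed by the slack.

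For genuine degree two, I will evaluate at $x=0$. The angular part there is $\ell_k\cdot d\,D(0)+2C(0)d^TQ_kd$, and $C(0)\geq .0531>0$. The degree-two harmonic component of $d^TQ_kd$ is the trace-free part of $Q_k$, and no other term has degree-two content, so a nonzero trace-free part gives genuine degree two.
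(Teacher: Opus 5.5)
Your exact-equality argument and your degree-two argument are the paper's. You expand $f_k(p+td)$ in powers of $t$, solve the same triangular moment system for $P,D,C$ independently of $f_k$, and read off $2C(0)\,d^TQ_kd$ at the origin with $C(0)>.0531$. Your direct computation for general $Q_k$ replaces the paper's appeal to polarization, and your factors agree with \eqref{eq:nblock}. One notational fix: the reduction acts on $e^{H_\rho}K_mq$, because $e^{-H_\rho(v)}$ cannot be moved through $\mathcal B$.

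The gap is the lower bound $j_{2,k}>.1955$, and it is not a minor refinement. Your global-supremum estimate is miscomputed. With the maxima in \cref{tab:continuumcertificate}, $.38E_P+.51E_D+E_C\approx.0688$, not ``a little below $.055$''. Even $.055$ would give only $.24-.055=.185$. The correction vanishes outside $B_{1/2}$, and $f_k\geq.245$ there. So reaching $.1955$ needs a total profile-to-source error below $.0495$, or below $.0445$ if you use $f_k\geq.24$ as you and the paper do. The crude route therefore cannot reach the stated constant, and the cellwise step is the essential one, not a fallback.

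The paper does the cellwise step without any new enclosure, by comparing termwise with the bound already certified for the original source $\tfrac14+\tfrac12x_k^2$. On the radial cell of outer radius $r_i$, the hypotheses $0\preceq Q_k\preceq\tfrac12\Id$ and $|\ell_k|\leq.01$ give:
\begin{itemize}
\item $|f_k|\leq a+br_i^2+.01r_i$;
\item $|\ell_k\cdot d+2x^TQ_kd|\leq r_i+.01$;
\item $0\leq2d^TQ_kd\leq1$.
\end{itemize}
These are the original coefficient bounds plus the $\ell_k$ terms. The new cellwise error is therefore at most $(a+br_i^2)E_{P,i}+r_iE_{D,i}+E_{C,i}+.01(r_iE_{P,i}+E_{D,i})$. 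Using the certified $.043733$ and the tabulated maxima, this is below $.043733+.01(sE_P+E_D)<.044429$, hence $j_{2,k}>.24-.044429>.1955$.

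Two smaller points.
\begin{itemize}
\item The proxy integral runs over $t\in[0,\sqrt{1-|x|^2}]$ in every direction, so $x+td$ can leave $B_1$; it stays in $B_{1.5}$. Positivity should use $f_k\geq.25-.015$ there, not a bound on the unit ball.
\item For the absorption margin inside $B_{1/2}$, state explicitly that $f_k\leq.38$ there, which gives proxy margin $.62$. The global bound $f_k\leq.76$ would leave no room for the error term.
\end{itemize}
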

\begin{proof}
On $x=p+td$, expand $f_k(x)$ into its constant, linear, and quadratic powers of $t$. Their coefficients are $f_k(p)$, $\ell_k\cdot d+2p^TQ_kd$, and $d^TQ_kd$. The three paired moment equations for $P,D,C$ match each coefficient exactly; polarization allows every symmetric $Q_k$.

The proxy correction is $\psi(4|x|^2)\int_0^{\sqrt{1-|x|^2}}e^{-t}f_k(x+td)\dd t$. Its integrand is positive because $|x+td|\leq1.5$ on $B_{.5}$ and $f_k(x+td)\geq.25-.015$. Its multiplier is bounded by
\[
 (.25+.5S+.01s)J_0(0)+(s+.01)J_1(0)+.5J_2(0)<1.
\]
The original error bound increases by at most $.01(sE_P+E_D)<.000696$, giving total source error below $.044429$. Since $f_k\geq.24$ in $B_1$, $j_{2,k}>.1955$. Inside $B_{.5}$ the proxy absorption margin is at least $.75-.01s-.5S=.62$; outside it is at least $.24$. The stated global bound follows. At zero, the quadratic angular term is $2C(0)d^TQ_kd$, with $C(0)>.0531$.
\end{proof}
This enlarges the source-texture family, while keeping the density radial. It makes no common positive-$\kappa_1$ assertion for all these textures.

\subsection{Finite-amplitude pairs at higher optical depths}
Let the constant reference density be $\nu>0$, with $j_{1,k}=\nu(1/4+x_k^2/2)$, and put $\sigma_2=\nu+\eps\psi_S$. Dividing the original first-kind equations by $\nu$ gives the same profiles, now with $k=\nu+e$ in \eqref{eq:nAk}. The source correction becomes $\nu[P/4+Px_k^2/2+Dx_kd_k+Cd_k^2]$. Define
\begin{align}
 J_m(w)&=\int_0^{\sqrt{1-w}}e^{-\nu t}t^m\dd t
       =\frac{m!}{\nu^{m+1}}-L_m(w),\nonumber\\
 L_0&=e^{-\nu\ell}/\nu,\quad
 L_1=(\ell/\nu+1/\nu^2)e^{-\nu\ell},\quad
 L_2=(\ell^2/\nu+2\ell/\nu^2+2/\nu^3)e^{-\nu\ell}.\label{eq:nuprimitive}
\end{align}
Their derivatives are $L_0'=e^{-\nu\ell}/(2\ell)$, $L_1'=e^{-\nu\ell}/2$, and $L_2'=\ell e^{-\nu\ell}/2$. The paired target kernels become
\begin{equation}\label{eq:nutarget}
 b_0=a_0/\nu-L_0c_e,\quad
 b_1=a_0/\nu^2+a_1/\nu-L_1c_e,\quad
 b_2=2a_0/\nu^3+2a_1/\nu^2+a_2/\nu-L_2c_e.
\end{equation}
These formulas follow by the three tail integrations in \eqref{eq:nuprimitive}. Abel composition gives \eqref{eq:nblock} with these $J_m$ and $b_m$. Substituting the proxy cancels the constant primitive terms, leaving exactly the residual expressions \eqref{eq:nreskernels} with the new $L_m$. Thus the complete interval construction and segmented recurrence apply unchanged.

The proxy source correction is $\nu\eps\psi_S\int_0^\ell e^{-\nu t}[1/4+(x_k+td_k)^2/2]\dd t$. The physicality transfer requires
\begin{equation}
 \nu[(1/4+S/2)J_0(0)+sJ_1(0)+J_2(0)/2]<1.
\end{equation}
It then has source margin $\nu/4$ and interior absorption margin $\nu(3/4-S/2)$. The profile-to-source error is the previous bound multiplied by $\nu$.

\begin{proposition}[Three optically thick exact pairs]\label{prop:densepairs}
For each row of \cref{tab:densepairs}, the integral-defined SH2 source is physical, is equal to the reference outside $B_s$, and exactly preserves every oriented-ray RGB. Central density doubles. The source and absorption bounds in the table hold for every point and direction.
\end{proposition}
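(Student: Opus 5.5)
The plan is to reuse, row by row, the construction and certificate of \cref{app:finitecertificate} in the rescaled form just derived for reference density $\nu$. For each tabulated triple $(\nu,s,\eps)$ with $\eps=\nu$ (so central density doubles), exact all-ray equality comes first and is independent of physicality. Dividing the first-kind moment equations by $\nu$ yields the block triangular system \eqref{eq:nblock}, with $k=\nu+e$ in \eqref{eq:nAk}, the primitives \eqref{eq:nuprimitive} and the target kernels \eqref{eq:nutarget}. The kernels remain smooth on the closed triangle, so the Volterra resolvents converge factorially for every finite $\nu$ and $\eps$. The terminal-support Abel inverse is injective. Solving the system therefore matches all four line-polynomial coefficients $1,p_k^2,p_kd_k,d_k^2$ exactly, as in \cref{thm:ambiguity}. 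This gives identical RGB on every oriented line, including reversed orientation. Flatness of $\psi_S$ at $S$ gives support in $\overline{B_s}$ and smoothness, and the source has degree at most two.

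Physicality is then obtained by the proxy-plus-residual method. Take $\widehat P=\eps\psi_SJ_0$, $\widehat D=\eps\psi_SJ_1$ and $\widehat C=\eps\psi_SJ_2/2$, with the $\nu$-dependent $J_m$. The proxy correction $\nu\eps\psi_S\int_0^\ell e^{-\nu t}[1/4+(x_k+td_k)^2/2]\dd t$ is nonnegative. Under the displayed multiplier condition it is below $\nu$ times one, which yields the proxy source margin $\nu/4$ and the interior absorption margin $\nu(3/4-S/2)$. Substituting the proxy cancels the constant primitives and leaves exactly the residual kernels \eqref{eq:nreskernels} with the new $L_m$. Enclosing these residuals follows \cref{app:finitecertificate}: partition $[0,S]$ into $N$ cells and the auxiliary $y$-interval into $Q$ cells, and enclose $e$, $e_w$, $A_m$, $c_e$ and $F_m$ on each rectangle $w\le u$. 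The bump derivative is handled by its single interior maximum at $\zeta=2$. Positivity and mass $1/2$ of $\mathcal M$ then give rational bounds $M^m_{ij}$, $R^m_{ij}$ and hence $\rho_{m,i}$. The segmented recurrence \eqref{eq:npropagate}, with $a_i=\exp(hM^0_{ii})$, supplies the profile-error bounds $E_{P,i},E_{D,i},E_{C,i}$.

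Finally, on each radial cell the source error is bounded by $\nu[(a+br_i^2)E_{P,i}+r_iE_{D,i}+E_{C,i}]$. Subtracting this from the proxy margins gives the tabulated lower source bound and absorption margin. Outside $B_s$ the reference margins $\nu/4$ apply directly. Genuine degree two follows from $q_C(0)\ge\nu(J_2(0)/2-E_{C,0})>0$, after division by the positive density.

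The main obstacle is that at larger $\nu$ the attenuation grows and the kernel bounds $M^0_{ij}$ increase. The amplification $a_i$ and the off-diagonal accumulation in the recurrence then increase too, while the proxy margin shrinks relative to the errors. If the $N=Q=32$ enclosure is too loose for some row, I would first refine $N$ and $Q$ (for example to 64), and if needed shrink $s$. All arithmetic stays in the outward-rounded integer-interval evaluator, so the table entries are certified rather than sampled.
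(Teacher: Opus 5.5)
Your proposal follows the paper's proof. Exact all-ray equality comes from the rescaled Abel--Volterra system independently of any inequality, and physicality comes from the proxy-plus-residual segmented recurrence in outward integer arithmetic, with the multiplier condition, margins $\nu/4$ and $\nu(3/4-S/2)$, and the profile error scaled by $\nu$. Two small points: the paper's run already passes at $N=Q=16$, so no refinement is needed, and shrinking $s$ is not an option because each row fixes it; also, with $\eps=\nu$ the degree-two bound should read $C(0)\ge\eps J_2(0)/2-E_{C,0}$ (the quantity in the table's last column), not $J_2(0)/2-E_{C,0}$.
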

\begin{proof}
Use the preceding generalized identities and evaluate \eqref{eq:npropagate} with the outward integer arithmetic of \cref{app:finitecertificate}, $N=Q=16$. The proxy multiplier, source margin, absorption margin, and $C(0)>0$ inequalities all pass. The standalone checker reproduces the rational endpoints; the table rounds lower bounds downward. Volterra invertibility and the original moment identities supply exact equality independently of these inequalities.
\end{proof}

\begin{table}[t]
\caption{Continuous certificates at higher reference optical depths. $\tau_*=2\nu$ is the reference diameter optical thickness. Every row doubles central density; increasing thickness here also changes support radius.}\label{tab:densepairs}
\centering
\begin{tabular}{@{}rrrrrrr@{}}\toprule
$\nu$ & $\eps$ & $s$ & $\tau_*$ & $j_{2,k}>$ & $\sigma_2-j_{2,k}\geq$ & $C(0)>$\\\midrule
2&2&.3&4&.4489&.5&.0643\\
4&4&.2&8&.9749&1&.0435\\
8&8&.1&16&1.9991&2&.0153\\\bottomrule
\end{tabular}
\end{table}

As a check, setting $\nu=1$ reproduces every original $N=16$ rational endpoint. Independent first-kind solves at $96$ radial coefficients give held-out moment residuals below $1.1\times10^{-16}$ and sampled profile-error/enclosure ratios below $.677$ for all three cases. Those floating-point cross-checks are supplementary diagnostics, not the continuous enclosure proof. These examples remain emitting, absorbing, nonscattering volumes with radial density; they neither guarantee recovery behind an opaque barrier nor extend the first-order positive radius.

\subsection{A controlled nonradial approximate ambiguity}
\begin{proposition}[Common added absorption]\label{prop:nonradialrobust}
Let $(\sigma_i,j_i)$ be any physical exact-RGB pair. Add an arbitrary common $a(x)\geq0$ to both densities, keeping the sources fixed. If the largest integral of $a$ along any scene-crossing line is at most $\delta$, then the modified fields remain physical, have the same source degrees and density difference, and satisfy
\begin{equation}\label{eq:commonabsorption}
 \norm{\cR(\sigma_1+a,j_1)-\cR(\sigma_2+a,j_2)}_\infty
 \leq1-e^{-\delta}\leq\delta.
\end{equation}
\end{proposition}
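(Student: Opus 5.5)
The plan is a direct transport comparison along each line, with no inversion. Physicality and degree preservation come first and are immediate. Adding $a\ge0$ keeps $\sigma_i+a\ge0$, and since $j_{i,k}\le\sigma_i\le\sigma_i+a$ the source box $0\le j_{i,k}\le\sigma_i+a$ still holds. The sources are unchanged, so their angular degrees are unchanged, and the density difference $(\sigma_2+a)-(\sigma_1+a)=\sigma_2-\sigma_1$ is preserved exactly.

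For the data bound I fix one oriented line $x=p+td$ and write $A(t)=\int_{-\ell}^t a(p+sd)\dd s$. Then $0\le A\le\delta$ and $A$ is nondecreasing. Entry-to-point transmission factorizes as $T_{\sigma_i+a}=T_{\sigma_i}e^{-A}$, so
\[
\cR(\sigma_i+a,j_i)=\int T_{\sigma_i}j_i\,e^{-A}\dd t .
\]
Because $\cR(\sigma_1,j_1)=\cR(\sigma_2,j_2)$ on this line, subtracting gives
\[
\cR(\sigma_1+a,j_1)-\cR(\sigma_2+a,j_2)=\int (w_1-w_2)\,e^{-A}\dd t,\qquad w_i=T_{\sigma_i}j_{i,k}\ge0,
\]
where $\int w_1\dd t=\int w_2\dd t$.

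The key step is to bound this by $1-e^{-\delta}$ rather than by $2(1-e^{-\delta})$. I write $e^{-A}=1-(1-e^{-A})$ and cancel the equal totals, which leaves $-\int(w_1-w_2)(1-e^{-A})\dd t$. The factor $\phi=1-e^{-A}$ is nondecreasing with values in $[0,1-e^{-\delta}]$, so I apply the layer-cake identity $\phi(t)=\int_0^{\phi_{\max}}\one[\phi(t)>\lambda]\dd\lambda$. Each level set $\{\phi>\lambda\}$ is a terminal segment $(t_\lambda,\ell)$ of the ray. On a terminal segment the physical bound $j_{i,k}\le\sigma_i$ gives
\[
\int_{t_\lambda}^{\ell}w_i\dd t\le\int_{t_\lambda}^{\ell}T_{\sigma_i}\sigma_i\dd t=T_{\sigma_i}(t_\lambda)-T_{\sigma_i}(\ell)\le1.
\]
Both terminal integrals are nonnegative and at most one, so their difference lies in $[-1,1]$. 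Integrating over $\lambda\in[0,1-e^{-\delta}]$ bounds the whole difference by $1-e^{-\delta}$. Taking the supremum over lines and channels then gives \eqref{eq:commonabsorption}, and $1-e^{-\delta}\le\delta$ is elementary.

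I expect the terminal-segment monotonicity to be the only delicate point, since it is what improves the crude constant. If $a$ is merely measurable rather than continuous, the level sets are still intervals because $A$ is monotone, so the argument is unaffected.
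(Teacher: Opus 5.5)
Your proof is correct, but it follows a more elaborate route than the paper's. The paper never subtracts the two integrals. On each ray and channel, the measures $\mu_i=T_{\sigma_i}j_{i,k}\dd t$ are nonnegative with a common mass $C\le1$. Since $e^{-\tau}\le e^{-A}\le1$ pointwise, each modified radiance $\int e^{-A}\dd\mu_i$ lies in $[e^{-\tau}C,C]$. That interval has length $C(1-e^{-\tau})\le1-e^{-\delta}$.

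This sandwich is what beats the crude factor two. It uses neither the monotonicity of $A$ nor the bound $j_{i,k}\le\sigma_i$ on subsegments. Your remark that terminal-segment monotonicity ``is what improves the crude constant'' therefore misplaces the source of the gain. In your layer-cake step, $\int_E w_i\dd t\in[0,C]$ for every measurable $E$ by positivity and the common total alone. So the level sets need not be tails, and the physical bound on $(t_\lambda,\ell)$ is superfluous.

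If you do keep the monotonicity, your route yields a sharper bound than the stated one. Every level set $\{\phi>\lambda\}$ is a tail, so the difference is at most
\[
(1-e^{-\tau})\,\sup_t\bigl|\mu_1((t,\ell))-\mu_2((t,\ell))\bigr|.
\]
This is a Kolmogorov-type distance between the two cumulative emission profiles along the ray. It is always at most $C$, and can be much smaller when the two fields emit at similar depths. The paper's sandwich is the shortest proof of the stated estimate; your decomposition is longer but contains this refinement.
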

\begin{proof}
On one ray and channel the original positive measures $\mu_i=T_i j_i\dd t$ have equal total mass $C\leq1$. With $A(t)$ the accumulated added absorption and $\tau$ its total, $0\leq A(t)\leq\tau\leq\delta$. Each new radiance $\int e^{-A(t)}\dd\mu_i$ lies in $[e^{-\tau}C,C]$, whose length is at most $1-e^{-\delta}$. Physicality follows from $j_i\leq\sigma_i\leq\sigma_i+a$.
\end{proof}
For example, $\delta\leq\log(256/255)$ gives a channelwise difference at most $1/256$, below one linear-RGB level of $1/255$, while preserving the certified pair's finite density difference. The added absorption may be nonradial. This is a continuity-based \emph{approximate} ambiguity result; an exact nonradial SH2-density construction remains unresolved here.

\section{A numerical angular ladder in one nonradial scene}\label{app:nonradialnumerics}
We fix one physical reference throughout the degree sweep:
\begin{align}
 \sigma(x)&=1+\sum_{i=1}^3w_i\exp(-|x-z_i|^2/a_i^2),\nonumber\\
 c_k(x)&=.5+.22\sin((Ax)_k+b_k),\qquad j_k=\sigma c_k.\label{eq:nonradialscene}
\end{align}
The density centers are $z_1=(.35,.10,-.15)$, $z_2=(-.20,.30,.25)$, $z_3=(.10,-.30,.35)$, with $w=(.70,.45,.30)$ and $a=(.40,.30,.35)$. The color is isotropic and belongs to $[.28,.72]$. Its coefficients are
\[
 A=\begin{pmatrix}4&1&.5\\-.5&3.5&1\\1&-1&4.5\end{pmatrix},
 \qquad b=(.2,.8,-.5)^T.
\]
The three evaluation points are $(0,0,0)$, $(.2,-.1,.1)$, and $(-.25,.15,.05)$. All choices precede computation. We start from $96$ deterministic Fibonacci normals plus coordinate and center-offset axes, then perform six local normal searches per degree. All evaluated normals are shared across $L$ before taking minima. Thus the sampled degree classes remain nested. Between $2528$ and $2600$ distinct normals are evaluated per point.

Optical integrals of the Gaussian density are analytic; cell attenuation weights use these integrals and colors use midpoint sampling. The primary resolution is $96$ circle angles and $256$ ray cells. We reevaluate the union of all six selected minimizing normals at $192$ angles and $512$ ray cells, again using this common normal set for all degrees. \Cref{tab:nonradialladder} reports this audit. The largest relative resolution change is $1.011\times10^{-5}$. Numerical values are well above that integration discrepancy, but finite-normal search and local minimization do not prove a positive infimum over all normals. Optimizer termination is not an injectivity certificate.

\begin{table}[t]
\caption{Estimated $\widehat\kappa_L$ in the \emph{same} nonradial scene. The three columns are the three points specified above; these are numerical diagnostics.}\label{tab:nonradialladder}
\centering
\begin{tabular}{@{}rrrr@{}}\toprule
$L$ & Point 1 & Point 2 & Point 3\\\midrule
0&$.029774$&$.025616$&$.026539$\\
1&$.011857$&$.011341$&$.015706$\\
2&$.0052801$&$.0056389$&$.0063861$\\
3&$.0011850$&$.0011672$&$.0026505$\\
4&$.00038659$&$.00039738$&$.0010602$\\
5&$.00010679$&$.000080992$&$.00015611$\\\bottomrule
\end{tabular}
\end{table}
\begin{figure}[t]
\centering\includegraphics[width=.95\textwidth]{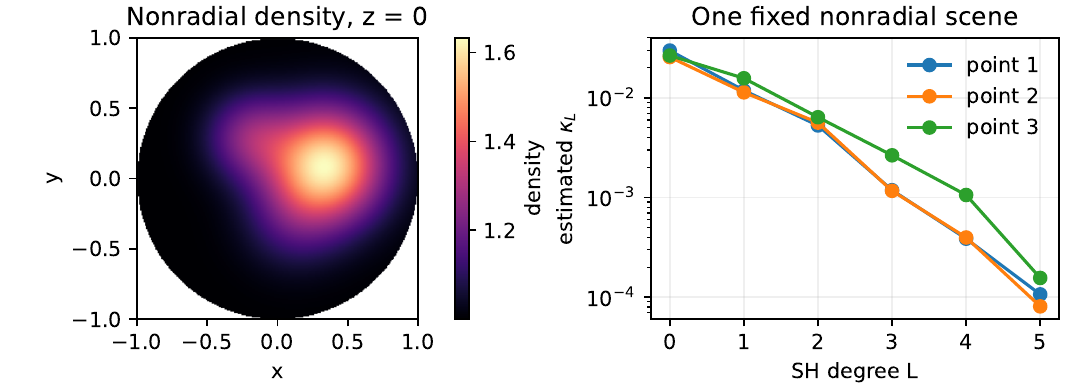}
\caption{Nonradial density slice and the fixed-scene angular ladder. The three curves use the same density and isotropic spatial color at all degrees. The positive plotted values are finite-search estimates, not certified lower bounds.}\label{fig:nonradialfull}
\end{figure}

All curves decline, without the reference-specific exact degree-two cutoff. This supports the usefulness of computing the criterion beyond the radial example; it does not validate the criterion as a depth-confidence predictor or establish a new practical reconstruction method.

\section{Certified directional witnesses in an asymmetric volume}\label{app:circlecertificate}
The density and isotropic RGB are exactly those of \cref{eq:nonradialscene}, with all listed decimal scene parameters interpreted as rational numbers. We certify the central point $x=0$. Previously selected normal directions are retained; no new minimizing-normal search is needed.

\begin{proposition}[One-circle certificates]\label{prop:onecircle}
Fix a normal $n$ and any vector trigonometric polynomial $p\in V_L^3$. If verified integration gives
\begin{equation}
 E(n,p):=\avg{T^2|U-p|^2}_n\leq\overline E,
\end{equation}
then $\kappa_L(0)\leq\sqrt{\overline E}$ and every local inverse coefficient is at least $1/\sqrt{\overline E}$ in the norms of \cref{m:Y}. Neither optimality of $p$ nor global optimality of $n$ is required.
\end{proposition}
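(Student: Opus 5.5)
The argument is short: evaluate the minimization defining $\kappa_L(0)$ at one competitor, then invoke the necessary-bound theorems already proved. First, by \eqref{eq:gain}, $E_L(n)$ is a minimum over $\cV_L(C_n)^3$ of exactly the functional $\avg{T_0^2|U_0-p|^2}_n$. The weighted residual splits as a sum over channels, and each channel is minimized separately. Hence any admissible vector polynomial $p\in V_L^3$ gives $E_L(n)\leq E(n,p)\leq\overline E$. Taking the minimum over normals in \eqref{eq:gain} gives $\kappa_L(0)^2\leq E_L(n)\leq\overline E$, so $\kappa_L(0)\leq\sqrt{\overline E}$. Neither the optimal polynomial nor the minimizing normal is used; the two minima are simply bounded above by one feasible point. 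The only care needed is that $T,U$ here are the reference values $T_0,U_0$ at $x=0$, matching the frozen weights in \cref{sec:frozen}.

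Second, transfer the bound to inverse coefficients. I would use \cref{thm:fragility} (equivalently \cref{m:thmfragility}). Any $Y$-to-$L^2$ Lipschitz constant $C$ on an admissible local class requires $C\geq1/\kappa_L(0)$, and therefore $C\geq1/\sqrt{\overline E}$. If $\overline E=0$, then $\kappa_L(0)=0$, and the same theorem excludes every finite constant. The minimax statement follows identically from \cref{cor:minimax}. The sharp limit in \cref{cap:S} also gives $C_{\mathrm{opt},L}(r_i,\eps_i)\geq1/\kappa_L(0)$ along every shrinking sequence.

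The one point to check is that the hypotheses of \cref{thm:fragility} hold at this scene: \cref{ass:class} at $x=0$. The scene \eqref{eq:nonradialscene} has smooth density $\sigma\geq1$ and isotropic color in $[.28,.72]$, so $j_k=\sigma c_k$ satisfies $j_k\geq.28\sigma$ and $\sigma-j_k\geq.28\sigma$. These are strict margins near the center. Because the source is isotropic, it lies in every degree-$L$ class. The phrase ``every local inverse coefficient'' should therefore be read as the admissible local classes of that theorem.

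The main obstacle is not analytic. It is making ``verified integration'' honest: $T_0$ and $U_0$ are transcendental functions on the circle, and the upper bound $\overline E$ must enclose exactly the continuous circle average. That is a computational requirement and is assumed in the statement, so the proof itself reduces to the two monotonicity steps above.
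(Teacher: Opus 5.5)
Your proposal is correct and follows the paper's proof: bound $\kappa_L(0)^2=\min_n E_L(n)$ above by the single feasible objective $E(n,p)\leq\overline E$, then use the shrinking-packet necessary bound of \cref{thm:fragility} (and \cref{cor:minimax}) to conclude that every local inverse coefficient is at least $1/\sqrt{\overline E}$. You also check that the nonradial scene satisfies the strict margins of \cref{ass:class} at the center, a detail the paper leaves implicit.
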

\begin{proof}
The definition of $\kappa_L$ minimizes first over $p$ and then over $n$, hence its squared value is at most this feasible objective. Apply the shrinking-packet necessary bound. In particular, a positive upper bound on $\kappa_L$ says nothing about whether its global infimum is positive.
\end{proof}

\begin{table}[h]
\centering\small
\caption{Certified central-point bounds in the fixed nonradial scene. Values are rounded outwards. Only the stated inequality directions are asserted.}\label{tab:circlecertificate}
\begin{tabular}{@{}crr@{}}\toprule
$L$ & Upper bound on $\kappa_L(0)$ & Necessary inverse lower bound\\\midrule
0 & $.029773685$ & $33.5867$\\
1 & $.011857206$ & $84.3369$\\
2 & $.005280051$ & $189.3921$\\
3 & $.001184962$ & $843.9092$\\\bottomrule
\end{tabular}
\end{table}

\subsection{Integration remainders and arithmetic}
Use $512$ equally spaced circle angles and $16$ ray cells, each integrated by $12$-point Gauss--Legendre quadrature. In the complex radius-$R=1/16$ neighborhood of the real ray segment, the entire Gaussian density obeys $|\sigma|<3$, and the sinusoidal colors obey $|c_k|<1$. The holomorphic integrand $e^{-\int_0^t\sigma}\sigma c_k$ is bounded by $3e^{51/16}<80$. Cauchy's estimate and the classical Gauss derivative remainder give, for a cell of length $h\leq R$ and holomorphic bound $B$,
\begin{equation}\label{eq:gausscirclebound}
 |\mathrm{error}_{\rm cell}|\leq
 \frac{hB(h/R)^{24}}{25\binom{24}{12}^2}.
\end{equation}
After summing cells, take $B=3$ for optical depth and $B=80$ for the source integral. Prefix and partial-cell optical errors are included before exponentiation; hence the quadrature encloses the continuous transport weights, rather than assuming sampled weights exact.

For complex angle $|\operatorname{Im}\theta|\leq1/8$, an orthonormal complexified circle has $|\operatorname{Im}d|\leq\sinh(1/8)$. The scene's widths are at least $.3$, so $|\sigma|<3$, $|c_k|<1$, $|T|<21$, and $|U_k|<61$. Stored polynomial witnesses satisfy $\sum_j|p_{j,k}|e^{L/8}<10$. The holomorphic function $f=\sum_k[T(U_k-p_k)]^2$ therefore has $|f|<10^7$ in this strip. The normalized periodic trapezoid remainder \cite[Theorem 3.2]{trefethen2014} is at most
\begin{equation}\label{eq:circletrap}
 \frac{2\cdot10^7}{e^{512/8}-1}<4\cdot10^{-21}.
\end{equation}
For real angles $f$ is exactly the nonnegative residual energy. This supplies a continuous-circle upper bound, beyond a finite angular sample.

The verifier uses outward-rounded binary64 intervals for basic arithmetic, with explicit outward reductions. Exponentials and trigonometric functions use Taylor remainder enclosures and range reduction, not unbounded library evaluations. Each Gauss root is isolated by rational sign-changing brackets; interval formulas enclose its weight. Rational square-root bounds normalize the direction frame. Verification also checks $24$ exact Gauss moments, rational enclosures of all analytic constants, and $50$-digit independent transport evaluations on four prescribed directions. The maximum backlight interval width is below $2.1\times10^{-11}$. These checks support an IEEE-754 interval certificate, not a proof-assistant verification.

\subsection{A comparison using the correct sides of the bounds}
Let $M$ bound optical depth on every central half-ray, and set
\begin{equation}\label{eq:fixednonradialgain}
 \zeta=e^{-M}\left[c_g(0)-\frac{18}{25}(1-e^{-M})\right],
 \qquad c_g(0)=\frac12+\frac{11}{50}\sin\frac45.
\end{equation}
Since $c_g\leq18/25$, we have $U_g\leq(18/25)(1-e^{-M})$ and $T\geq e^{-M}$. The fixed-color linearization has weight $T(c-U)$; its green component is at least $\zeta>0$ pointwise. Hence its gain on \emph{every} great circle is at least $\zeta$.

We certify $M$ using a $64\times128$ latitude--longitude covering. Every direction is within angular distance $\pi/64$ of a grid direction. For $\sigma=1+\sum_i a_i e^{-|x-c_i|^2/w_i^2}$, the half-ray depth has angular Lipschitz constant bounded by
\begin{equation}
 G=\frac{429}{1000}\sum_i\frac{a_i}{w_i}
 >\frac12\sqrt{2/e}\sum_i\frac{a_i}{w_i}.
\end{equation}
Thus interval quadrature at grid directions plus $G\pi/64$ yields
\begin{equation}
 M<1.552383,\quad \zeta>.01911472,\quad
 C_{\rm fixed}^{\rm frozen}<52.316.
\end{equation}
Together with the exact stored SH3 bound this proves
\begin{equation}
 \frac{C_{\rm SH3}^{\rm frozen}}{C_{\rm fixed}^{\rm frozen}}
 >16.1310.
\end{equation}
The proof combines an upper bound for the denominator with a lower bound for the numerator. Dividing the SH3 and SH0 \emph{lower bounds} in \cref{tab:circlecertificate} would not prove a ratio of their actual inverse coefficients. The SH3 absolute lower bound transfers to the nonlinear local problem; the fixed-color upper bound here is frozen, so no nonlinear ratio is asserted.

The inequalities have strict margins. Transport along the compact ray set and the finite-polynomial witness energies vary continuously under uniform perturbations of bounded physical density and color. The positive fixed-color bound is continuous as well. Therefore a sufficiently small neighborhood of smooth nonradial scenes retains a ratio above $16$. This is an open-family severe-instability statement, not a claim that all nonradial scenes have the same gap or that exact ambiguity is generic.

\section{Directional effective sources and physical interpretation}\label{app:effectivephysical}
Thermal emission by a locally isotropic medium supplies a direction-independent source. If that is the correct known prior, SH0 is appropriate. For extinction $\sigma=\sigma_a+\sigma_s$, a single-scattering effective source under a prescribed incident field is
\begin{equation}
 j_{\rm eff}(x,d)=\sigma_a(x)L_e(x)+\sigma_s(x)
 \int_{\Sph^2}p_x(d,d')L_{\rm inc}(x,d')\,\mathrm dS(d').
\end{equation}
A nonnegative, normalized phase function and channelwise $0\leq L_e,L_{\rm inc}\leq1$ imply $0\leq j_{\rm eff}\leq\sigma$ \cite{pbr2023}. The angular dependence may be approximated by a physically constrained SH representation. This explains a directional \emph{effective} source without attributing directional thermal emission to smoke or flames.

The free-source model is more permissive than a specified phase function and illumination coupled to the unknown density. In particular, the exact SH2 source is not thereby proved realizable under a fixed microscopic scattering law. Likewise PINF's smoke field \cite{chu2022} maps space and time to color and density; it is not cited as evidence that all smoke reconstructions use directional color. The question here is what density information a learned representation loses when its appearance is allowed to refit in a given angular class.

\section{Matched fixed-density fits on the certified twofold pair}\label{app:twofoldfits}
The continuous density is exactly $1+\psi(4|x|^2)$ from \cref{m:twofold}. The finite representation uses its float64 nodal samples on the same $32^3$ cube $[-.8,.8]^3$ as the earlier experiment. Colors are trilinearly interpolated SH coefficients inside $B_{.8}$; the exterior is known. Thus the discrete density is an approximation of the certified density, while the analytic data are the reference field's exact RGB. We retain the same cameras, including their randomization, train/test split, ray masks, and $100$-view subset of $400$ views.

There are $44{,}800/179{,}200$ retained training rays and $7{,}936$ independent test rays. Training uses $128$ midpoint compositor samples, and evaluation uses $256$. Each of four full-batch CPU solves handles the quadratic reference and cosine control as independent RGB right-hand sides. The maximum is $2{,}500$ iterations or $120$ seconds per solve, with the lowest feasible training objective retained. No density, geometry, or held-out image is used for model selection.

For SH2, nodal color is initialized from the independently computed $128$-coefficient Chebyshev representation of the twofold source and projected to the physical coefficient set. SH1 uses the truncated SH1 part. The cosine control starts from its reference DC color. These initializations supply candidate compensation functions; the experiment does not show that random joint training discovers them. The physical SH1 constraint is exact, and the SH2 weighted-group constraint is a sufficient all-direction color box.

\begin{table}[h]\centering\small
\caption{Twofold candidate: certified training bounds and independent evaluation. Lower bounds include exact rational dual rescaling; upper bounds retain the original feasible witnesses.}\label{tab:twofoldfull}
\begin{tabular}{@{}llrrrr@{}}\toprule
Scene & Views & Degree & Lower RMSE & Upper RMSE & Test PSNR\\\midrule
Quadratic &100&1&$0$&$2.721\cdot10^{-5}$&$48.35$\\
Quadratic &100&2&$0$&$1.806\cdot10^{-7}$&$97.47$\\
Quadratic &400&1&$.000351417764$&$9.262\cdot10^{-4}$&$57.75$\\
Quadratic &400&2&$0$&$5.276\cdot10^{-6}$&$99.28$\\
Cosine &100&1&$0$&$3.256\cdot10^{-5}$&$44.52$\\
Cosine &100&2&$0$&$2.803\cdot10^{-6}$&$68.94$\\
Cosine &400&1&$.000884562911$&$.001434306$&$53.89$\\
Cosine &400&2&$0$&$.000123035$&$73.71$\\\bottomrule
\end{tabular}
\end{table}

The criterion is a strictly positive SH1 lower bound at least ten times the feasible SH2 upper bound. The quadratic $400$-view certificate has ratio $66.61$, while the $100$-view lower certificate is zero. The control's $400$-view ratio is $7.19$, below ten, and its $100$-view lower bound is also zero. Thus the tenfold separation is certified in one of the two primary camera cases. The bounds in \cref{tab:twofoldfull} use scalar-optimized dual witnesses with the same data, model and feasible coefficients.

\paragraph{Exact dual rescaling.}
For one channel let $F^*=\inf_{a\in K_1}\norm{Aa-b}_2^2/2$. Completing the square gives, for any data vector $y$,
\begin{equation}\label{tf:dual}
 F^*\geq D(y):=-\tfrac12\norm y_2^2-b^Ty+\inf_{a\in K_1}(A^Ty)^Ta.
\end{equation}
Writing $g=A^Ty$, the last term is explicitly
\begin{equation}
 \tfrac12\sum_v\left[g_{v0}-\max\{\lvert g_{v0}\rvert,\norm{g_{v,1:3}}_2/\sqrt3\}\right].
\end{equation}
Let $N=\norm y_2^2/2$ be the exact rational value for the saved dyadic witness, and let $H_-$ be the verified lower bound on the two homogeneous linear terms in \eqref{tf:dual}. For every $\alpha\geq0$,
\begin{equation}\label{tf:rescale}
 F^*\geq-\alpha^2N+\alpha H_-.
 \qquad
 F^*\geq\frac{(H_-)_+^2}{4N}\quad(N>0).
\end{equation}
The maximizing multiplier is $(H_-)_+/(2N)$; for $N=0$, use zero. The interval errors in $H_-$ scale with $\alpha$ and are retained. Exact rational arithmetic avoids rounding the rescaled witness. Summing the three channel bounds and rounding the RMSE down gives the reported lower bound. Residual-based dual rescaling is a classical certification device \cite{ndiaye2017}; the present calculation optimizes the guaranteed quadratic lower envelope for the physical SH set. It requires neither convergence nor a new fit. Replaying the integer checker from the saved optical weights and colors reproduces every original component and witness hash; independently summing each squared witness entry using arbitrary-precision integers verifies all six norms.

\paragraph{What the new bounds distinguish.}
For the quadratic reference, the certified $400$-view SH1 lower bound is more than $12.9$ times the $100$-view feasible upper bound. This compares opposite-sided bounds on the true optima, not two unfinished iterates. Finite observations genuinely change the compensation capacity on these nested camera sets, although spatial bandwidth and continuum quadrature remain unresolved. The SH1/SH2 ordering at $400$ views survives any target perturbation with RGB RMS norm less than $(.000351417764-.000005275487)/2=.0001730711385$, or $.0441331$ on a $0$--$255$ linear RGB scale. These are sub-level error budgets. All retained feasible SH1 fits have training residual below $1/255$; no one-level exclusion is established.

The retained primary fits stop after $516/250$ iterations for SH1/SH2 at $100$ views, and $144/62$ at $400$ views. Their optimality gaps remain unresolved. Equal wall budgets provide fewer iterations for the larger ray set, so achieved losses alone would not order the true optima; the opposite-sided certificates above do. The positive nonlinear theorem has an unknown support radius at most $.01$, whereas this experiment allows colors to vary throughout $B_{.8}$.

\paragraph{Discretization diagnostics.}
On $52{,}440$ independent uniform-in-volume samples, density interpolation has relative $L^2$ error $.0037146$ and sampled maximum error $.02197$. The interpolated central density is $1.99198$ versus the exact value $2$. Doubling compositor resolution on the \emph{same} training witnesses changes predictions by $3.87\cdot10^{-5}/3.98\cdot10^{-6}$ RMS for SH1/SH2 at $100$ views, and $1.63\cdot10^{-5}/3.37\cdot10^{-6}$ at $400$ views. In particular, the very small saved-grid SH2 residual is below its integration diagnostic and is not a continuous-rendering certificate. The source field's analytic all-ray identity and physicality are proved independently.

\paragraph{Geometry of the analytic pair.}
For impact parameter $b$, integrate the exact radial density to obtain optical thickness $\tau(b)$ and opacity $\alpha(b)=1-e^{-\tau(b)}$. Conditional absorption distance from entry is
\begin{equation}
 z(b)=\frac{\int_0^{D(b)}T(t)\,\mathrm dt-D(b)T(D(b))}{1-T(D(b))},
 \qquad D(b)=2\sqrt{1-b^2}.
\end{equation}
This follows by integration by parts from $\int t\sigma(t)T(t)\,\mathrm dt$. At $b=0$, the alternative has $\tau=2.60345016$, $\alpha=.92598224$, and $z=.65023838$, compared with $2,.86466472,.68696471$ at the reference. Over a unit image disk, integrating with measure $2b\,\mathrm db$ gives RMS changes $.01548836$ and $.00821076$ in opacity and depth. The maximum changes occur on the central ray. Doubling the depth integration resolution changes depths by at most $1.17\cdot10^{-8}$. These scalar geometry evaluations are ordinary numerical diagnostics; the continuum physicality and equality certificate is \cref{m:twofold}.

\section{From discrete residuals to continuous ray integration}\label{app:continuoustransfer}
The residual test in \cref{app:twofoldfits} can be strengthened from a saved compositor to continuous integration of the same nodal field. This section specifies the remaining model assumptions and gives a uniform error bound. Increasing the sample count at one returned color array would not suffice: a lower bound must cover every allowed color refit.

\subsection{Model and transfer inequality}
Interpret the saved density nodes as exact dyadic numbers in $[1,2]$ on the rational grid $[-4/5,4/5]^3$, with spacing $8/155$. Their trilinear interpolant is $\sigma_h$. Physical nodal SH coefficients define $c_h(x,d)\in[0,1]$ by the same interpolation, with $j_h=\sigma_h c_h$.

Each prescribed ray segment is $x(t)=p+td$, $-h\leq t\leq h$, with stored dyadic inputs. We use arc length $\dd s=|d|\dd t$ and normalized direction $d/|d|$ in the SH basis. With target $y$, near contribution $b_-$, and far radiance $b_+$ fixed at their stored values, the continuous forward map is
\begin{equation}\label{ct:map}
 R_h(a)=b_-+e^{-(\ell-h)|d|}
 \left[\int_0^D e^{-\int_0^s\sigma_h}\sigma_h c_h\dd s
       +e^{-\int_0^D\sigma_h}b_+\right].
\end{equation}
These are finite-ray boundary inputs to one common interior field. Density interpolation and error between the prescribed boundary data and the analytic reference's full-ray data are separate model errors.

Let $Q_h(a)=B_Qa+b_Q$ be a saved dyadic optical operator. Suppose, uniformly over the relevant physical color class,
\begin{equation}\label{ct:uniform}
 \norm{R_h(a)-Q_h(a)}_{\rm RMS}\leq\mu.
\end{equation}
If exact arithmetic certifies a lower bound $l_1$ on the SH1 optimum for $Q_h$, and a feasible SH2 residual $u_2$, then
\begin{equation}\label{ct:transfer}
 \rho_1(R_h;y)\geq\max(0,l_1-\mu),\qquad
 \rho_2(R_h;y)\leq u_2+\mu.
\end{equation}
Both inequalities follow directly from the triangle inequality. The SH1 bound uses the full physical nodal degree-one cone, while the SH2 witness is repaired into the physical sufficient set of \cref{app:twofoldfits}.

\subsection{Uniform integration enclosure}
\paragraph{Cell locations.}
Partition each ray with sorted dyadic approximations to grid crossings. Rather than assuming exact crossing decisions, the verifier encloses both endpoint coordinates in the chosen cell and checks that they lie in $[-\delta,1+\delta]^3$, $\delta=10^{-10}$. Affinity gives the same enclosure throughout the segment. For nodal values with range at most one, the cell polynomial extension and true interpolant differ by at most
\[
 3\delta+3\delta(1+2\delta)^2<10\delta=:\epsilon.
\]
This applies to density and every physical directional color. With path length $D\leq1.61$, $|b_+|\leq1$, and positive extended density, the resulting rendering error, including the terminal transmittance, is bounded by
\begin{equation}\label{ct:cellerror}
 (4D+3D^2+D)\epsilon<2\cdot10^{-8}.
\end{equation}
Constant extension beyond the outer grid handles rounded endpoints without changing this estimate.

\paragraph{Gauss remainder.}
Within one cell, $q(s)=\int_0^s\sigma_h$ is quartic and $w(s)=\sigma_hc_h$ has degree at most six. If the cell optical depth is $a\leq.2$, the degree-six Taylor polynomial of $e^{-q}$ about $q=a/2$ gives $wP_6(q)$ of degree at most $30$. A $16$-point Gauss rule is exact on this polynomial. Since $|c_h|<2$ for the tiny cell extensions, its error is at most $4a(a/2)^7/7!$. Total optical depth below $3.3$ therefore gives
\begin{equation}\label{ct:gauss}
 E_G\leq4(3.3)(.1)^7/7!<3\cdot10^{-10}.
\end{equation}
Density optical depths are integrated algebraically. Cell splitting and controlled quadrature are classical volume-rendering tools \cite{weiss2021}; the role here is to make the residual exclusion uniform in all admissible colors.

\paragraph{Arithmetic.}
Rational bisection isolates all sixteen Legendre roots, and outward intervals enclose the positive weights. A degree-sixteen Taylor evaluation of $e^{x/16}$, followed by four outward squarings, encloses $e^x$ for $-8\leq x\leq1$. Before squaring, the coefficient, Horner, and Taylor errors sum to less than $2^{-45}$; this bound is checked with rational arithmetic. Correct rounding of a library exponential is not assumed.

Outward binary64 interval operations enclose optical weights, prefixes, and exterior attenuation. For each row, the sum of distances from local weight intervals to their saved midpoints bounds their action on \emph{every} nodal color in $[0,1]$. Nonnegative sparse accumulation contributes less than $2^{-40}$. Adding these radii, the offset radius, \eqref{ct:cellerror}, and \eqref{ct:gauss} gives
\begin{equation}\label{ct:mu}
 \mu\leq2.501628\cdot10^{-8}.
\end{equation}
The calculation assumes IEEE-754 round-to-nearest basic arithmetic with outward inflation; it is an executable interval certificate, not a proof-assistant derivation. The released checker includes the exact constants, root enclosures, coefficient repairs, and integer primal/dual witnesses.

\subsection{Results and checks}
All $179\,200$ retained rays at $400$ views are enclosed, including the nested $44\,800$ rays at $100$ views. The density, color region, targets, and saved color proposals are unchanged; the proposals generate witnesses for the accurate optical matrix without another fit. Inserting the exact rational certificates into \eqref{ct:transfer} yields \cref{ct:results}. The primary $400$-view residual ratio exceeds $62.97$, while the control's ratio is about $6.9257$. The $100$-view lower bounds remain empty. A zero lower bound means the witness does not establish rejection, not that an exact fit exists.

\begin{table}[t]
\centering\small
\caption{Continuous integration of the fixed nodal field. SH1 entries are lower bounds, SH2 entries are feasible upper bounds. Displayed bounds are rounded outward.}\label{ct:results}
\begin{tabular}{@{}lcrr@{}}\toprule
Scene & Views & SH1 lower & SH2 upper\\\midrule
Quadratic & 100 & $0$ & $1.556610\cdot10^{-6}$\\
Quadratic & 400 & $3.454608\cdot10^{-4}$ & $5.485578\cdot10^{-6}$\\
Cosine & 100 & $0$ & $7.063575\cdot10^{-6}$\\
Cosine & 400 & $8.530096\cdot10^{-4}$ & $1.231659\cdot10^{-4}$\\\bottomrule
\end{tabular}
\end{table}

Let $\ell_1,u_2$ be the displayed primary $400$-view bounds. On the linear $0$--$255$ scale their certified gap is
\begin{equation}\label{ct:gapradius}
 255(\ell_1-u_2)=.08669368161.
\end{equation}
Distances to either feasible image set are $1$-Lipschitz in the target, so a perturbation of RMS $\eta$ preserves the ordering whenever $2\eta<\ell_1-u_2$. The sufficient radius is therefore half the gap, $.043346840805$ linear levels. In normalized units, the conservative bound $\eta<.0001699876$ used here is sufficient. The gap and this perturbation radius are distinct quantities. External calibration or density-interpolation errors must also fit the applicable budget before the result supports a claim about measured images.

Several checks supplement the arithmetic proof. The root/weight intervals enclose all $32$ exact Gauss moments. Independent $75$-digit evaluations check $2\,003$ exponential inputs. Comparing the independently assembled numerical and interval-midpoint matrices over all rows gives a uniform color-box difference below $4.52\cdot10^{-15}$. A separate adaptive transport ODE, using pointwise trilinear interpolation instead of cell polynomials, agrees on six prescribed rays within $1.68\cdot10^{-12}$. These numerical checks detect implementation errors; they do not replace the analytic remainders.

\paragraph{An unsuccessful direct transfer.}
Bounding the old $128$-step compositor's difference from continuous integration by the independent color box gives about $8.91\cdot10^{-4}$ RMS at $400$ views, exceeding its SH1 lower bound $3.51\cdot10^{-4}$. That route cannot certify a positive residual. Reevaluating the dual at the accurately enclosed operator avoids paying for this coarse-renderer discrepancy. The improvement is classical verification, not a new fitting algorithm or a continuum uniqueness claim for unrestricted spatial sources.

\clearpage
\section{A larger candidate at a one-level discrepancy budget}\label{app:elevencontinuous}
This experiment fixes the positive density candidate with central value eleven, separately from the certified exact twofold pair. It fixes the nodal density, cameras, targets, physical color class, and saved coefficient witnesses. Its purpose is a finite-ray density--appearance hypothesis test, so it does not require the analytic elevenfold Volterra source to satisfy the physical box. The SH2 fit is a different, independently verified physical nodal color.

\subsection{Continuous model and interval extension}
The exact stored density nodes lie in $[1,11]$ on the rational $32^3$ grid with spacing $8/155$ in $[-4/5,4/5]^3$. They define a continuous trilinear field inside the prescribed ray segments. SH directions use normalized stored directions; ray lengths use the corresponding arc-length speed. Targets and the near/far boundary radiances are prescribed exact inputs, as in \cref{app:continuoustransfer}. Calibration, interpolation relative to the analytic radial density, and changes in boundary inputs require separate error allowances.

The cell-partition check allows endpoint coordinates in $[-\delta,1+\delta]$, $\delta=10^{-10}$. Put $\varepsilon_0=10\delta=10^{-9}$. Relative to the true interpolants, the selected cell extensions change color by at most $\varepsilon_0$ and density by at most $10\varepsilon_0$. The extensions have $|c|<2$ and $|\sigma c|<12$. Source discrepancy is at most $31\varepsilon_0$, optical-depth discrepancy at most $10D\varepsilon_0$, and path length $D\leq1.61$. Including the far radiance gives the uniform rendered discrepancy
\[
 (31D+120D^2+10D)\varepsilon_0<4\cdot10^{-7}.
\]
These bounds hold for every physical SH coefficient array, not only the retained fits.

The verifier checks optical thickness at most one per cell and less than eighteen along the interior. A $24$-point Gauss rule integrates the source polynomial times the degree-ten Taylor approximation to attenuation exactly: their product has degree at most $6+4\cdot10=46$. Centering the exponential approximation at half the cell optical depth, positivity gives the summed remainder
\[
 E_G\leq\frac{4\cdot18\,(1/2)^{11}}{11!}<10^{-9}.
\]
Optical depths are enclosed by exact polynomial integration. Rational root isolation brackets every Gauss node, and interval weights enclose all forty-eight exact moments.

To enclose $e^x$ for $x\in[-32,1]$, evaluate its degree-sixteen Taylor polynomial at $x/64$, inflate endpoint values by $2^{-45}$, and apply six outward interval squarings. The Horner/coefficient/remainder proof in \cref{app:continuoustransfer} applies because $|x/64|\leq1/2$. A $10^{-12}$ prefix-sum inflation exceeds $\gamma_{92}\,18$. At most $736$ node contributions, their interval radii, offset error, and the same integer accumulation error are included. The resulting uniform per-channel bound is
\begin{equation}\label{eq:elevenmu}
 \|R_{\rm cont}(a)-(B_Qa+o_Q)\|_{\rm RMS}\leq\mu,
 \qquad \mu< 0.000011994577,
\end{equation}
for every physical nodal color $a$. This is an IEEE-754 interval certificate with exact rational/integer final inequalities, not a proof-assistant verification.

\subsection{Rejection and compatibility at a declared budget}
Apply the same exact SH1 dual and physical SH2 primal checker to the saved interval-matrix midpoint. Exact nonnegative rescaling tightens each dual proposal. If their matrix bounds are $\ell_1,u_2$, then
\[
 \rho_{1,\rm cont}\geq\max(0,\ell_1-\mu),\qquad
 \rho_{2,\rm cont}\leq u_2+\mu.
\]
\Cref{tab:elevencontinuous} expresses the resulting certificates in linear $0$--$255$ RGB RMS units. Every SH1 lower bound exceeds one; every physical SH2 upper bound lies below one. The primary quadratic hypothesis is therefore rejected under SH1 and compatible under SH2 at both view counts. The cosine control passes the same budget test. Its $400$-view ratio is below ten, giving a smaller quantitative separation.

\begin{table}[h]
\centering\small
\caption{Continuous elevenfold-candidate certificates. The final column is the additional target RMS perturbation that the lower bound can tolerate while keeping a rejection budget of one linear level. Values are rounded outwards.}\label{tab:elevencontinuous}
\begin{tabular}{@{}lrrrr@{}}\toprule
Scene & Views & SH1 lower & SH2 feasible upper & SH1 lower minus one\\\midrule
Quadratic & 100 & $1.05081$ & $0.00431$ & $0.05081$ \\
Cosine & 100 & $1.79127$ & $0.01118$ & $0.79127$ \\
Quadratic & 400 & $1.56965$ & $0.02054$ & $0.56965$ \\
Cosine & 400 & $2.53608$ & $0.29079$ & $1.53608$ \\
\bottomrule\end{tabular}
\end{table}

The budget is a total norm-ball discrepancy allowance for the supplied target, not a statistical noise guarantee or uniform pixelwise visibility threshold. Since distance to a feasible image set is $1$-Lipschitz in the target, if $y$ changes by RMS $\eta$, rejection at budget $b$ requires $\ell_1-\mu>b+\eta$. In particular, the primary $100$-view case has only about $.05$ additional linear-level margin. The experiment must not be described as tolerating a further arbitrary one-level perturbation on top of its one-level rejection budget.

\subsection{The twofold comparison in the same metric}
\begin{table}[h]
\centering\small
\caption{Continuous integration on the same candidate densities and rays. Values are RGB RMS in linear $0$--$255$ units, rounded outwards. The declared rejection budget is $1$. A zero lower bound is inconclusive.}\label{tab:budgetallviews}
\begin{tabular}{@{}lrrr@{}}\toprule
Candidate & Views & SH1 lower & SH2 feasible upper\\\midrule
Twofold & 100 & $0.000$ & $0.0004$ \\
Twofold & 400 & $0.088$ & $0.0014$ \\
Elevenfold & 100 & $1.050$ & $0.0044$ \\
Elevenfold & 400 & $1.569$ & $0.0206$ \\
\bottomrule\end{tabular}
\end{table}
On the existing twofold continuous operator, an uncompensated baseline uses the nodal interpolant of the true isotropic color. The same physical integer-primal checker, with all higher coefficients zero, encloses its RGB RMS and that of the retained SH1 fit. No lower residual bound is used as an achieved value. On $100$ views the baseline RMS is in $[.0041999755,.0042002778]$ and SH1 achieves at most $.000039168$; on $400$ views the corresponding values are $[.0042002128,.0042005188]$ and $.000926464$. Thus certified RMS suppression is at least $99.06\%$ and $77.94\%$, respectively; squared-error suppression is at least $99.991\%$ and $95.13\%$. The achieved residuals are below $.0100$ and $.2363$ linear levels. These bounds establish quantitative hiding at the stated finite precision while retaining the separate qualitative continuum results.

\subsection{Independent checks and limits}
An independent $32$-point cellwise calculation on $256$ fixed rays differs from the interval matrix by at most $7.67\cdot10^{-13}$ in its physical-color box envelope. Pointwise trilinear interpolation and an independent adaptive transport ODE on six fixed rays differ by at most $2.66\cdot10^{-11}$ for the physical SH2 witness. Such numerical crosschecks detect implementation errors; the guarantee comes from the uniform remainder and outward arithmetic above. The discrete certificate and its input scope are given in \cref{app:integercertificate}.

\clearpage
\section{Appearance compensation in trained voxel fields}\label{app:trainedcompensation}
The mathematical results establish properties of the rendering map; the experiment here asks whether RGB optimization actually reaches fields whose appearance masks incorrect density. We analyze all $48$ random-start runs of \cref{app:training}: two analytic scenes, $100/400$ training views, $L=0,1,2,3$, and three seeds. The field-function interventions use no warm-start alternative, training update, or color refit; \cref{app:learnedrefit} subsequently refits complete appearance classes at one fixed checkpoint. Each final field was learned from RGB alone. This is a post-hoc mechanism analysis of retained checkpoints: the intervention definitions and thresholds were fixed before their evaluation, but the earlier endpoint RGB and density results were already known.

\subsection{Field-function interventions}
Write $\sigma_*,c_*$ for the analytic truth and $\widehat\sigma,\widehat c$ for a trained field. On identical rays define
\begin{align}
 R_{00}&=R(\sigma_*,c_*),&R_{10}&=R(\widehat\sigma,c_*),\\
 R_{01}&=R(\sigma_*,\widehat c),&R_{11}&=R(\widehat\sigma,\widehat c).
\end{align}
The four renders reuse exactly the same samples and known exterior. Density and color are evaluated as separate frozen field functions; no parameter sharing can silently change the other field. Both the analytic and learned colors satisfy the physical box. Truncating the learned SH expansion retains its DC term and removes complete higher-degree groups, which preserves the sufficient physical box used in training.

Two orders of decomposition give
\begin{align}
 g&=R_{10}-R_{00},&a&=R_{11}-R_{10},\label{eq:comporder1}\\
 a'&=R_{01}-R_{00},&g'&=R_{11}-R_{01}.
\end{align}
Both $g+a$ and $g'+a'$ equal $R_{11}-R_{00}$. We measure the RGB-vector cosines between the two terms in each order. Opposite, large terms with a small sum quantify compensation in the trained endpoint. These finite differences are not a unique differential attribution or an observed optimization trajectory. Their infinitesimal relation is
\[
 \delta R=\int T\big[(c-U)h+\sigma\,\delta c\big]\,\dd s
          =\int T(q-Uh)\,\dd s,
 \qquad q=ch+\sigma\delta c.
\]
The novel measurements are the sizes of the individual terms and geometric errors in trained models, not the algebraic telescoping identity.

The primary set contains all $7{,}936$ retained unknown-intersecting rays from $32$ unseen directions; no exact-background or known-only pixels dilute the metrics. Secondary training measurements use eight uniformly spaced ray indices per training view. Density error uses the original $8{,}192$ independent points in $B_{.75}$. We also measure conditional absorption depth, transmission, and optical thickness. True whole-ray transmission ranges from approximately $.136$ to $.294$ on the test rays, so fully hidden opaque interiors do not explain these measurements.

Before rendering the new interventions, we fixed the following descriptive conjunction: test PSNR at least $40$ dB, relative density error at least $.1$, true-color replacement increasing RGB RMSE at least threefold, and cosine at most $-.9$ in both orders, together with passing the integration audit below. All $36$ directional-color runs pass. Across all $48$ runs, $42$ pass; the six quadratic $L=0$ runs miss the threefold criterion, with ratios $2.49$--$2.95$. These are scene/seed outcomes, not independent population-level statistical tests.

\subsection{Results and angular ablation}
\Cref{tab:trainedcompfull} reports every scene/view/degree cell, averaging all three seeds. Across SH1--SH3, density error is $17.4$--$51.0\%$ although unseen-view PSNR is $46.85$--$57.04$ dB. Replacing learned appearance with true appearance at the same learned density increases image RMSE $4.63$--$22.77$ times. Both-order cosines are below $-.9767$. Keeping learned DC color but removing all directional terms increases heldout RMSE $2.40$--$16.06$ times in every directional-color run. These coefficients support the learned rendering. Matched true-density controls in \cref{app:densityoracle} distinguish angular reliance from density-specific compensation.

\begin{table*}[t]
\centering\small
\caption{All retained random-start groups, three seeds per row. PSNR uses only unknown-intersecting heldout rays. $Q$ is RMSE after replacing learned color with true color divided by learned RMSE; $Q_{\rm DC}$ instead removes all nonconstant SH terms. Depth error is in scene-radius units. Rows average per-run metrics.}\label{tab:trainedcompfull}
\resizebox{\textwidth}{!}{\begin{tabular}{@{}lrrrrrrrr@{}}\toprule
Scene & Views & $L$ & Density err. (\%) & Learned PSNR & True-color PSNR & $Q$ & $Q_{\rm DC}$ & Depth RMSE\\\midrule
Quadratic & 100 & 0 & 11.9 & 58.87 & 50.86 & 2.52 & 1.00 & 0.0071 \\
Quadratic & 100 & 1 & 19.2 & 53.28 & 39.92 & 4.65 & 2.43 & 0.0175 \\
Quadratic & 100 & 2 & 30.0 & 51.98 & 34.22 & 7.73 & 5.31 & 0.0309 \\
Quadratic & 100 & 3 & 38.0 & 49.89 & 31.68 & 8.13 & 3.64 & 0.0403 \\
Quadratic & 400 & 0 & 11.3 & 60.78 & 51.45 & 2.93 & 1.00 & 0.0066 \\
Quadratic & 400 & 1 & 17.5 & 56.95 & 40.44 & 6.69 & 3.63 & 0.0163 \\
Quadratic & 400 & 2 & 29.2 & 56.66 & 34.24 & 13.20 & 8.89 & 0.0306 \\
Quadratic & 400 & 3 & 36.7 & 53.57 & 31.86 & 12.18 & 5.15 & 0.0393 \\
Cosine & 100 & 0 & 37.6 & 45.50 & 29.03 & 6.66 & 1.00 & 0.0329 \\
Cosine & 100 & 1 & 43.5 & 46.89 & 27.23 & 9.61 & 5.57 & 0.0422 \\
Cosine & 100 & 2 & 50.7 & 49.45 & 26.15 & 14.63 & 10.47 & 0.0492 \\
Cosine & 100 & 3 & 48.7 & 47.49 & 26.49 & 11.21 & 7.71 & 0.0468 \\
Cosine & 400 & 0 & 37.3 & 45.72 & 29.08 & 6.79 & 1.00 & 0.0327 \\
Cosine & 400 & 1 & 42.8 & 48.02 & 27.31 & 10.85 & 6.32 & 0.0418 \\
Cosine & 400 & 2 & 50.6 & 53.22 & 26.13 & 22.63 & 15.99 & 0.0493 \\
Cosine & 400 & 3 & 49.0 & 51.03 & 26.42 & 17.01 & 11.20 & 0.0473 \\
\bottomrule\end{tabular}}
\end{table*}

For the quadratic $400$-view scene, learned SH2 attains $56.66$ dB with $29.2\%$ density error; true-color replacement reduces PSNR to $34.24$ dB. Its absorption-depth RMSE is $.0306$ of the scene radius and its transmission RMSE is $.0628$. These are consequences for quantities used outside RGB view synthesis. The cosine control also exhibits strong compensation, including degree-zero compensation. Its density error decreases slightly from SH2 to SH3. Therefore neither training error nor this intervention is a universal monotone function of $L$, and neither establishes a reference-specific SH2 transition.

\begin{figure*}[t]
\centering\includegraphics[width=.93\textwidth]{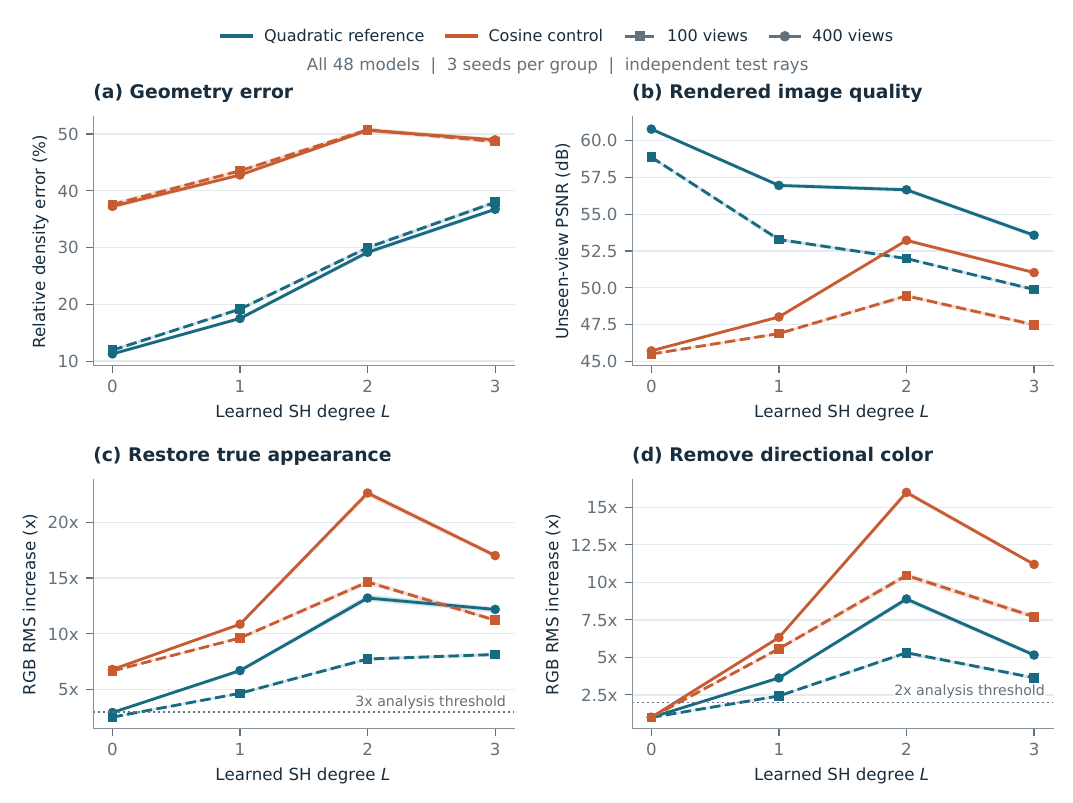}
\caption{Every trained group under field-function interventions. Lines average three seeds; narrow bands give the full seed range. Good unseen-view RGB coexists with incorrect density, and reverting appearance exposes a larger image error. Removing directional terms at fixed learned density also exposes error. The control shares the phenomenon, so this is evidence of appearance compensation in training, not of a quadratic-specific SH2 transition.}\label{fig:compensationsummary}
\end{figure*}

\subsection{Matched appearance training at true density}\label{app:densityoracle}
To separate density compensation from appearance-fitting effects, we repeat color training with density fixed at its exact truth, $\sigma_*=1$. The six controls use the quadratic and cosine scenes, $400$ views, SH1--SH3, and the preselected seed $1101$. Each matches its original joint run's initial color, Adam color-optimizer settings, learning-rate schedule, $384$-ray batch stream and $1{,}500$-step endpoint. The density parameters remain exactly one. No test data enter fitting or endpoint selection. The same nodal constraints enforce physical color in all directions.

\paragraph{A matched intervention contrast.}
Let $e_{\rm full},e_{\rm true},e_{\rm DC}$ denote test RGB RMSE in linear gray levels ($255$ times unit-range RMSE), using learned color, analytic true color, and learned DC color, respectively. Each intervention holds its field's density fixed. The contrast
\[
 \Delta_{\rm true}
 =\big(e_{\rm true}-e_{\rm full}\big)_{\rm joint}
 -\big(e_{\rm true}-e_{\rm full}\big)_{\rm oracle}
\]
subtracts the replacement effect at true density from the effect at learned density. It measures a matched endpoint difference; it does not decompose optimization error into unique causes.

\begin{table}[h]
\centering\small
\caption{All six matched controls. Errors use the same $7{,}936$ unknown-intersecting rays from $32$ unseen directions. Oracle density is exact; joint density is the original learned checkpoint. True-color oracle RMSE is below $.00011$ in every row. Positive $\Delta_{\rm true}$ separates the true-color intervention effects.}\label{tab:densityoracle}
\begin{tabular}{@{}lr rrr rrr r@{}}\toprule
 & & \multicolumn{3}{c}{True density: color-only fit} & \multicolumn{3}{c}{Learned density: joint fit} & \\
Scene & $L$ & Full & True & DC & Full & True & DC & $\Delta_{\rm true}$\\\midrule
Quadratic & 1 & 0.198 & $<.00011$ & 1.918 & 0.364 & 2.455 & 1.308 & 2.289 \\
Quadratic & 2 & 0.319 & $<.00011$ & 3.878 & 0.372 & 4.991 & 3.326 & 4.938 \\
Quadratic & 3 & 0.532 & $<.00011$ & 2.803 & 0.533 & 6.561 & 2.756 & 6.560 \\
Cosine & 1 & 0.431 & $<.00011$ & 4.851 & 1.014 & 10.970 & 6.404 & 10.387 \\
Cosine & 2 & 0.480 & $<.00011$ & 7.556 & 0.561 & 12.560 & 8.900 & 12.479 \\
Cosine & 3 & 0.503 & $<.00011$ & 6.307 & 0.720 & 12.143 & 8.022 & 11.926 \\
\bottomrule\end{tabular}
\end{table}

\paragraph{True-color replacement separates the two cases.}
At true density, learned-color RMSE is $.198$--$.532$ gray levels and replacing it with analytic color reduces error below $.00011$. At the matched learned densities, replacement raises error to $2.46$--$12.56$ gray levels. Thus $\Delta_{\rm true}=2.29$--$12.48$ across all six pairs. The learned appearance conceals the rendering effect of erroneous density in the joint endpoints, whereas the true-density controls retain only color-fitting and numerical errors under this intervention.

\paragraph{Directional removal measures angular reliance.}
Even with density exactly correct, removing directional terms raises error $5.27$--$15.74$ times. The corresponding matched DC-increment contrasts range from $-.78$ to $1.50$ gray levels, with three negative and three positive values. Angular coefficients therefore also accommodate finite-budget color fitting and spatial discretization. Directional-removal ratios quantify reliance on those coefficients; they do not independently diagnose density error. The strict capacity gaps after complete refitting at one learned density are established separately in \cref{app:learnedrefit}.

\paragraph{Numerical resolution.}
All metrics above use $256$ midpoint segments. Repeating every test ray with $512$ segments changes any intervention image by at most $.00127$ gray levels in RMS, far below the matched true-color contrasts. An independent archived renderer agrees within $6.67\cdot10^{-16}$ per RGB channel on six prescribed rays per arm at both resolutions. Sampling the analytic true color onto the $32^3$ grid before rendering gives $.0284$--$.0397$ gray levels of error, distinguishing interpolation error from the analytic-color integration floor. All saved predictions and per-view metrics accompany the six fixed-budget controls.

\subsection{Certifying the learned density under full appearance refitting}\label{app:learnedrefit}
The intervention in \cref{m:trainedcompensation}b fixes the first quadratic/$400$-view/SH2 checkpoint (seed $1101$). Here we keep that same learned density, bound the complete physical SH0 and SH1 appearance classes, and construct a physical SH2 refit on its original $32^3$ color grid. Its independently evaluated density error remains $29.4\%$. All $179{,}200$ unknown-intersecting training rays enter the objective. The $7{,}936$ rays from $32$ unseen cameras are used only to evaluate the retained appearances.

\paragraph{The original shared color model.}
For ray $r$, let $\phi_{riv}$ be the trilinear weight of node $v$ at midpoint $i$, $\alpha_{ri}=1-e^{-\widehat\sigma_{ri}\Delta_{ri}}$, and $T_{ri}$ the preceding interior transmission. The $256$-step compositor used for the interventions is
\begin{align}
 F_r(c)&=k_r+\sum_v B_{rv}c_v(d_r),\label{lr:colorlinear}\\
 B_{rv}&=T_{r,\rm near}\sum_iT_{ri}\alpha_{ri}\phi_{riv},\\
 k_r&=E_{r,\rm near}+T_{r,\rm near}T_{r,\rm inside}E_{r,\rm far}.
\end{align}
All density values, samples and exterior terms remain fixed. Since $B_{rv}\geq0$ and $\sum_vB_{rv}=T_{r,\rm near}(1-T_{r,\rm inside})\leq1$, this is the positive shared-parameter compositor of \eqref{q:render}. Unlike the constructed source-grid example in \cref{q:certificate}, it interpolates color before multiplication by sample density. Equation~\eqref{lr:colorlinear} retains that distinction.

\paragraph{Complete physical classes.}
Per channel, write SH1 as $c_v(d)=\tfrac12+u_v+b_v\cdot d$. Its exact all-direction physical constraint is
\begin{equation}
 |u_v|+\norm{b_v}_2\leq\tfrac12.\label{lr:physical}
\end{equation}
SH0 sets $b_v=0$. These node parameters are shared across every ray; trilinear interpolation preserves physicality. With $y=t-k-\tfrac12B\one$, let $K_L$ denote the corresponding centered appearance map and
\[
 \rho_L=\min_{z\in\mathcal C_L}\norm{y-K_Lz}_\infty.
\]
For any feasible $z$ and nonzero dual vector $\lambda$, support-function weak duality \cite{boyd2004} gives
\begin{equation}\label{lr:dual}
 \frac{\lambda^\top y-\tfrac12\sum_v h_L((K_L^\top\lambda)_v)}{\norm\lambda_1}
 \leq\rho_L\leq\norm{y-K_Lz}_\infty,
\end{equation}
where $h_0(g)=|g_0|$ and $h_1(g)=\max(|g_0|,\norm{g_{1:3}}_2)$. The support follows by allocating the radius in \eqref{lr:physical} to either its scalar or vector component. RGB maximum error is the maximum of the three independent channel optima. Thus the lower bound covers every allowed low-degree appearance, irrespective of convergence of the numerical proposals.

\paragraph{Complete physical appearance refits.}
\Cref{m:querytable} reports
\[
 \rho_0\in[1.1228,1.8820]/255,\qquad
 \rho_1\in[0.4418,0.9888]/255.
\]
In particular $\rho_0>1/255>\rho_1$: directional capacity strictly enlarges the attainable compensation at this learned density, even after optimizing all spatial color coefficients. A physical SH2 refit yields $\rho_2\leq 0.4093/255$. Since $0.4093<0.4418$, these bounds also establish $\rho_2<\rho_1$. The retained MSE-trained SH2 point has maximum error $2.8791/255$; its squared-error objective differs from the maximum-error refit.

\paragraph{Physical SH2 construction.}
Write $c_v(d)=\tfrac12+a_v+b_v\cdot d+v_v\cdot\phi_2(d)$, where $\phi_2$ is the real degree-two SH basis divided by $\sqrt5$ under our normalization. The addition theorem gives $\|\phi_2(d)\|_2=1$. Thus
\[
 |a_v|+\|b_v\|_2+\|v_v\|_2\leq\tfrac12
\]
ensures $0\leq c_v(d)\leq1$ for every direction, and trilinear interpolation preserves these bounds. This convex sufficient class contains every physical SH1 field by setting $v_v=0$. Its feasible points therefore upper-bound the complete physical SH2 optimum. We initialize from the retained SH1 fit and its convex mixture with the trained SH2 field, then minimize the same maximum-RGB objective with primal--dual updates \cite{chambolle2011}. Projection shrinks the scalar and two vector group norms by a common simplex threshold. All spatial coefficients remain shared across rays.

\begin{table}[t]
\centering\small
\caption{Independent unseen-view evaluation at the same fixed learned density. Each retained refit was selected solely on training rays. PSNR and maximum error are different objectives.}\label{lr:testtable}
\begin{tabular}{@{}lrr@{}}\toprule
Appearance & PSNR (dB) & Max. RGB (levels)\\\midrule
True color & $34.17$ & $10.9740$\\
Learned DC only & $37.69$ & $9.3612$\\
Physical SH0 refit & $47.10$ & $3.2156$\\
Physical SH1 refit & $52.90$ & $2.0526$\\
Physical SH2 refit & $60.55$ & $1.4589$\\
Retained trained SH2 & $56.73$ & $1.9069$\\\bottomrule
\end{tabular}
\end{table}

\paragraph{Certificate and implementation checks.}
Primal--dual optimization \cite{chambolle2011} produces feasible appearances and dual witnesses; least-squares residuals provide additional dual proposals. There is no density update, regularizer, angular clipping or heldout fitting. An additional $75$ primal--dual iterations at the same density yield the retained SH1 feasible point in $23.82$ CPU seconds. The reported enclosures use outward arithmetic for the finite renderer, the analytic quadratic target and known exterior. A uniform rowwise matrix-error bound transfers \eqref{lr:dual} to the exact saved-scalar compositor; guarded sparse products and exact dyadic dual normalization bound evaluation error. All $32{,}768$ nodal SH1 cones are checked by rational squared inequalities. The SH2 refit uses $500$ bounded primal--dual updates ($594.90$ CPU seconds including setup). Its final full-sphere group bounds are verified by outward square roots at every node and channel after a $1-2^{-30}$ centered contraction. Interval SH evaluation and the original uniform matrix-error transfer enclose all fitting residuals.

Replaying the original checkpoint renderer differs by less than $7.4\cdot10^{-14}$; three prescribed rays also pass an independent $80$-digit scalar evaluation against the residual enclosures. On $1{,}024$ prescribed unseen rays, doubling the midpoint count from $256$ to $512$ changes maximum RGB by at most $0.0018$ levels over all three retained appearances. The certificate concerns the stated finite ray set and compositor; this refinement check separately assesses quadrature sensitivity.

The SH2 candidate additionally passes independent $80$-digit saved-matrix replay on three prescribed rays and the ray with largest enclosed residual. Its separate $256/512$-step check changes maximum RGB by at most $0.0008$ gray levels on the prescribed $1{,}024$ unseen rays. The SH2 upper in \cref{m:querytest} applies to the original fitting-ray compositor; \cref{lr:testtable} reports its independently evaluated unseen views.

\subsection{Numerical checks and interpretation}
The primary reevaluation uses binary64 and $256$ midpoint segments. On $1{,}024$ fixed rays per field, the largest RMS change over all interventions when increasing to $512$ segments is $5.52\cdot10^{-6}$, below $5\%$ of the learned RGB error in every run. Replaying the saved $128$-segment renders differs by at most $1.75\cdot10^{-7}$ per channel; independently written double-precision rendering also agrees. An additional implementation uses SciPy trilinear interpolation and cellwise Gauss integration on $48$ prescribed rays across eight fields. Its $8$- versus $16$-point quadrature differs by at most $1.12\cdot10^{-15}$, while its largest RMS discrepancy from the primary renderer is $5.64\cdot10^{-6}$. These are numerical implementation checks, not interval certificates. Sufficient nodal color bounds lie inside $[.0000999,.999901]$; interpolation preserves physicality.

This experiment connects the compensation mechanism to ordinary finite-budget, RGB-only optimization of the SH-voxel representation used by Plenoxels. It does not claim a reproduction of its sparse implementation or benchmark protocol. No optimizer convergence, all-ray equality, or lower-degree impossibility after refitting follows from a color-removal intervention. The original dispersion-jump and curriculum thresholds remain unmet. The unknown region is $B_{.8}$, beyond the small-support positive theorem, and only final checkpoints are available. \Cref{app:nonradialtraining} adds nonradial density truth and recorded training trajectories; \cref{app:capturedsmoke} examines captured smoke. In particular, this endpoint evidence cannot determine whether longer optimization would remove the compensation.

\begin{figure*}[t]
\centering\includegraphics[width=.92\textwidth]{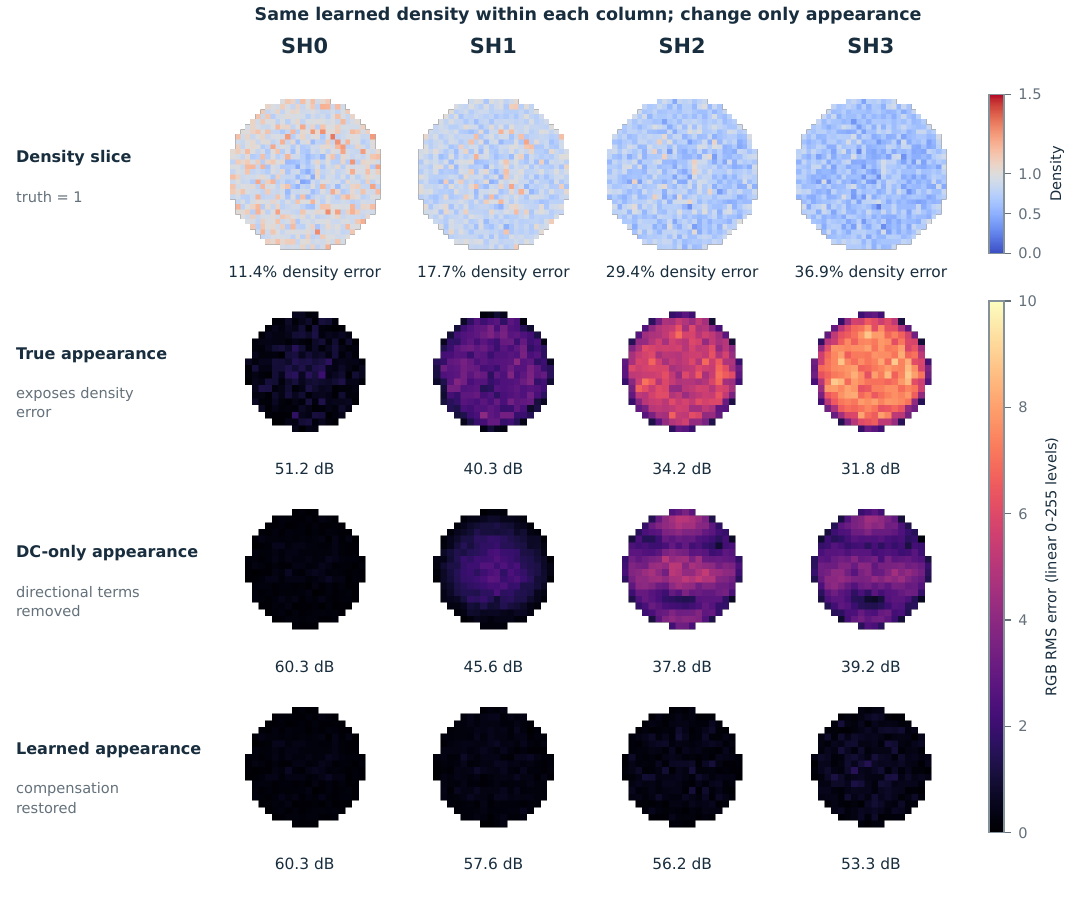}
\caption{Predetermined illustration: quadratic scene, $400$ training views, seed $1101$, first heldout camera; every $L$ is shown. Top: $z=0$ density slice inside the unknown ball, using a common $[0,1.5]$ scale; labels report three-dimensional density error. Remaining rows show per-pixel RGB RMS error after true-color replacement, directional-color removal, and full learned rendering. Labels give PSNR for the displayed view. RGB error uses a common $[0,10]$ range in linear $0$--$255$ levels; neither color scale clips any displayed value. Density is unchanged between the three lower rows in each column.}\label{fig:compensationintervention}
\end{figure*}

\clearpage
\section{Nonradial training trajectories and appearance interventions}\label{app:nonradialtraining}
\subsection{Question and generating field}
The preceding endpoint audit uses radial generating density. Here we ask whether the same compensation mechanism appears during longer RGB training of a nonradial volume. The sampling functions are
\begin{align}
 \sigma_0(x)&=1+\sum_{i=1}^3 a_i\exp(-|x-b_i|^2/w_i^2),\\
 c_0(x)&=\tfrac12+ .22\sin(Ax+\varphi),
\end{align}
with componentwise sine, $a=(.70,.45,.30)$, $w=(.40,.30,.35)$,
\[
 b_1=(.35,.10,-.15),\quad b_2=(-.20,.30,.25),\quad b_3=(.10,-.30,.35),
\]
\[
 A=\begin{pmatrix}4&1&.5\\-.5&3.5&1\\1&-1&4.5\end{pmatrix},
 \qquad\varphi=(.2,.8,-.5).
\]
We sample these functions on a $24^3$ grid in $[-1,1]^3$. Their \emph{trilinear interpolants}, restricted to the unit ball, are the generating truth. Thus spatial representation error is separated from optimization error. Vacuum and black background outside the ball are known; its entire interior is unknown. This shares the sampling functions of \cref{app:circlecertificate}, but interpolation changes the scene, so the earlier numerical $\kappa_L$ constants are not transferred. The piecewise polynomial training model and large unknown support are distinct from the positive $C^6$ small-ball theorem.

The true source is $j_0=\sigma_0c_0$ and is isotropic. Its nodal color is in $[.28,.72]$, and interpolation preserves that range. At the independent geometry points, true density spans approximately $1.000$--$1.686$. Whole-ray test transmittance spans $.0692$--$.7860$, so this experiment retains interior visibility instead of hiding all structure behind an opaque barrier.

\subsection{Frozen training and evaluation protocol}
There are $100$ orthographic training directions and $32$ independently rotated unseen directions. Images have $32^2$ and $24^2$ pixels, respectively, with image half-width $1.04$. We retain all $69{,}200$ training and $12{,}288$ test rays intersecting the unknown ball, excluding black background. Test rays and $8{,}192$ independent uniform geometry points in $B_{.95}$ are never used for optimization or checkpoint selection. Targets use $1{,}024$ midpoint segments with exact constant-segment attenuation. The $2{,}048$-segment check changes RGB RMS by at most $3.85\cdot10^{-8}$.

SH1 and SH3 each allocate and evaluate all $16$ angular coefficients, applying a degree mask for SH1. They use identical initial density/DC coefficients and precomputed ray-batch sequences for each seed. Density is initialized uniformly in $[.5,1.5]$; DC is $.4$ plus Gaussian noise of standard deviation $.03$, with all higher modes initially zero. Each update uses $96$ rays and $96$ segments. Adam uses density/color learning rates $.02/.005$, decays by a factor ten over the first $1{,}500$ steps, and retains those rates through step $6{,}000$. There is no TV. The same sufficient nodal physical projection as the earlier study enforces bounded color. The true-color oracle freezes the true nodal color and optimizes density from the same initialization; it uses extra information and is not a practical RGB-only baseline.

States are saved at steps $0,500,1{,}500,3{,}000,6{,}000$. All endpoint interventions in \cref{app:trainedcompensation} are repeated in double precision, using $256$ segments and a $512$-segment check on $1{,}024$ fixed unseen rays. The predeclared numerical tolerance is $\max(5\cdot10^{-6},.05\,{\rm RMSE})$. Metrics include density error relative to its true $L^2$ norm, conditional absorption-depth error, transmittance error, true-color replacement, and removal of nonconstant SH modes. These metrics remain ordinary numerical measurements, not interval certificates.

The fixed CPU budget completed $6$ of the planned nine runs. The paired complete cohort has $2$ seeds per directional arm and $2$ oracle runs. The time cap was applied without extending it after observing outcomes. Partial checkpoints and unstarted cells remain in the released coverage record; they are not counted as final-step replications. The protocol, camera rays, batches, checkpoints, and all results are retained.

\begin{figure}[t]
\centering\includegraphics[width=.99\textwidth]{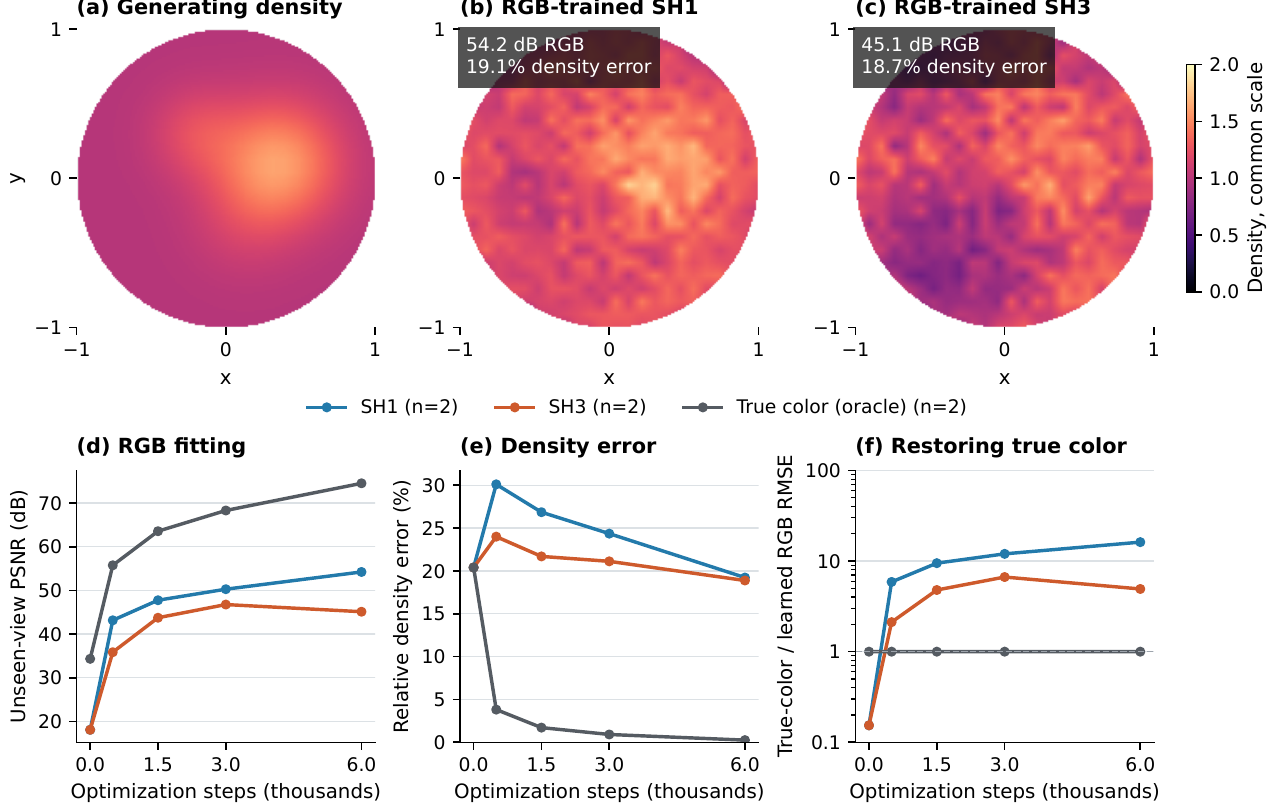}
\caption{\textbf{Appearance compensation beyond radial density.} Top: generating density and final SH1/SH3 density slices at $z=0$, using the first declared seed and one color scale. Bottom: complete-run means and full seed ranges for unseen RGB, density error, and the image error revealed by restoring true color. The oracle supplies true appearance during training. Finite optimization is nonmonotone; these trajectories do not establish convergence or universal deterioration with SH degree.}\label{fig:nonradialtraining}
\end{figure}

\subsection{Complete outcomes and finite-difference trajectories}
\begin{table}[t]
\centering\small
\caption{Every completed nonradial run at step $6{,}000$. Density error is a percentage; depth is in scene-radius units; $Q$ replaces learned color with true color and $Q_{\rm DC}$ removes only directional coefficients. The final column is fixed training-monitor MSE at $6{,}000$ divided by its value at $3{,}000$, not a convergence certificate.}\label{tab:nonradialtraining}
\begin{tabular}{@{}llrrrrrrr@{}}\toprule
Arm & Seed & PSNR & Density err. & Depth & Transmission & $Q$ & $Q_{\rm DC}$ & MSE ratio\\\midrule
SH1 & 5101 & 54.18 & 19.14 & 0.0320 & 0.0570 & 16.00 & 11.31 & 0.406 \\
SH3 & 5101 & 45.09 & 18.70 & 0.0365 & 0.0458 & 4.89 & 6.43 & 2.050 \\
Oracle color & 5101 & 74.40 & 0.26 & 0.0004 & 0.0004 & 1.00 & 1.00 & 0.286 \\
SH1 & 6202 & 54.26 & 19.30 & 0.0323 & 0.0578 & 16.32 & 11.47 & 0.451 \\
SH3 & 6202 & 45.14 & 19.05 & 0.0366 & 0.0462 & 4.95 & 6.51 & 1.980 \\
Oracle color & 6202 & 74.65 & 0.24 & 0.0003 & 0.0003 & 1.00 & 1.00 & 0.277 \\
\bottomrule\end{tabular}
\end{table}
Of the $4$ completed directional runs, $4$ satisfy the full descriptive conjunction at \emph{both} $3{,}000$ and $6{,}000$: at least $40$ dB test PSNR, at least $15\%$ density error, at least threefold true-color replacement error, at least $1.5$-fold DC-only error, cosines at most $-.9$ in both endpoint exchange orders, and the integration check. These four outcomes do not cover the two uncompleted directional runs. Final endpoint cosines are at most $-0.9740$.

The final directional models have transmittance RMS error $0.0458$--$0.0578$ and absorption-depth RMS error $0.0320$--$0.0366$. Good RGB therefore coexists with errors in density-dependent quantities. The oracle mean density error is $0.25\%$. This comparison calibrates the role of appearance information; it does not prove a particular regularizer or low-degree schedule will attain the oracle.

For two consecutive recorded states $A=(\sigma_A,c_A)$ and $B=(\sigma_B,c_B)$ we also evaluate
\[
 g=R(\sigma_B,c_A)-R(A),\qquad a=R(B)-R(\sigma_B,c_A),
\]
and the reverse-order decomposition, on the same $800$ training-monitor and $1{,}024$ unseen rays. Both sums equal $R(B)-R(A)$. We record the two cosines, individual term norms, net change, density drift, and change in density error. In the final SH1 interval, both-order cosines are about $-.95$, while density error improves by approximately $5.1$ percentage points. The corresponding SH3 cosines are only $-.47$ to $-.53$. Thus strong endpoint compensation need not imply that every update cancels; cancelling increments can even accompany correction \emph{toward} true density. All interval values are in the release.

The directional fixed-monitor MSE ratios from step $3{,}000$ to $6{,}000$ span $0.406$--$2.050$. Only $0$ completed runs meet the conservative diagnostic of less than $1\%$ monitor-MSE change and at most $.01$ relative density drift. Even meeting that diagnostic would not establish convergence. Neither the exact radial ambiguity nor the local stability theorem predicts a universal ordering of these optimization endpoints.

Independent SciPy trilinear interpolation and cellwise Gauss quadrature check six preselected rays from every final field. Two Gauss points integrate the cubic within-cell density exactly in real arithmetic; emission uses $8/16$-point quadrature. Their maximum discrepancy is $1.3e-15$, and the largest RGB RMS difference from the primary integrator is $4.3e-06$. This independent numerical check passes the declared tolerance. All floating-point and integration limits are kept separate from the exact-arithmetic certificates elsewhere in the paper.

\subsection{Longer matched GPU trajectories}\label{app:gpulong}
The longer cohort repeats all three arms from their common initialization for three seeds, using the same generating volume, camera splits, and first $6{,}000$ ray-batch sequence. Adam retains the original post-decay learning rates through step $30{,}000$. Every run completes its prescribed steps. The cohort is evaluated in float64 on all $12{,}288$ held-out rays and the same $8{,}192$ geometry points; it contains nine runs and seven retained checkpoints per run.

\begin{figure}[t]
\centering\includegraphics[width=\textwidth]{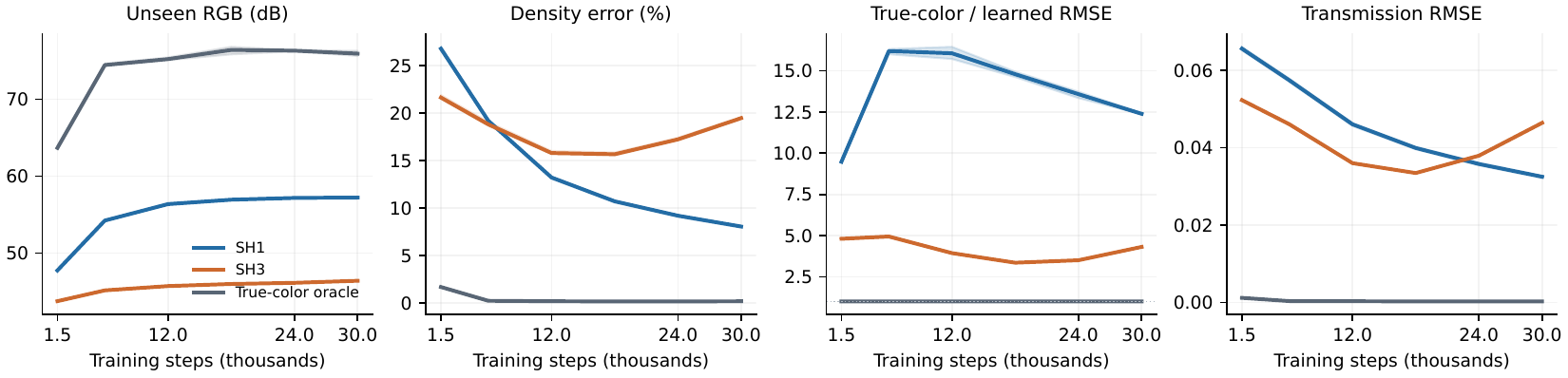}
\caption{\textbf{Longer optimization separates density improvement from appearance compensation.} Lines and bands show means and full ranges over all three seeds. SH1 density error decreases while its learned appearance continues to mask the remaining error. SH3 retains a larger density discrepancy. The true-color oracle supplies generating appearance during training. Curves use the prescribed checkpoints, with no endpoint selection.}\label{fig:gpulong}
\end{figure}

\begin{table}[t]\centering\small
\caption{Every run at $30{,}000$ steps. Density is relative $L^2$ error in percent; transmission and depth are RMS errors. $Q$ replaces learned color by true color, and $Q_{\rm DC}$ removes directional terms. The final column is monitor MSE at $30{,}000$ divided by its value at $24{,}000$.}\label{tab:gpulong}
\begin{tabular}{@{}llrrrrrrr@{}}\toprule
Arm & Seed & PSNR & Density & Trans. & Depth & $Q$ & $Q_{\rm DC}$ & MSE ratio\\\midrule
SH1 & 5101 & 57.35 & 7.95 & 0.0322 & 0.0138 & 12.44 & 15.45 & 0.915 \\
SH3 & 5101 & 46.46 & 19.34 & 0.0464 & 0.0362 & 4.34 & 7.17 & 0.957 \\
Oracle color & 5101 & 76.31 & 0.20 & 0.0003 & 0.0003 & 1.00 & 1.00 & 0.989 \\
SH1 & 6202 & 57.14 & 8.13 & 0.0327 & 0.0141 & 12.37 & 15.19 & 0.969 \\
SH3 & 6202 & 46.36 & 19.56 & 0.0463 & 0.0361 & 4.29 & 7.09 & 0.780 \\
Oracle color & 6202 & 75.64 & 0.21 & 0.0003 & 0.0003 & 1.00 & 1.00 & 1.131 \\
SH1 & 7303 & 57.20 & 8.12 & 0.0325 & 0.0140 & 12.37 & 15.26 & 1.042 \\
SH3 & 7303 & 46.40 & 19.49 & 0.0465 & 0.0363 & 4.31 & 7.16 & 0.992 \\
Oracle color & 7303 & 75.88 & 0.20 & 0.0003 & 0.0003 & 1.00 & 1.00 & 1.101 \\
\bottomrule\end{tabular}\end{table}

At the common $30{,}000$-step endpoint, SH1 density error is $7.95$--$8.13\%$, while SH3 is $19.34$--$19.56\%$. Across all six directional runs, unseen PSNR is $46.36$--$57.35$ dB, and true-color replacement multiplies RGB RMSE by $4.29$--$12.44$. The oracle mean density error is $0.204\%$. More optimization therefore improves density substantially in SH1 while endpoint compensation persists in both angular classes.

\paragraph{Training-monitor comparison.}
The fixed monitor contains $800$ training rays out of $69{,}200$; it is evaluated at each saved checkpoint independently of the stochastic training batch. The entries below are recomputed from saved monitor RGB at step $30{,}000$. MSE is a monitor estimate, not a whole-training-set evaluation.
\begin{center}\small\begin{tabular}{@{}lrrrr@{}}\toprule
Arm & Seed & Monitor MSE & Monitor PSNR & Test PSNR\\\midrule
SH1 & 5101 & $1.501\times10^{-6}$ & 58.24 & 57.35 \\
SH3 & 5101 & $1.582\times10^{-5}$ & 48.01 & 46.46 \\
True-color oracle & 5101 & $3.452\times10^{-8}$ & 74.62 & 76.31 \\
SH1 & 6202 & $1.686\times10^{-6}$ & 57.73 & 57.14 \\
SH3 & 6202 & $1.616\times10^{-5}$ & 47.91 & 46.36 \\
True-color oracle & 6202 & $3.558\times10^{-8}$ & 74.49 & 75.64 \\
SH1 & 7303 & $1.619\times10^{-6}$ & 57.91 & 57.20 \\
SH3 & 7303 & $1.708\times10^{-5}$ & 47.68 & 46.40 \\
True-color oracle & 7303 & $3.298\times10^{-8}$ & 74.82 & 75.88 \\
\bottomrule\end{tabular}\end{center}
SH3 has larger residual on both monitor and test rays. The degree comparison therefore combines distinct optimization outcomes with distinct appearance classes; it does not isolate a causal capacity effect on final density. The interventions within each fixed field establish persistent endpoint compensation. The capacity test that holds density and data objective fixed is instead \cref{app:learnedrefit}. The true-color oracle trains density with appearance known; it is a different control from training appearance at known density.

The preregistered persistence conjunction is tested at both $24{,}000$ and $30{,}000$: $40$ dB RGB, $15\%$ density error, threefold true-color error, $1.5$-fold DC-only error, both exchange cosines at most $-.9$, and physical/integration checks. It holds for 3 of six directional runs. The SH1 endpoints fall below the $15\%$ density threshold; their retained compensation ratios describe the remaining error. 0 of nine runs meet the separate $1\%$ monitor-MSE and $.01$ density-drift plateau diagnostic. These fixed-budget trajectories distinguish endpoint compensation from optimizer convergence.

All $63$ checkpoints pass the recorded physical and $256/512$-segment checks. Recomputing metrics from saved predictions and independently interpolating the density reproduces every reported value. Independent cellwise density integration and $8/16$-point Gauss emission quadrature check six predetermined rays from each final field. All nine final fields pass the original numerical tolerance. This numerical audit is separate from the analytic all-ray guarantees.

The first declared seed supplies three prescribed perspective cameras and a complete $36$-frame orbit. RGB panels share the same sRGB display transform; opacity and depth share scales within every comparison. Unmodified linear arrays and all cameras are retained. The perspective images use $512$ segments and the orbit uses $256$; the tables use the separately specified held-out evaluation. The RTX 4070 Ti cohort is a separate CUDA replay, whose backward operations need not match the earlier CPU trajectories bitwise.

\subsection{Appearance refitting on captured smoke}\label{app:capturedsmoke}
The ScalarReal example distributed with PINF~\cite{chu2022} contains five synchronized, preprocessed videos of real ScalarFlow smoke~\cite{eckert2019}. We use its calibrated camera poses and volume transform. Cameras $0,2,4$ provide fitting images, camera $1$ is reserved for validation, and camera $3$ provides the reported test images. All settings and frames are fixed before reconstruction; validation does not select parameters. The experiment uses frames $40,60,80$ of a $120$-frame, $30$ Hz capture. Images are area-downsampled to $96\times54$, then averaged across decoded channels. Pixel values remain on their supplied $[0,1]$ encoded-intensity scale.

\paragraph{A common density perturbation.}
We fit a nonnegative $16\times24\times16$ trilinear reference density from the three fitting cameras with unit color and black background. The optical-depth objective uses $-\log(1-y)$, mean squared neighboring-density differences weighted by $10^{-6}$, and density bounds $[0,100]$. L-BFGS-B has a $300$-iteration cap. This reference is an image-fitted field; no reconstructed density or held-out image supervises it. We then compare the reference and its prescribed $1.5\times$ scaling. At each fixed density, $600$ projected-gradient iterations per degree refit one spatially shared appearance field, with a nested feasible warm start. The color-held-fixed baseline uses unit color. The same refitting procedure at the unscaled reference provides a control for improvement unrelated to the prescribed density change.

At every node, the appearance is a constant, a linear directional term and five degree-two terms. We impose
\begin{equation}\label{smoke:physical}
 |a_0-\tfrac12|+\|a_{1:3}\|_2+\sum_{k=4}^{8}|a_k|\le\tfrac12.
\end{equation}
Here the five quadratic basis functions are $2d_xd_y,2d_xd_z,2d_yd_z,d_x^2-d_y^2,(3d_z^2-1)/2$, each bounded by one on the sphere. The linear contribution is at most $\|a_{1:3}\|_2$. Thus \cref{smoke:physical} gives $0\le c\le1$ in every direction; trilinear interpolation preserves this bound. It describes the complete nodal physical SH1 class and a sufficient SH2 subclass containing it. Backtracking checks every accepted quadratic descent step, and the inherited candidate is retained if better. The reported fits are feasible solutions of this finite optimization problem.

\begin{table}[t]\centering\small
\caption{Test camera PSNR (dB) for every prescribed time and density scaling. ROI covers rays through the calibrated volume. The final column restricts SH0 to observed foreground intensity $>.02$, exposing the contribution of dark background to ROI PSNR. $\dagger$ denotes an unseen time, interpolated between fitted fields.}\label{tab:capturedsmoke}
\begin{tabular}{@{}llrrrrr@{}}\toprule
Frame & Density scale & Fixed color & SH0 & SH1 & SH2 & SH0 foreground\\\midrule
40 & 1.0 & 29.45 & 29.92 & 30.11 & 30.08 & 19.70 \\
40 & 1.5 & 26.15 & 31.14 & 30.82 & 30.64 & 20.93 \\
60 & 1.0 & 29.14 & 29.60 & 29.74 & 29.72 & 21.14 \\
60 & 1.5 & 25.20 & 30.29 & 30.03 & 29.92 & 21.72 \\
80 & 1.0 & 28.38 & 28.71 & 28.87 & 28.85 & 21.24 \\
80 & 1.5 & 24.49 & 29.45 & 29.16 & 29.15 & 21.88 \\
50$^\dagger$ & 1.0 & 27.01 & 26.94 & 27.03 & 27.00 & 18.04 \\
50$^\dagger$ & 1.5 & 24.54 & 26.97 & 26.89 & 26.81 & 18.50 \\
70$^\dagger$ & 1.0 & 27.60 & 27.61 & 27.83 & 27.84 & 19.83 \\
70$^\dagger$ & 1.5 & 24.30 & 27.77 & 27.68 & 27.61 & 20.18 \\
\bottomrule\end{tabular}\end{table}

\paragraph{What the real images establish.}
At $1.5\widehat\sigma$, SH0 refitting reduces test-camera ROI RMSE by $43.5$--$44.3\%$ relative to color held fixed, across all three fitted times. Test PSNR rises from $24.49$--$26.15$ to $29.45$--$31.14$ dB. The maximum difference in predicted transmission is $.14815$ at the same rays and densities. On fitting rays, SH0, SH1 and SH2 RMS residuals span $.01732$--$.02048$, $.01329$--$.01516$ and $.01245$--$.01430$, respectively. The test camera instead favors SH0 over the higher degrees at the perturbed density. Thus this capture exhibits substantial appearance compensation, primarily through isotropic refitting; the extra fitting capacity has a separate generalization behavior. \Cref{tab:capturedsmoke} retains the full ordering and the unscaled-density controls.

\paragraph{Time interpolation.}
Frames $50$ and $70$ enter no fitting objective. Their density and appearance coefficients are arithmetic averages of the two adjacent fitted fields. At scaled density, SH0 yields $26.97$ and $27.77$ dB on camera $3$, compared with $24.54$ and $24.30$ dB under fixed color. This evaluates intermediate observations of a real fluid sequence through field interpolation.

\paragraph{Measurement scope and numerical checks.}
The experiment uses a compressed, externally illuminated capture with a discretized emission--absorption image model. The $50\%$ change is relative to the fitted reference; independent physical density, color, transmission and velocity truth are unavailable. The time test measures interpolation, rather than a velocity or fluid-dynamics model. Coarse spatial resolution and image formation contribute to the remaining foreground error. All cameras, full-frame and foreground metrics, coefficient fields and predicted images are retained. All $24$ fields satisfy the nodal physical bound. An independent SciPy interpolator and scalar transport loop agree on $153$ fixed rays to $5.6\times10^{-16}$. Refining $96$ to $192$ segments on all $19{,}383$ volume-intersecting rays changes RGB by at most $0.00193$ (maximum field RMS $0.000145$). The complete reconstruction and refitting pilot takes $67.3$ CPU seconds with two threads and $1.00$ GiB peak resident memory.

\paragraph{Finer evaluation and main-paper visualization.}
All fitted density and appearance coefficients above are held unchanged and evaluated on $384\times216$ pixels in every camera and every prescribed frame. Original decoded video pixels are area-downsampled for quantitative comparison. The main-paper image uses the same middle frame and test camera as the coarse evaluation, with a common expanded crop retaining the plume head, with the native-resolution captured image displayed alongside numerical predictions. This changes the evaluated ray density, while the fitted spatial grid and optimization budget remain fixed. No image enhancement, exposure fitting or per-panel normalization is applied.

\begin{table}[t]\centering\small
\caption{Finer evaluation of the same frozen fields: test camera PSNR (dB), at $384\times216$. The final column uses observed foreground intensity $>.02$. $\dagger$: time withheld from every fitting objective.}\label{tab:smokehighres}
\begin{tabular}{@{}llrrrrr@{}}\toprule
Frame & Density scale & Fixed color & SH0 & SH1 & SH2 & SH0 foreground\\\midrule
40 & 1.0 & 28.52 & 29.09 & 29.19 & 29.17 & 19.07 \\
40 & 1.5 & 25.67 & 30.53 & 30.29 & 30.05 & 20.65 \\
60 & 1.0 & 27.92 & 28.51 & 28.56 & 28.53 & 20.26 \\
60 & 1.5 & 24.65 & 29.48 & 29.23 & 29.09 & 21.27 \\
80 & 1.0 & 27.44 & 27.92 & 28.00 & 27.99 & 20.55 \\
80 & 1.5 & 24.06 & 28.87 & 28.66 & 28.60 & 21.50 \\
50$^\dagger$ & 1.0 & 26.20 & 26.26 & 26.31 & 26.27 & 17.51 \\
50$^\dagger$ & 1.5 & 24.02 & 26.36 & 26.29 & 26.22 & 18.14 \\
70$^\dagger$ & 1.0 & 26.38 & 26.57 & 26.70 & 26.70 & 19.00 \\
70$^\dagger$ & 1.5 & 23.63 & 26.85 & 26.78 & 26.68 & 19.63 \\
\bottomrule\end{tabular}\end{table}

\begin{figure}[t]\centering\includegraphics[width=\textwidth]{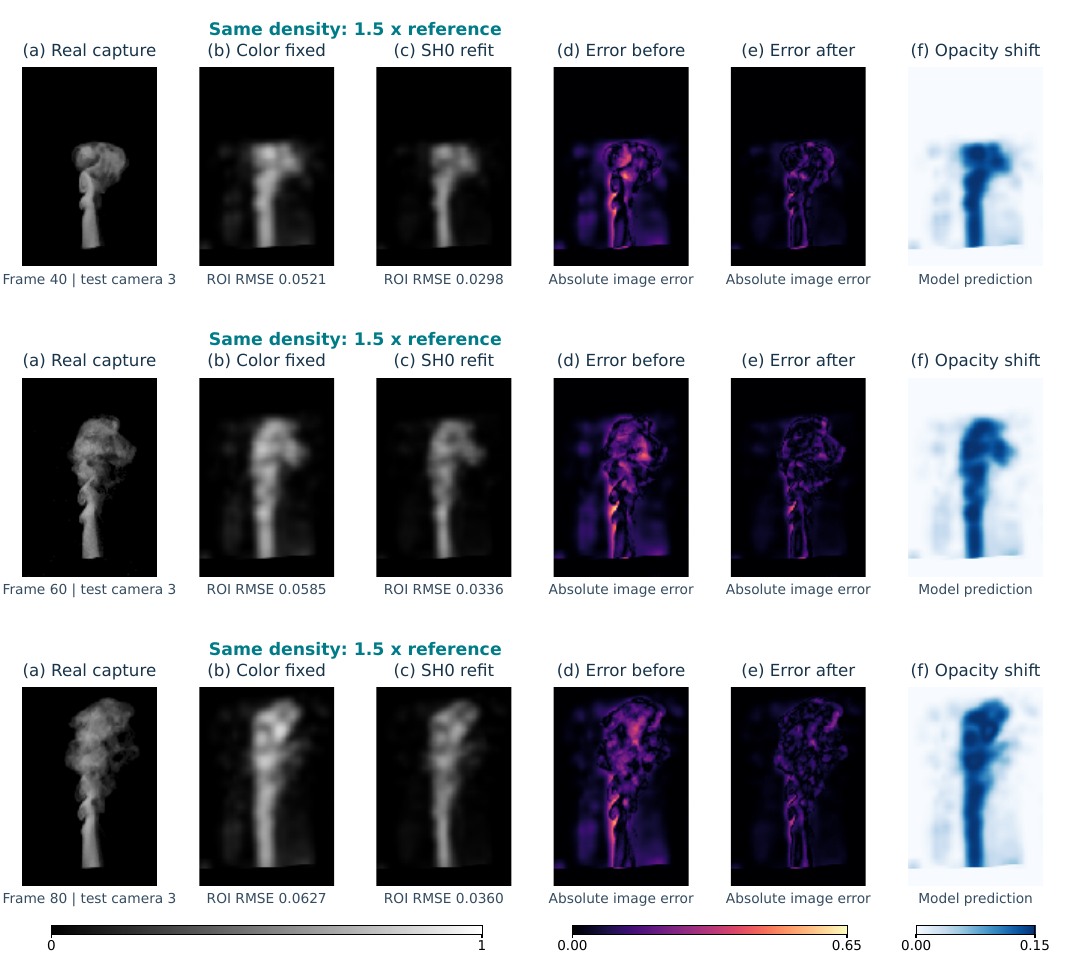}
\caption{The three prescribed fitting times, all evaluated from held-out camera $3$, using unchanged fields and finer rays. Columns show capture, unit-color and SH0 predictions, their absolute errors, and opacity change; all rows share the displayed scales. The middle capture retains native video pixels and the outer captures show the $384\times216$ observations used for evaluation.}\label{fig:smokehighresall}
\end{figure}

At the perturbed density, SH0 reduces test ROI RMSE by $42.5$--$42.9\%$. \Cref{tab:smokehighres} reports every degree and the unscaled-density controls. Higher degrees continue to lower fitting residuals without surpassing SH0 on the held-out camera at the perturbed density. The $97$ predetermined domain rays per frame agree with an independent SciPy interpolator and scalar transport loop to $8.9e-16$. Their $96$- to $192$-segment refinement changes intensity by at most $0.00097$. The original low-resolution predictions are independently reproduced to $5.6e-16$. Finer evaluation uses $91.6$ CPU seconds, two threads and $0.49$ GiB peak resident memory, with no fitting updates.

\paragraph{Density-scale profiles and reciprocal-color controls.}\phantomsection\label{smoke:scaleprofile}
We freeze the same three reference densities and evaluate all scales $s\in\{.5,.75,1,1.5,2,3\}$. At each scale, physical SH0--SH2 refits use the original cameras, grid and $600$-iteration budget; coefficients at $s=1,1.5$ are reused exactly. The other scales start from unit color, with nested degree warm starts and no cross-scale warm start. All $36$ new fits finish before held-out evaluation. Color-fixed controls use both unit color and the saved SH0 field at $s=1$. Reciprocal controls divide each by $s$ and clip to $[0,1]$; the unmodified controls are physical for $s\geq1$. All five cameras are evaluated at $384\times216$ pixels.

Reciprocal scaling is a thin-medium approximation. For cumulative optical depth $\tau(t)$ and $j=\sigma c$, scaling $(\sigma,c)$ to $(s\sigma,c/s)$ keeps $j$ fixed but changes the rendering by
\begin{equation}\label{smoke:reciprocal}
 R_s-R_1=\int\bigl(e^{-s\tau(t)}-e^{-\tau(t)}\bigr)j(t)\,dt.
\end{equation}
For $s\geq1$, its magnitude is at most $(s-1)\int\tau(t)j(t)\,dt$, by the mean-value bound on the exponential. For unit reference color, the scaled-color rendering is $(1-e^{-s\tau_{\rm tot}})/s$; this follows by integrating the derivative of $e^{-s\tau(t)}$. This control preserves the leading optically thin term, whereas spatial SH0 refitting can also compensate residual structure. Exact compensation in \cref{m:exactcomp} generally depends on direction through $U$ and is a distinct identity. PINF discusses color--density ambiguity in this real capture~\cite[Section~3.4]{chu2022}.

\begin{figure}[t]\centering\includegraphics[width=\textwidth]{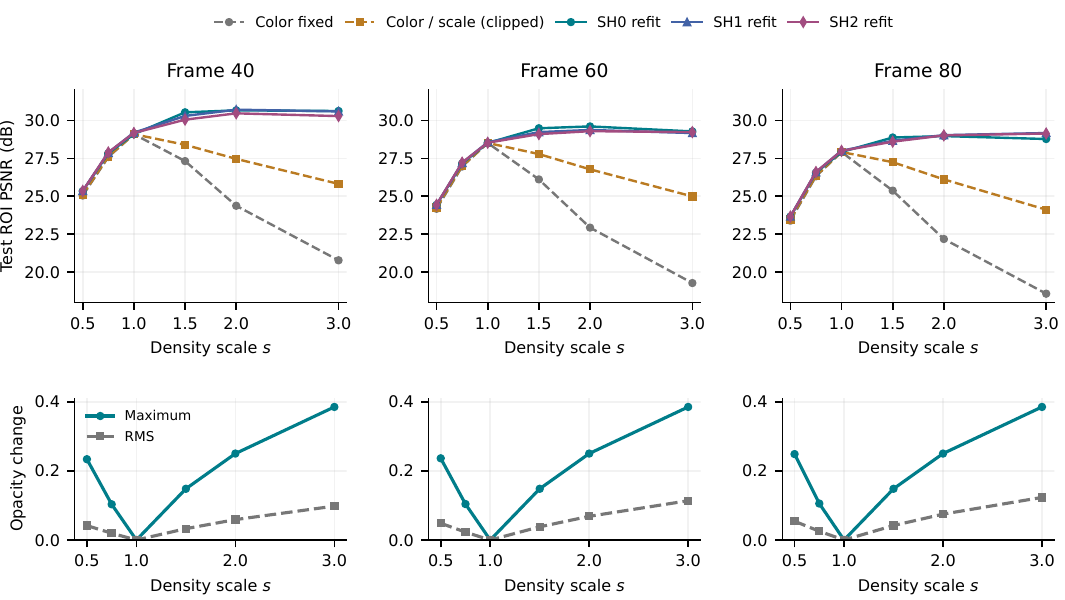}
\caption{\textbf{Density-scale sensitivity in all three captured-smoke frames.} Top: held-out-camera PSNR after the five displayed color treatments. Fixed color is the SH0 field fitted at $s=1$; the reciprocal control is clipped only where it would exceed the physical box. Bottom: predicted opacity changes relative to $s=1$, independent of color. Markers are the six evaluated scales; connecting segments guide the eye.}\label{fig:smokescaleall}
\end{figure}

\begin{table}[t]\centering\small
\caption{Every density scale on the held-out camera at finer resolution. Fixed and reciprocal colors use the saved $s=1$ SH0 field. FG denotes SH0 PSNR on observed foreground intensity $>.02$. All values are dB; density scales are relative to a fitted reference.}\label{tab:smokescales}
\begin{tabular}{@{}rrrrrrrr@{}}\toprule
Frame & $s$ & Fixed & Reciprocal & SH0 & SH1 & SH2 & FG\\\midrule
40 & 0.5 & 25.05 & 25.11 & 25.34 & 25.36 & 25.36 & 14.04 \\
40 & 0.75 & 27.59 & 27.62 & 27.83 & 27.88 & 27.89 & 17.23 \\
40 & 1 & 29.09 & 29.09 & 29.09 & 29.19 & 29.17 & 19.07 \\
40 & 1.5 & 27.32 & 28.39 & 30.53 & 30.29 & 30.05 & 20.65 \\
40 & 2 & 24.38 & 27.46 & 30.66 & 30.70 & 30.46 & 20.79 \\
40 & 3 & 20.78 & 25.82 & 30.62 & 30.59 & 30.28 & 20.90 \\
60 & 0.5 & 24.18 & 24.26 & 24.44 & 24.45 & 24.45 & 14.90 \\
60 & 0.75 & 26.96 & 26.99 & 27.16 & 27.20 & 27.22 & 18.34 \\
60 & 1 & 28.51 & 28.51 & 28.51 & 28.56 & 28.53 & 20.26 \\
60 & 1.5 & 26.11 & 27.78 & 29.48 & 29.23 & 29.09 & 21.27 \\
60 & 2 & 22.94 & 26.78 & 29.60 & 29.37 & 29.29 & 21.33 \\
60 & 3 & 19.29 & 24.99 & 29.27 & 29.16 & 29.23 & 20.96 \\
80 & 0.5 & 23.40 & 23.48 & 23.66 & 23.67 & 23.67 & 15.23 \\
80 & 0.75 & 26.31 & 26.35 & 26.56 & 26.59 & 26.60 & 18.72 \\
80 & 1 & 27.92 & 27.92 & 27.92 & 28.00 & 27.99 & 20.55 \\
80 & 1.5 & 25.37 & 27.25 & 28.87 & 28.66 & 28.60 & 21.50 \\
80 & 2 & 22.19 & 26.11 & 28.96 & 29.03 & 29.01 & 21.59 \\
80 & 3 & 18.59 & 24.12 & 28.77 & 29.17 & 29.13 & 21.41 \\
\bottomrule\end{tabular}\end{table}

Across $s=1.5,2,3$, SH0 test PSNR spans $.13$, $.33$ and $.19$ dB for frames $40,60,80$; the saved-color curves decrease by $6.53$, $6.83$ and $6.78$ dB. SH0 refits exceed reciprocal-color controls by $1.62$--$4.80$ dB over these nine comparisons. Predicted maximum opacity changes reach $.14815$, $.25$ and $.38490$ at scales $1.5$, $2$ and $3$. SH1--SH2 lower the fitting objective, while their test ordering varies with time and scale. These profiles separate spatial isotropic compensation from angular fitting capacity.

The reference is fitted with unit color and is not independent density truth. The sampled SH0 plateau describes test-image sensitivity, not a noise-calibrated density interval, a continuum ambiguity proof, or identification of the true scale. The all-ray theoretical constructions and complete-class learned-density certificates provide those separate information-limit statements. The reference optical-depth 99th percentiles are $.612$, $.640$, $.683$ and maxima $.990$, $1.026$, $1.238$ across volume-intersecting evaluation rays; the darkest rays do not imply that every smoke ray is optically thin. The raw records retain full-frame, ROI and foreground metrics, all controls, physical-box checks and color-clipping fractions.

\paragraph{Why the scale profile is asymmetric.}\phantomsection\label{smoke:ceiling}
Unit color places the fitted reference at the brightness ceiling. For zero background, any physical color field $0\leq c\leq1$ satisfies
\begin{equation}\label{smoke:brightnessbound}
 R=\int T\sigma c\,dt\leq\int T\sigma\,dt=1-e^{-\tau},\qquad
 \tau=\int\sigma\,dt.
\end{equation}
The last equality follows by integrating $-\partial_t T=T\sigma$. Thus a noise-free ray intensity $y<1$ requires $\tau\geq-\log(1-y)$. With a stated absolute reading bound $b$, replace $y$ by $\max(y-b,0)$. This constrains integrated optical thickness, not pointwise density.

At a fixed scaled density, the ceiling gives a residual bound independently of angular degree or spatial color sharing:
\begin{equation}\label{smoke:ceilingresidual}
 \operatorname{MSE}\geq\frac1{|\mathcal I|}\sum_{r\in\mathcal I}
       [y_r-(1-T_{s,r})]_+^2.
\end{equation}
Indeed, each admissible prediction is at most $1-T_{s,r}$, so any positive brightness shortfall is unavoidable; squaring and averaging proves the inequality. On the saved test ROI, this lower bound accounts for $93.8$--$95.3\%$ of the achieved SH0 MSE at $s=.5$, and $73.7$--$78.1\%$ at $s=.75$. The corresponding PSNR ceilings and achieved values are:
\begin{center}\small
\begin{tabular}{@{}rrrrr@{}}\toprule
Frame & $s$ & PSNR ceiling & SH0 PSNR & Forced MSE fraction (\%)\\\midrule
40 & .50 & 25.62 & 25.34 & 93.8\\
60 & .50 & 24.70 & 24.44 & 94.1\\
80 & .50 & 23.87 & 23.66 & 95.3\\
40 & .75 & 29.12 & 27.83 & 74.3\\
60 & .75 & 28.49 & 27.16 & 73.7\\
80 & .75 & 27.63 & 26.56 & 78.1\\\bottomrule
\end{tabular}
\end{center}
The table reuses all saved predictions and involves no additional fit. It explains the low-scale decline through a physical color constraint. Higher-scale refitting improves held-out fit over $s=1$, which locates a better image fit within this model; it does not identify the physical density scale. The one-sided optical-thickness inequality itself supplies no upper bound. An unbounded compatible set under a shared finite-degree appearance model would require an additional construction, and does not follow from these six sampled scales.

\paragraph{Scale-scan checks and cost.}
All $1{,}890$ metric rows are independently recomputed from stored predictions. Constant-color telescoping and $T_s=T_1^s$ are checked on every evaluation ray. Every nodal color meets \cref{smoke:physical} to floating-point tolerance $10^{-12}$. The $126$ field checks agree with an independent scalar renderer to $1.0\times10^{-15}$ on $97$ fixed rays per frame; doubling segments changes those intensities by at most $.001163$. A floating-point Frank--Wolfe gap bounds each quadratic fitting objective diagnostically: the largest gap is $7.35\times10^{-7}$, at most $.461\%$ of the corresponding objective. This is separate from interval-certified refit bounds. Existing predictions at $s=1,1.5$ replay identically and all frozen input hashes match. The scan costs $236.6$ seconds on two CPU threads, peaks at $.680$ GiB, and uses no GPU.

\paragraph{An interior-color reference control.}\phantomsection\label{smoke:grayreference}
To test the role of reference brightness, we repeat the density fit at constant color $a=.75$ and then repeat the full scale sweep. The color is the smallest value in $\{.5,.75,1\}$ above the largest training intensity, $.67075$; the choice uses cameras $0,2,4$ only. A half-gray reference would cap every prediction at $.5$, below $93$, $122$ and $148$ training pixels at the three times. Observations retain their original intensity convention and are neither clipped nor rescaled.

The reference objective uses optical-depth targets $-\log(1-y/.75)$, with the same grid, regularization, initialization, bounds and $300$-iteration limit as the unit-color fit. All $54$ SH0--SH2 refits use $600$ iterations, the same physical class and nested degree warm starts, initialized from unit color separately at each scale. The references and all color fits are completed before evaluating the five cameras at $384\times216$. The new density reference is fitted afresh: matching relative scales in the two scans are not identical absolute density fields.

\begin{center}\begin{minipage}{\textwidth}\centering
\includegraphics[width=\textwidth]{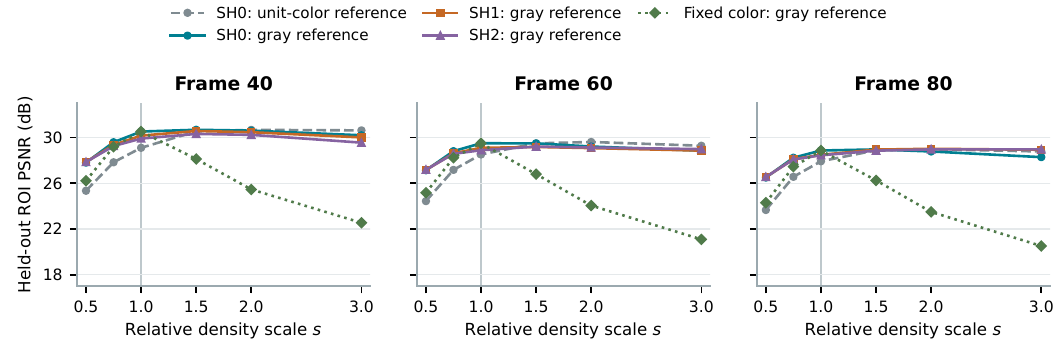}
\captionof{figure}{\textbf{Moving the reference away from unit color changes the scale profile.} Gray-reference curves use $a=.75$ for the density fit; the dashed gray curve repeats the original unit-color-reference SH0 result. Fixed color is the new reference's fitted SH0 field. Every marker is an evaluated scale, with the same training/test split and intensity convention. Connecting lines guide the eye.}\label{fig:smokegray}
\end{minipage}\end{center}

\begin{center}\begin{minipage}{\textwidth}\centering\small
\captionof{table}{Interior-color reference: held-out camera $3$ at every prescribed time and density scale. PSNR values use the calibrated ROI. The last column is the fraction of SH0 MSE forced by the physical brightness ceiling in \cref{smoke:ceilingresidual}.}\label{tab:smokegray}
\begin{tabular}{@{}rrrrrr@{}}\toprule
Frame & $s$ & SH0 (dB) & SH1 (dB) & SH2 (dB) & Forced MSE (\%)\\\midrule
40 & 0.5 & 27.79 & 27.85 & 27.85 & 71.4 \\
40 & 0.75 & 29.58 & 29.35 & 29.25 & 27.2 \\
40 & 1 & 30.51 & 30.16 & 29.90 & 9.9 \\
40 & 1.5 & 30.67 & 30.56 & 30.31 & 2.6 \\
40 & 2 & 30.59 & 30.46 & 30.23 & 1.6 \\
40 & 3 & 30.19 & 30.00 & 29.54 & 1.1 \\
60 & 0.5 & 27.13 & 27.16 & 27.17 & 71.4 \\
60 & 0.75 & 28.80 & 28.61 & 28.57 & 26.7 \\
60 & 1 & 29.49 & 29.09 & 28.88 & 11.9 \\
60 & 1.5 & 29.48 & 29.17 & 29.20 & 4.6 \\
60 & 2 & 29.21 & 29.07 & 29.12 & 2.5 \\
60 & 3 & 28.84 & 28.85 & 28.98 & 1.4 \\
80 & 0.5 & 26.50 & 26.53 & 26.54 & 75.9 \\
80 & 0.75 & 28.23 & 28.10 & 28.07 & 32.0 \\
80 & 1 & 28.86 & 28.48 & 28.44 & 17.6 \\
80 & 1.5 & 28.95 & 28.94 & 28.84 & 10.7 \\
80 & 2 & 28.77 & 29.01 & 28.95 & 7.9 \\
80 & 3 & 28.28 & 28.92 & 28.95 & 4.8 \\
\bottomrule\end{tabular}\end{minipage}\end{center}

Over $s=.75,1,1.5$, SH0 test PSNR spans at most $1.09$ dB, compared with $2.32$--$2.70$ dB for the unit-color references. At $s=.75$, the brightness ceiling accounts for $26.7$--$32.0\%$ of the new SH0 MSE. This comparison measures how the reference convention changes the low-scale response. Angular degree and spatial isotropic refitting remain separate comparisons in \cref{tab:smokegray}.

The mechanism follows directly from \cref{smoke:brightnessbound}. A constant-color reference predicts $y=a(1-e^{-\tau})$ on a ray. This same intensity respects the scaled-density ceiling exactly when $s\geq-\log[1-a(1-e^{-\tau})]/\tau$ for $\tau>0$. Rearranging $y\leq1-e^{-s\tau}$ proves this condition. Concavity of $1-e^{-s\tau}$ implies that the threshold is at most $a$, whereas at $a=1$ it equals $1$. An interior reference therefore removes a per-ray brightness obstruction below $s=1$; fitting a single shared color field across the cameras is still required. Neither the reference nor the sampled profile identifies the physical density scale.

All $1{,}890$ metric rows and the frozen inputs pass an independent saved-array check. Physical nodal bounds hold throughout; the $97$ fixed rays per time agree with independent interpolation and scalar compositing to $7.8\times10^{-16}$, with $96$-to-$192$-segment changes at most $0.001786$. The new fits and evaluations use $264.2$ seconds on two CPU threads, peak at $0.679$ GiB and use no GPU.

\paragraph{Spatial density perturbations on captured smoke.}\phantomsection\label{smoke:spatialcontrol}
The scale controls above isolate a uniform density change. We additionally test spatially varying changes at the same three interior-color references, using nodal multipliers $1+u_x/2$, $1+u_y/2$ and $1+u_z/2$, where each $u_a\in[0,1]$ is its normalized grid coordinate. A uniform multiplier $1.25$ provides a matched control. All axes and frames are retained. The grid, images and camera split remain fixed; no density is retrained. The changes preserve zero-density nodes and increase occupied-node densities by at most $50\%$.

Each edited density admits the same three nested physical appearance classes. SH0 starts from the reference SH0 coefficients divided by the nodal multiplier; SH1 and SH2 embed the preceding fit. Each receives $600$ projected-gradient iterations against the captured training images. The reciprocal nodal color is a feasible product-scaling control; it is not an exact transport compensation between interpolation nodes. Color-unchanged and reciprocal controls are evaluated alongside all fits. All fits finish before evaluating the five cameras at $384\times216$.

\begin{table}[t]\centering\small
\caption{\textbf{Spatial density edits separate fitting from transfer to a new camera.} ROI PSNR on test camera $3$; every prescribed edit and time is shown. Fixed uses the reference SH0 color and Reciprocal divides it nodally by the multiplier. SH0--SH2 refit appearance at the same edited density. The last two columns give the improvement over SH0 on validation camera $1$. These are different held-out viewpoints of the same fitted field.}\label{tab:smokespatial}
\begin{tabular}{@{}rlrrrrrrr@{}}\toprule
& & \multicolumn{5}{c}{Test-camera PSNR (dB)} & \multicolumn{2}{c}{Validation gain (dB)}\\
Frame & Edit & Fixed & Reciprocal & SH0 & SH1 & SH2 & SH1 & SH2\\\midrule
40 & $1.25$ & 29.69 & 30.22 & 30.66 & 30.50 & 30.28 & 0.32 & 0.46\\
40 & $x$ ramp & 29.57 & 30.11 & 30.70 & 30.53 & 30.30 & 0.30 & 0.43\\
40 & $y$ ramp & 30.30 & 30.44 & 30.60 & 30.32 & 30.08 & 0.29 & 0.39\\
40 & $z$ ramp & 29.72 & 30.24 & 30.68 & 30.54 & 30.33 & 0.30 & 0.42\\
60 & $1.25$ & 28.48 & 29.15 & 29.59 & 29.27 & 29.14 & 0.09 & 0.13\\
60 & $x$ ramp & 28.31 & 29.05 & 29.60 & 29.30 & 29.21 & 0.09 & 0.12\\
60 & $y$ ramp & 28.96 & 29.32 & 29.56 & 29.17 & 29.02 & 0.06 & 0.11\\
60 & $z$ ramp & 28.43 & 29.20 & 29.67 & 29.29 & 29.16 & 0.07 & 0.08\\
80 & $1.25$ & 27.92 & 28.53 & 28.99 & 28.77 & 28.74 & 0.34 & 0.31\\
80 & $x$ ramp & 27.73 & 28.42 & 29.00 & 28.82 & 28.76 & 0.32 & 0.28\\
80 & $y$ ramp & 28.17 & 28.67 & 28.97 & 28.66 & 28.59 & 0.34 & 0.30\\
80 & $z$ ramp & 27.86 & 28.57 & 29.01 & 28.82 & 28.78 & 0.35 & 0.29\\
\bottomrule\end{tabular}\end{table}

The unedited SH0 references have test PSNR $30.51$, $29.49$ and $28.86$ dB. The nine nonuniform changes have relative nodal density $L^2$ distances $12.7$--$27.5\%$ from those references. Higher degree reduces the training objective in every comparison and improves validation-camera RGB in all nine nonuniform cases. On test camera $3$, SH1 and SH2 instead increase RMSE by $1.61$--$4.63\%$ and $2.75$--$6.41\%$ relative to SH0. Thus the fitted angular corrections transfer differently to the two held-out viewpoints. This experiment evaluates learned feasible fields: nested-class optimum monotonicity concerns one fixed data objective, whereas the two camera evaluations measure transfer of the selected fields. The references provide controlled density changes, not physical density ground truth.

All $945$ saved metric rows pass independent recomputation (maximum difference $1.1\times10^{-14}$). Every nodal field satisfies the all-direction physical bound. Independent interpolation and sequential compositing on $97$ prescribed rays per field agree to $1.2\times10^{-15}$. A $96$-to-$192$-segment check covers every test-camera domain ray for each fitted degree; the stored discrepancies measure quadrature sensitivity. The $36$ fits and evaluations take $234.9$ CPU wall seconds, with $32.1$ seconds for saved-array auditing and no GPU use.

\subsection{Isolating angular compensation across held-out cameras}\label{smoke:angularreplication}

The scale profiles establish isotropic compensation. To test the additional role of angular capacity, we repeat the comparison across all five camera folds and a reserved time subset. The experiment separates two questions: whether angular terms improve unseen-camera fit, and whether editing density increases that improvement over its value at the reference density.

\paragraph{Frozen protocol and camera isolation.}
We use the same processed real-capture sequence, previously inspected frames $40,60,80$, and frame $100$, declared before decoding it. For each frame and held-out camera, both reference-density fitting and appearance fitting use only the other four cameras. This gives $20$ separate reference densities; none is reused from a fit that included its test camera. Frame $100$ extends the time subset of this capture: its four training-camera images are fitted, so this is neither temporal forecasting nor an independent scene.

For each reference, the density multiplier is $1$, $1.25$, or $1+.5u_j$, $j=x,y,z$, where $u_j\in[0,1]$ is the normalized spatial coordinate. The resulting density stays fixed across SH0/SH1/SH2 comparisons. Every fit uses the same $16\times24\times16$ trilinear grid and the nested physical inner classes in \cref{smoke:physical}; the SH2 class is not the complete physical SH2 class. The reference gray is $.75$, with a predeclared training-only brightness fallback to $1$ that was not needed in any fold. Each appearance fit receives $600$ iterations. All $300$ fits finish before test scoring; all cameras, edits and degrees are retained. Measurements are encoded grayscale at $96\times54$, with $96$ integration segments and full held-out ROI comparisons at $192$ segments. Fixed-gray and reciprocal-gray controls are also saved.

\paragraph{Ordinary angular gain and the edit-associated increment.}
Let $e_L(h)$ be the held-out ROI RMSE at density edit $h$ after the degree-$L$ refit, with $h=0$ denoting the unedited reference. We report the direct paired differences
\begin{equation}\label{smoke:pairedangular}
 G_L(h)=e_0(h)-e_L(h),\qquad
 D_L(h)=G_L(h)-G_L(0).
\end{equation}
A positive $G_L$ measures angular fitting benefit; $D_L$ subtracts the benefit already available without the density edit. Thus ordinary appearance modeling cannot automatically be credited to density compensation. These differences are descriptive comparisons, not a causal identification theorem.

Bars below mean equal-weight averages over every prescribed frame, fold and edited setting within each stratum. Before fitting, success was defined to require all of: $(\overline e_0-\overline e_L)/\overline e_0\geq5\%$, $\overline D_L\geq1/255$, positive mean $G_L$ on at least four of five cameras, and no camera's mean RMSE worsening by more than $5\%$. In addition, $\overline G_L$ and $\overline D_L$ must exceed three times their corresponding sums of integration sensitivities. Specifically, if $s_L(h)$ is the full held-out ROI RMS difference between the $96$- and $192$-segment predictions, these thresholds are
\begin{align*}
 Q_G&=3\overline{s_0(h)+s_L(h)},\\
 Q_D&=3\overline{s_0(h)+s_L(h)+s_0(0)+s_L(0)}.
\end{align*}
The same degree must pass separately in both strata. The unit $1/255$ is a declared encoded-gray threshold, not measured sensor noise or a linear-radiance error bound; the integration differences are sensitivity checks, not certified continuum-error bounds.

\begin{table}[t]
\centering\small
\caption{All registered cross-camera endpoints. Relative gain is positive when the angular refit improves mean held-out ROI RMSE. The last two columns give $255\overline D_L$ and its integration-sensitivity threshold $255Q_D$. Neither degree passes in either stratum.}
\label{tab:smokeangularreplication}
\begin{tabular}{@{}lrrrrr@{}}\toprule
Frames & SH & Gain (\%) & $G>0$ cameras & $255\overline D$ & $255Q_D$\\\midrule
$40/60/80$ & 1 & $+.35$ & $3/5$ & $.119$ & $.391$\\
$40/60/80$ & 2 & $-.77$ & $3/5$ & $.010$ & $.388$\\
$100$ & 1 & $+4.59$ & $5/5$ & $.256$ & $.444$\\
$100$ & 2 & $+4.04$ & $5/5$ & $.172$ & $.440$\\\bottomrule
\end{tabular}
\end{table}

\paragraph{Results and interpretation.}
On frames $40/60/80$, mean held-out RMSE is $.036279$ for SH0, $.036152$ for SH1 and $.036558$ for SH2. Camera~0 worsens by $9.70\%$ and $8.18\%$ respectively, despite improvements on three camera means. On frame $100$, the means are $.041392$, $.039491$ and $.039719$: both angular models improve all five camera means, and their ordinary gains exceed $Q_G$. These benefits remain below the declared $5\%$ threshold. More decisively, each stratum/degree mean edit-associated increment is below both $1/255$ and its $Q_D$ threshold (\cref{tab:smokeangularreplication}). No registered endpoint passes. The frozen test therefore establishes modest ordinary angular benefits in the reserved frame but does not establish reproducible additional compensation attributable to density edits under this protocol. It does not establish that angular compensation is absent in real scenes.

Training objectives improve in all $160$ edited SH0/SH1 or SH0/SH2 comparisons beyond the saved floating-point solver-gap diagnostic. Thus improved training fit alone does not settle camera transfer. These gaps are not interval certificates. No independent physical density or transmission measurements are available; opacity changes remain model predictions. Correlations within one capture preclude an independent-scene or population claim.

\paragraph{Checks, reproducibility and cost.}
An independent implementation recomputes all $7{,}500$ metric rows (maximum discrepancy $7.2\times10^{-15}$), checks all $500$ model instances against the whole-direction physical constraint to floating-point tolerance $10^{-12}$, and checks every fold's density and appearance isolation. It independently interpolates and composites $9{,}500$ held-out ray/model pairs, agreeing within $1.4\times10^{-15}$, and recomputes all $300$ full-ROI integration sensitivities. Frozen inputs, density equality across degrees and test-scoring order are verified. The released record includes every field, prediction, control, camera/edit table, protocol and failure gate. Fitting and evaluation take $1{,}135$ seconds on two CPU threads, peak at $1{,}164$ MiB, and use no GPU. There are no adaptive retries or additional fits after observing these results.

\section{Calibrating the inverse coefficient against fixed color}\label{app:calibration}
For completeness, the positive-gain examples $Q_L$ use \emph{different} isotropic references $\sigma=1$, $j_k=1/2+x_k^{L+1}/4$. Their degree-$L$ refitted gains satisfy $\kappa_L(0)=e^{-1}B_{L+1}6^{-L/2}$, where $B_m=\tfrac14\int_0^1e^{-t}t^m\dd t$ (\cref{sec:nonlinear}). They show existence at every fixed finite degree, not a same-scene ladder.
\begin{center}\small
\begin{tabular}{@{}lrr@{}}\toprule
Reference and fitted degree & Gain & Inverse\\\midrule
$Q_0$, $L=0$ & $.02430$ & $41.15$\\
$Q_1$, $L=1$ & $.006030$ & $165.84$\\
$Q_2$, $L=2$ & $.001746$ & $572.63$\\
$Q_3$, $L=3$ & $.0005497$ & $1819.31$\\\bottomrule
\end{tabular}\end{center}
All comparisons below use exactly the $Y$ normalization in \eqref{eq:Y} and the same density $L^2$ norm. Let $t=e^{-1}$, $a=1/4$, $b=1/2$, $A=a(1-t)$, $B=b(2-5t)$, and $D=at$. At the center of \eqref{m:reference}, $U_k=A+Bd_k^2$ and $c_{*,k}=a$.

\begin{proposition}[Fixed-color calibration]
The fixed-color frozen derivative has sharp gain
\begin{equation}\label{eq:fixedcolorcalibration}
 \gamma_c=t\sqrt{3D^2-2DB+B^2/2}.
\end{equation}
Its inverse is $23.11535489515\ldots$, compared with $82.91765398408\ldots$ after degree-zero or degree-one color refitting: a ratio $3.58712441838\ldots$.
\end{proposition}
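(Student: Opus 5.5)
The plan is to repeat the proof of \cref{thm:frozen} with the source refit replaced by the fixed-color source change. Holding color fixed at the isotropic reference, a density increment $h$ induces the source increment $q=c_*h$. At the center of \eqref{m:reference} the color is $c_{*,k}=a$, so the frozen derivative is $A_0(h,ah)=\int T_0(d)\,[a-U_0(d)]\,h\,\dd t$. Applying the slice/coarea identity \eqref{eq:slice} with $\widehat q=a\widehat h$ and no minimization over circle polynomials gives
\[
\norm{A_0(h,ah)}_Y^2=\int_{\R^3}F(\xi/|\xi|)\,|\widehat h(\xi)|^2\dd\xi,\qquad
F(n)=\sum_k\avg{T_0^2(a-U_{0,k})^2}_n .
\]
Plancherel then yields the lower bound $\gamma_c^2=\min_nF(n)$. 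Sharpness follows from the packet argument of \cref{thm:frozen} with $P_k\equiv a$. That argument concentrates $h_N=\phi\cos(Nn_0\cdot y)$ at a minimizing normal $\pm n_0$, and it still applies because $F$ is continuous and even.

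Next comes the explicit evaluation. Since $T_0=t=e^{-1}$ and $U_{0,k}=A+Bd_k^2$, we have $a-U_{0,k}=D-Bd_k^2$ with $D=a-A=at$. Write $s_k=1-n_k^2$, so that $\sum_ks_k=2$ and $0\le s_k\le1$. The circle moments $\avg{d_k^2}_n=s_k/2$ and $\avg{d_k^4}_n=3s_k^2/8$ (the latter from the variance formula used in \eqref{eq:quadraticgain}) give
\[
F(n)=t^2\Big(3D^2-DB\textstyle\sum_ks_k+\tfrac{3B^2}{8}\sum_ks_k^2\Big)
=t^2\Big(3D^2-2DB+\tfrac{3B^2}{8}\textstyle\sum_ks_k^2\Big).
\]
The only normal dependence is through $\sum_ks_k^2$. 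This is minimized on the simplex slice by convexity at $s_k=2/3$, which corresponds to $n_k^2=1/3$ and gives the value $4/3$. Hence $\gamma_c^2=t^2(3D^2-2DB+B^2/2)$, which is \eqref{eq:fixedcolorcalibration}.

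Finally, I would compute $1/\gamma_c$ numerically and divide it into $1/\kappa_1(0)$, using the value $\kappa_0(0)=\kappa_1(0)=e^{-1}B/\sqrt6$ from \eqref{eq:quadraticgain}. The reported digits would be obtained with outward interval arithmetic, as elsewhere in the paper.

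The main care point is the sharpness claim in the fixed-color class. The competitor source must be exactly $ah$ rather than a refitted polynomial, so I must check that the packet proof of \cref{thm:frozen} uses only continuity of the residual energy at $\pm n_0$, and not optimality of $P$. Everything else is routine moment algebra.
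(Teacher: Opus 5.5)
Your proposal is correct and follows essentially the same route as the paper. The lower bound comes from the slice/coarea identity with the fixed-color increment $q_k=c_{*,k}h$, and sharpness comes from localized real packets at a minimizing normal. The moments $\avg{d_k^2}_n=(1-n_k^2)/2$ and $\avg{d_k^4}_n=3(1-n_k^2)^2/8$ then reduce the minimization to $n_k^2=1/3$; your $\sum_k s_k^2$ is exactly the paper's $1+\sum_k n_k^4$. Dividing by $\kappa_0(0)=\kappa_1(0)=tB/\sqrt6$ gives the stated ratio.
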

\begin{proof}
With color fixed, $j=\sigma c_*$, hence $q_k=c_{*,k}h$. The frozen coefficient is $t(a-U_k)=t(D-Bd_k^2)$. For a circle normal $n$,
\[
 \avg{d_k^2}_n=(1-n_k^2)/2,\qquad
 \avg{d_k^4}_n=3(1-n_k^2)^2/8.
\]
Therefore $\sum_k\avg{d_k^2}=1$ and $\sum_k\avg{d_k^4}=3(1+\sum_k n_k^4)/8$. The last expression is minimized at $n_k^2=1/3$, where it equals $1/2$. Expanding $t^2\sum_k\avg{(D-Bd_k^2)^2}$ proves \eqref{eq:fixedcolorcalibration}. Fourier slice/coarea gives the lower bound, and localized real packets with this normal prove sharpness. For variable spatial color the perturbation $q_k(x)=c_{*,k}(x)h(x)$ preserves the fixed-color constraint exactly; shrinking packets gives the same necessary inverse coefficient for the actual local problem.

When color is refitted through degree one, evenness of $d_k^2$ makes its circle mean the optimal fit. Since $\operatorname{Var}_n(d_k^2)=(1-n_k^2)^2/8$, minimizing the sum gives $\kappa_0=\kappa_1=tB/\sqrt6$ as in \eqref{m:numbers}. Dividing the two gains gives the stated ratio.
\end{proof}

Holding the \emph{source} fixed would instead impose $q=0$. Its gain is $t\sqrt{3A^2+2AB+B^2/2}$, with inverse $8.44835688873\ldots$; it is not the fixed-color comparator. For the uniform reference $\sigma=1$, $c_k=j_k=a$, the fixed-color gain is $\sqrt3\,at^2$, with inverse $17.06429411109\ldots$. Allowing even degree-zero refitting gives $\kappa_L=0$ for every $L$, because $U_k=a(1-t)$ is direction independent. This last assertion is a frozen degeneracy; exact nonlinear ambiguity requires a separate construction.

The displayed decimals were enclosed by the same exact integer interval evaluator described in \cref{app:radius}, with scale $10^{40}$ and analytic exponential tails. These are sharp frozen comparisons and necessary local inverse coefficients. No sufficient fixed-color estimate for a prescribed finite radius or conversion to pixel-RGB noise is inferred from the table.

\section{A layered bridge to first-hit appearance fitting}\label{app:opaque}
\begingroup
\setlength{\abovedisplayskip}{5pt}
\setlength{\belowdisplayskip}{5pt}
\setlength{\abovedisplayshortskip}{3pt}
\setlength{\belowdisplayshortskip}{3pt}
This section relates two projection mechanisms without identifying their inverse problems. Wang--Miller's conditional surface loss \cite{wang2026} measures the best appearance fit on a specified candidate surface. Our $\kappa_L$ measures a volume-density derivative after source refitting and minimizes over every spatial frequency normal. The surface/volume connection itself is classical; texture--disparity frequency transport also predates both analyses \cite{chai2000}.

\subsection{Relation to earlier ambiguity analyses}
NeRF++ \cite{zhang2020} constructs a surface-based explanation of training images on an incorrect proxy and discusses the complexity of the required directional radiance. Ref-NeRF \cite{verbin2022} shows how emitters behind a semitransparent shell can imitate reflections, motivating structured appearance and normal penalties. These results motivate the density--appearance tradeoff studied here. Our exact pair is a complementary statement about a fixed continuum forward map: both members are physical finite-optical-depth volumes, color has degree at most two, and equality holds for every oriented external line. This statement concerns the observations themselves, independently of how a particular network is trained.

Wang--Miller's first-hit model \cite{wang2026} isolates a surface-location and angular-approximation problem. In the volume model, changing density also changes the weighting of emission elsewhere along the same ray. The distinction affects both the unknown and the data derivative; \cref{o:comparison} summarizes it. The layered limit below relates the conditional appearance-fitting losses under explicit assumptions. It does not identify the full volume-density inverse with surface recovery.

\begin{table*}[t]
\caption{Different inverse problems linked by constrained appearance fitting. Both works construct exact ambiguities within their stated observation and scene classes.}\label{o:comparison}
\centering\small
\begin{tabularx}{.98\textwidth}{@{}lYY@{}}\toprule
Property & Wang--Miller \cite{wang2026} & This work\\\midrule
Image formation & First-hit surface appearance & Distributed emission and exponential absorption\\
Geometric variable & Surface/proxy displacement & Shared three-dimensional volume density\\
Positive conclusion & Angular-capacity conditions separating true and misaligned surfaces & Sharp frozen density signal and local nonlinear stability\\
Exact ambiguity & Finite-SH billboard in a constructed viewing-circle model & Physical degree-two source compensation on every oriented external line\\
Quantitative object & Texture/parallax frequency and approximation error & Weighted great-circle residual; necessary and sufficient inverse coefficients\\
Empirical test & SH-capacity stress tests in 3DGS & Certified color fits and matched voxel optimization\\\bottomrule
\end{tabularx}
\end{table*}

Joint attenuation/source identification \cite{stefanov2014,holman2023} and fixed-attenuation tensor-source recovery \cite{psu2012,balmonard2019} supply other parts of the theoretical context. Our contribution is the finite-angular, shared-RGB volume problem and its quantitative same-reference results, using established transport, projection, and inversion tools. The nonlinear source gauge is handled separately from the sharp frozen density criterion. Likewise, color elimination \cite{fang2023} is an existing computational strategy; our discrete residual bounds certify compatibility tests for a declared density--appearance class rather than introduce a new color-fitting algorithm.

\begin{figure*}[t]
\centering\includegraphics[width=\textwidth]{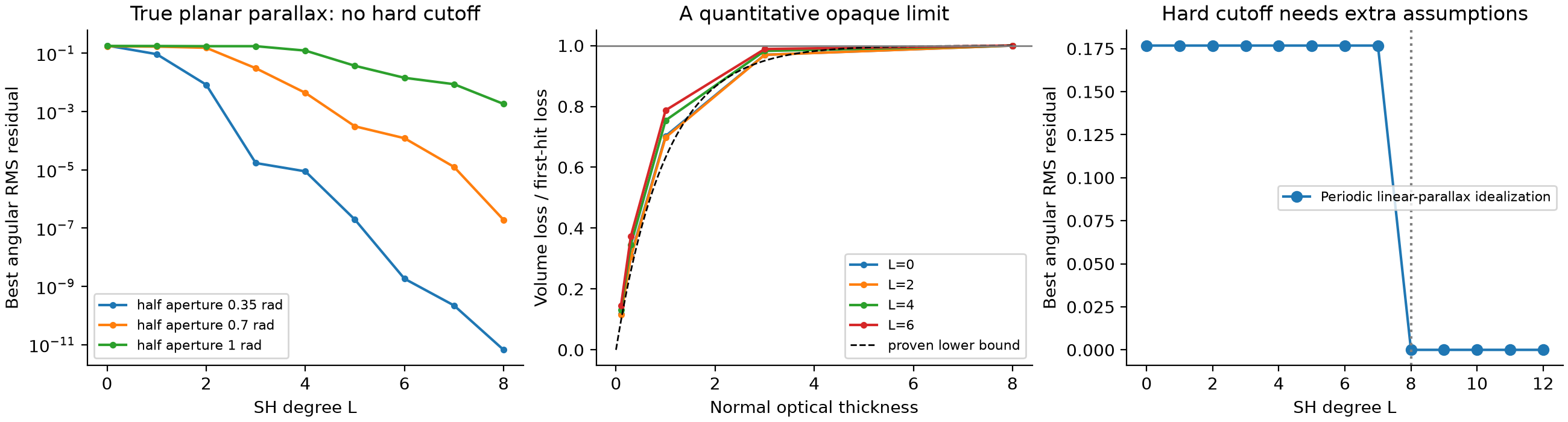}
\caption{Layered-model diagnostic. Left: exact planar parallax with $k\Delta=8$ has strongly aperture-dependent tails. Middle: the opacity-weighted best fit tends to the first-hit best fit within the proven sandwich. Right: a hard cutoff occurs in the separate periodic linear-parallax model. These are unconstrained angular projections, hence lower bounds for a globally physical SH fit; they are not measured $\kappa_L$ values or training outcomes.}\label{o:figure}
\end{figure*}

\subsection{A physical nonlinear slab identity}
Let the candidate point be $y=(u,0)$ and the true diffuse plane be $z=\Delta>0$. A viewing direction is $d=(v,\mu)$, where $\mu\geq\mu_0>0$. For a bounded texture $\rho$, the target on a ray through $y$ is
\begin{equation}
 F(y,d)=\rho(u+\Delta v/\mu).
\end{equation}
Put an isotropic slab of density $\tau/\epsilon$ in $0<z<\epsilon<\Delta$, with color $g(d)\in\cH_{\leq L}$. Its source is $j=\sigma g$; global bounds $0\leq g\leq1$ imply the physical source box. Since its ray optical thickness is $\tau/\mu$, direct integration gives
\begin{align}
 R_{\tau,g}&=\alpha_\tau g+(1-\alpha_\tau)F,
 &\alpha_\tau&=1-e^{-\tau/\mu},\label{o:identity}\\
 R_{\tau,g}-F&=\alpha_\tau(g-F).\nonumber
\end{align}
This is an exact nonlinear volume identity, for every positive finite thickness.

Fix a probability viewing measure $\nu$ and a nested angular class $G_L$. This can be the linear SH space, or its globally box-constrained subset. Define root mean square errors
\begin{align}
 D_L(F)&=\inf_{g\in G_L}\norm{g-F}_{L^2(\nu)},\\
 H_{\tau,L}(F)&=\inf_{g\in G_L}\norm{\alpha_\tau(g-F)}_{L^2(\nu)}.
\end{align}
\begin{proposition}[Quantitative first-hit limit]\label{o:limit}
For every $\tau>0$,
\begin{equation}\label{o:sandwich}
 (1-e^{-\tau})D_L(F)\leq H_{\tau,L}(F)\leq D_L(F).
\end{equation}
Thus $H_{\tau,L}$ converges to $D_L$ with relative root-error gap at most $e^{-\tau}$. It is nondecreasing in $\tau$ and nonincreasing in $L$. At the opposite limit, if
\(W_L(F)=\inf_{g\in G_L}\norm{(g-F)/\mu}_{L^2(\nu)},\)
then
\begin{equation}
 e^{-\tau/\mu_0}W_L(F)\leq H_{\tau,L}(F)/\tau\leq W_L(F).
\end{equation}
\end{proposition}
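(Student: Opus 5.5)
The plan is a pointwise comparison of the weight $\alpha_\tau(d)=1-e^{-\tau/\mu}$ with constants, pushed through the infimum over the common class $G_L$. Since $\mu\in[\mu_0,1]$ on the support of $\nu$ (a unit direction has $\mu\le1$), the exponent satisfies $\tau/\mu\ge\tau$. This gives $1-e^{-\tau}\le\alpha_\tau\le1$ pointwise. For each fixed $g\in G_L$ it follows that
\[
(1-e^{-\tau})\norm{g-F}_{L^2(\nu)}\le\norm{\alpha_\tau(g-F)}_{L^2(\nu)}\le\norm{g-F}_{L^2(\nu)}.
\]
Taking the infimum over the same $g$ on all three sides yields \eqref{o:sandwich}. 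No existence of minimizers is needed, and the argument applies equally to the linear SH space and to its box-constrained subset, because only the index set of the infimum matters.

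The corollaries then follow from monotonicity.
\begin{itemize}
\item Dividing \eqref{o:sandwich} by $D_L$ (when $D_L>0$; otherwise both sides vanish) gives $0\le 1-H_{\tau,L}/D_L\le e^{-\tau}$. This is the relative root-error gap.
\item Monotonicity in $\tau$ holds because $\alpha_\tau$ is pointwise nondecreasing in $\tau$. Hence each norm $\norm{\alpha_\tau(g-F)}$ is nondecreasing in $\tau$, and so is its infimum.
\item Monotonicity in $L$ holds because the classes $G_L$ are nested. An infimum over a larger set can only be smaller.
\end{itemize}

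For the thin limit, I will use the elementary bounds $x e^{-x}\le 1-e^{-x}\le x$ for $x\ge0$, with $x=\tau/\mu$. The lower one follows from the mean value theorem: $1-e^{-x}=xe^{-\xi}$ for some $\xi\in(0,x)$. Together with $x\le\tau/\mu_0$, these give pointwise
\[
\tau e^{-\tau/\mu_0}/\mu\le\alpha_\tau\le\tau/\mu.
\]
Multiplying by $|g-F|$, taking $L^2(\nu)$ norms, and then the infimum over $G_L$ proves $e^{-\tau/\mu_0}W_L\le H_{\tau,L}/\tau\le W_L$.

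There is no real obstacle. The one point needing care is the range of $\mu$. The lower constant $1-e^{-\tau}$ relies on $\mu\le1$, and the thin-limit constant relies on $\mu\ge\mu_0$, so I will state that $\nu$ is supported where $\mu_0\le\mu\le1$. I will also note that the identity \eqref{o:identity} is not needed inside the proof. It only motivates why $H_{\tau,L}$ is the exact best volume residual.
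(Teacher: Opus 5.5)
Your proposal is correct and follows the paper's proof: you push the pointwise bounds $1-e^{-\tau}\le\alpha_\tau\le1$ through the common infimum, use pointwise monotonicity in $\tau$ and nesting in $L$, and apply $e^{-x}\le(1-e^{-x})/x\le1$ with $x=\tau/\mu$. The paper obtains that last bound from $(1-e^{-x})/x=\int_0^1e^{-sx}\dd s$ rather than the mean value theorem. Your explicit note that $\nu$ is supported on $\mu_0\le\mu\le1$ makes precise the hypothesis the paper's constants rely on implicitly.
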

\begin{proof}
The uniform bounds $1-e^{-\tau}\leq\alpha_\tau\leq1$ give \eqref{o:sandwich} before taking infima. Pointwise monotonicity gives monotonicity in $\tau$, and nesting gives that in $L$. Finally,
\((1-e^{-x})/x=\int_0^1e^{-sx}\dd s\in[e^{-x},1]\)
with $x=\tau/\mu$ proves the small-opacity bounds. No exchange of a pointwise limit and an infimum is required.
\end{proof}
For the linear class, $H_{\tau,L}$ is a weighted orthogonal projection with weight $\alpha_\tau^2\nu$, and its opaque limit is the conditional first-hit projection. For the physical class it is a constrained projection. An unconstrained optimum is a lower bound, with equality only if a physical SH extension attains it. The $1/\mu$ in the thin-opacity limit is the isotropic slab's path-length factor.

\paragraph{Replacing the true surface by a volume.}
Put a second slab at $\Delta<z<\Delta+\eta$, with density $\beta/\eta$, color $\rho(x_\parallel)$, and a bounded background. If $\rho$ is $K$-Lipschitz, its volume target obeys
\begin{equation}
 \norm{F_{\eta,\beta}-F}_\infty\leq K\eta/\mu_0+e^{-\beta}.
\end{equation}
Indeed lateral travel within the slab is at most $\eta/\mu_0$, and the unabsorbed mass is at most $e^{-\beta}$. Distance to a fixed set is $1$-Lipschitz and multiplication by $\alpha_\tau$ is a contraction. Consequently
\begin{align}
 &|H_{\tau,L}(F_{\eta,\beta})-D_L(F)|\nonumber\\
 &\hspace{8mm}\leq K\eta/\mu_0+e^{-\beta}+e^{-\tau}D_L(F).
\end{align}
This is a physical layered emission--absorption limit. Extending it uniformly over a surface patch with spatially varying fitted $g(u,d)$ requires a common tangential Lipschitz bound $K_g$ and adds $K_g\epsilon/\mu_0$. Without that control, unconstrained sublayer spatial structure is not covered.

\subsection{Real planar parallax and an idealized hard cutoff}
For $d=(\sin\theta,0,\cos\theta)$ on $|\theta|\leq a<\pi/2$ and texture $\rho(u)=1/2+A\sin(ku+\phi)$,
\begin{equation}\label{o:planar}
 F(\theta)=\tfrac12+A\sin(\phi+ku+\omega\tan\theta),
 \qquad\omega=k\Delta.
\end{equation}
The instantaneous phase frequency is $\omega\sec^2\theta$. If $A\omega\ne0$ and the viewing measure has positive density on an open arc, the error to every finite trigonometric space is strictly positive. Otherwise closedness of that finite-dimensional space and real analyticity would extend exact equality to $(-\pi/2,\pi/2)$. A trigonometric polynomial has a finite limit at $\pi/2$, whereas \eqref{o:planar} oscillates indefinitely there, a contradiction. For a fixed nongrazing arc the error tends to zero as $L$ increases, but there is no exact finite-degree cutoff. With strict color margin, a continuous bounded extension to the sphere and uniform spherical-polynomial approximation also establish convergence for the physical class.

The linear-parallax approximation has the explicit error
\begin{equation}
 \norm{F-F_{\rm lin}}_\infty\leq A|\omega|(\tan a-a).
\end{equation}
Distance to the fitting space changes by at most this amount. This does not transfer full-circle Fourier orthogonality to a short viewing arc.

Under the \emph{additional} periodic idealization $F=1/2+A\sin(\phi+m\theta)$, integer $m>0$, and uniform full-circle measure, orthogonality gives
\begin{equation}
 D_L(F)=\begin{cases}A/\sqrt2,&L<m,\\0,&L\geq m.\end{cases}
\end{equation}
Both optima are physical: use the constant $1/2$ below the threshold, and above it the globally bounded extension
\begin{equation}
 g(d)=\tfrac12+A\operatorname{Im}\{e^{i\phi}(d_1+id_2)^m\}.
\end{equation}
This recovers the algebraic window in Wang--Miller's Appendix E \cite{wang2026}. A radial front layer viewed from its center has direction-independent optical thickness and multiplies this error by $1-e^{-\tau}$. It realizes the angular identity physically, but is not a plane with globally linear parallax. If an additional residual has norm at most $\epsilon_R$ and the correct surface's fitting error is at most $a_L$, the strict comparison
\begin{equation}
 a_L<(1-e^{-\tau})(A/\sqrt2-\epsilon_R)
\end{equation}
prefers the correct surface over the specified wrong layer. It is not a global uniqueness theorem among all volume fields.

\subsection{Why the full density constant has no such automatic limit}
Convergence of $T,U$ on a whole sphere, with a uniform positive lower bound on $T$, implies convergence of each weighted circle projection. Uniformity over normals then permits taking their minimum. A cap, a single informative circle, or a surface-patch loss does not supply these hypotheses.

Moreover, behind a spherical screen of optical depth at least $M$ in every direction, $T\leq e^{-M}$ and $0\leq U_k\leq1$. The trial polynomial zero gives
\begin{equation}\label{o:hidden}
 \kappa_L(x)\leq\sqrt3\,e^{-M}\longrightarrow0.
\end{equation}
An opaque surface can constrain surface position while hiding interior density. In a thin opaque layer $\sigma=\tau/\epsilon$ and its derivatives also leave any common bounded smooth class; moving the layer produces a singular shape derivative. Our finite-optical-depth volume inverse theorem therefore cannot be passed to this limit with unchanged norms and constants.

All $1{,}080$ aperture, phase, frequency, order and opacity combinations satisfied the sandwich. Refining $2{,}048$ to $4{,}096$ Gauss nodes changed root errors by at most $2.45\times10^{-13}$; independent slab integration matched \eqref{o:identity} within $7.98\times10^{-14}$. At $\tau=8$, the opaque-limit gap was at most $5.23\times10^{-5}$. Equivalent stable polynomial bases gave condition numbers at most $41.3$. These numerical checks complement the proof.
\endgroup

\section{Angular capacity, geometric curvature, and regularization}\label{app:capacity}
We apply classical variable projection \cite[\S2]{golubpereyra1972} to the geometry/appearance split. Its constant-rank nonlinear form is reviewed in \cite{golubpereyra2003}. The formulas below identify the role of angular capacity; they introduce neither a projection method nor a training algorithm.

\subsection{Appearance elimination at one fixed reference}
Write the weighted image derivative as $Ah+B_Lv$, where $h$ is geometry and $v$ appearance. Compare degrees at the same reference, geometry coordinates, observations and data weight. Interior physical margins permit unconstrained infinitesimal $v$; active color bounds instead require feasible tangent directions. Orthogonal elimination gives
\begin{equation}\label{cap:reduced}
 \mathcal Q_L(h)=\min_v\norm{Ah+B_Lv}_2^2
 =h^*G_Lh,\qquad G_L=A^*(I-P_L)A,
\end{equation}
where $P_L=B_LB_L^\dagger$. Nested appearance ranges immediately imply
\begin{align}
 0\preceq G_{L+1}\preceq G_L\preceq A^*A,\label{cap:psd}\\
 h^*(G_L-G_{L+1})h=\norm{(P_{L+1}-P_L)Ah}_2^2.\label{cap:loss}
\end{align}
Thus loss is strict only when added appearance image directions overlap the geometric signal. The smallest generalized eigenvalue in a chosen geometry norm is a finite-model sensitivity, not automatically $\kappa_L^2$.

$G_L$ is the geometric Schur complement of the joint Gauss--Newton matrix. With locally constant appearance rank, it is also the quadratic term of the profiled nonlinear objective at zero residual. Nonzero residual adds higher-derivative terms. None of these identities compares distinct trained endpoints.

\subsection{Quadratic penalties and training implications}
For a linear appearance penalty $D$ and $\lambda\geq0$, the same least-squares elimination applied to the augmented matrix $[B_L;\sqrt\lambda D]$ yields
\begin{align}
 \mathcal Q_{L,\lambda}(h)
 &=\min_v\{\norm{Ah+B_Lv}_2^2+\lambda\norm{Dv}_2^2\},\label{cap:penalty}\\
 G_{L,\lambda}
 &=A^*A-A^*B_LH_\lambda^\dagger B_L^*A,\label{cap:schur}\\
 H_\lambda&=B_L^*B_L+\lambda D^*D.\nonumber
\end{align}
This remains valid at deficient rank because $\ker H_\lambda\subseteq\ker B_L$. Comparing the minimands and using $v=0$ gives
\begin{equation}\label{cap:regmonotone}
 G_{L,0}\preceq G_{L,\lambda_1}\preceq G_{L,\lambda_2}\preceq A^*A
 \quad(0\leq\lambda_1\leq\lambda_2).
\end{equation}
A penalty increases curvature of the penalized objective, not image information. For absolute coefficient penalties the update must also keep the linear term at the reference. The rotation-invariant spherical-gradient penalty has weights $\ell(\ell+1)$, but it suppresses genuine directional appearance as well as compensation; it need not select exactly the harmful directions in \eqref{cap:loss}.

An early low-degree stage or small appearance learning rate delays access to compensation. Releasing those directions later leaves the final ambiguity class unchanged unless an additional constraint remains. A local probe must therefore compare the profiled $\mathcal Q_L(h)$, rather than raw $\norm{Ah}$, for the same geometry changes. The packet tests in \cref{app:packetvalidation} evaluate a prescribed source subspace, not an unrestricted online diagnostic. Independent depth/transmission evidence or geometric priors can distinguish or select densities; an RGB-only loss cannot distinguish an exact pair.

Progressive SH activation and separate DC/higher-order learning rates are established in the official 3DGS implementation \cite{gsimplementation}. These derivative formulas apply to its differentiable fixed-order regime, but do not transfer the continuum physical-volume theorems to sorted splatting. Directional MLPs require their actual parameter tangent space. The interventions in \cref{app:trainedcompensation} demonstrate learned compensation, while the retained curriculum test failed its accuracy threshold (\cref{app:curriculum}); no reconstruction improvement is established here.

\section{Controlled prediction and appearance refitting}\label{app:packetvalidation}
This experiment tests a quantitative prediction of the sharpness construction, separately from trained-field endpoint interventions. It uses the quadratic reference throughout. The source refit is deliberately a prescribed spatial subspace; it does not minimize over arbitrary voxel source coefficients. A small residual in this subspace supplies a feasible compensation direction for a larger class, whereas a positive residual is not a lower bound for that larger class.

\subsection{Specified packets, source coordinates, and directional prediction}
For $|y|<1$ let $\phi(y)=\exp[1-(1-|y|^2)^{-1}]$, extended by zero. Set
\begin{equation}\label{pack:fields}
 h(x)=\phi(x/r)\cos(kn\cdot x/r),\qquad
 \sigma_\eps=1+\eps h,\qquad j_{\eps,k}=j_{*,k}+\eps h\,p_{k,L}(d).
\end{equation}
The two normals are $(1,1,1)/\sqrt3$ and $(1,0,0)$; $k\in\{12,24\}$, $r\in\{.02,.1,.3\}$, and $L=0,1,2,3$. Every combination is retained. The source polynomials use a real SH basis normalized in $L^2(\Sph^2,\mathrm dS/(4\pi))$. Since density stays positive, the resulting color $j_\eps/\sigma_\eps$ also has degree at most $L$. The no-refit comparator fixes the \emph{source}; it is not the fixed-color comparator in \cref{app:calibration}.

Let $B=\tfrac12(2-5/e)$. At the center, $T_0=e^{-1}$ and $U_{0,k}=\tfrac14(1-e^{-1})+Bd_k^2$. Odd circle modes cannot fit this even quadratic. Using $\operatorname{Var}_n(d_k^2)=(1-n_k^2)^2/8$ gives
\begin{equation}\label{pack:directional}
 \sqrt{E_L(0,n)}=\frac{B}{e}\sqrt{\frac{1+\sum_kn_k^4}{8}},\quad L=0,1;
 \qquad E_L(0,n)=0,\quad L\geq2.
\end{equation}
This is a directional gain; only the weak normal attains $\kappa_1$. With $q=h p_L(d)$, finite packets approximate the angular lifting used in the sharpness proof. Finite spatial width, finite frequency, and quadrature can each produce a discrepancy from \eqref{pack:directional}.

\subsection{A differentiable finite renderer and two separate norms}
For each ray, the segment inside $B_r$ is split uniformly. Density and the packet correction are midpoint constants on each segment; the quadratic reference source is integrated against its segment exponential using analytic moments through degree two. Unchanged near and far parts are analytic. The unperturbed reference is consequently exact to floating-point error independently of the segment count, rather than a different midpoint reference for each resolution. The density Jacobian is obtained by differentiating the same compositor analytically. An independent dummy finite-difference check agrees within $4.45\cdot10^{-13}$; exact-reference checks differ by less than $10^{-13}$.

The primary discretization uses $24$ Gauss nodes in $\mu=n\cdot d$ and $12$ uniform azimuths, a $32^2$ cell-centered offset plane on $[-r,r]^2$, $48$ internal segments, and $56^3$ Gauss points for $\norm h_2$. Rays missing $B_r$ have zero changes. A prescribed audit repeats both $k=24,r=.02$ cases at $36\times16$ directions, $48^2$ offsets, $72$ segments, and $80^3$ density quadrature. Neither radii nor samples were selected after seeing the outcomes.

On zero-padded ray changes, the discrete $Y$ energy is
\begin{equation}\label{pack:Y}
 \sum_d w_d\,\frac{\Delta p^2}{4\pi^2}
 \sum_{\xi}|\xi|\,|\operatorname{FFT}_{\rm ortho}g_d(\xi)|^2,
 \qquad \xi=2\pi\operatorname{fftfreq}(N,\Delta p).
\end{equation}
The three channel energies are summed. Ordinary unweighted X-ray transforms of the same packets give a maximum relative norm-calibration error $3.63\cdot10^{-5}$. Independently, pixel RMS averages over the prescribed foreground offsets and RGB channels, with solid-angle weights. Each response is divided by the same density $L^2$ norm. Pixel RMS depends on the stated camera/foreground normalization; it is never substituted for $Y$ in a continuum stability assertion.

\subsection{Profiled linear and nonlinear predictions}
For fixed density, source refitting is linear. If $A$ is the density derivative and $B_{\eps,L}$ maps packet source coefficients to observations at $\sigma_\eps$, we solve
\begin{align}
 g_{N,L}(h)&=\min_a\norm{W(Ah+B_{0,L}a)}_2/\norm h_2,\label{pack:linear}\\
 g_{N,L}^{\rm nl}(\eps)&=\min_a\norm{W\{[R(\sigma_\eps,j_*)-R_*]/\eps+B_{\eps,L}a\}}_2/\norm h_2.
 \label{pack:nonlinear}
\end{align}
Here $W$ is either of the two data weights just specified; all least squares are unregularized. There is no ridge-generated positive curvature. The source response separates into a scalar image per direction times an SH polynomial, permitting a reduced dense SVD. The final residual is evaluated explicitly in the full weighted image space, avoiding subtraction of nearly equal Gram energies.

\paragraph{Why a local profile can converge.}
This is a finite-dimensional variable-projection calculation \cite{golubpereyra2003}, not a new projection theorem. At a positive density, the finite compositor and its source matrix are analytic in $\eps$. If $WB_{0,L}$ has full column rank, continuity keeps its Gram matrix invertible for small $\eps$. The least-squares coefficients and residual in \eqref{pack:nonlinear} then converge to those in \eqref{pack:linear}. If the branch also has a strict physical margin, it solves the physically constrained problem \emph{within this coefficient family}. Rank loss, active physical bounds, or an enlarged spatial source family require a separate argument. This establishes the conditions for the comparison; numerical singular values below are diagnostics, not interval proofs of rank.

In all recorded fits the singular-value ratio is at least $.01050$, with full column rank. Relative rank thresholds $10^{-10},10^{-12},10^{-14}$ produce the same ranks and gains; maximum normalized normal-equation residual is $5.30\cdot10^{-15}$. A sufficient uniform source bound is
\begin{equation}
 Q_k=|a_{k,00}|+\sum_{\ell=1}^L\sqrt{2\ell+1}\norm{a_{k,\ell,\cdot}}_2,
 \qquad \frac14-|\eps|(1+\max_kQ_k)>0.
\end{equation}
It follows from $|h|\leq1$, $j_*\in[1/4,3/4]$, and the SH addition theorem. The smallest computed sufficient margin is $.1461$ across all amplitudes and both metrics. Thus the unconstrained fits are physically feasible without clipping. These finite tests do not claim the rescaled smoothness caps of the radius-$.01$ nonlinear sufficient theorem.

\subsection{Results, sampling audit, and departures from the limit}
\begin{table}[h]
\centering\small
\caption{Actual-renderer $Y$ gains after SH1 source refitting. The circle gains are $.01206016$ (weak normal) and $.01477062$ (axis). All primary cases are shown; these are numerical values, not interval certificates.}\label{pack:table}
\begin{tabular}{@{}llrrr@{}}\toprule
Normal & Frequency & $r=.02$ & $r=.1$ & $r=.3$\\\midrule
Weak & $12$ & $.01287599$ & $.01301846$ & $.01419802$\\
Weak & $24$ & $.01230651$ & $.01246757$ & $.01378320$\\
Axis & $12$ & $.01449925$ & $.01457301$ & $.01523875$\\
Axis & $24$ & $.01468110$ & $.01474239$ & $.01531347$\\\bottomrule
\end{tabular}
\end{table}

At $r=.02,k=24$, frozen finite-packet gains differ from their directional limits by at most $1.99\%$; adding true spatial transport changes them by at most $.055\%$. Higher-frequency packets improve directional agreement at small support. At $r=.3$, the weak-normal discrepancy reaches $14.29\%$ for $k=24$ and $17.73\%$ for $k=12$, so locality remains material. These larger-radius results were retained rather than interpreted as failures of a small-support theorem.

Across both norms and every case, the maximum relative nonlinear/linear difference is $.02735\%$, $.006052\%$, and $.001464\%$ at amplitudes $.08,.02,.005$, respectively. The sampling audit changes $Y$ gains by at most $.001916\%$ and pixel-RMS gains by at most $.6364\%$; the different foreground grids slightly change the latter normalization. The predeclared calibration, sampling, directional agreement, small-support transport, small-amplitude prediction, and rank-sensitivity gates all pass. Full arrays, coefficients, singular values, and tolerances are retained.

SH0 and SH1 have equal gains in this symmetric packet family, as do SH2 and SH3. The frozen SH2/SH3 residual is at floating-point roundoff; the actual-renderer SH2 gains are nonzero. For example, at $k=24$ the weak-normal gain grows from $.0004751$ at $r=.02$ to $.007198$ at $r=.3$. It would be incorrect to turn a zero frozen gain into a claim of exact cancellation by this prescribed source envelope.

\paragraph{What has and has not been checked.}
This is a consistency experiment connecting directional prediction, finite rendering, and a local physical refit branch. The finite Jacobian comparison uses the same compositor; the circle prediction comes independently from \eqref{pack:directional}, and sampling is separately refined. The reference is quadratic, and the appearance space is restricted. \Cref{app:weightedpacket} next tests nonradial packets with direction-dependent transport. Unrestricted spatial source compensation, broad structural perturbations, truly directional ground truth, and optimized training controls remain future tests. The existing nonradial instability certificates and training interventions address different questions and are not substitutes for them.

\begin{figure}[h]
\centering\includegraphics[width=.53\textwidth]{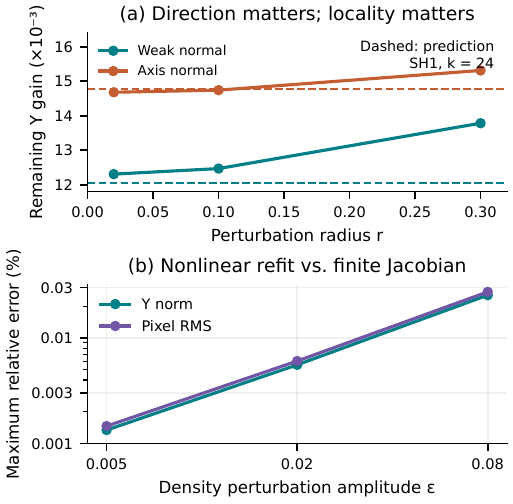}
\caption{The retained constant-transmittance packet study. Directional and locality effects are separated from finite-Jacobian/nonlinear agreement. The sampling-refinement audit covers only the two $k=24,r=.02$ cases, not the entire grid.}\label{pack:constantfigure}
\end{figure}

\subsection{Nonradial transport-weight ablation}\label{app:weightedpacket}
We next use exactly the nonradial density and isotropic sinusoidal color of \cref{app:nonradialnumerics}, without changing the scene to favor a predictor. The point $x_0=(.2,-.1,.1)$ was already specified there. The normals, fixed before the new responses, are
\[
 n_p=(.36003286697,.41661247720,.83475168676),\qquad n_x=(1,0,0),\qquad n_z=(0,0,1),
\]
with $n_p$ normalized in the calculation. It reuses the previous SH1 witness at the origin; it is not reoptimized at $x_0$ and is not asserted to minimize the new circle energy. We keep $k=24$, $r=.02,.08$, $L=0,1,2,3$ and amplitudes $.08,.02,.005$. Thus there are six primary cases and two sampling audits. Source changes remain $q=h\,p_L(d)$; the scene color is isotropic before these changes.

\paragraph{Competing predictions.}
The weighted prediction $g_w=\sqrt{E_L(x_0,n)}$ is compared with
\begin{align}
 g_{TU}&=\min_{p\in\cV_L}\avg{|T_0U_0-p|^2}_n^{1/2},\label{nwp:wrong}\\
 g_{\rm rms}&=\avg{T_0^2}_n^{1/2}\min_{p\in\cV_L}\avg{|U_0-p|^2}_n^{1/2}.
\end{align}
RGB energies are summed. The first baseline substitutes $\cV_L$ for $T_0\cV_L$ and therefore changes the admissible compensation. The second replaces direction-dependent attenuation by an RMS weight; it may be a good approximation without being the exact criterion. Constant $T_0$ makes all three predictions equal. Circle quadrature uses $256$ angles and $64$ source Gauss nodes; a $512$-angle, $128$-node calculation audits every normal and degree.

\begin{table}[h]
\centering\small
\caption{Transport variation at the fixed point. All three normals were prescribed before evaluating new responses.}\label{nwp:transport}
\begin{tabular}{@{}lrrrr@{}}\toprule
Normal & $T_{\min}$ & $T_{\max}$ & $T_{\max}/T_{\min}$ & $\operatorname{std}(T)/\operatorname{mean}(T)$\\\midrule
$n_p$ & 0.21377 & 0.39626 & 1.854 & 0.210\\
$n_x$ & 0.21938 & 0.35897 & 1.636 & 0.162\\
$n_z$ & 0.21313 & 0.39464 & 1.852 & 0.189\\
\bottomrule\end{tabular}
\end{table}

\paragraph{Renderer and fit.}
Gaussian density gives analytic optical-depth integrals in terms of $\operatorname{erf}$. The reference source is $j_*=\sigma_*c_*$. Within the changed ball the packet density/source is midpoint constant on each segment. The unperturbed source is integrated with Gaussian quadrature using its actual background attenuation. If $H(t)$ is the cumulative midpoint packet integral, changed attenuation is $T_*(t)e^{-\eps H(t)}$; the differentiated image uses $-T_*j_*H$. The far-side contribution retains its factor $e^{-\eps H_{\rm full}}$, while unchanged near-side radiance cancels. The nonlinear difference uses $\operatorname{expm1}(-\eps H)/\eps$ to avoid cancellation error.

Primary sampling uses $24\times16$ directions, $32^2$ offsets, $48$ segments, two inner Gauss nodes per segment, $48$ exterior source nodes, and $56^3$ density quadrature. Only $(n_p,r=.02)$ and $(n_x,r=.02)$ are repeated at $36\times24$ directions, $48^2$ offsets, $72$ segments, three inner nodes, $80$ exterior nodes, and $80^3$ density quadrature. These audit values are not substituted for primary measurements. Relative prediction error means $|g_{\rm pred}-g_{\rm measured}|/g_{\rm measured}$ throughout this comparison.

\paragraph{Physical feasibility.}
The reference obeys $\sigma_*\geq1$ and $.28\leq c_{*,k}\leq.72$. Since $|h|\leq1$, the addition-theorem bound $Q_k$ used above gives the sufficient all-space, all-direction margin $.28-|\eps|(1+\max_kQ_k)$. Every reported unconstrained fit has positive margin, so no clipping or ridge creates the result. The all-angle inequalities have an analytic form; their numerical evaluations here are not interval certificates.

\begin{table}[h]
\centering\small
\caption{Every primary small-support case, $r=.02$, in the $Y$ norm. ``Measured'' is the actual finite-renderer derivative after refitting the prescribed source envelope. Errors are for $g_w$. No normal or degree is selected from these outcomes.}\label{nwp:all}
\begin{tabular}{@{}lrrrrrr@{}}\toprule
Normal & $L$ & $g_w$ & $g_{TU}$ & $g_{\rm rms}$ & Measured & Error (\%)\\\midrule
$n_p$ & 0 & 0.038129 & 0.067975 & 0.039571 & 0.038116 & 0.03\\
$n_p$ & 1 & 0.013190 & 0.015472 & 0.012909 & 0.013504 & 2.33\\
$n_p$ & 2 & 0.005798 & 0.008147 & 0.005651 & 0.006445 & 10.04\\
$n_p$ & 3 & 0.001815 & 0.003949 & 0.001810 & 0.002630 & 31.00\\
$n_x$ & 0 & 0.035760 & 0.057131 & 0.035763 & 0.035874 & 0.32\\
$n_x$ & 1 & 0.013359 & 0.012648 & 0.013053 & 0.013576 & 1.60\\
$n_x$ & 2 & 0.009741 & 0.009945 & 0.009601 & 0.009848 & 1.08\\
$n_x$ & 3 & 0.002541 & 0.002950 & 0.002520 & 0.002963 & 14.25\\
$n_z$ & 0 & 0.034740 & 0.061432 & 0.035247 & 0.035098 & 1.02\\
$n_z$ & 1 & 0.014259 & 0.016128 & 0.014039 & 0.014450 & 1.32\\
$n_z$ & 2 & 0.007461 & 0.009058 & 0.007203 & 0.007715 & 3.30\\
$n_z$ & 3 & 0.003054 & 0.004543 & 0.003032 & 0.003546 & 13.86\\
\bottomrule\end{tabular}
\end{table}

\paragraph{Outcome and validity checks.}
For the primary SH1 comparison at $r=.02$, mean weighted-prediction error is $1.750\%$, and the median ratio of $g_{TU}$ error to $g_w$ error is $6.26$. All predeclared gates pass. The circle-refinement change is $5.77\times10^{-15}$ in relative units; the two-case sampling audit changes gains by at most $0.1206\%$ in $Y$ and $0.7580\%$ in pixel RMS. The X-ray norm-calibration error is $3.14\times10^{-6}$ in relative units. At $\eps=.005$, the finite Jacobian/nonlinear discrepancy is at most $1.4\times10^{-5}\%$ over both metrics, all degrees and all cases. The least physical margin is $0.1488$, the smallest singular-value ratio is $0.00893$, and the three rank thresholds $10^{-10},10^{-12},10^{-14}$ agree. Independent pivoted QR rechecks 128 saved linear/small-amplitude fits, with maximal relative SVD discrepancy $3.55\times10^{-15}$. These are numerical validity checks, not rigorous floating-point enclosures.

\paragraph{Limits and unfavorable comparisons.}
The $g_{\rm rms}$ approximation can be closer to a finite packet than the exact directional asymptote. Even at $r=.02$, the $n_x$ SH3 case has $14.25\%$ weighted-prediction error but only $.43\%$ error for $g_{TU}$. At $r=.08$, the SH3 weighted errors range from $59.68\%$ to $72.41\%$. Higher-order residuals are small enough that spatial transport variation and finite angular localization become large \emph{relative} errors. The saved frozen-operator comparison separates these sources: at $r=.02$, the three SH3 actual/frozen discrepancies are $19.39\%$, $12.89\%$, and $12.71\%$, relative to the actual gain. In the prior-normal case, circle/frozen disagreement is also $14.39\%$, relative to the frozen gain. These discrepancies exceed the sampling audit, so numerical refinement alone does not explain them. All six primary cases, every degree, and both data norms are retained; a favorable SH1 result does not establish uniform accuracy at higher degree. Nor does a positive residual in this restricted source space lower-bound the optimum in a richer spatial appearance class. These results test nonconstant transport weighting at one scene and point, not a learned-geometry predictor, a successful training schedule, or positivity over all normals.

\begin{figure}[t]
\centering\includegraphics[width=.72\linewidth]{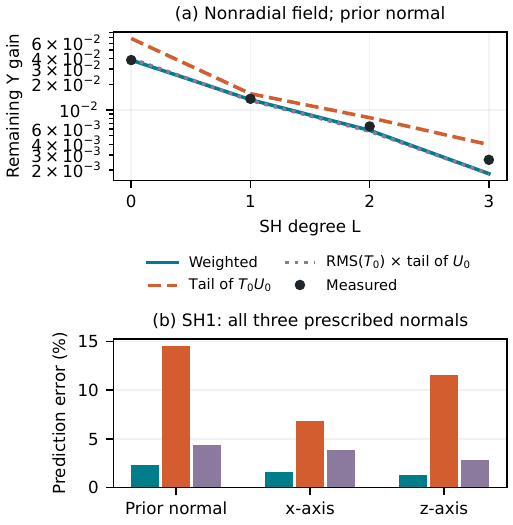}
\caption{\textbf{Transport weighting changes the prediction.} Nonradial packets at $r=.02,k=24$. (a) The first prescribed normal, with all degrees retained. (b) SH1 relative errors against the actual renderer at all three normals. Colors match (a). The comparison uses $Y$; the refinement audit covers only the prior-normal and $x$-axis cases.}\label{m:packetvalidation}
\end{figure}

\section{A transmission-sensitive control for the certified pair}\label{app:backgroundcontrol}
Two-background triangulation matting eliminates foreground appearance by differencing registered composites \cite{smithblinn1996}. The same identity isolates transmission here. Keep each candidate's effective source unchanged when the prescribed background switches from $\beta=0$ to $\beta=1$. Then
\[
 R_\beta=R_0+\beta T_{\rm full},\qquad R_1-R_0=T_{\rm full}.
\]
The invariance assumption is explicit: changing a physical background in a scattering experiment may change the source and invalidate this subtraction. The control applies to the specified emission--absorption model with fixed effective sources.

The twofold pair has identical $R_0$ on every ray but different transmission. On a central ray,
\[
 T_*=e^{-2},\quad T_2=e^{-2-\delta},\quad
 \delta=\int_{-1/2}^{1/2}\psi(4t^2)\dd t.
\]
For $|t|\leq1/4$, $\psi(4t^2)\geq e^{-1/3}\geq2/3$, so $\delta\geq1/3$. Also $e^2<8$ and $e^{1/3}\geq4/3$, giving the strictly positive, completely analytic bound
\begin{equation}\label{bg:gap}
 T_*-T_2=e^{-2}(1-e^{-\delta})>\frac18\left(1-\frac34\right)=\frac1{32}.
\end{equation}
For example, $e<11/4$ follows by bounding its factorial tail geometrically, which proves the inequality used above. No approximate Volterra source is required for \eqref{bg:gap}.

Suppose each of the two background measurements has absolute channel error at most $b$. Their difference has error at most $2b$. The transmission hypotheses have disjoint error intervals whenever $4b<1/32$, that is, $b<1/128$. Thus one central transmission-sensitive pair of ray measurements distinguishes these two candidates at a one-level-per-measurement budget $b=1/255$, even though any number of ordinary black-background views cannot separate them. This is pair distinguishability, not recovery of an arbitrary density from one ray. The following bound extends the control to a specified class of finite footprints.

\begin{proposition}[Finite-footprint separation]\label{bg:footprint}
For the certified twofold pair, let $\mu$ be any nonnegative probability measure on external oriented rays supported entirely on impact parameters $\rho\leq1/4$. Use the same $\mu$ in the black- and white-background measurements, and keep each candidate's effective source unchanged. The footprint-averaged transmissions satisfy
\begin{equation}\label{bg:footprintgap}
 \overline T_*-\overline T_2>\frac1{56}.
\end{equation}
If each averaged measurement has absolute channel error at most $b$, the two transmission hypotheses have disjoint error intervals whenever $b<1/224$. In particular, $b=1/255$ is covered.
\end{proposition}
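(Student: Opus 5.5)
The plan is to bound the transmission gap ray by ray with an explicit analytic constant, and then average over the footprint. Averaging is linear and $\mu$ is a probability measure, so a uniform pointwise lower bound transfers to $\overline T_*-\overline T_2$ unchanged. No Volterra source enters. The background-switch identity $R_1-R_0=T_{\rm full}$ holds for each candidate because its effective source is held fixed. Transmission does not depend on channel or on ray orientation, so one scalar estimate covers every channel and both orientations.

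\textbf{Per-ray gap.} Fix an oriented ray with impact parameter $\rho\leq1/4$, and let $\ell=\sqrt{1-\rho^2}\leq1$. The reference has $\sigma_*=1$, so $T_*=e^{-2\ell}\geq e^{-2}$. The alternative adds the optical depth
\[
 \delta(\rho)=\int\psi\big(4(\rho^2+t^2)\big)\dd t,
\]
so $T_2=T_*e^{-\delta(\rho)}$ and
\[
 T_*-T_2=e^{-2\ell}\big(1-e^{-\delta(\rho)}\big)\geq e^{-2}\big(1-e^{-\delta(\rho)}\big).
\]
To bound $\delta(\rho)$ from below, restrict to $|t|\leq1/4$. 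There $4(\rho^2+t^2)\leq1/2$, and $\psi$ is decreasing, so the integrand is at least $\psi(1/2)=e^{-1}$. Hence $\delta(\rho)\geq e^{-1}/2$.

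\textbf{Rational arithmetic.} The bound $e<11/4$, already used for \eqref{bg:gap}, gives $e^{-1}>4/11$. This yields $\delta>2/11$ and $e^{-2}>16/121$. The inequality $1-e^{-x}\geq x-x^2/2$ for $x\geq0$ then gives $1-e^{-\delta}\geq20/121$. Together,
\[
 T_*-T_2>\frac{320}{14641}>\frac{1}{56}
\]
uniformly over every admissible ray, since $320\cdot56=17920>14641$.

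\textbf{Averaging and error intervals.} Integrating the pointwise inequality against $\mu$ proves \eqref{bg:footprintgap}. The inequality stays strict because the pointwise bound $320/14641$ itself exceeds $1/56$.

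For the error statement, each footprint-averaged difference of white- and black-background readings has error at most $2b$. The two transmission hypotheses therefore yield intervals of half-width $2b$ around centers separated by more than $1/56$. These intervals are disjoint when $4b<1/56$, that is, when $b<1/224$. Since $1/255<1/224$, the one-level reading bound is covered.

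\textbf{Main obstacle.} The only delicate step is choosing the sub-interval in $t$ so that the $\psi$ lower bound and the $e^{-2}$ factor multiply to beat $1/56$ with rational constants. If $|t|\leq1/4$ had been too weak, the fallback would have been to integrate $\psi$ over a finer monotone partition of the chord. With $|t|\leq1/4$ the margin is about $22\%$, so this is not needed.
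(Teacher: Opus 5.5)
Your proof is correct and follows the paper's argument. You restrict to $|t|\le 1/4$ to get $\delta(\rho)\ge e^{-1}/2$, combine this with $T_*\ge e^{-2}$ for a uniform per-ray gap, average over the probability measure $\mu$, and compare the interval radius $2b$ with the gap, exactly as the paper does; the only difference is the rational bookkeeping (your $e<11/4$ with $1-e^{-x}\ge x-x^2/2$ versus the paper's $e^{-1}>1/3$, $e^2<8$, $e^{1/6}>7/6$), which is immaterial.
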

\begin{proof}
Write a ray as $p+td$, with $p\perp d$ and $\rho=|p|$. For $\rho\leq1/4$ and $|t|\leq1/4$, the entire segment lies in the bump support and
\[
 4(\rho^2+t^2)\leq\frac12,\qquad
 \psi\bigl(4(\rho^2+t^2)\bigr)\geq e^{-1}>\frac13.
\]
Thus the additional optical depth and reference transmission obey
\begin{align}
 \delta(\rho)&\geq\tfrac12e^{-1}>\tfrac16,\label{bg:footprintdepth}\\
 T_*(\rho)&=e^{-2\sqrt{1-\rho^2}}\geq e^{-2}>\tfrac18.
\end{align}
Using $e^{1/6}>7/6$ gives the uniform lower bound
\begin{align}
 T_*(\rho)-T_2(\rho)
 &\geq e^{-2}\bigl(1-e^{-1/(2e)}\bigr)\notag\\
 &>\tfrac18(1-\tfrac67)=\tfrac1{56}.
\end{align}
Because $\mu\geq0$ and $\int\mathrm d\mu=1$, averaging preserves this uniform bound. The identical footprint in both measurements also preserves the subtraction identity:
\[
 \overline R_1-\overline R_0=\int T_{\rm full}\,\mathrm d\mu
 =\overline T.
\]
Two errors bounded by $b$ give a difference error at most $2b$. The two intervals, each of radius $2b$, are disjoint if $4b<1/56$, equivalently $b<1/224$. Since $224<255$, the one-level budget is strictly covered.
\end{proof}

The footprint may include rays from an exterior perspective camera; only its impact-parameter support and common weights enter the proof. Negative reconstruction weights, unmatched background footprints, or a background-dependent effective source require a different bound. This remains a statement about distinguishing the specified pair, not a general density-reconstruction guarantee.

The saved numerical geometry diagnostic gives a central transmission gap $.0613175$ (about $15.64$ linear levels) and full-disk RMS gap $.0154884$ (about $3.95$ levels). These numerical values illustrate the exact positive bound; they do not certify a real acquisition setup or a noise distribution. This control is post hoc, chosen to illustrate the known opacity difference, and is reported separately from the preregistered packet experiment.

\section{Fixed-reference positive-approximant demonstration}\label{app:positiveR}
\begin{figure*}[t]
\centering\includegraphics[width=\textwidth]{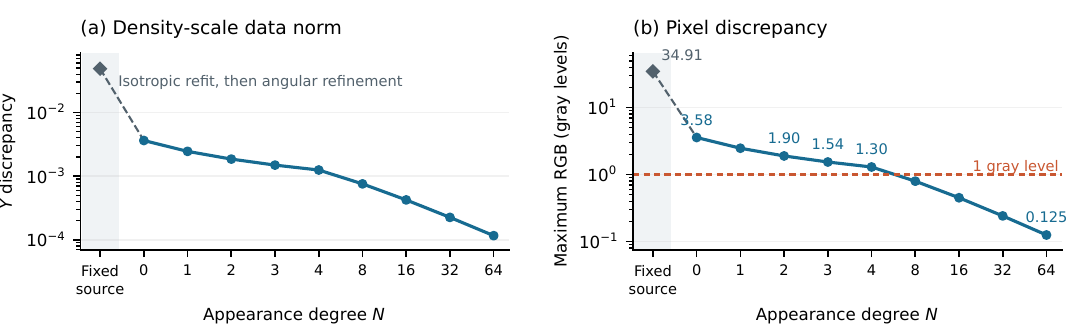}
\includegraphics[width=.77\textwidth]{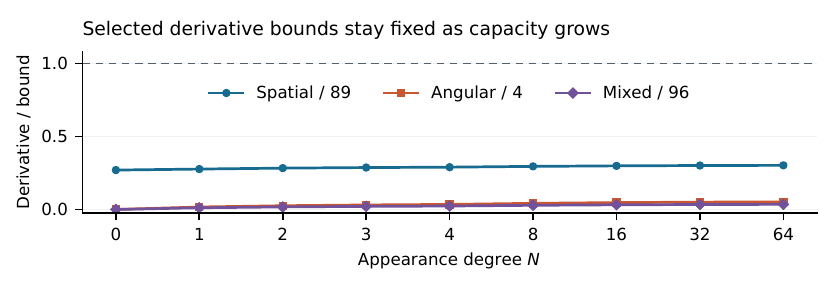}
\caption{\textbf{Two stages of compensation at one fixed density error.} Top: unchanged source (gray), isotropic compensation ($N=0$), and angular refinement, including $N=3$. Left shows $Y$ discrepancy; right shows numerical maximum RGB in linear gray levels. Bottom: sampled spatial, angular and mixed derivatives divided by their analytic all-degree bounds. All panels use the same affine reference and density addition (\cref{pos:results}).}\label{m:positiveR}
\end{figure*}
This experiment evaluates the positive approximants in \cref{cap:R} at one reference and one fixed density perturbation, including degree $3$ and an unchanged-source baseline. Each arm uses a fixed evaluation protocol. The affine reference illustrates the approximation mechanism; it is separate from the generic reference theorem.

\subsection{Reference, fixed perturbation and physical range}
In $\Omega=B_1$ use $\sigma=1$ and $j_k(x)=a_k+.15x_k$, with $a=(.35,.5,.65)$. Fix center $c=(.05,-.03,.02)$, semiaxes $b=(.3,.2,.15)$, and
\begin{equation}\label{pos:bump}
 q(x)=\sum_i((x_i-c_i)/b_i)^2,\qquad
 h(x)=\begin{cases}4e^{-q/(1-q)},&q<1,\\0,&q\geq1.\end{cases}
\end{equation}
Its support lies strictly inside $B_{.4}$. At its center density rises from 1 to 5. Independent radial quadratures of orders 128 and 256 give $\norm h_2=.31977429124949$, differing relatively by $3.30\times10^{-15}$. Neither density nor reference changes with $N$.

The reference backlight is analytic. Writing $\ell=-x\cdot d+\sqrt{(x\cdot d)^2+1-|x|^2}$,
\begin{equation}\label{pos:U}
 U_k=(a_k+.15x_k)(1-e^{-\ell})+.15d_k[1-(1+\ell)e^{-\ell}].
\end{equation}
We compute $P_N=\cV_NU$ and $j_N=j+hP_N$ at $N=0,1,2,3,4,8,16,32,64$, with no optimization, clipping or regularization. Globally $.2\leq j_k\leq.8$; positivity gives $j_{N,k}\geq.2$ and $(1+h)-j_{N,k}\geq.2$ for every $N$. These physical margins are analytic, not inferred from sampling.

This reference is outside G on $K=B_{.4}$: at $x=0$, \eqref{pos:U} becomes $a_k(1-e^{-1})+.15(1-2e^{-1})d_k$, of angular degree one, so $\kappa_1(0)=0$. The experiment illustrates finite-amplitude hiding from \cref{cap:R}, which does not require genericity; the same-reference coexistence statement is established analytically by \cref{m:coexistence}.

\subsection{Evaluating the positive operator}
The Funk--Hecke multiplier of \eqref{m:positiveoperator} at harmonic degree $\ell$ is
\begin{equation}\label{pos:multiplier}
 m_{N,0}=1,\qquad m_{N,\ell}=\prod_{r=0}^{\ell-1}\frac{N-r}{N+r+2}\quad(\ell\leq N),
 \qquad m_{N,\ell}=0\quad(\ell>N).
\end{equation}
All modes through $N$ are retained with these exact multipliers. Smooth radial Legendre coefficients use Gaussian quadrature and cubic interpolation. This evaluates the same positive operator; it is not ordinary SH truncation or an angular Taylor polynomial. Direct positive-kernel integration at 64 independently specified point-direction pairs differs from harmonic evaluation by at most $4.70\times10^{-14}$.

The nonlinear residual is evaluated as $\int T_{1+h}h(P_N-U)\dd t$, with changed attenuation integrated along the ellipsoid chord. Independent full-emission ODE integration on eight rays agrees within $5.35\times10^{-11}$; the exact limit source $j+hU$ reproduces the reference in that audit. Thus a theorem upper bound is not being substituted for measured rendered data.

\begin{table}[t]
\centering\small
\caption{Same reference and altered density, including the unchanged-source baseline. Maxima are numerical, including multistart ray refinement for RGB. Derivative entries are component maxima; complete mixed seminorms include lower orders.}\label{pos:results}
\begin{tabular}{@{}lrrrrrr@{}}\toprule
Source & $Y$ & Max. RGB & Gray levels & $\max|\partial_xj|$ & $\max|\Omega j|$ & $\max|\partial_x\Omega j|$\\\midrule
Unchanged $j$ & $0.048601719$ & $0.136901595$ & $34.910$ & $.15$ & $0$ & $0$\\
$N=0$ & $0.003630594$ & $0.014027883$ & $3.577$ & $24.02625$ & $0.000000$ & $0.00000$\\
$N=1$ & $0.002452099$ & $0.009697377$ & $2.473$ & $24.64066$ & $0.067386$ & $1.10920$\\
$N=2$ & $0.001854743$ & $0.007433374$ & $1.896$ & $25.21150$ & $0.102371$ & $1.67977$\\
$N=3$ & $0.001492635$ & $0.006033923$ & $1.539$ & $25.56058$ & $0.123867$ & $2.02840$\\
$N=4$ & $0.001249327$ & $0.005080907$ & $1.296$ & $25.79645$ & $0.138435$ & $2.26378$\\
$N=8$ & $0.000757111$ & $0.003118732$ & $0.795$ & $26.27688$ & $0.168205$ & $2.74253$\\
$N=16$ & $0.000423876$ & $0.001761993$ & $0.449$ & $26.60456$ & $0.188575$ & $3.06833$\\
$N=32$ & $0.000225533$ & $0.000942735$ & $0.240$ & $26.80048$ & $0.200775$ & $3.26274$\\
$N=64$ & $0.000116521$ & $0.000488573$ & $0.125$ & $26.90844$ & $0.207502$ & $3.36972$\\
\bottomrule\end{tabular}
\end{table}

The unchanged-source baseline is $\mathcal R(1+h,j)-\mathcal R(1,j)$; it differs from holding color fixed, which would use source $(1+h)c$. Its maximum RGB is $34.910$ gray levels, versus $3.577$ after isotropic compensation and $1.539$ at degree $3$. The baseline-to-$N=0$ reduction is $9.76\times$; subsequent angular refinement to $N=64$ supplies another $28.7\times$. These ratios compare maxima, not additive portions of a common residual vector. All share the same changed attenuation.

\subsection{Derivative budgets are independent of degree}
On $K=B_{.4}$, the exit length satisfies $.6\leq\ell\leq1.4$. With $z=x\cdot d$, its square-root factor exceeds $.9$, yielding component bounds
\[
 |\partial_x\ell|<1.9,\quad |\Omega\ell|<.58,\quad
 |\partial_x\Omega\ell|<2.1.
\]
Differentiate \eqref{pos:U}. The coefficient $A_k=e^{-\ell}(j_k+.15d_k\ell)$ obeys $|A_k|\leq1.01$ and $|\Omega A_k|\leq1.16\cdot.58+.21=.8828$. Therefore
\[
 |U_k|\leq1,\quad |\partial_xU_k|<2.1,\quad |\Omega U_k|<1,
 \quad |\partial_x\Omega U_k|<4.
\]
For the bump, putting $t=(1-q)^{-1}\geq1$ gives $|\partial_i h|\leq(8/.15)e^{1-t}t^2\leq32/(.15e)<80$, while $h\leq4$. Positivity and derivative commutation transfer the bounds to $P_N$. Product differentiation gives the analytic all-$N$ bounds
\begin{equation}\label{pos:derbounds}
 \max|\partial_xj_N|<89,\qquad \max|\Omega j_N|<4,
 \qquad\max|\partial_x\Omega j_N|<96.
\end{equation}
The full source seminorms $(1,0),(0,1),(1,1)$ are consequently bounded by $89,5,96$. Including the fixed density norms gives common budgets $169,10,176$. These are real-arithmetic bounds specified before the numerical run, not formal validation of floating-point source evaluation. Higher derivative orders are not numerically measured here.

The primary and refined derivative samples contain 734 and 2270 point-direction pairs, respectively, including the same fixed axis probes and nested Sobol samples. Direct-kernel quadrature uses $48\times128$ and $64\times160$ angular samples. Source-jet quadrature changes by at most $9.19\times10^{-12}$, and sample refinement changes derivative block maxima by at most $1.30\%$. The sampled full seminorms $(1,0),(0,1),(1,1)$ range over $[24.0262,26.9085]$, $[2.3007,2.4329]$, and $[24.0262,26.9085]$; boundedness for untested degrees follows from the analytic argument, not this finite sample.

\subsection{Data norm, refinement and limits}
For each sampled direction, images on a fixed ray-offset square are transformed with a unitary discrete Fourier transform. If its physical step is $\Delta x$, the approximation to \eqref{eq:Y} is
\[
 \frac{\Delta x^2}{4\pi^2}\sum_d w_d\sum_\xi|\xi|\,|\operatorname{FFT}_{\rm ortho}g(\xi,d)|_{\rm RGB}^2.
\]
Zero padding refines Fourier frequency spacing while preserving $\Delta x$. The primary run uses 128 directions, $64^2$ offsets, and 48 ray nodes; the refined run uses 288 directions, $96^2$ offsets, and 96 ray nodes. Padding factors four and eight are compared. The scalar calibration gives $\norm{Xh}_Y/\norm h_2=.999991682$. Primary-to-refined changes are at most $.00338\%$ in $Y$ and $1.668\%$ in grid-maximum RGB. Isolating ray-order refinement gives a relative change below $2.10\times10^{-10}$; padding changes $Y$ by at most $.00572\%$.

Four separated grid maxima initialize each degree's continuous ray search, with 250 evaluations per start. Eight of the original 32 searches reach that cap; one of the four added degree-$3$ searches reaches the cap. All trials and initial values are retained, and each degree has at least one converged trial. Local convergence does not certify the continuum supremum. The reported maxima and $Y$ values are refined numerical estimates. The final RGB estimate is about $.125$ of a linear 8-bit level. The common derivative bounds and physical box are analytic; the computed discrepancies are not all-ray numerical error certificates.

The initial integration and checks took 290 and 17 seconds on two CPU threads. The degree-$3$ and baseline follow-up took $81.92$ seconds; its degree-zero replay agrees exactly in $Y$, direct positive-kernel evaluation agrees within $2.9\times10^{-14}$, and independent ODE integration agrees within $3.4\times10^{-12}$. The degree-$3$ $Y$ value changes by $.00317\%$ under the prescribed grid refinement. Complete data, search limits, protocols and costs are retained. Constants in \eqref{cap:Ybound} and \eqref{cap:rate} were not numerically evaluated.

\subsection{Actual-field cutaways and matched image sequence}\label{pos:teaserrender}
Figure~\ref{m:teaser} visualizes these same fields; no density, source reference, or approximant definition changes. The cutaway removes the portion $z>c_z=.02$ of the unit ball. Colors on the exposed plane sample the actual density with common limits $[1,5]$; only the outer spherical surface uses common normal shading to show depth. These density views are not RGB observations.

The rendered RGB images use exterior orthographic rays with viewing direction
$d=-(2.6,-3.8,2.8)/\norm{(2.6,-3.8,2.8)}_2$, a square of ray offsets with half-width $1.1$, and $256^2$ pixel-center rays. Every image uses unit exposure, black background, and the same sRGB display encoding. All eight prescribed degrees are rendered; the teaser shows the preselected $N=0,8,64$. Each alternative has the same density $1+h$ and source $j_N=j+h\cV_NU$; even $N=0$ uses a constructed source, not unchanged appearance. The residual is the maximum absolute linear-RGB channel difference per pixel, with common color limits $[0,.015]$. Each magnified crop is the same camera-space square of half-width $.36$ centered on the projected bump center; its pixel footprint is marked on the reference thumbnail. Cropping and display encoding are not used when measuring the residuals.

Full emission--absorption integration uses analytic exterior segments and 96 Gaussian nodes on each bump chord. It agrees with the exact transport-identity evaluation within $3.54\times10^{-16}$ on this image grid; increasing to 144 nodes changes RGB by at most $3.34\times10^{-16}$. An independent adaptive ODE audit on three preselected rays agrees within $2.21\times10^{-12}$. These are numerical consistency checks, not all-ray formal certificates. The rendered values lie in $[0,.607]$, and the analytic physical margins of this construction remain those stated above.

For this one view, the maximum linear-RGB differences at $N=0,8,64$ are respectively $.0114344$, $.00250381$, and $.000390281$. They are view-specific maxima and differ from the multi-direction estimates in \cref{pos:results}. The affine reference remains outside $\mathcal G$; this montage illustrates finite-amplitude hiding, while the separately labeled theorem strip summarizes the analytic generic-reference conclusion. It does not turn this experiment into a generic recovery demonstration.

\clearpage
\section{Shared-query image classes and sharp angular resources}\label{app:querystructure}
This section characterizes compensation in finite shared representations, supporting the physical certificates in \cref{m:querytable}. \Cref{q:Q1} gives finite-query closure, \cref{q:antipodal} the exact SH1 criterion, and \cref{q:circledegree} sharp regular-circle resources. Geometric weights and queries are fixed. The statements concern complete attainable image sets and their fitting distances. Norm-distance/support duality \cite[\S8.1.3 and Exercise 3.35]{boyd2004}, Bernstein--Szeg\H{o} \cite[Introduction]{erdelyi2014}, and Fej\'er interpolation \cite[Chapter IV, \S6]{jackson1930} are classical. We retain the model-specific image-set argument, full-sphere physical extension, and sharp resource consequences.

\subsection{Model and a finite-query closure theorem}
For one channel let
\begin{equation}\label{q:render}
 F_r(c)=b_r+\sum_vw_{rv}c_v(q_{rv}),\qquad w_{rv}\geq0,\qquad \sum_vw_{rv}\leq1.
\end{equation}
There are finitely many observations and nodes. Every node has an independent shared function $c_v:\Sph^2\to[0,1]$. An observation uses \emph{one} query for that node; repeated evaluations at that same direction have already been combined in $w_{rv}$. Remove zero-weight incidences, and merge exactly equal queries into $Q_v=\{q_{v1},\ldots,q_{vJ_v}\}$. Nearly equal directions are not merged. Let
\[
 (H_v)_{rj}=\begin{cases}w_{rv},&q_{rv}=q_{vj},\\0,&\text{otherwise},\end{cases}
 \qquad H=[H_1\ \cdots\ H_V].
\]
For a fixed node, the nonzero columns of $H_v$ have disjoint supports. The full matrix need not have full rank. Let $E_L(Q_v)$ be the restrictions of globally physical $c_v\in\cH_{\leq L}$ to $Q_v$, and define
\begin{align}
 Z_L&=b+\sum_vH_v E_L(Q_v),&
 Z_Q&=b+H[0,1]^{\sum_vJ_v},\label{q:sets}\\
 \rho_L(y)&=\operatorname{dist}_\infty(y,Z_L),&
 \rho_Q(y)&=\operatorname{dist}_\infty(y,Z_Q).\label{q:distances}
\end{align}
These are compact convex sets: bounded sphere values bound every coefficient in the finite SH basis. RGB uses the product of three independent channel classes at the same density.

For a source-interpolated grid with nodal density $s_v$ and nonnegative hats $\phi_v$, samples have $\sigma_i=\sum_v\phi_{iv}s_v$ and $j_i=\sum_v\phi_{iv}s_vc_v(d_r)$. With step length $\Delta_i$,
\begin{equation}\label{q:gridweights}
 w_{rv}=s_v\sum_i T_{ri}\Phi(\sigma_{ri},\Delta_{ri})\phi_{riv},\qquad
 \Phi(s,\Delta)=\begin{cases}(1-e^{-s\Delta})/s,&s>0,\\\Delta,&s=0.\end{cases}
\end{equation}
Thus the total weight is $1-T_{\rm exit}\leq1$. Fixed inactive nodes contribute to $b_r$. Positivity gives $0\leq j\leq\sigma$ throughout each cell. Fixed-order alpha compositing has the same form with $w_{rv}=T_{rv}\alpha_{rv}$, but its query convention and allowed color range must be checked separately.

\begin{lemma}[Positive finite-query approximation]\label{q:positive}
Let $q_1,\ldots,q_J$ be distinct unit directions with minimum chord separation at least $\delta>0$. For $J\geq2$ set
\[
 \beta=1-\delta^2/16,\quad A_L=1+(J-1)\beta^L,\quad
 e_L=\frac{(J-1)\beta^L}{1+(J-1)\beta^L}.
\]
For any $a_j\in[0,1]$, the polynomial
\begin{equation}\label{q:peaks}
 p_L(d)=\frac1{A_L}\sum_{j=1}^J a_j\left(\frac{1+d\cdot q_j}{2}\right)^L
\end{equation}
has degree at most $L$, lies in $[0,1]$ on the entire sphere, and obeys $\max_j|p_L(q_j)-a_j|\leq e_L$. For $J=1$ use the constant $a_1$ and $e_L=0$.
\end{lemma}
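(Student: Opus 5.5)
The plan is to check degree, range and interpolation error directly from the peak kernels, with no optimization. First, each factor $((1+d\cdot q_j)/2)^L$ is a polynomial of degree $L$ in the coordinates of $d$, so its restriction to the sphere lies in $\cH_{\leq L}$; hence so does $p_L$. Second, on the whole sphere each kernel value lies in $[0,1]$, since $0\leq(1+d\cdot q_j)/2\leq1$. With $a_j\in[0,1]$ this gives $p_L\geq0$ at once. The upper bound $p_L\leq1$ is the one needing care: a naive estimate gives only $\sum_ja_j\kappa_j(d)\leq J$. So I will prove the uniform kernel-sum bound
\[
\sum_{j=1}^J\left(\frac{1+d\cdot q_j}{2}\right)^L\leq A_L=1+(J-1)\beta^L\qquad\text{for every }d\in\Sph^2.
\]
Dividing by $A_L$ then yields $p_L\leq1$ everywhere.

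Third, to prove this bound, note that $(1+d\cdot q)/2=1-|d-q|^2/4$. For fixed $d$, at most one query can satisfy $|d-q_j|<\delta/2$, because two such queries would be closer than $\delta$ to each other. That one kernel is bounded by $1$. Every other query has $|d-q_j|\geq\delta/2$, so its kernel is at most $(1-\delta^2/16)^L=\beta^L$. Summing gives exactly $A_L$. This nearest-query separation argument is the step I regard as the crux; everything else is bookkeeping.

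Fourth, for the error at a node, write $p_L(q_i)=\bigl(a_i+\sum_{j\neq i}a_j\kappa_{ij}\bigr)/A_L$, where $\kappa_{ij}=((1+q_i\cdot q_j)/2)^L\in[0,\beta^L]$ by the same estimate with $d=q_i$. Set $S=\sum_{j\neq i}a_j\kappa_{ij}\in[0,(J-1)\beta^L]$. Then
\[
p_L(q_i)-a_i=\frac{S-a_i(J-1)\beta^L}{A_L}.
\]
Both $S$ and $a_i(J-1)\beta^L$ lie in $[0,(J-1)\beta^L]$, so their difference has absolute value at most $(J-1)\beta^L$. Dividing by $A_L$ gives the bound $e_L$. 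For $J=1$ the constant $a_1$ is trivially admissible with zero error.
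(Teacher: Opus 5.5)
Your proposal is correct and follows essentially the same route as the paper. Both arguments show that at most one query lies within chord distance $\delta/2$ of any $d$, which bounds the kernel sum by $A_L$ and gives $0\le p_L\le 1$; both then bound the node numerator by $|S-a_i(J-1)\beta^L|\le (J-1)\beta^L$, and your identity $(1+d\cdot q)/2=1-|d-q|^2/4$ just makes explicit a step the paper leaves implicit.
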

\begin{proof}
At most one query lies within chord distance $\delta/2$ of a given $d$; every other peak is at most $\beta^L$. Thus the sum of all peaks is at most $A_L$ and \eqref{q:peaks} is a subconvex combination of the $a_j$ and zero. At $q_j$ the own peak is one, whereas the other peaks sum to at most $(J-1)(1-\delta^2/4)^L\leq A_L-1$. The numerator of $p_L(q_j)-a_j$ therefore lies in $[-(A_L-1),A_L-1]$. Sphere restrictions of degree-$L$ polynomials belong to $\cH_{\leq L}$.
\end{proof}

\begin{theorem}[Q1: finite-query closure with uniform resources]\label{q:Q1}
Use the preceding $e_{v,L}$ at each node. Then, uniformly over targets and nonnegative weights with the same queries and row sums at most one,
\begin{equation}\label{q:Q1bound}
 0\leq\rho_L(y)-\rho_Q(y)\leq E_L^{\rm query}:=\max_r\sum_vw_{rv}e_{v,L},
 \qquad \overline{\bigcup_{L\geq0}Z_L}=Z_Q.
\end{equation}
\end{theorem}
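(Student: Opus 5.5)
The plan is to compare the two convex image sets directly. First, $Z_L\subseteq Z_Q$, and this gives the lower inequality. A globally physical $c_v\in\cH_{\leq L}$ takes values in $[0,1]$ at every query, so its restriction to $Q_v$ is a point of $[0,1]^{J_v}$. Hence $b+\sum_vH_vE_L(Q_v)\subseteq b+H[0,1]^{\sum_vJ_v}$, and distance to a larger set can only be smaller: $\rho_Q(y)\leq\rho_L(y)$.

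For the upper inequality, fix any $z=b+Ha\in Z_Q$ with $a_{vj}\in[0,1]$. At each node I apply \cref{q:positive} to the distinct merged queries $Q_v$ with values $a_{vj}$. This gives a globally physical $p_{v,L}\in\cH_{\leq L}$ with $|p_{v,L}(q_{vj})-a_{vj}|\leq e_{v,L}$. Put $z_L=b+\sum_vH_v(p_{v,L}(q_{vj}))_j\in Z_L$.

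In row $r$, node $v$ contributes through exactly one column, the query $q_{rv}$, with weight $w_{rv}\ge0$. This is the disjoint-support property of the nonzero columns of $H_v$. Therefore $|(z_L-z)_r|\leq\sum_vw_{rv}e_{v,L}\leq E_L^{\rm query}$, so every point of $Z_Q$ lies within $\ell^\infty$ distance $E_L^{\rm query}$ of $Z_L$. Applying the triangle inequality for the distance function to a nearest point of the compact set $Z_Q$ gives $\rho_L(y)\leq\rho_Q(y)+E_L^{\rm query}$.

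The bound depends only on the queries, through $\delta_v$ and $J_v$, and on the fact that row weights are nonnegative with sums $\leq1$. The row-sum bound gives $E_L^{\rm query}\leq\max_ve_{v,L}$, so the estimate is uniform over targets and over all admissible weights sharing the same queries. Nodes with $J_v=1$ contribute zero error, as in the lemma.

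For closure, $\bigcup_LZ_L\subseteq Z_Q$ and $Z_Q$ is compact (a linear image of a cube), so the closure of the union lies in $Z_Q$. Conversely, $\beta_v<1$ gives $e_{v,L}\to0$, hence $E_L^{\rm query}\to0$. The construction above then places a point of $Z_L$ within $E_L^{\rm query}$ of any $z\in Z_Q$, so $Z_Q$ lies in the closure of the union.

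I expect the only delicate step to be the row-error bookkeeping. The argument relies on merging exactly equal queries, so each $(r,v)$ incidence selects a single interpolated value and no cancellations or double counting occur; nearly equal directions stay separate, which is why they enter through $\delta_v$. The relevant separation is among the distinct merged directions of $Q_v$, not among raw incidences, so I would state that explicitly. Everything else is inclusion and triangle inequality.
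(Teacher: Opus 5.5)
Your proof is correct and follows the paper's argument. The containment $Z_L\subseteq Z_Q$ gives the lower bound. Applying \cref{q:positive} nodewise to a closest query-box point, together with nonnegative weights and the one-query-per-row structure, gives the upper bound. Finally, $e_{v,L}\to0$ and closedness of $Z_Q$ give the closure identity. Your remarks on merged queries and on $E_L^{\rm query}\le\max_v e_{v,L}$ simply make explicit bookkeeping that the paper leaves implicit.
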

\begin{proof}
The values of every physical SH function lie in the query box, proving the first containment and lower bound. A closest query-box fit exists. Apply \cref{q:positive} to its values independently at every node; positivity bounds the image change by $E_L^{\rm query}$. Finitely many distinct query sets have positive separation, so this error tends to zero. The box image is closed, giving the closure identity. Only positivity and the row bound entered the estimate.
\end{proof}
For a per-node tolerance $0<\epsilon<1/2$, one sufficient degree is
\begin{equation}\label{q:generaldegree}
 L\geq\max_{v:J_v\geq2}\left\lceil
 \frac{\log((J_v-1)(1-\epsilon)/\epsilon)}{-\log(1-\delta_v^2/16)}
 \right\rceil.
\end{equation}
The constant half-color improves $e_{v,L}$ to $\min(e_{v,L},1/2)$; antipodal-only nodes have zero interpolation error already at $L=1$. The general bound is conservative and does not control angular derivatives. A larger or denser query set changes its resources.

\subsection{An exact criterion for the whole image class}
Define the additional angular obstruction
\begin{equation}\label{q:gap}
 \mathfrak G_L=\max_{z\in Z_Q}\operatorname{dist}_\infty(z,Z_L)
 =\max_{\norm\lambda_1\leq1}\{h_{Z_Q}(\lambda)-h_{Z_L}(\lambda)\}
 =\sup_y\{\rho_L(y)-\rho_Q(y)\}.
\end{equation}
Here $h_Z$ is the support function. The middle identity follows from classical distance duality and support calculus \cite[\S8.1.3 and Exercise 3.35]{boyd2004}; the last uses a closest point in $Z_Q$ and the triangle inequality. This nested-set Hausdorff distance is distinct from the geometric curvature $G_L$ in \eqref{m:profilecurvature}.

\begin{lemma}[From node tables to image sets]\label{q:nodelemma}
Under \eqref{q:render}, $Z_L=Z_Q$ if and only if $E_L(Q_v)=[0,1]^{J_v}$ for every active node.
\end{lemma}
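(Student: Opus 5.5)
The sufficiency direction is immediate and I will prove it first. Assume $E_L(Q_v)=[0,1]^{J_v}$ for every active node. Then $\sum_vH_vE_L(Q_v)=\sum_vH_v[0,1]^{J_v}=H[0,1]^{\sum_vJ_v}$, so $Z_L=Z_Q$ by the definitions in \eqref{q:sets}. Inactive nodes have zero columns and contribute nothing.

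For necessity, assume $Z_L=Z_Q$. Fix an active node $v$ and suppose, for contradiction, that $E_L(Q_v)$ is a proper subset of $[0,1]^{J_v}$. The set $E_L(Q_v)$ is compact and convex, since restrictions of physical SH functions form a convex set with bounded coefficients. So there is a vertex $a\in\{0,1\}^{J_v}$ of the box that it misses, because a compact convex subset of a box containing all vertices contains the box. Strict separation then gives a $\mu\in\R^{J_v}$ with $\mu\cdot a>\max_{e\in E_L(Q_v)}\mu\cdot e$. I will choose $\mu$ so that its sign pattern matches $a$, i.e.\ $\mu_j>0$ exactly where $a_j=1$. The point is that $a$ is then the unique maximizer of $\mu$ on the box.

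The next step lifts $\mu$ to a dual vector $\lambda$ on observations with $H_v^\T\lambda=\mu$. This uses the disjoint-support structure of $H_v$: for each query $j$ pick one row $r_j$ with $w_{r_jv}>0$ and $q_{r_jv}=q_{vj}$, and set $\lambda_{r_j}=\mu_j/w_{r_jv}$. Distinct $j$ give distinct rows, since one row queries node $v$ only once. All other entries of $\lambda$ are zero. Support functions add over Minkowski sums, so
\[
 h_{Z_Q}(\lambda)-h_{Z_L}(\lambda)=\sum_u\Bigl[h_{[0,1]^{J_u}}(H_u^\T\lambda)-h_{E_L(Q_u)}(H_u^\T\lambda)\Bigr].
\]
Every summand is nonnegative because $E_L(Q_u)\subseteq[0,1]^{J_u}$. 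The $u=v$ summand equals $\mu\cdot a-\max_{E_L(Q_v)}\mu\cdot e>0$. Hence the support functions differ and $Z_L\neq Z_Q$, which is the contradiction.

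I expect the main obstacle to be the coupling through $\lambda$: the rows chosen for node $v$ also carry incidences of other nodes. That is why I argue through the sum of nonnegative terms rather than isolating node $v$, and the argument does not need $H$ to have full rank. The other point I will check carefully is the claim that a proper compact convex subset of the box must miss a vertex. This holds because the box is the convex hull of its vertices. The same argument also shows $\mathfrak G_L>0$ whenever some active node table is deficient.
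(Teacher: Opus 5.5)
Your proof is correct and follows the same route as the paper's proof. Sufficiency is by substitution. For necessity, you separate a missed point of the cube, lift the separating vector to $\lambda$ through one positive-weight incident row per query so that $H_v^{\T}\lambda=\mu$, and sum the nonnegative support-function gaps over nodes. The vertex and sign-pattern refinement is harmless but unnecessary: for any missed cube point $a$, the bound $h_{[0,1]^{J_v}}(\mu)\geq\mu\cdot a$ already gives a strictly positive gap at node $v$.
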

\begin{proof}
Sufficiency follows by substitution. If one evaluation set is proper, separate it from a point of its cube with a moment vector $m$. For each query $j$ choose one incident row $r_j$ with $w_{r_jv}>0$. These rows are distinct. Assign $\lambda_{r_j}=m_j/w_{r_jv}$ and zero to other rows, so $H_v^T\lambda=m$. The support loss is positive at this node and nonnegative at every other node because their SH evaluation sets lie in their cubes. Hence the complete image sets differ. Normalize $\lambda$ if desired.
\end{proof}
Other nodes may compensate a particular target, but cannot eliminate the difference of the entire attainable sets.

\begin{theorem}[Antipodal-query criterion]\label{q:antipodal}
The following are equivalent: (i) $\rho_1(y)=\rho_Q(y)$ for every target; (ii) $Z_1=Z_Q$; (iii) every active $Q_v$ is contained in one antipodal pair. In this case $Z_L=Z_Q$ for all $L\geq1$. Otherwise $\mathfrak G_1>0$.
\end{theorem}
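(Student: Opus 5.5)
The plan is to reduce everything to a node-level statement via \cref{q:nodelemma}, and then characterize exactly which finite query sets have full SH1 evaluation set. Equivalence (i)$\Leftrightarrow$(ii) comes first. If $Z_1=Z_Q$, the distances coincide trivially. Conversely, if (i) holds for every target, then $\rho_1(y)=0$ for every $y\in Z_Q$. Compactness of $Z_1$ then gives $Z_Q\subseteq Z_1$, and the reverse containment is already part of \cref{q:Q1}. Next, \cref{q:nodelemma} shows that (ii) holds iff $E_1(Q_v)=[0,1]^{J_v}$ for every active node. It therefore remains to prove a single-node claim: for distinct unit directions $q_1,\dots,q_J$, the cube $[0,1]^J$ consists of restrictions of globally physical SH1 functions iff $\{q_j\}$ lies in one antipodal pair.

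For sufficiency, $J=1$ is handled by constants. For $J=2$ with $q_2=-q_1$, I take $c(d)=\tfrac{a_1+a_2}2+\tfrac{a_1-a_2}2\,q_1\cdot d$. It has degree one, equals $a_1,a_2$ at $\pm q_1$, and its sphere range is $[\min a_j,\max a_j]\subseteq[0,1]$, so it is physical. Since $\cH_{\le1}\subseteq\cH_{\le L}$, the identity $Z_L=Z_Q$ for $L\ge1$ follows from $Z_1\subseteq Z_L\subseteq Z_Q$.

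For necessity, suppose some active node has a query set not contained in one antipodal pair. Any such set contains either two distinct non-antipodal directions $q_1,q_2$, or three distinct directions. In the latter case two of them are non-antipodal, because each direction has only one antipode. So I fix $q_1\ne\pm q_2$ and prescribe $a_1=1$, $a_2=0$, with the remaining entries arbitrary. A physical SH1 function is $c(d)=\alpha+b\cdot d$, and its extreme values on the sphere are $\alpha\pm|b|$, so physicality means $\alpha-|b|\ge0$ and $\alpha+|b|\le1$. Then $c(q_1)=1$ forces $\alpha+b\cdot q_1=1\le\alpha+|b|$, hence $b\cdot q_1=|b|$ and $\alpha+|b|=1$. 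Likewise $c(q_2)=0$ forces $b\cdot q_2=-|b|$ and $\alpha=|b|$. Together these give $|b|=1/2$, so $b/|b|=q_1$ and $b/|b|=-q_2$, i.e.\ $q_2=-q_1$, a contradiction. Hence $E_1(Q_v)$ is a proper subset of the cube, so $Z_1\ne Z_Q$ by \cref{q:nodelemma}. Finally, $\mathfrak G_1>0$ in this case because both sets are compact and unequal with $Z_1\subseteq Z_Q$: some $z\in Z_Q\setminus Z_1$ has positive distance to $Z_1$.

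The main obstacle is the extremal argument in the necessity step. It requires the exact extremes $\alpha\pm|b|$ on the whole sphere, not merely on the query set, since the physical constraint is global.
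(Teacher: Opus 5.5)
Your proof is correct and follows the paper's route. You reduce to \cref{q:nodelemma}, use the same linear interpolant at an antipodal pair (with constants for single queries), and close with the fact that distance functions identify closed sets. The only difference is in necessity: the paper bounds the contrast of a physical SH1 function, $|c(q_2)-c(q_1)|\le\|v\|_2|q_2-q_1|\le\sin(\theta/2)<1$ for separation $0<\theta<\pi$, whereas your equality-case argument reaches the same conclusion only qualitatively. The quantitative form is the one the paper reuses later in \eqref{q:pairgap}.
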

\begin{proof}
At queries $a,-a$, any values $a_+,a_-\in[0,1]$ are realized by
\[
 c(d)=\tfrac12(a_++a_-)+\tfrac12(a_+-a_-)(a\cdot d),
\]
a convex interpolation on the entire sphere. Single-query sets use constants. Conversely a physical SH1 function has the form $c=1/2+u+v\cdot d$ with $|u|+\norm v_2\leq1/2$. At separation $0<\theta<\pi$ its contrast is at most $\sin(\theta/2)<1$, so it cannot assign zero and one at that pair. Every set not contained in an antipodal pair has such a pair. Apply \cref{q:nodelemma}. Distance functions identify closed sets.
\end{proof}

For an explicit support form write $m_{vj}=(H_v^T\lambda)_j$. Then
\begin{equation}\label{q:SH1gap}
 \mathfrak G_1=\max_{\norm\lambda_1\leq1}\frac12\sum_v
 \left[\sum_j|m_{vj}|-\max\left(\left|\sum_jm_{vj}\right|,
             \left|\sum_jm_{vj}q_{vj}\right|\right)\right].
\end{equation}
The node term vanishes for a fixed witness exactly when all its nonzero moments have one sign, or all signed unit vectors $\operatorname{sign}(m_{vj})q_{vj}$ coincide. These are the scalar/vector triangle-equality cases. This fixed-witness condition is distinct from the all-witness criterion above.

If a node is visible on two observations with weights $a,b>0$ and distinct directions separated by $\theta$, take $\lambda_r=b/(a+b)$ and $\lambda_s=-a/(a+b)$. Its opposite moments have common magnitude $ab/(a+b)$, giving
\begin{equation}\label{q:pairgap}
 \mathfrak G_1\geq\frac{ab}{a+b}\bigl(1-\sin(\theta/2)\bigr).
\end{equation}
The bound is sharp for a single node and two equally weighted observations. Thus a family in which every member has a simultaneously visible pair with weights at least $a_{\min}>0$ and $0<\theta\leq\pi-\eta$ has the uniform gap $\frac12a_{\min}(1-\cos(\eta/2))$. Without a visibility lower bound the gap can vanish. At coincident queries the independent-value box itself changes; the distinct-query formula must not be applied without merging them.

\subsection{A physical two-ray density profile}\label{q:twolayer}
Shared queries can preserve a nonlinear density signal even when appearance is freely refitted within its full physical range. The following exactly solved volume makes this effect quantitative. It also shows how the antipodal construction in \cref{q:antipodal} removes that signal.

Consider two ordered regions with shared color functions $c_1,c_2:\Sph^2\to[0,1]$ and black background. The unknown front density is $s$, with $q=e^{-s}\in[1/3,2/3]$. Front path lengths are $1,2$; known rear optical thicknesses are $\log(4/3),\log4$. Exact segment integration gives
\begin{align}
 F_1(q,c)&=(1-q)c_1(d_1)+\tfrac q4c_2(d_1),\label{q:twoforward}\\
 F_2(q,c)&=(1-q^2)c_1(d_2)+\tfrac{3q^2}4c_2(d_2).\nonumber
\end{align}
The reference $q_*=1/2$, $c_{1,*}=c_{2,*}=1/2$ has $s_*=\log2$ and $y_*=(5/16,15/32)$. Define $\rho_L(q)=\min_c\|F(q,c)-y_*\|_\infty$, where each shared color ranges over the complete physical degree-$L$ class. Thus $\rho_L$ measures the image evidence that remains after every legal appearance adjustment at a candidate density.

\begin{proposition}[Exact finite density signal]\label{q:twoprofile}
For every $q\in[1/3,2/3]$, the complete SH0 profile is
\begin{equation}\label{q:tworho}
 \rho_0(q)=\frac{5|2q-1|}{32(2+q)}.
\end{equation}
Its sharp uniform coefficient in the density coordinate is
\begin{equation}\label{q:twogain}
 \rho_0(e^{-s})\geq\gamma_*|s-\log2|,
 \qquad \gamma_* =\frac{5}{224\log(3/2)}.
\end{equation}
For identical queries $d_1=d_2$, $\rho_L=\rho_0$ at every degree. For antipodal queries $d_2=-d_1$, $\rho_L=0$ throughout the same density interval at every $L\geq1$.
\end{proposition}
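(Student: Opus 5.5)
The plan is to treat the SH0 case as an explicit two-dimensional convex distance problem. I would then reduce the two query geometries to it, or to \cref{q:antipodal}.

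\textbf{SH0 profile.} With constant colors $c_1\equiv a$ and $c_2\equiv b$ in $[0,1]$, \eqref{q:twoforward} is $F=M_q(a,b)^\T$, where
\[
 M_q=\begin{pmatrix}1-q&q/4\\1-q^2&3q^2/4\end{pmatrix},\qquad
 \det M_q=\tfrac14q(1-q)(2q-1).
\]
So $Z_0(q)$ is the parallelogram with vertices $0$, $(1-q,1-q^2)$, $(q/4,3q^2/4)$ and $(1-3q/4,1-q^2/4)$. It degenerates to a segment through $y_*$ exactly at $q=1/2$.

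I would compute $\rho_0(q)=\operatorname{dist}_\infty(y_*,Z_0(q))$ by the support-function duality already used in \eqref{q:gap}:
\[
 \rho_0(q)=\max_{\norm\lambda_1\le1}\bigl[\lambda\cdot y_*-h_{Z_0(q)}(\lambda)\bigr].
\]
The natural witness is $\lambda$ normal to the edge of the parallelogram that faces $y_*$, normalized in $\ell^1$. I expect this to be the edge spanned by one column from the vertex $0$ or from the opposite vertex; which edge it is depends on the sign of $2q-1$.

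Next I would exhibit the matching primal point. This is the $\ell^\infty$-closest point on that edge, obtained by moving from $y_*$ along the diagonal direction $(\pm1,\pm1)$. I would check that it has box coordinates, so that $a$ and $b$ stay in $[0,1]$. Equality of the primal and dual values then gives \eqref{q:tworho}; the factor $|2q-1|$ comes from $\det M_q$ and $(2+q)$ from the $\ell^1$ normalization.

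I expect this case analysis to be the main obstacle. One must verify on all of $[1/3,2/3]$ that the same edge is active, that the diagonal foot stays inside the edge rather than passing to a vertex, and that $\lambda$ is correctly normalized. I would first test the endpoints: the formula gives $5/224$ at $q=1/3$ and $5/256$ at $q=2/3$.

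\textbf{Sharp coefficient.} Put $q=e^{-s}$ and study
\[
 R(s)=\rho_0(e^{-s})/|s-\log2|
\]
separately on $s\in[\log(3/2),\log2)$ and on $s\in(\log2,\log3]$. On each piece I would show monotonicity of $R$, using convexity of $q\mapsto|2q-1|/(2+q)$ in $s$ on either side together with an elementary derivative check. The infimum is then attained at an endpoint. Comparing the two endpoints,
\[
 \frac{5/224}{\log(3/2)}<\frac{5/256}{\log(4/3)},
\]
which gives $\gamma_*$. Sharpness is immediate because the value is attained at $q=1/3$.

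\textbf{Identical queries.} If $d_1=d_2$, only the two numbers $c_1(d_1)$ and $c_2(d_1)$ enter, and each ranges over $[0,1]$ at every degree $L$. The attainable set is therefore exactly $Z_0(q)$, and $\rho_L=\rho_0$ for all $L$.

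\textbf{Antipodal queries.} If $d_2=-d_1$, each node's query set is an antipodal pair. By \cref{q:antipodal}, $Z_L=Z_Q$ for every $L\ge1$. It then suffices to show $y_*\in Z_Q(q)$ for each $q$. Take $c_1(d_1)=c_2(d_1)=5/(16-12q)$ and $c_1(d_2)=c_2(d_2)=15/(32-8q^2)$. Both values lie in $[0,1]$ on $[1/3,2/3]$, and substituting them into \eqref{q:twoforward} reproduces $y_*$ exactly. Hence $\rho_L\equiv0$ for every $L\ge1$.
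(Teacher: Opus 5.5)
Your SH0-profile, identical-query and antipodal arguments match the paper's. The paper's dual witness is exactly the $\ell^1$-normalized normal $\lambda=(-(1+q),1)/(2+q)$ to the $c_1$-column. For it, $\lambda\cdot(F-y_*)=(2q-1)(qc_2/4+5/32)/(2+q)$. The primal attainer is $c_2=0$, $c_1=25/[32(1-q)(2+q)]\in[225/448,225/256]$, which gives equal and opposite residuals.

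The case analysis you feared does not arise. The edge facing $y_*$ is $\{c_2=0\}$ for \emph{both} signs of $2q-1$; only the orientation of the normal flips. The $\ell^\infty$ foot also lies strictly inside that edge. Your antipodal values are the paper's $u_1,u_2$. Citing \cref{q:antipodal} instead of writing the interpolant $\tfrac{u_1+u_2}2+\tfrac{u_1-u_2}2(d\cdot d_1)$ is the same construction.

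The sharp-coefficient step, however, is wrong in two places, although $\gamma_*$ survives. Put $t=s-\log2$ and $z=e^{-t}=2q\in[2/3,4/3]$. Then $\rho_0(e^{-s})=|g(t)|$, where $g(t)=5(1-z)/[16(4+z)]$ and $g''(t)=25z(z-4)/[16(4+z)^3]<0$.

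First, this makes $\rho_0$ \emph{concave} in $s$ on the side $q<1/2$, and convex only on $q>1/2$. Convexity on the $q<1/2$ side would make $R$ nondecreasing away from $\log2$. That would put its infimum at $q\to1/2^-$ rather than at $q=1/3$, contradicting your own conclusion.

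Second, on both sides $R=g(t)/t$. Concavity together with $g(0)=0$ makes this a single nonincreasing function of $t$ across $t=0$. So on $q\in(1/2,2/3]$, $R$ decreases toward $g'(0)=1/16$ as $q\to1/2^+$, and that infimum is not attained. The value $(5/256)/\log(4/3)\approx.0679$ at $q=2/3$ is that piece's \emph{maximum}. The comparison you actually need is $(5/224)/\log(3/2)\approx.0551<1/16$. Alternatively, as the paper does, use the global monotonicity of $g(t)/t$ directly. It places the minimum at $t=\log(3/2)$, i.e.\ $q=1/3$, with no endpoint comparison at all.
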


\begin{proof}
For shared constants, the contrast $\lambda=(-(1+q),1)$ cancels the front color:
\begin{equation}\label{q:twocontrast}
 \lambda\cdot(F-y_*)=(2q-1)\left(\frac q4c_2+\frac5{32}\right).
\end{equation}
Since $c_2\geq0$ and $\|\lambda\|_1=2+q$, \eqref{q:tworho} is a lower bound for every physical color. Equality is attained by $c_2=0$ and $c_1=25/[32(1-q)(2+q)]$: the two residuals are opposite and have the stated magnitude. This $c_1$ increases from $225/448$ to $225/256$, entirely inside $[0,1]$.

For sharpness in \eqref{q:twogain}, put $t=s-\log2$, $z=e^{-t}$ and $g(t)=5(1-z)/[16(4+z)]$. Then $\rho_0=|g|$, $g(0)=0$ and
\[
 g''(t)=\frac{25z(z-4)}{16(4+z)^3}<0
 \quad\text{for }z\in[2/3,4/3].
\]
Concavity makes $g(t)/t$ nonincreasing. Its minimum occurs at $q=1/3$, where the ratio is $5/[224\log(3/2)]$ and the physical minimizer above attains equality.

Identical queries expose one shared value per region, and constants realize every allowed pair of values. Hence their profile stays \eqref{q:tworho} at all degrees. For antipodal queries set
\begin{align}
 u_1(q)&=\frac{5}{16(1-3q/4)},&
 u_2(q)&=\frac{15}{32(1-q^2/4)},\nonumber\\
 c_{1,q}(d)=c_{2,q}(d)&=\frac{u_1+u_2}{2}
             +\frac{u_1-u_2}{2}(d\cdot d_1).\label{q:twoSH1}
\end{align}
The queried values are $u_1,u_2$, which reproduce both components of $y_*$ exactly. Their ranges are $[5/12,5/8]$ and $[27/56,135/256]$, respectively. Every sphere value in \eqref{q:twoSH1} is a convex combination of them, so all colors lie in $[5/12,5/8]$. At $q=1/2$ they equal the reference color $1/2$. This gives a physical SH1 ambiguity family through the same reference.
\end{proof}

\paragraph{Finite precision.}
The profile gives an exact fixed-reference density interval. For total fitting tolerance $0\leq\epsilon\leq5/224$, solving \eqref{q:tworho} yields
\begin{equation}\label{q:twonoise}
 \rho_0(q)\leq\epsilon\quad\Longleftrightarrow\quad
 q\in\left[
 \max\!\left\{\tfrac13,\frac{5-64\epsilon}{10+32\epsilon}\right\},
 \min\!\left\{\tfrac23,\frac{5+64\epsilon}{10-32\epsilon}\right\}
 \right].
\end{equation}
For larger tolerances the full prior is compatible. If the interval endpoints are $a,b$, the exact density radius is $\max\{\log(1/(2a)),\log(2b)\}$, by monotonicity of $s=-\log q$. One linear gray level of total fitting tolerance gives radius $0.06397065$. With error at most $\beta$ on each raw reading and on a candidate's fit, prediction compatibility uses $\epsilon=2\beta$; one gray level for each gives radius $0.13058460$. These are fixed-reference radii in $|s-s_*|$. If the known front region has volume $V$, the physical density $L^2$ error is $\sqrt V|s-s_*|$.

\paragraph{Physical sharing and realization.}
Each ray alone accepts every density in the prior, using shared constants $c_1=c_2=u_r(q)$. The joint guarantee comes from using the same two colors in both observations. The ordered weights are realized by two disjoint tubes: their front pieces share $s,c_1$, and their unit-density rear pieces share $c_2$ and have the specified lengths. Orienting the tubes in parallel or oppositely realizes the two query designs while preserving front-to-back order along each designated ray. This specifies a finite, piecewise-constant spatial model with one unknown density parameter.

\paragraph{Physical color constraints.}
For shared constants write \eqref{q:twoforward} as $F=B(q)c$. This appearance matrix has determinant $q(1-q)(2q-1)/4$. For $q\ne1/2$, its exact fit requires $c_1=15/[32(1-q)]$ and $c_2=-5/(8q)$. Unconstrained profiling therefore gives zero residual, while \eqref{q:tworho} retains a positive signal through nonnegativity. At the reference the appearance matrix has rank one. Away from it, an interior color pair can be continued as $B(q)^{-1}B(q_0)c_0$ for nearby $q$, preserving both data. The positive bound is consequently specific to the declared reference and prior; it quantifies the physical appearance fiber without requiring unique recovery of the colors.

\subsection{Relative sampling with bounded appearance}\label{q:relativesampling}
Finite sampling must preserve the density signal after \emph{every} admissible appearance refit. An absolute image approximation error is insufficient near the reference because the density separation tends to zero. The following bound makes the sampling error relative to that separation. It uses finite-dimensional compactness, as in finite-measurement stability theory \cite[Theorem~2]{alberti2021finite}, while retaining appearance directions that become visible when density changes.

Let $\mathcal Q$ be a compact metric space of declared ray or pixel queries and write $F(s,c)=B(s)c\in C(\mathcal Q;\mathbb R^k)$. Density and appearance have fixed finite coordinates. Fix $(s_0,c_0)$ and a compact density prior $S$, star-shaped about $s_0$. Every allowed appearance set $C(s)$ lies in a common bounded coefficient set $C$. Assume $B$ is $C^1$ in $s$ on a neighborhood of $S$, with values in operators from appearance coordinates to continuous query data. Use a declared density norm, such as its spatial mass norm, and the maximum query/channel norm for data.

Put $V_0=\operatorname{ran}B(s_0)$ and $M=\sup_{c\in C}\|c\|$. Define
\begin{align}
 K&=M\sup_{s\in S}\|D_sB(s)\|,\label{rs:constants}\\
 \omega_0(\delta)&=\sup_{\substack{f\in V_0,\ \|f\|_\infty\leq1\\d(r,r')\leq\delta}}
       |f(r)-f(r')|_\infty,\nonumber\\
 \omega_1(\delta)&=M\sup_{\substack{s\in S,\ \|u\|=1\\d(r,r')\leq\delta}}
       \|D_sB(s,r)[u]-D_sB(s,r')[u]\|.\nonumber
\end{align}
Here $\omega_0=0$ when $V_0=\{0\}$. The first modulus measures how quickly a normalized reference appearance image varies between nearby queries; the second measures the variation of its density response. Both tend to zero: the unit ball of $V_0$ is compact, and the derivative family is continuous on compact parameter and unit-sphere sets. $K$ is finite.

\begin{proposition}[Relative sampling after full appearance refitting]\label{rs:theorem}
Let $D$ be a finite $\delta$-net of $\mathcal Q$ and $g=B(s)c-B(s_0)c_0$. Uniformly over $s\in S$ and $c\in C(s)$,
\begin{equation}\label{rs:relative}
 \|g|_D\|_\infty\geq(1-\omega_0)\|g\|_\infty
              -(K\omega_0+\omega_1)\|s-s_0\|.
\end{equation}
If the full-query physical profile satisfies
$\inf_{c\in C(s)}\|g\|_\infty\geq\gamma\|s-s_0\|$ on $S$ with $\gamma>0$, then
\begin{equation}\label{rs:samplegain}
 \gamma_D=(1-\omega_0)\gamma-K\omega_0-\omega_1>0
\end{equation}
guarantees $\inf_{c\in C(s)}\|g|_D\|_\infty\geq\gamma_D\|s-s_0\|$.
No full-column-rank or constant-rank condition on $B(s)$ is required.
\end{proposition}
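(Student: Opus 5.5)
The plan is to follow the split \eqref{long:split} exactly. Write $g=f+v$, with $f=B(s_0)(c-c_0)\in V_0$ and $v=[B(s)-B(s_0)]c$. The reference piece $f$ lies in the fixed finite-dimensional image space, so its sampling loss is controlled by $\omega_0$. The density-dependent piece $v$ is controlled through the derivative of $B$, so its sampling loss is governed by $\omega_1$ and its size by $K$. The crucial point is that $f$ may be arbitrarily large: it contains every bounded source-only change, including kernel directions of $B(s_0)$. The estimate for $f$ must therefore be purely relative, i.e.\ homogeneous in $\|f\|_\infty$, with no dependence on $c-c_0$.

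\textbf{Step 1 (the reference piece).} Fix $r\in\mathcal Q$ and pick $r'\in D$ with $d(r,r')\le\delta$. If $f\ne0$, apply the definition of $\omega_0$ to $f/\|f\|_\infty$; by homogeneity this gives $|f(r)-f(r')|\le\omega_0\|f\|_\infty$.

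\textbf{Step 2 (the density piece).} Since $S$ is star-shaped about $s_0$, write
\[
v=\int_0^1 D_sB(s_0+t(s-s_0))[s-s_0]\,c\,\dd t .
\]
Using $\|c\|\le M$, this yields two bounds: $\|v\|_\infty\le K\|s-s_0\|$, and $|v(r)-v(r')|\le\omega_1\|s-s_0\|$. The second follows by applying the definition of $\omega_1$ to the unit vector $u=(s-s_0)/\|s-s_0\|$ inside the integral and taking the supremum over $t$. Here I must check that the $M$ in $\omega_1$ absorbs the operator-valued action on $c$. The definition multiplies by $M$, so the operator norm of the query difference times $\|c\|$ suffices.

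\textbf{Step 3 (combine).} From Steps 1 and 2,
\[
|g(r)|\le|g(r')|+\omega_0\|f\|_\infty+\omega_1\|s-s_0\| .
\]
Substituting $\|f\|_\infty\le\|g\|_\infty+\|v\|_\infty\le\|g\|_\infty+K\|s-s_0\|$ and taking the supremum over $r$ gives
\[
\|g\|_\infty\le\|g|_D\|_\infty+\omega_0\|g\|_\infty+(K\omega_0+\omega_1)\|s-s_0\| ,
\]
which rearranges to \eqref{rs:relative}. The gain statement \eqref{rs:samplegain} then follows from the full-query profile bound $\|g\|_\infty\ge\gamma\|s-s_0\|$, which holds for every $c\in C(s)$ because it bounds the infimum. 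Insert this into \eqref{rs:relative}; the prefactor $1-\omega_0$ is nonnegative whenever $\gamma_D>0$, since $K,\omega_1\ge0$. Then take the infimum over $c\in C(s)$.

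\textbf{Step 4 (no rank assumption).} No rank condition on $B(s)$ is needed. The argument never inverts $B$. It uses only the compactness of the unit ball of $V_0$, which is what makes $\omega_0\to0$, and uniform continuity of $D_sB$ on compact sets. The main obstacle I expect is Step 1's legitimacy: one must avoid bounding $f$ through $c-c_0$. Routing it through $V_0$, together with the triangle inequality $\|f\|_\infty\le\|g\|_\infty+\|v\|_\infty$, is the step that retains source kernels and changing-rank directions, so it must be arranged carefully.
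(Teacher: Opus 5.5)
Your proposal is correct and follows the paper's proof essentially step for step. It uses the same exact splitting $g=f+v$, the same segment integral of $D_sB$ to get the $K$ and $\omega_1$ bounds on $v$, the homogeneous $\omega_0$ estimate on $f\in V_0$ combined with $\|f\|_\infty\le\|g\|_\infty+K\|s-s_0\|$, and the same observation that $\gamma_D>0$ forces $\omega_0<1$ before taking the infimum over $C(s)$. The only cosmetic difference is that you take a supremum over all queries, where the paper picks a sample near a maximizing query.
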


\begin{proof}
Keep the exact splitting
\[
 g=f+v,\qquad f=B(s_0)(c-c_0),\quad v=[B(s)-B(s_0)]c.
\]
Integrating $D_sB$ along $s_0+t(s-s_0)$ gives $\|v\|_\infty\leq K\|s-s_0\|$ and $|v(r)-v(r')|_\infty\leq\omega_1\|s-s_0\|$ for nearby queries. Since $f\in V_0$ and $\|f\|_\infty\leq\|g\|_\infty+K\|s-s_0\|$, the between-query variation of $g$ is at most $\omega_0\|g\|_\infty+(K\omega_0+\omega_1)\|s-s_0\|$. Choose a sample within $\delta$ of a maximizing query and rearrange to obtain \eqref{rs:relative}. Positivity in \eqref{rs:samplegain} implies $\omega_0<1$, so taking the infimum over all physical appearances proves the profile bound. The term $v$ includes every reference appearance null direction that opens under the density change.
\end{proof}

\paragraph{Noise and spatial resolution.}
If the observation differs from the reference by at most $b$ per reading and a candidate fits it within $a$, the triangle inequality and \eqref{rs:samplegain} give
\begin{equation}\label{rs:noise}
 \|s-s_0\|\leq(a+b)/\gamma_D.
\end{equation}
Thus two $b$-compatible predictions use $2b$, or $2(b+\tau)$ when both computed predictions have a uniform renderer error $\tau$. All quantities concern the fixed representation and reference. The full-query maximum-RGB coefficient $\gamma$ is an input to this transfer; the continuous $Y$-norm constant $\kappa_L$ is not substituted for it. With Lipschitz bounds $\omega_i\leq H_i\delta$, the sufficient sampled coefficient is $\gamma-\delta[H_0(\gamma+K)+H_1]$.

\paragraph{A physical case with changing rank.}
In \cref{q:twolayer}, take the continuous family of two-lobe pixel readings $F_t=(1-t)F_1+tF_2$, $t\in[0,1]$, with complete SH0 colors $[0,1]^2$. Each residual is affine in $t$, so its full-query maximum is its two-ray maximum and $\gamma=5/[224\log(3/2)]$. At the reference, $V_0=\operatorname{span}\{1+t/2\}$, giving $H_0=1/3$. For $q=e^{-s}\in[1/3,2/3]$, the two density-derivative row norms are bounded by $5/6$ and $14/9$, so $K=14/9$. The mixed query/density derivative has $\ell^1$ norm
\[
 \begin{cases}3q/4-q^2/2,&q\leq1/2,\\7q^2/2-5q/4,&q\geq1/2,\end{cases}
\]
whose maximum is $H_1=13/18$. The $64$ midpoint queries $t_i=(2i+1)/128$ have covering radius $1/128$. Substitution in \eqref{rs:samplegain} gives
\[
 \gamma_D\geq\gamma-(\gamma/3+67/54)/128>.0452.
\]
Consequently $a=b=.001$ implies $|s-s_0|<.04425$. The appearance matrix has rank one at $q=1/2$ and rank two elsewhere, with determinant $q(1-q)(2q-1)/4$. Its reference null vector $(1,-4)$ becomes $(1-2q,1-4q^2)$ in the two readings and is retained in the proof. Rational bounds and vertex enumeration check the formulas. This mixture design demonstrates the transfer at a rank change; the two original endpoint rays already give the stronger exact guarantee in \cref{q:twogain}.

\paragraph{Spatial resolution is an identifiability assumption.}\phantomsection\label{rs:spatialfreedom}
Fixing angular degree alone does not give finite-ray density recovery. Let $\Omega$ be bounded and convex, with known exterior and black background, and let $(\sigma_*,j_*)$ be a smooth physical isotropic reference. Suppose $\mu\leq j_{*,k}\leq\sigma_*-\mu$ on an open $D\Subset\Omega$, for some $\mu>0$. For any fixed finite set $\mathcal R$ of complete oriented chords, every sufficiently small $h\in C_c^\infty(D)$ admits one smooth isotropic source $j_h$, equal to $j_*$ outside $D$, such that
\begin{equation}\label{rs:spatialexact}
 0\leq j_h\leq\sigma_*+h,\qquad
 F_r(\sigma_*+h,j_h)=F_r(\sigma_*,j_*),\quad r\in\mathcal R.
\end{equation}
Thus exact compensation can use spatial freedom while remaining in SH0. Along $h=tg$ for any fixed smooth $g$, both fields approach the reference in $C^\infty$ as $t\to0$.

\begin{proof}
Merge identical oriented chords and discard lines missing $D$. On each remaining geometric line, choose an interval inside $D$ avoiding its finitely many intersections with other lines. Place one nonnegative smooth bump there for one observed orientation, or two ordered bumps for both orientations. Their closed support balls can be chosen mutually disjoint and disjoint from every other measured line, with $0\leq\psi_a\leq1$. The source response $B_{ra}(\sigma)=\int_rT_\sigma\psi_a$ is block diagonal. Single-orientation blocks are positive. For opposite orientations, write $A(t)=\int_0^t\sigma_*$ and $A_{\rm tot}=A(\ell_r)$. The weight ratio is $w_+(t)/w_-(t)=e^{A_{\rm tot}-2A(t)}$, strictly decreasing between the ordered supports since $\sigma_*\geq2\mu$. The $2\times2$ determinant is
\[
 \iint \psi_1(t)\psi_2(u)w_-(t)w_-(u)
 \left[e^{A_{\rm tot}-2A(t)}-e^{A_{\rm tot}-2A(u)}\right]\,dt\,du>0
\]
where $t,u$ run over the ordered trace supports, so $t<u$. Hence $B_*=B(\sigma_*)$ is invertible.

Let $M=\|B_*^{-1}\|_\infty$ and $\ell=\operatorname{diam}\Omega$. Each row has at most two nonzero entries, so $\|B(\sigma_*+h)-B_*\|_\infty\leq2\ell^2\|h\|_\infty$ when the density is nonnegative. Neumann's bound gives inverse norm at most $2M$ for $\|h\|_\infty\leq1/(4M\ell^2)$. For reference readings $y_{*,k}$, set
\[
 \alpha_k=B(\sigma_*+h)^{-1}
       [y_{*,k}-F(\sigma_*+h,j_{*,k})],\qquad
 j_{h,k}=j_{*,k}+\sum_a\psi_a\alpha_{a,k}.
\]
Source linearity proves exact equality. The transport difference and $0\leq U_{*,k}\leq1$ give $\|\alpha_k\|_\infty\leq2M\ell\|h\|_\infty$. Disjoint supports give the same bound on $\|j_h-j_*\|_\infty$. Choosing
\[
 \|h\|_\infty\leq
 \min\{(4M\ell^2)^{-1},\ \mu/[2(1+2M\ell)]\}
\]
preserves both physical margins at $\mu/2$. Fixed smooth bumps also bound every source derivative by a constant times $\|h\|_\infty$. If no measured line meets $D$, take $j_h=j_*$ directly.
\end{proof}
The bump space depends on the ray design. Its required spatial scales and derivative constants need not remain uniform as observations become dense. Finite-dimensional priors in sampling stability theory \cite{alberti2021finite} fix precisely this additional freedom. Our fixed-grid capacity comparisons retain their declared spatial class; \eqref{rs:spatialexact} describes what changes when that class is enlarged. Continuous pixel footprints require the separate source-response argument below.

\paragraph{Continuous pixels retain spatial-source ambiguity.}\phantomsection\label{rs:pixelspatial}
The same issue arises for finite pixels that integrate a continuum of rays. Consider distinct pinhole centers $o_c$ outside $\overline\Omega$ and normalized solid-angle averages over nonoverlapping angular pixels $P_{cp}$, of area $\omega_{cp}>0$. Fix a ball $D\Subset\Omega$. For each camera, the pixels cover all directions hitting $D$, their cone cells in $D$ have planar interfaces and a connected face-adjacency graph, and every adjacency has an open face patch disjoint from all other cameras' interfaces. Retain only cells with nonempty interior in $D$; all other pixels are unaffected by changes supported there.

Under these conditions there are fixed smooth isotropic source functions $\phi_1,\ldots,\phi_m$ supported in $D$ whose pixel-response matrix $A_\sigma$ is invertible for all sufficiently small nonnegative $\|\sigma\|_\infty$. At any physical isotropic reference with this invertibility and margins $\mu\leq j_{*,k}\leq\sigma_*-\mu$ on $D$, every sufficiently small prescribed $h\in C_c^\infty(D)$ has a physical isotropic source $j_h=j_*+\sum_a\phi_a\alpha_a$ with exactly the same pixel averages. Both fields equal the reference outside $D$ and converge to it in $C^\infty$ along $h=tg$. The \hyperref[rs:pixelspatialproof]{proof below} establishes continuous-pixel independence before solving at the changed density.

\phantomsection\label{rs:pixelspatialproof}
\begin{proof}
At zero attenuation, spherical coordinates about each camera give the exact source kernel
\begin{equation}\label{rs:pixelkernel}
 F_{cp}(0,v)=\int_D k_{cp}(x)v(x)\,dx,\qquad
 k_{cp}(x)=\frac{\mathbf1_{C_{cp}}(x)}{\omega_{cp}|x-o_c|^2},
\end{equation}
where $C_{cp}$ is the pixel cone. Suppose $\sum_{cp}a_{cp}k_{cp}=0$ almost everywhere. Across a clean face between pixels $p,q$ of camera $c$, the other cameras' terms are smooth. The jump therefore gives $a_{cp}/\omega_{cp}=a_{cq}/\omega_{cq}$. Connected adjacency makes this ratio one constant $b_c$ per camera, so coverage gives $\sum_c b_c/|x-o_c|^2=0$ on $D$. Real-analytic continuation extends the identity to the connected set $\mathbb R^3\setminus\{o_c\}$. Multiplication by $|x-o_c|^2$ and passage to $o_c$ gives $b_c=0$. Thus all pixel source functionals are independent.

Choose $m$ points away from interfaces whose kernel vectors are independent. Nonnegative, unit-volume smooth bumps $\phi_a$ in small disjoint balls about them yield an invertible $A_0=(F_{cp}(0,\phi_a))$. Fix these bumps and put $B=\|\sum_a|\phi_a|\|_\infty$, $M_0=\|A_0^{-1}\|_\infty$ and $\ell=\operatorname{diam}\Omega$. Positive normalized averaging and $|T_\sigma-T_{\widetilde\sigma}|\leq\ell\|\sigma-\widetilde\sigma\|_\infty$ give
\[
 \|A_\sigma-A_{\widetilde\sigma}\|_\infty
 \leq\ell^2B\|\sigma-\widetilde\sigma\|_\infty
\]
for nonnegative densities. Hence $\|\sigma_*\|_\infty\leq(2M_0\ell^2B)^{-1}$ suffices for invertibility. Positive physical references exist, for example small constant $\sigma_*=\epsilon$ with $j_*=\epsilon(1/4,1/2,3/4)$.

For $M=\|A_{\sigma_*}^{-1}\|_\infty$, choose
\begin{equation}\label{rs:pixelradius}
 \|h\|_\infty\leq
 \min\left\{\frac1{2M\ell^2B},\frac{\mu}{2(1+2MB\ell)}\right\}.
\end{equation}
Then $\|A_{\sigma_*+h}^{-1}\|_\infty\leq2M$. Set
\[
 \alpha_k=A_{\sigma_*+h}^{-1}
 [F(\sigma_*,j_{*,k})-F(\sigma_*+h,j_{*,k})].
\]
The exact transport difference, $0\leq U_{*,k}\leq1$ and normalized averaging bound the bracket by $\ell\|h\|_\infty$. Thus $\|j_h-j_*\|_\infty\leq2MB\ell\|h\|_\infty$; \eqref{rs:pixelradius} leaves both physical margins at least $\mu/2$. Source linearity gives exact nonlinear equality of all pixels. Fixed smooth bumps give convergence in every finite derivative norm along $h=tg$.
\end{proof}
For example, two cameras at $(0,0,3)$ and $(3,0,0)$ viewing $\Omega=B_1$ can each split a rectangular field of view into two pixels, with cone interfaces $x=0$ and $y=0$ in $D=B_{1/4}$. Their clean face patches satisfy the hypotheses. This result uses solid-angle pixel averages and unrestricted spatial source resolution. Uniform sensor-plane weights or overlapping smooth point-spread functions require their own independence argument. The neighborhood size depends on the pixel geometry and source basis; exact finite-pixel agreement does not imply agreement of all constituent rays.

\smallskip
\phantomsection\label{rs:pixelscaling}
\noindent\textbf{Extension along optical-depth scales.}
The small-attenuation condition above can be relaxed along each fixed density shape. For any smooth bounded $s\geq0$, the same bumps give an invertible $A_{\lambda s}$ for every $\lambda>0$ outside a discrete set $E_s$, finite on every bounded interval. At each such scale, every isotropic reference with strict physical margins admits the exact local compensation above. Thus this finite-pixel ambiguity extends beyond optically thin media. The \hyperref[rs:pixelscalingproof]{argument below} concerns the exact exponential response.

\phantomsection\label{rs:pixelscalingproof}
\begin{proof}
Extend the scale to $z\in\mathbb C$. Each entry of $A_{zs}$ integrates $e^{-z\tau_s}\phi_a$, where $0\leq\tau_s\leq\ell\|s\|_\infty$. Its power series is uniformly dominated on every bounded set of $z$, so $A_{zs}$ is entire. Since $\det A_0\ne0$, the zeros of $\det A_{zs}$ are isolated; take $E_s$ to be its positive real zeros. At every other scale, \eqref{rs:pixelradius} applies with $M=\|A_{\lambda s}^{-1}\|_\infty$ and that reference's margin. For the family $(\lambda s,\lambda j)$, the margin is $\lambda\mu_0$ when $\mu_0$ is the margin of $(s,j)$. On compact scale sets avoiding $E_s$ and zero, continuity of the inverse gives a common positive radius.
\end{proof}
An exceptional scale concerns the selected source basis; it need not obstruct other source corrections. Certifying a specified scale requires its response matrix, and the admissible radius can shrink at large optical depths.

\paragraph{Checking the full-appearance coefficient.}\phantomsection\label{rs:fiber}
The positive coefficient in \eqref{rs:samplegain} must cover every physical appearance that fits the reference data. The following sufficient test supplies this input without requiring a unique appearance fit. Take a fixed compact physical coefficient set $C$, put $y_0=B(s_0)c_0$, and retain $V_0=\operatorname{ran}B(s_0)$. Define
\begin{align}
 C_\eta&=\{c\in C:\|B(s_0)c-y_0\|_\infty\leq\eta\},\nonumber\\
 \mu_\eta&=\min_{\substack{c\in C_\eta\\\|u\|=1}}
 \operatorname{dist}_\infty(D_sB(s_0)[u]c,V_0).\label{rs:fibergain}
\end{align}
$C_0$ contains all appearances producing the reference image, including reference null directions. $\mu_\eta$ measures the smallest density response left outside the reference appearance range for its thickened fiber. Distance uses the maximum-reading norm. Assume a uniform Taylor remainder
\[
 M\|B(s_0+h)-B(s_0)-D_sB(s_0)[h]\|
 \leq\epsilon_r\|h\|,\qquad \|h\|\leq r,
\]
within the star-shaped density prior. Differentiability gives $\epsilon_r\to0$; a bound $M\|D_s^2B\|\leq H$ gives $\epsilon_r=Hr/2$. Then
\begin{equation}\label{rs:fiberbound}
 \|B(s)c-y_0\|_\infty\geq
 \min\{\mu_\eta-\epsilon_r,\eta/r-K\}\|s-s_0\|
\end{equation}
for every $c\in C$ and $\|s-s_0\|\leq r$ in the prior.

\emph{Proof of \eqref{rs:fiberbound}.}
Write $t=\|s-s_0\|>0$. If $c\in C_\eta$, quotienting the Taylor expansion by $V_0$ removes $B(s_0)c-y_0$ and leaves at least $(\mu_\eta-\epsilon_r)t$. Otherwise, the exact splitting used above gives $\|B(s)c-y_0\|_\infty>\eta-Kt\geq(\eta/r-K)t$. The case $t=0$ is immediate. Compactness of $C$ and the density unit sphere gives $\mu_\eta\to\mu_0$ as $\eta\downarrow0$: convergent subsequences of minimizers land in $C_0$, and distance to $V_0$ is continuous. Thus $\mu_0>0$ yields a positive coefficient for sufficiently small $\eta$, then $r$. This coefficient can be inserted into \eqref{rs:samplegain}. The density neighborhood is prescribed; the entire compact appearance class remains admissible.

For the two-layer model above, $\operatorname{dist}_\infty(D_sB(s_0)c,V_0)=c_1/10+3c_2/40$. On $C_\eta$, eliminating $c_1$ gives $\mu_\eta=1/16-2\eta/15$ for $0\leq\eta\leq15/32$, attained at $(5/8-4\eta/3,0)$. Row sums give $K=14/9$ and $H=28/9$. With $r=.001$ and $\eta=.005$, \eqref{rs:fiberbound} therefore supplies $\gamma_r=217/3600>.06027$ over all physical colors. The preceding $64$ pixel mixtures retain $\gamma_D\geq23237/460800>.05042$. At $a=b=10^{-5}$, \eqref{rs:noise} gives $|s-s_0|<.000397$ within this density prior.

\paragraph{Why one fitted appearance is insufficient.}
Consider two known front segments with transmissions $1/2,2/5$, followed by a shared unknown rear density $s$ with lengths $1,2$. With shared front/rear colors $(c_f,c_b)\in[0,1]^2$, black background, and $q=e^{-s}$, exact segment compositing gives
\[
 F_1=\tfrac12c_f+\tfrac12(1-q)c_b,\qquad
 F_2=\tfrac35c_f+\tfrac25(1-q^2)c_b.
\]
At $q_0=1/2$, $c_0=(1/2,1/2)$, the image is $(3/8,9/20)$ and $V_0=\operatorname{span}\{(1,6/5)\}$. The density derivative is $(c_b/4,c_b/5)$; its maximum-norm distance from $V_0$ is $c_b/22$, hence $1/44>0$ at $c_0$. Yet the physical color $(3/4,0)$ gives the same image for every $s$. It lies in $C_0$ and makes $\mu_0=0$. The test at one appearance misses this alternative. The example concerns unrestricted physical refitting; joint local density--appearance recovery uses a neighborhood of both variables. Conversely, the quotient test is only sufficient: active physical bounds can preserve a signal even when the relaxed distance in \eqref{rs:fibergain} vanishes.

\paragraph{A positive coefficient with 27 density nodes.}\phantomsection\label{rs:multivoxel}
The complete shared SH1 class also admits a certified positive local coefficient in a spatial voxel model. We use the $384$ RGB rays of the $27$-node design, with all three channels and $64$ segments per ray. Each node carries one density and four source coefficients per channel, in the basis $(1,d_x,d_y,d_z)$. All $324$ source coordinates may change, subject to the full-direction physical constraints $q_{v0}\geq\|q_{v,1:3}\|_2$ and $q_{v0}+\|q_{v,1:3}\|_2\leq s_v$ for each channel. The exterior is known vacuum and the background is black. The density loss is the $L^2$ norm of its Q1 interpolant on the unit cube.

The stored binary sampling weights and segment lengths define the exact finite compositor; stored direction vectors are normalized exactly. The stored isotropic source and density define the reference $(s_*,q_*)$. For every physical candidate with $\|s-s_*\|_\infty\leq r$, the maximum of its $1{,}152$ raw RGB differences obeys
\begin{align}
 \|F(s,q)-F(s_*,q_*)\|_\infty
 &\geq\gamma_r\|\sigma_{s-s_*}\|_{L^2},\label{rs:voxelbound}\\
 \gamma_r&=\frac{1-10^{-8}}{2401}-11e^{2r}r.\nonumber
\end{align}
The proof below certifies $\gamma_r>4.054\cdot10^{-4}$ at $r=10^{-6}$ and $\gamma_r>3.064\cdot10^{-4}$ at $r=10^{-5}$. Thus a finite spatial representation can retain density evidence after unrestricted physical SH1 refitting. The neighborhood restricts density alone.

\emph{Proof of \eqref{rs:voxelbound}.}
Put $h=s-s_*$, $v=q-q_*$, $z=(h,v)$ and $J=D F(s_*,q_*)$. An outward interval calculation of the entire $1{,}152\times351$ Jacobian and a stored approximate left inverse $R$ gives
\[
 \|I-RJ\|_\infty<10^{-8},\qquad \|R\|_\infty<2401.
\]
These are induced maximum norms. The certified values before rounding up are $6.8129\cdot10^{-9}$ and $2400.8921$, respectively. Along a ray let $a(u)$ be the accumulated density change in the segment model, and let $\delta j$ be its source change. The exact nonlinear remainder is
\[
 e=\int T_*\big[(e^{-a}-1+a)j_*+(e^{-a}-1)\delta j\big]du.
\]
The nonnegative sampling weights have row sums $P\leq1.000001$, and ray length $\ell\leq7/4$. Unit directions give $|a|\leq P\ell\|h\|_\infty$ and $|\delta j|\leq3P\|v\|_\infty$. Physicality gives $\int T_*j_*\leq1$. Consequently,
\[
 \|e\|_\infty\leq 11e^{2r}r\|z\|_\infty.
\]
This retains the mixed density--source term for every $v$. Since $g=F(s,q)-F(s_*,q_*)=Jz+e$, the inverse bounds give $(1-10^{-8})\|z\|_\infty\leq2401(\|g\|_\infty+\|e\|_\infty)$. The Q1 bound $\|\sigma_h\|_{L^2}\leq\|h\|_\infty$ proves \eqref{rs:voxelbound}.

\emph{Verification and scale.}
Every Jacobian and inverse-product entry is enclosed using outward elementary arithmetic and an exponential Taylor series with an explicit tail. An independent $70$-digit backward transport calculation checks $7{,}371$ predetermined Jacobian entries, $49$ inverse-product entries and all $351$ inverse and defect row bounds. The complete matrix enclosure and the analytic remainder establish the uniform claim. At $r=10^{-4}$ this sufficient bound is nonpositive. The positive result is a local certificate for the declared finite compositor; its small radius and coefficient do not supply a one-gray-level recovery guarantee. The files include the exact inputs, inverse, enclosures and both implementations.

\emph{A larger certified density neighborhood.}
Retaining each ray's attenuation moments sharpens the same full-appearance guarantee. With the identical reference, rays and physical SH1 class, the coefficient in \eqref{rs:voxelbound} can be replaced by
\begin{equation}\label{rs:raywisegain}
 \widetilde\gamma_r=
 \frac{1-7\cdot10^{-8}-356r\exp(1.413r)}{2401}.
\end{equation}
It exceeds $2.680\cdot10^{-4}$ at $r=.001$ and $1.191\cdot10^{-4}$ at $r=.002$. The certified nodal density radius therefore expands from $10^{-5}$ to $.002$ while all $324$ source coefficients remain free. This improvement comes from a tighter estimate of the same nonlinear residual.

\emph{Proof of \eqref{rs:raywisegain}.}
Set $D=\operatorname{diag}(2401I_{27},359I_{324})$ and $N=\|D^{-1}z\|_\infty$. Full interval row sums verify $\|D^{-1}R\|_\infty<1$ and $\|D^{-1}(I-RJ)D\|_\infty<7\cdot10^{-8}$. For ray $e$, let $p_e(u)$ be the sum of its stored sampling weights, $\lambda_e(u)=\int_0^u p_e(w)dw$, and $\nu_e=1+\|d_e\|_1$. For each channel define the nonnegative reference moments
\[
 \alpha_{ec}=\tfrac12\int T_*j_{*,c}\lambda_e^2du,
 \qquad \beta_e=\nu_e\int T_*p_e\lambda_e du.
\]
The exact remainder above obeys
$|e_{ec}|\leq e^{1.413r}r(\alpha_{ec}\|h\|_\infty+\beta_e\|v\|_\infty)$, since $|a(u)|\leq\lambda_e(u)\|h\|_\infty$ and $\max_e\lambda_e(\ell_e)<1.413$. Write $\alpha,\beta$ as reading vectors, repeating $\beta_e$ across channels. The complete interval calculation gives
\[
 \max_i\frac{\sum_{ec}|R_{i,ec}|(2401\alpha_{ec}+359\beta_e)}{D_{ii}}
 <355.474<356.
\]
Hence $N\leq\|g\|_\infty+(7\cdot10^{-8}+356r e^{1.413r})N$. Using $N\geq\|\sigma_h\|_{L^2}/2401$ proves the bound whenever its coefficient is positive. Moments are integrated exactly over each constant segment with outward arithmetic; an independent $70$-digit incomplete-gamma calculation checks the moment implementation and all $351$ scaled row bounds. At $r=.005$ this sufficient coefficient is nonpositive. The larger certified neighborhood retains a small noise scale: at $r=.002$, $\widetilde\gamma_r r$ is about $2.38\cdot10^{-7}$ in normalized RGB.

\paragraph{Density recovery with nonunique shared appearance.}\phantomsection\label{rs:sharedfiber}
The same layered construction also gives an exact positive example with $p$ unknown rear densities. Let $s_i\geq0$ and $q_i=e^{-s_i}$. The $i$th ray pair has the front transmissions and rear lengths above, one rear RGB color $c_i\in[0,1]^3$, and a front RGB color $f\in[0,1]^3$ shared across \emph{all} pairs:
\[
 F_{i1}=f/2+(1-q_i)c_i/2,\qquad
 F_{i2}=3f/5+2(1-q_i^2)c_i/5.
\]
Black background and exact segment integration are used. At $s_i=\log2$, the strictly interior colors $f=7/8$, $c_i=3/4$ give the RGB reference $y_i=(5/8,3/4)$ in every channel. Its complete appearance fiber is $f\in[3/4,1]$, $c_i=5/2-2f$, independently in each channel. Appearance remains nonunique across all $p$ regions.

Define $\rho(s)$ as the smallest maximum RGB discrepancy from this reference over every physical shared color. For $0<q_i\leq3/4$, its exact value is
\begin{equation}\label{rs:sharedprofile}
 \rho(s)=\max_i\frac{|2e^{-s_i}-1|}{4(9+4e^{-s_i})}.
\end{equation}
Thus the density vector is uniquely determined at zero discrepancy although the joint parameters are not. Moreover,
$\rho(\log2+h)=\|h\|_\infty/44+O(\|h\|_\infty^2)$,
so the optimal limiting density coefficient is $44$ when loss is the maximum density-coordinate error. The derivation and attaining colors are given \hyperref[rs:sharedproof]{below}. For $p=27$, the $54$ RGB rays leave three appearance-only Jacobian null directions; the joint rank is $108$ among $111$ unknown coordinates.

\emph{Proof of \eqref{rs:sharedprofile} and its consequences.}\label{rs:sharedproof}
The functional $\lambda_i=(4(1+q_i),-5)/(9+4q_i)$ has $\ell^1$ norm one and annihilates the rear-color column. Applying it to the two residuals gives, channelwise,
\[
 \|F_i-y_i\|_\infty\geq
 \frac{|2q_i-1|(5/4-f)}{9+4q_i}
 \geq\frac{|2q_i-1|}{4(9+4q_i)}.
\]
One common front color $f=1$ attains every regional bound with
$c_i=11/[4(1-q_i)(9+4q_i)]$; these colors lie in $[0,1]$ for $q_i\leq3/4$. The paired residuals are equal and opposite. This proves the profile over the entire physical appearance class. Taylor expansion at $s_i=\log2$ gives the stated coefficient, and a change in one coordinate attains it. At the reference, the contrast $(-6/5,1)$ removes appearance and leaves $-c_i h_i/10$. Hence each Jacobian null direction has $h_i=0$ and $\delta c_i=-2\delta f$, proving the rank and kernel statements.

The lower bound remains valid for every nonnegative density, including $q_i>3/4$. If the reference and a candidate fit one observation within $b=1/255$ each, then $\rho(s)\leq2/255$ implies the sharp bounds
\begin{equation}\label{rs:sharednoise}
 \log(478/327)\leq s_i\leq\log(542/183)
 \quad\text{for every }i.
\end{equation}
Indeed, solving the lower-bound inequality gives $183/542\leq q_i\leq327/478$. The attaining colors above are physical throughout this box, and the midpoint of candidate and reference RGB supplies a common noisy observation. Thus the interval $[.37965,1.08578]$, with exact endpoints in \eqref{rs:sharednoise}, describes possible density values compatible with \emph{some} such noisy observation. No small density-neighborhood prior is needed.

The positive signal is enforced by the physical color bound: the first reference reading exceeds the front layer's maximum contribution $1/2$, forcing rear emission. With lower reference data $(a,6a/5)$, $0\leq a\leq1/2$, the shared colors $f=2a$, $c_i=0$ instead fit every density vector. This distinguishes density recovery from appearance uniqueness within the same observation model.

\paragraph{Local recovery and a finite-noise capacity gap.}\phantomsection\label{rs:coupledwitness}
The coupled $27$-node model of \cref{rs:multivoxel} admits both certified local SH1 recovery and a finite density error concealed at one-gray noise. Keep its reference, all $384$ RGB rays, $64$ segments, source parameterization, black background and spatial $L^2$ loss. Candidate densities are nonnegative, and sources obey the full nodal physical bounds $q_0\geq\|q_{1:3}\|_2$ and $q_0+\|q_{1:3}\|_2\leq s$. Let $\rho_L(s)$ denote the optimal maximum RGB discrepancy from this reference over all physical degree-$L$ sources at density $s$.

For a saved candidate $s^\dagger$, outward evaluation certifies
\begin{equation}\label{rs:coupledgap}
\begin{aligned}
 \|\sigma_{s^\dagger}-\sigma_{s_*}\|_{L^2}&>.10099999,\\
 2.18174921<255\rho_0(s^\dagger)&<2.18174922,\\
 255\rho_1(s^\dagger)&<1.05804189.
\end{aligned}
\end{equation}
The SH0 bound covers every physical isotropic source; the SH1 bound is attained or improved by a saved globally physical directional source. These two certificates establish a strict capacity gap at one fixed density. Keeping the reference color instead leaves $14.98537$--$14.98538$ gray levels. The candidate has minimum nodal density above $.13632$ and nodal maximum change $1.55915$. The local bound \eqref{rs:raywisegain} applies within $.002$, explaining how positive local recovery and this finite-amplitude alternative coexist. The certificate and common-observation construction are proved \hyperref[rs:coupledproof]{below}.

\emph{One observation, two capacity decisions.}
Write $y_*$ for the reference RGB and $y_1$ for the saved SH1 candidate. Their midpoint $z=(y_*+y_1)/2$ obeys
\begin{equation}\label{rs:couplednoise}
\begin{aligned}
255\max\{\|z-y_*\|_\infty,\|z-y_1\|_\infty\}&<.52902095,\\
255\inf_{q\in\mathcal C_0(s^\dagger)}
\|F(s^\dagger,q)-z\|_\infty&>1.65272826.
\end{aligned}
\end{equation}
Thus the same one-gray observation budget accepts the reference and the SH1 alternative, while rejecting that alternative under every SH0 appearance. On any prior containing these two SH1 scenes, every density estimator has worst-case $L^2$ error greater than $.05049$ at noise $1/255$ per reading. The midpoint supplies a worst-case bounded-noise witness.

\emph{Proof and certificate of \eqref{rs:coupledgap}--\eqref{rs:couplednoise}.}\label{rs:coupledproof}
At fixed density, let $G(s)$ map isotropic source nodes to one RGB channel. For every $\|\lambda\|_1\leq1$ and $0\leq q\leq s$, the dual norm and the box support function give
\[
 \|G(s)q-y_*\|_\infty\geq
 \lambda^Ty_*-\sum_v s_v\max\{(G(s)^T\lambda)_v,0\}.
\]
Three saved dual vectors, normalized with outward $\ell^1$ bounds, give the SH0 lower certificate. Physical fitted sources give both upper certificates by exact segment compositing. Every matrix coefficient and all $1{,}152$ readings are enclosed with the exponential-series arithmetic of \cref{rs:multivoxel}. The mass matrix is $M_1\otimes M_1\otimes M_1$, where $M_1=\frac1{12}\bigl(\begin{smallmatrix}2&1&0\\1&4&1\\0&1&2\end{smallmatrix}\bigr)$; outward evaluation of $(s^\dagger-s_*)^TM(s^\dagger-s_*)$ gives the density separation. Nodal source bounds imply $0\leq j(x,d)\leq\sigma(x)$ for every position and direction by nonnegative Q1 interpolation.

The first line of \eqref{rs:couplednoise} follows by halving the SH1 discrepancy. The reverse triangle inequality gives its second line from $2.18174921-.52902095=1.65272826$. Finally, at this common observation, the triangle inequality prevents both estimation errors from being smaller than half the density separation. These arguments depend on the saved physical and dual witnesses, independently of solver convergence.

\emph{Construction and complete sweep.}
A single density direction is selected by the smallest singular vector of the reference density Jacobian after eliminating all shared SH1 source columns, normalized in the Q1 mass norm. Its largest-magnitude node fixes the sign. Both signs at amplitudes $.01,.03,.101,.201$ are tested. SH0 uses the complete physical box; SH1 fits use the sufficient constraints $\|q_{1:3}\|_1\leq\min(q_0,s-q_0)$ and are then checked against the full physical bounds. Their feasible values upper-bound the complete SH1 optimum. All positive-density cases are listed below; the remaining three cases violate density nonnegativity.

\begin{center}\small
\begin{tabular}{@{}crrr@{}}\toprule
Signed amplitude & Fixed color & SH0 optimum & SH1 upper\\\midrule
$-.01$ & $1.643$ & $[.24026008,.24026009]$ & $.033858$\\
$+.01$ & $1.613$ & $[.23539893,.23539895]$ & $.033851$\\
$-.03$ & $5.021$ & $[.73600890,.73600892]$ & $.214167$\\
$+.03$ & $4.749$ & $[.69233295,.69233297]$ & $.101518$\\
$+.101$ & $14.986$ & $[2.18174921,2.18174922]$ & $1.058042$\\\bottomrule
\end{tabular}
\end{center}
All entries are maximum channel discrepancy in linear gray levels; fixed-color values are rounded upper bounds. Source-unchanged diagnostics are retained in the artifacts; they violate physical bounds for the primary candidate, so fixed color supplies the physical baseline. Independent reverse compositing checks all $11{,}520$ fitted RGB scalars, with $105$ additional 75-digit checks and all nodal physical margins. The construction uses the declared finite renderer and query set; its angular physicality holds on the whole sphere.

\paragraph{Continuous integration preserves the capacity gap.}\phantomsection\label{rs:coupledcontinuous}
The saved $.101$-$L^2$ density alternative retains its one-gray SH0--SH1 separation when the $384$ original segments are rendered by continuous Q1 integration. Both endpoints of each saved chord define its exact path, length $\ell$, and unit direction; coordinate clamping extends the Q1 fields at sub-ulp boundary departures. Density and source coefficients remain fixed. The continuous SH1 discrepancy is below $1.131407$ gray levels, whereas the complete physical SH0 optimum exceeds $2.124238$. At the midpoint of the two continuous SH1 images, the reference and alternative each have error below $.565704$ levels; every SH0 source at the alternative density has error above $1.558535$. Thus continuous integration preserves the opposite capacity decisions at one-gray noise. The following error bound and dual transfer prove this statement.

On the clamped Q1 grid of spacing $1/2$, assume $0\le j\le\sigma\le S$. Let $R_N$ use $N$ equal midpoint segments and let $R$ denote the continuous integral. Then
\begin{equation}\label{eq:coupledquad}
 |R-R_N|\le E_N(S,\ell):=
 \left[\frac{\ell(18S^2+48S)}{12}+27S\right]\left(\frac{\ell}{N}\right)^2.
\end{equation}
This bound controls integration error uniformly over the physical source class, including refitted sources. Its proof below accounts for the Q1 derivative jumps.

\noindent\textit{Proof of \cref{eq:coupledquad} and certificate transfer.}
Within a Q1 cell, coordinate derivatives are bounded by $2S$ and mixed second derivatives by $8S$; consequently ray derivatives satisfy $|f'|\le6S$, $|f''|\le48S$. Global continuity retains the first bound as a Lipschitz bound. Let $T$ be the continuous entry transmittance and $U_N$ the midpoint field's remaining radiance. The transport identity gives
\[
 R-R_N=\int T\big[(j-j_N)-(\sigma-\sigma_N)U_N\big]dt.
\]
The weights $T$ and $TU_N$ are bounded by one and have derivative bounds $S$ and $2S$, respectively. On a smooth interval of width $\Delta$, the midpoint linear term cancels against a constant weight; the weight variation and quadratic remainder bound both source and density contributions by $(18S^2+48S)\Delta^3/12$. At most nine intervals contain crossings of the planes $x_i=-1/2,0,1/2$. Each such interval contributes at most $3S\Delta^2$ by Lipschitz continuity. Summation with $\Delta=\ell/N$ proves the bound.

At $N=1024$, outward interval arithmetic encloses every midpoint RGB value and source-matrix coefficient. Add $E_N$ for the reference and candidate to bound their continuous SH1 discrepancy. For a saved SH0 dual vector $\lambda$, subtract $\sum_r|\lambda_r|(E_{N,\mathrm{ref},r}+E_{N,\mathrm{alt},r})$ from its midpoint lower bound. The dual norm is at most one, so this subtraction covers every physical SH0 source. The resulting bounds above imply the common-observation statement by the same midpoint and triangle-inequality argument as \cref{rs:couplednoise}. The local inverse estimate \cref{rs:raywisegain} retains its stated finite compositor; this absolute integration bound transfers the finite-amplitude certificate.

\paragraph{Changing the queries tests a separate property.}
Keep the same source and evaluate all $425$ previously declared held-out rays: $17$ full-sphere Fibonacci directions with phase $.81$ and a $5\times5$ offset grid on $[-.34,.34]^2$. Their continuous SH1 discrepancy lies in $[2.880432,2.984881]$ gray levels. Its lower bound exceeds two, so this particular pair cannot fit a common observation within one gray level each on that query set. This separates stability under integration refinement from transfer to new queries. The statement concerns the saved source, rather than the optimum over alternative SH1 refits. The retained numerical maxima are:
\begin{center}\small
\begin{tabular}{lrrr}
\toprule
Queries & $N=64$ & $N=256$ & $N=1024$\\
\midrule
Original $384$ & $1.05804$ & $1.05973$ & $1.05980$\\
Held-out $425$ & $2.93255$ & $2.93266$ & $2.93266$\\
\bottomrule
\end{tabular}
\end{center}
All entries are maximum SH1 RGB discrepancies in linear gray levels. Independent reverse compositing checks all $29{,}124$ scalar outputs, and separate polynomial-density quadrature checks all $809$ continuous segments at two quadrature orders. The bounds use \cref{eq:coupledquad}, not the observed convergence of these numerical checks.

\paragraph{Local recovery for the same continuous renderer.}\phantomsection\label{rs:continuouslocal}
The continuous $27$-node Q1 reference in \cref{rs:coupledcontinuous} also has a positive local density gain over its full physical SH1 source class. Write $F$ for its $384$ continuous RGB ray integrals and $h$ for the density-node increment. For every $\|h\|_\infty\le10^{-4}$ and every physical shared SH1 source $j_*+q$ at $\sigma_*+\sigma_h$,
\begin{equation}\label{eq:continuouslocalgain}
 \|F(\sigma_*+\sigma_h,j_*+q)-F(\sigma_*,j_*)\|_\infty
 \ge 3.3116\times10^{-4}\|\sigma_h\|_{L^2}.
\end{equation}
Thus the same reference and continuous rendering map support local recovery and the $.101$-$L^2$ finite-amplitude hiding construction. The source may change throughout its physical class; the small neighborhood restricts density. The gain measures raw linear RGB per unit density, and its \hyperref[rs:continuouslocalproof]{proof below} controls both integration and nonlinear errors uniformly over all shared source coefficients.

\noindent\textit{Proof.}\phantomsection\label{rs:continuouslocalproof}
For one ray of length $\ell$, put $\Delta=\ell/N$. Let $J$ be the continuous derivative and $J_N$ the exact midpoint compositor's derivative. For signed Q1 density and source increments with nodal bounds $H$ and $Q$, respectively,
\begin{equation}\label{eq:continuousderivative}
 \begin{aligned}
 |(J-J_N)[h,q]|&\le\Delta^2(C_h H+C_q Q),\\
 C_q&=S(2\ell^2+16\ell)+4\ell+27,\\
 C_h&=S^2(2\ell^3+31\ell^2/2)+2S\ell^2+14S\ell+4\ell+30.
 \end{aligned}
\end{equation}
where the reference obeys $0\le j_*\le\sigma_*\le S$. This relative error bound vanishes with the parameter increment.

To derive it, a signed Q1 nodal bound $B$ gives ray-derivative bounds $12B,96B$ within cells. Midpoint cancellation bounds complete smooth-cell primitive errors by $4B\Delta^3$; each of at most nine grid/clamp crossing cells contributes $3B\Delta^2$, as does the final partial cell. Hence $\sup_t|\int_0^t(f-f_N)|\le B(4\ell+30)\Delta^2$, halved for nodal values in $[0,S]$. Put $A=\int_0^t\sigma_*$, $B_h=\int_0^t h$, $T=e^{-A}$, and use subscript $N$ for the midpoint fields and their exact primitives. Then $J[h,q]=\int T(q-j_*B_h)$. Subtract its midpoint analogue into four terms:
\[
 \int(T-T_N)(q-j_*B_h)+\int T_N(q-q_N)
 -\int T_N B_h(j_*-j_{*,N})-\int T_Nj_{*,N}(B_h-B_{h,N}).
\]
The primitive bounds control the first and last terms, using $\int T_Nj_{*,N}\le1$. In the two middle terms, midpoint cancellation uses weights $T_N$ and $T_NB_h$, with Lipschitz bounds $S$ and $(S\ell+1)H$. The resulting bounds are $Q[\ell(S+4)+27]\Delta^2$ and $H[\ell\{S(S\ell+1)/2+2S\ell\}+27S\ell/2]\Delta^2$. Their sum proves \cref{eq:continuousderivative}, including all Q1 interfaces.

Let $v$ contain the $324$ source increments and $D=\operatorname{diag}(2401I_{27},359I_{324})$. With $t=\|D^{-1}(h,v)\|_\infty$, the SH1 basis $(1,d_1,d_2,d_3)$ gives $H\le2401t$, $Q\le1077t$. Applying the inverse-residual criterion~\cite{Rump2010} at $N=4096$, outward evaluation of all $1152\times351$ derivative entries and the unchanged saved left-inverse approximation $R$ certifies $\|D^{-1}R\|_\infty\le1$ and
\[
 \|D^{-1}(I-RJ)D\|_\infty\le .028870.
\]
Here the finite derivative defect is below $.014989$; applying $D^{-1}|R|$ to the raywise bound in \cref{eq:continuousderivative} contributes less than $.013882$.

The exact exponential expansion retains the density--source cross term. At $\|h\|_\infty\le r$, its raywise nonlinear remainder is at most
$e^{\ell r}\ell^2(H^2+3H\|v\|_\infty)/2\le1739r e^{\ell r}\ell^2t$.
Applying $D^{-1}|R|$ bounds this by $.176011t$ at $r=10^{-4}$. Therefore the RGB residual is at least $(1-.028870-.176011)t$. Nonnegative Q1 interpolation on the unit-volume cube gives $\|\sigma_h\|_{L^2}\le\|h\|_\infty\le2401t$, proving \cref{eq:continuouslocalgain}. The same certificate gives gain $3.77759\times10^{-5}$ at $r=5\times10^{-4}$; its sufficient expression is nonpositive at $r=10^{-3}$ and $.002$.

Independent reverse differentiation checks every midpoint Jacobian entry, and piecewise-polynomial integration with two Gaussian quadrature orders checks every continuous ray. The certificate itself uses the uniform bound \cref{eq:continuousderivative} and the nonlinear remainder. The finite-amplitude result and the held-out-query outcome in \cref{rs:coupledcontinuous} retain their stated scope.

\paragraph{Stability between data-compatible reconstructions.}\phantomsection\label{rs:continuouspairwise}
The same certificate also bounds the difference between two admissible reconstructions. This makes the recovery statement independent of which fitting algorithm produced them. Keep the density-node prior $r=10^{-4}$ and let $K_\epsilon$ contain all physical shared SH1 fields $z=(s,v)$ satisfying $\|s-s_*\|_\infty\le r$ and $\|F(z)-F(z_*)\|_\infty\le\epsilon$. With $\delta=.028870$, $\beta=.176011$, $a=1-\delta-\beta=.795119$, and $A=2401$, every $z_1,z_2\in K_\epsilon$ obeys
\begin{equation}\label{eq:continuouspairwise}
 \begin{aligned}
 \|F(z_1)-F(z_2)\|_\infty
 &\ge\gamma(\epsilon)\|\sigma_1-\sigma_2\|_{L^2},\\
 \gamma(\epsilon)&=\frac{1-\delta}{A}-\frac{2\beta\epsilon}{ar},
 \qquad \gamma(\epsilon)>0.
 \end{aligned}
\end{equation}
At $\epsilon=10^{-8}$, $\gamma(\epsilon)\ge3.6019\times10^{-4}$. The constant measures the raw RGB discrepancy that must remain between two different densities after both sources have been refitted. If an observation $y$ is within $\eta=5\times10^{-9}$ of $F(z_*)$, any two physical SH1 reconstructions fitting $y$ within the same $\eta$ and satisfying the density prior differ by at most $2.777\times10^{-5}$ in spatial density $L^2$. Indeed they belong to $K_{2\eta}$ and their RGB difference is at most $2\eta$. These noise levels are much smaller than $1/255$; the bound establishes a conditional inverse, not a one-gray recovery guarantee.

\noindent\textit{Proof.}\phantomsection\label{rs:continuouspairwiseproof}
The preceding joint estimate gives $\|D^{-1}(z-z_*)\|_\infty\le\epsilon/a$ for $z\in K_\epsilon$, so the data condition controls all source coefficients as well as density. To pass from reference-centered to pairwise recovery, retain the fixed left preconditioner $R$ and control its derivative throughout the parameter triangle spanned by $z_*,z_1,z_2$. Uniform derivative control is the standard mechanism behind inverse Lipschitz estimates~\cite{alberti2021finite}; here all constants come from the preceding certificate.

For increments $u_i=(h_i,q_i)$, differentiation under the finite ray integral gives
\[
 D^2F_k(z)[u_1,u_2]
 =\int T\bigl(B_{h_1}B_{h_2}j-B_{h_1}q_2-B_{h_2}q_1\bigr),
 \qquad B_h(t)=\int_0^t h.
\]
At a physical field, $T\le1$ and $\int Tj\le1$. Density bounds $H_i$ and source bounds $Q_i$ therefore give the raywise upper bound $\ell_k^2[H_1H_2+(H_1Q_2+H_2Q_1)/2]$. Writing $t_i=\|D^{-1}u_i\|_\infty$, we have $H_i\le A t_i$ and $Q_i\le1077t_i$. The bound becomes $2A\cdot1739\ell_k^2t_1t_2$. Since the certified remainder above uses $b_k=1739r e^{\ell_k r}\ell_k^2$ and $\|D^{-1}|R|b\|_\infty\le\beta$, it follows that
\[
 \|D^{-1}R D^2F(z)[u_1,u_2]\|_\infty
 \le (2A\beta/r)t_1t_2.
\]
Physical admissibility is convex in the independent density and source coordinates. Thus the entire parameter triangle is physical and satisfies $\|D^{-1}(z-z_*)\|_\infty\le\epsilon/a$, although $K_\epsilon$ itself need not be convex. Integrating the second-derivative bound from $z_*$ to each point on the $z_1$--$z_2$ segment bounds the scaled preconditioned derivative change by $\theta=2A\beta\epsilon/(ar)$. Integrating along that segment and applying the reference derivative defect gives
$(1-\delta-\theta)\|D^{-1}(z_1-z_2)\|_\infty\le\|F(z_1)-F(z_2)\|_\infty$.
Finally $\|\sigma_1-\sigma_2\|_{L^2}\le A\|D^{-1}(z_1-z_2)\|_\infty$, proving \cref{eq:continuouspairwise}. The fixed preconditioner prevents cancellation; integrating derivative norm lower bounds alone would not prove the result.

The data condition matters: two distinct positive densities with $j_1=j_2=0$ produce identical black images, even within the density prior. Hence a positive pairwise gain cannot hold over its entire physical source class at arbitrary image values. The sufficient expression in \cref{eq:continuouspairwise} becomes nonpositive above $\epsilon\approx9.136\times10^{-8}$; this loss of a bound does not establish ambiguity. The SH3 construction below establishes exact partners separately, at the same reference and finite query set.

\paragraph{Noise relative to the density prior.}\phantomsection\label{rs:priornoise}
The positive local coefficient above concerns a small density prior. At one-gray noise, that entire prior is already compatible with the reference image under SH0. This distinguishes information supplied by RGB from accuracy supplied by the prior itself.

For the same continuous Q1 reference on the unit-volume cube $C=[-1/2,1/2]^3$, let $\mathcal S_r=\{s:\|s-s_*\|_\infty\le r\}$. Its isotropic source $j_*$ has nodal absorption slack $\min_{v,k}(s_{*,v}-j_{*,v,k})>.2507$. For $0<r\le.002$, every $s\in\mathcal S_r$ therefore admits the unchanged physical source $j_*$ and satisfies, over all external rays,
\begin{equation}\label{eq:priornoise}
 \|F(s,j_*)-F(s_*,j_*)\|_\infty\le\sqrt{3}\,r.
\end{equation}
Thus, if the deterministic per-reading noise bound is $\eta\ge\sqrt{3}r$, the density projection of the physical $\eta$-compatible class at $y_*=F(s_*,j_*)$ is all of $\mathcal S_r$. Its spatial-$L^2$ diameter is exactly $2r$, and its optimal worst-case reconstruction error is exactly $r$. At $r=10^{-4}$, \cref{eq:priornoise} is below $.044168$ gray levels; at $r=.002$, it is below $.88335$. Both are smaller than one gray level. Sharper inverse constants alone cannot improve the worst-case accuracy of these priors at that noise level.

\noindent\textit{Proof.}
Nonnegative Q1 interpolation gives $|h|=|\sigma_s-\sigma_*|\le r$ and transfers $0\le j_*\le\sigma_s$ from nodes to the cube. The exact transport difference with unchanged source is $F(s,j_*)-F(s_*,j_*)=-\int T_s hU_*$. Since $T_s,U_*\in[0,1]$ and every cube chord has length at most $\sqrt3$, this proves \cref{eq:priornoise}. The two admissible endpoints $s_\pm=s_*\pm r\mathbf1$ are both compatible with $y_*$, using errors $e_\pm=y_*-F(s_\pm,j_*)$ of size at most $\eta$. Partition of unity makes their density difference the constant $2r$ on a cube of volume one. The triangle inequality forces any estimate at $y_*$ to have error at least $r$ for one endpoint. Returning $\sigma_*$ attains error at most $r$ for every density in the prior. The same upper and lower arguments give the global minimax error $r$ over this prior and bounded noise.

The effective source stays fixed; the color $j_*/\sigma_s$ changes but remains isotropic. This noise-scale calibration requires no directional capacity, and is separate from the strict SH0--SH1 separation above. It concerns finite tolerance under deterministic noise, not exact image equality or a statistical averaging model.

\paragraph{A finite footprint preserves the local gain.}\phantomsection\label{rs:finitefootprint}
The local certificate extends to sufficiently narrow averages of continuous segment measurements. Keep the same $27$-node Q1 fields, coordinate clamping, SH1 basis, reference, and density prior $\|h\|_\infty\le10^{-4}$. Replace each of the $384$ ordered central segments $(a_k,b_k)$ by a fixed probability measure $\mu_k$ on ordered endpoint pairs $(a,b)$ satisfying
$\max\{|a-a_k|_2,|b-b_k|_2\}\le w$.
The measures are nonnegative, normalized, and independent of the unknown field; apply each to all three color channels. Let $F_\mu$ average the continuous segment renderer under these same measures for both fields. For every physical shared SH1 source at the candidate density and every $w\le10^{-6}$,
\begin{equation}\label{eq:finitefootprintgain}
 \|F_\mu(\sigma_*+\sigma_h,j_*+q)-F_\mu(\sigma_*,j_*)\|_\infty
 \ge1.8\times10^{-4}\|\sigma_h\|_{L^2}.
\end{equation}
The radius is measured in cube-side units. This sufficient radius shows that the positive gain survives a nonzero measurement footprint, with all spatial source coefficients still free. A calibrated camera pixel falls within this statement when its entire clipped-segment support obeys the endpoint condition. The \hyperref[rs:finitefootprintproof]{proof below} absorbs segment length into the SH1 basis before varying direction, avoiding a short-segment singularity.

\phantomsection\noindent\textit{Proof.}\label{rs:finitefootprintproof}
Write $v=b-a$, $\ell=|v|$, and $x(t)=(1-t)a+tb$ for $0\le t\le1$. For a source increment $q=q_0+\boldsymbol q\cdot d$, combine length and direction as
\[
 G_q(x,v)=\ell q_0(x)+\boldsymbol q(x)\cdot v.
\]
This equals $\ell q(x,v/\ell)$ for $\ell>0$ and extends continuously by zero to $\ell=0$. Let $H$ bound density-node increments and $V$ bound all source coefficients. Spacing-$1/2$ Q1 interpolation gives spatial Lipschitz bounds $7H$ for density increments and $21V$ for $q$ at fixed direction. The isotropic reference fields lie in $[0,1]$ and have Lipschitz bounds $4$. These bounds hold across cells and under coordinate clamping.

Every saved central length is below $1.413$. For $w\le.01$, both the central and perturbed lengths are below $1.5$, their difference is at most $2w$, and $|x'-x|\le w$. The spatial change of $G_q$ is bounded by $1.5\cdot21Vw$; its segment change at fixed $x$ is bounded by $(2+2\sqrt3)Vw$. Thus $|G_q'-G_q|\le38Vw$ and $|G_q|\le4.5V$. With $G_j=\ell j_*(x)$, $S(t)=\ell\int_0^t\sigma_*(x(u))\,du$, $B(t)=\ell\int_0^t h(x(u))\,du$, and $T=e^{-S}$, the corresponding bounds are
\[
 \begin{gathered}
 |G_j'-G_j|\le8w,\quad |G_j|\le1.5,\quad |T'-T|\le8w,\\
 |B'-B|\le13Hw,\qquad |B|\le1.5H.
 \end{gathered}
\]
Since $DF_*[h,q]=\int_0^1T(G_q-G_jB)\,dt$, subtraction yields
\[
 |(DF_*'-DF_*)[h,q]|\le w(50H+74V).
\]
This controls every joint increment. For $\tau=\|D^{-1}(h,v_{\rm coeff})\|_\infty$, use $H\le2401\tau$, $V\le359\tau$, and the unchanged preconditioner $R$ from \cref{rs:continuouslocalproof}. Averaging preserves the bound, so the scaled inverse defect is at most $.028870+146616w$.

The exact nonlinear remainder above is at most $1739r e^{1.5r}(1.5)^2\tau$, uniformly for every segment and every physical candidate source. At $r=10^{-4}$, $e^x\le(1-x)^{-1}$ bounds this by $(15651/39994)\tau<.392\tau$. The norm $\|D^{-1}R\|_\infty\le1$ therefore gives
\[
 \|F_\mu(z)-F_\mu(z_*)\|_\infty
 \ge\bigl(1-.028870-146616w-15651/39994\bigr)\tau.
\]
At $w\le10^{-6}$ the bracket exceeds $.432514>2401(1.8\times10^{-4})$. Nonnegative Q1 interpolation on the unit-volume cube gives $\|\sigma_h\|_{L^2}\le2401\tau$, proving \cref{eq:finitefootprintgain}. The argument is a quantified derivative perturbation of the existing certificate~\cite{alberti2021finite}; no minimum segment length is required.

For data within $\eta$ of $F_\mu(z_*)$, any candidate in this prior fitting the data within $\eta$ has density error at most $2\eta/(1.8\times10^{-4})$. For example, $\eta=5\times10^{-9}$ gives $1/18000$. The one-gray prior obstruction in \cref{rs:priornoise} also survives these normalized nonnegative averages. Thus the footprint extension and the noise scale impose separate conditions on recovery.

\paragraph{A statistical cost within the local density prior.}\phantomsection\label{rs:gaussianpairs}
Positive noiseless gain can coexist with a substantial cost of distinguishing two nearby densities. Use the same continuous $27$-node reference and $384$ ordered RGB segments, with $r=10^{-4}$. Consider the two physical fields $z_\pm=(\sigma_*\pm r,j_*\pm rp)$, whose density $L^2$ distance on the unit-volume cube is $2r$. Three specified shared source directions $p$ give the following continuous image-difference bounds; the unchanged-source row uses $p=0$.
\begin{center}
\small
\begin{tabular}{lcc}
\toprule
Source direction & $10^5\|F(z_+)-F(z_-)\|_2$ & $K_0$\\
\midrule
Unchanged & $[62.429,62.436]$ & $355$\\
SH0 refit & $[8.0435,8.0497]$ & $21{,}360$\\
SH1 refit & $[5.5465,5.5541]$ & $44{,}869$\\
\bottomrule
\end{tabular}
\end{center}
The norm sums all $1152$ scalar readings; the \hyperref[rs:gaussianproof]{proof below} controls the continuous integration and nonlinear errors. For independent additive Gaussian noise with per-reading standard deviation $s_e=1/255$, let $K$ count independent repetitions of this entire vector. For each row and every $K\le K_0$, no estimator can attain density error at most $r/2=5\times10^{-5}$ with probability at least $.95$ at both fields. Thus the prescribed RGB measurements impose a statistical accuracy limit even inside this small density prior. Each row is one feasible physical pair; the comparison measures achieved compensation within a fixed design.

\phantomsection\noindent\textit{Proof and interpretation.}\label{rs:gaussianproof}
The SH0 and SH1 directions fit $-D_\sigma F_N[1]$ using all shared source coefficients of the indicated degree at the saved $N=4096$ midpoint derivative. The saved coefficients define the fields exactly. Both signs satisfy the nodal sufficient conditions $j_0-\sum_i|j_i|\ge0$ and $\sigma-j_0-\sum_i|j_i|\ge0$; their minimum margins exceed $.1049$ and $.2506$. Nonnegative Q1 interpolation makes both fields physical at every point and direction.

Along a segment of length $\ell$, write $T_*$ for reference attenuation and $Q=\max_v\sum_{k=0}^3|p_{vk}|$ for the channel's source bound. Symmetric subtraction gives
\[
 F(z_+)-F(z_-)=2r\,DF_*[1,p]+e,\qquad
 |e|\le\frac{r^3e^{r\ell}\ell^3(1+Q)}{3}.
\]
Indeed the exact integrand is $T_*[-2\sinh(rt)j_*+2r\cosh(rt)p]$. Subtract its linear term, use the cubic and quadratic Taylor bounds for $\sinh$ and $\cosh$, and apply $\int T_*j_*\le1$, $T_*\le1$. The cancellation removes the quadratic error. Combining this remainder with \cref{eq:continuousderivative} encloses every continuous reading. Exact rational accumulation of squared interval distances from zero and squared maximum endpoints gives the table's lower and upper norm bounds. A separate $100$-digit calculation checks all $3\times1152$ intervals.

Let $V_+$ denote the unrounded upper bound on the squared norm. Equal-prior Gaussian testing~\cite{polyanskiywu2024draft} has optimal error $\Phi(-\sqrt K\|F(z_+)-F(z_-)\|_2/(2s_e))$, where $\Phi$ is the standard normal distribution function. This follows by projecting the sample mean onto the difference of the two means. The table uses
\begin{equation}\label{eq:gaussianpaircount}
 K_0=\left\lfloor\frac{9s_e^2}{V_+}\right\rfloor.
\end{equation}
For $K\le K_0$ the testing error exceeds $.05$: the Gaussian tail bound gives
$\Phi(-3/2)\ge(6/13)e^{-9/8}/\sqrt{2\pi}>.05$.
An estimate within $r/2$ of the true density identifies the nearer endpoint, since the endpoints are $2r$ apart. A $.95$ accuracy guarantee at both endpoints would therefore contradict this testing bound. This proves \cref{eq:gaussianpaircount} and the stated accuracy obstruction. The counts apply to the specified independent-noise model and measurement design; $K_0+1$ is not a sufficient reconstruction guarantee.

\paragraph{The same finite observations: SH1 recovery and SH3 exact ambiguity.}\phantomsection\label{rs:capacitytransition}
Keep the same $27$-node reference and all $384$ continuous rays from \cref{rs:continuouslocal}. Increasing angular degree changes local density identifiability at this fixed scene and observation set. SH1 has the positive lower bound \cref{eq:continuouslocalgain}. In the shared Q1--SH3 class, every nodal density change $\|h\|_\infty\le10^{-8}$ instead has a physical source $j_h$ satisfying
\begin{equation}\label{eq:capacitytransition}
 F(\sigma_*+\sigma_h,j_h)=F(\sigma_*,j_*).
\end{equation}
The source can be chosen smoothly with $j_0=j_*$, and both source and absorption margins exceed $.061$ everywhere and in every direction. Thus distinct densities occur arbitrarily close to this reference with identical finite observations. The \hyperref[rs:capacitytransitionproof]{construction below} gives an exact nonlinear partner, extending the comparison beyond linear sensitivity. Equality concerns the declared finite rays; the all-ray SH2 construction is a separate result.

\noindent\textit{Proof and physical construction.}\phantomsection\label{rs:capacitytransitionproof}
For one channel, let $G(s)$ map nodal source coefficients to the $384$ continuous readings. Use the $20$ monomials in $d$ of total degree at most three. Their restrictions span exactly SH degree at most three, although the coordinates are redundant. Each node uses one polynomial shared by all rays, giving $540$ coefficients per channel. A stored matrix $W\in\mathbb R^{540\times384}$ satisfies the verified bounds
\[
 \|I-G(s_*)W\|_\infty\le\delta=.001600,
 \qquad \|W\|_\infty\le M=201000.
\]
Every reference nodal source and absorption margin exceeds $\mu=.104$, and each ray has squared length below two. Set $H=\|h\|_\infty$. Since both densities are nonnegative, $|T_h(t)-T_*(t)|\le tH$. Nonnegative Q1 basis functions sum to one, and every direction monomial has absolute value at most one. Consequently
\[
 \|G(s_*+h)-G(s_*)\|_\infty\le20H.
\]
Writing $A_h=G(s_*+h)W$, the inverse-residual bound~\cite{Rump2010} gives $\|A_h^{-1}\|_\infty\le(1-\delta-20MH)^{-1}$. If $c_*$ denotes the isotropic reference in these coordinates and $y_*=G(s_*)c_*$, define
\begin{equation}\label{eq:capacitysource}
 c_h=c_*+WA_h^{-1}\bigl[y_*-G(s_*+h)c_*\bigr].
\end{equation}
Multiplying by $G(s_*+h)$ proves \cref{eq:capacitytransition}. Since $j_*\le1$, the bracket has maximum norm at most $H$. The resulting source change in every direction is therefore at most
\[
 V_H=\frac{20MH}{1-\delta-20MH}.
\]
At $H\le10^{-8}$ the denominator is at least $4791/5000$, and $V_H\le67/1597<.042$. The source and absorption margins are at least $\mu-V_H$ and $\mu-H-V_H$, both above $.061$. Q1 interpolation preserves these nodal bounds throughout the cube. The matrix entries and the inverse are smooth in $h$, and the bracket vanishes at zero. Applying the construction independently to the three channels proves the claim. No statement about SH2 identifiability is needed.

\noindent\textit{Continuous matrix certificate.}
For each ray, split its saved endpoint segment at the grid and clamp planes. These cuts are exact rational numbers in the affine ray parameter. On each interval, a nodal Q1 basis function $\phi_v$ is a nonnegative Bernstein polynomial of degree three, while its accumulated optical depth $A(t)$ has degree four. For $A\ge0$,
\[
 \left|e^{-A}-\sum_{k=0}^{24}\frac{(-A)^k}{k!}\right|
 \le\frac{A^{25}}{25!}.
\]
Integrate $\phi_v A^k/k!$ using Bernstein products and coefficient averages, enclosing every operation outward. Since $\sigma_*\le1$, the remaining ray integral is bounded by $\ell^{26}/25!$. This encloses all $10{,}368$ continuous coefficients $\int T_*\phi_v$; multiplication by direction monomials gives all $207{,}360$ entries of $G(s_*)$. Complete interval multiplication yields $\|I-G(s_*)W\|_\infty<.001599679$ and $\|W\|_\infty<200773.479$. Independent piecewise-polynomial quadrature checks all entries, and high-precision arithmetic checks every norm row. The exact equality follows from \cref{eq:capacitysource}; it is not inferred from a small rendered residual or a numerical rank threshold.

\subsection{Sharp resources for two queries}
\begin{lemma}[Sharp bounded SH contrast]\label{q:sharpcontrast}
For two directions separated by $\theta\in(0,\pi]$,
\begin{equation}\label{q:contrast}
 \max_{c\in\cH_{\leq L},\ 0\leq c\leq1}|c(q_2)-c(q_1)|
 =M_L(\theta):=\sin\bigl(\min\{L\theta/2,\pi/2\}\bigr).
\end{equation}
\end{lemma}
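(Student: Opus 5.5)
The upper bound is a one-dimensional Bernstein--Szeg\H{o} estimate on the great circle through the two queries. Attainment is an explicit full-sphere construction, which I expect to be the delicate step.

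\emph{Reduction to a circle.} Let $C$ be the great circle through $q_1,q_2$. Parameterize it by arc length so that $q_1,q_2$ sit at $\phi=0$ and $\phi=\theta$ along the shorter arc. For a physical $c\in\cH_{\leq L}$, write $c=\tfrac12(1+f)$. Then $|f|\leq1$ on $\Sph^2$, and as recalled at the start of \cref{sec:frozen}, $t(\phi)=f(d(\phi))$ is a real trigonometric polynomial of degree at most $L$. The contrast equals $\tfrac12|t(\theta)-t(0)|$, so it suffices to show
\[
 |t(\theta)-t(0)|\leq2\sin\bigl(\min\{L\theta/2,\pi/2\}\bigr)\quad\text{whenever }|t|\leq1,\ \deg t\leq L.
\]

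\emph{Upper bound.} I would invoke the Bernstein--Szeg\H{o} inequality $t'(\phi)^2+L^2t(\phi)^2\leq L^2$ for $\|t\|_\infty\leq1$ \cite[Introduction]{erdelyi2014}. On any interval where $|t|<1$, write $t=\cos\psi$ with $\psi$ continuous in $[0,\pi]$. The inequality then says $|\psi'|\leq L$. The points where $|t|=1$ are isolated unless $t\equiv\pm1$, and $\psi$ is continuous across them, so $\psi$ is $L$-Lipschitz on all of $[0,\theta]$. Hence $|\psi(\theta)-\psi(0)|\leq\min\{L\theta,\pi\}$. For $\alpha,\beta\in[0,\pi]$ one has $|\cos\alpha-\cos\beta|\leq2\sin(|\alpha-\beta|/2)$, and this gives the bound. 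Taking the shorter arc is what produces $\theta$ rather than $2\pi-\theta$.

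\emph{Attainment when $L\theta\leq\pi$.} Choose an orthonormal frame $e_1,e_2$ of the plane of $C$, rotated so that $q_1,q_2$ sit at angles $\mp\theta/2$. Take
\[
 c(d)=\tfrac12+\tfrac12\operatorname{Im}\bigl\{(d\cdot e_1+i\,d\cdot e_2)^L\bigr\}.
\]
This is degree $L$, and on the whole sphere $|(d\cdot e_1+i\,d\cdot e_2)^L|=(1-(d\cdot n)^2)^{L/2}\leq1$, so $c$ is physical. On $C$ it equals $\tfrac12+\tfrac12\sin(L\phi)$, and its values at $\mp\theta/2$ differ by exactly $\sin(L\theta/2)$. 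This is the same lifting used in \cref{cap:G} and the regular-circle construction.

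\emph{Attainment when $L\theta>\pi$: the main obstacle.} Here I need a physical degree-$\leq L$ function taking the values $0$ and $1$ at $q_1,q_2$. Pure circle harmonics $e^{iL\phi}$ cannot do this at an arbitrary $\theta$, and tilting the frame shrinks the modulus below one. My first attempt is a zonal Chebyshev function $c(d)=\tfrac12\bigl(1\pm T_m(d\cdot e)\bigr)$ with $m\leq L$; it is physical because $|T_m|\leq1$ on $[-1,1]$. The axis $e$ is chosen with $\arccos(e\cdot q_2)=j\pi/m$ and $\arccos(e\cdot q_1)=(j+1)\pi/m$, so that $T_m$ takes opposite extreme values at the two queries.

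\begin{itemize}
\item Such an axis $e$ exists exactly when these angles satisfy the spherical triangle inequalities with side $\theta$: $\pi/m\leq\theta\leq(2j+1)\pi/m$ and $(2j+1)\pi/m+\theta\leq2\pi$.
\item The natural choices are $m=\lceil\pi/\theta\rceil\leq L$ and $(2j+1)\approx m$.
\item This covers odd $m$, and even $m\geq4$ via $2j+1=m-1$.
\item The residual case is $m=2$, $\theta\in(\pi/2,\pi)$, $L=2$. There I would look for a nonzonal degree-two function, for example $\tfrac12+\tfrac12 u(d)$ with $u$ a combination of a linear and a quadratic harmonic tuned to the two queries. I would fall back on Fej\'er-type positive interpolation \cite[Chapter IV, \S6]{jackson1930} if no closed form appears.
\end{itemize}

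This case analysis, rather than the inequality, is where I expect the real work to lie.
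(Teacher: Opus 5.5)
Your upper bound (Bernstein--Szeg\H{o} on the shorter great-circle arc, with $\arccos t$ shown to be $L$-Lipschitz) is correct. So is your pre-saturation construction $\tfrac12+\tfrac12\operatorname{Im}(d\cdot e_1+i\,d\cdot e_2)^L$. Both coincide with the paper's argument: the paper works with $\arcsin$ and a $(1-\epsilon)$ rescaling, where you use continuity across the finitely many points with $|t|=1$, and either works.

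The gap is in the saturated regime. The lemma asserts $M_2(\theta)=1$ for $\theta\in(\pi/2,\pi)$, and you leave exactly this case $L=2$ unproved. Neither fallback you name closes it as stated. The Fej\'er construction used in the paper interpolates exactly only at equally spaced nodes, which for two queries means an antipodal pair. The peak construction of \cref{q:positive} only interpolates approximately.

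The obstruction comes from your own restriction to $T_m(d\cdot e)$ with a \emph{unit} axis $e$. Then the argument sweeps all of $[-1,1]$, so the extremal level sets are pinned at angular distances $j\pi/m$ from $e$. That rigidity is why a spherical-triangle case analysis appears at all. It is also why, for $L\ge3$ and $\theta\in(\pi/2,\pi)$, you must pass from $m=\lceil\pi/\theta\rceil=2$ to $m=3$, which your write-up leaves implicit.

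The missing idea is to compress the Chebyshev argument affinely rather than move the axis. For $L\theta\ge\pi$, put $\alpha=(1-\cos(\pi/L))/(1-\cos\theta)\in(0,1]$ and $c(d)=\tfrac12\bigl(1+T_L(1-\alpha+\alpha\,q_2\cdot d)\bigr)$.
\begin{itemize}
\item The argument lies in $[1-2\alpha,1]\subset[-1,1]$ on all of $\Sph^2$, so $0\le c\le1$.
\item The argument equals $1$ at $q_2$ and $\cos(\pi/L)$ at $q_1$, giving $c(q_2)=1$ and $c(q_1)=0$.
\item $c$ is a polynomial of degree $L$ in $q_2\cdot d$, hence lies in $\cH_{\le L}$.
\end{itemize}
For your residual case this is simply $c(d)=(1-\alpha+\alpha\,q_2\cdot d)^2$ with $\alpha=1/(1-\cos\theta)$. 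That function is zonal about $q_2$, so no nonzonal search is needed. The same single formula also replaces your whole odd/even case analysis; it is the paper's construction.
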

\begin{proof}
The shortest connecting great circle restricts $p=2c-1$ to a real degree-$L$ trigonometric polynomial bounded by one. The classical real Bernstein--Szeg\H{o} inequality $|p'|^2+L^2p^2\leq L^2$ \cite[Introduction]{erdelyi2014} makes $\arcsin p$ $L$-Lipschitz. Use $(1-\epsilon)p$ and pass to the limit to handle endpoints. When $L\theta\leq\pi$, the largest sine difference across an arcsine interval of length $L\theta$ is $2\sin(L\theta/2)$; otherwise the contrast bound one suffices.

For attainment before saturation write $q_{1,2}=a\cos(\theta/2)\mp b\sin(\theta/2)$ in an orthonormal plane frame and let $z(d)=a\cdot d+i b\cdot d$. The polynomial $c(d)=(1+\operatorname{Im}z(d)^L)/2$ is globally physical and has the stated contrast. At $\theta=\pi$ choose any $a\perp b=q_2$; $L=0$ is constant. After saturation let
\[
 \alpha=\frac{1-\cos(\pi/L)}{1-\cos\theta}\in(0,1],\qquad
 c(d)=\frac{1+T_L(1-\alpha+\alpha q_2\cdot d)}2.
\]
The Chebyshev argument lies in $[-1,1]$ on the whole sphere, and the two values are zero and one. Thus both constructions are physical spherical polynomials of degree at most $L$.
\end{proof}

\begin{corollary}[Minimum degree for two-query families]\label{q:pairdegree}
If every active node has at most two distinct queries, the least common degree with $Z_L=Z_Q$ is $\max_{v:J_v=2}\lceil\pi/\theta_v\rceil$, or zero if there is no such node. For a single node with equal weight $\beta>0$, subtract the fixed contribution $b$ and consider target $(0,\beta)$, equivalently uncentered target $b+(0,\beta)$. Then
\begin{equation}\label{q:pairresidual}
 \rho_L=\frac\beta2(1-M_L(\theta)).
\end{equation}
Hence accuracy $0\leq\epsilon<\beta/2$ requires and suffices that $L\geq\lceil2\arcsin(1-2\epsilon/\beta)/\theta\rceil$.
\end{corollary}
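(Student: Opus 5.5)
The plan is to reduce everything to \cref{q:sharpcontrast} and \cref{q:nodelemma}, node by node. \Cref{q:nodelemma} says $Z_L=Z_Q$ holds exactly when every active node's physical SH evaluation set $E_L(Q_v)$ is the full cube $[0,1]^{J_v}$. For a node with $J_v=1$, constants already give $E_0(Q_v)=[0,1]$, so such nodes impose nothing. The real work is at nodes with $J_v=2$.

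For a node with $J_v=2$ and query separation $\theta_v\in(0,\pi]$, I first want to show that $E_L(Q_v)=[0,1]^2$ if and only if $M_L(\theta_v)=1$. Necessity: the corner $(0,1)$ requires contrast one, and \cref{q:sharpcontrast} caps the contrast at $M_L(\theta_v)$. Sufficiency: $E_L(Q_v)$ is convex and contains the constants, i.e.\ the diagonal points $(t,t)$. When $M_L=1$, the extremal polynomial of the lemma attains $(0,1)$, and reflecting it by $c\mapsto 1-c$ attains $(1,0)$. The convex hull of $(0,0),(1,1),(0,1),(1,0)$ is the whole square.

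Next, $M_L(\theta)=1$ exactly when $L\theta/2\ge\pi/2$, i.e.\ $L\ge\pi/\theta$, because $\sin$ is strictly below one on $[0,\pi/2)$. Since the $Z_L$ are nested, the least common degree is the maximum of $\lceil\pi/\theta_v\rceil$ over the two-query nodes, or zero if there are none. This settles the first claim.

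For the single-node residual, the centered target $(0,\beta)$ must be approached by $\beta(c(q_1),c(q_2))$ with $(c(q_1),c(q_2))\in E_L$, measured in the $\infty$-norm.
\begin{itemize}
\item Lower bound: any admissible pair has contrast $c(q_2)-c(q_1)\le M_L$ by \cref{q:sharpcontrast}. The two residual components are $\beta c(q_1)$ and $\beta(1-c(q_2))$, and their sum is at least $\beta(1-M_L)$, so the maximum is at least $\beta(1-M_L)/2$.
\item Attainment: I will take the contrast-$M_L$ polynomial and shift it to the symmetric values $((1-M_L)/2,(1+M_L)/2)$. Concretely, $c=(1+M_L\,p)/2$ with $p$ the lemma's normalized extremal.
\end{itemize}
Solving $\tfrac\beta2(1-M_L)\le\epsilon$ gives $\sin(\min\{L\theta/2,\pi/2\})\ge 1-2\epsilon/\beta$, which for $\epsilon<\beta/2$ is equivalent to $L\ge 2\arcsin(1-2\epsilon/\beta)/\theta$.

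The main obstacle is the attainment step. I must check that the lemma's extremal polynomial, in its pre-saturation form $(1+\operatorname{Im}z^L)/2$, takes values exactly symmetric about $1/2$ at $q_{1,2}$ and stays in $[0,1]$ on the whole sphere. The value $\operatorname{Im}z^L(q_{1,2})=\mp\sin(L\theta/2)$ gives both symmetry and the full-sphere bound. In the saturated regime I simply use the contrast-one polynomial.
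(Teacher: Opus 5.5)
Your proposal is correct and follows the paper's argument. Necessity comes from the contrast bound of \cref{q:sharpcontrast}. Sufficiency comes from realizing the corner $(0,1)$ and filling the square; your convex-hull-plus-reflection step is equivalent to the paper's affine remapping $c\mapsto a_1+(a_2-a_1)c$. You then apply \cref{q:nodelemma}, and for the residual you use the pre-saturation polynomial $(1+\operatorname{Im}z^L)/2$, whose two fitting errors are equal and opposite. Your explicit lower bound $\beta c(q_1)+\beta(1-c(q_2))\geq\beta(1-M_L)$ spells out what the paper compresses into ``the contrast bound proves necessity''. One small clarification: your bullet formula $c=(1+M_Lp)/2$ is right only if $p$ is normalized to take the values $\mp1$ at the two queries, in which case it is exactly the polynomial $(1+\operatorname{Im}z^L)/2$ you verify at the end.
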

\begin{proof}
The contrast bound proves necessity. Once a physical polynomial realizes zero and one, an affine remapping interpolates any pair of physical values. Apply \cref{q:nodelemma}. In the equal-weight case the pre-saturation construction has equal and opposite fitting errors, proving \eqref{q:pairresidual}.
\end{proof}
These are minimum degree statements, not minimum training time or storage. For larger query sets the same balanced witness gives $\mathfrak G_L\geq ab(1-M_L(\theta))/(a+b)$, but pairwise tests alone need not determine the complete interpolation cost.

\subsection{Regular-circle queries: exact physical interpolation}
\begin{theorem}[Regular-circle consequence of Fej\'er interpolation]\label{q:circledegree}
Let $q_j=a\cos\phi_j+b\sin\phi_j$, $\phi_j=2\pi j/J$, $0\leq j<J$, for an orthonormal frame $a,b$ and $J\geq2$. Every table $a_j\in[0,1]$ has a globally physical SH interpolant of degree $J-1$. That degree is necessary in the worst case over tables.
\end{theorem}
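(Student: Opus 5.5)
The plan has two parts, existence and necessity, and both reduce to trigonometric interpolation on the great circle $C$ spanned by $a,b$.

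\emph{Existence.} I would use Fej\'er's positive interpolation on equispaced nodes \cite[Chapter IV, \S6]{jackson1930}. The kernel is
\[
 K(\phi)=\frac1J\Bigl(\frac{\sin(J\phi/2)}{\sin(\phi/2)}\Bigr)^2
 =1+2\sum_{m=1}^{J-1}\Bigl(1-\frac mJ\Bigr)\cos m\phi .
\]
It is a nonnegative trigonometric polynomial of degree $J-1$. It satisfies $K(\phi_j-\phi_k)=J\delta_{jk}$, because $\sin(J\pi(j-k)/J)=0$ off the diagonal. Its node sum is $\sum_jK(\phi-\phi_j)=J$ for every $\phi$, since only the constant mode survives equispaced summation of frequencies $|m|<J$. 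Put $p(\phi)=J^{-1}\sum_ja_jK(\phi-\phi_j)$. Then $p(\phi_k)=a_k$, and $p$ is a convex combination of the $a_j$, so $0\le p\le1$ on $C$.

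The step needing care is extending $p$ to the whole sphere without leaving $[0,1]$. I would lift each translated kernel with the complex coordinate $z(d)=(a+ib)\cdot d$. For $d\in C$ this equals $e^{i\phi}$; for general $d$, $|z(d)|\le1$. Writing $K(\phi-\phi_j)=\operatorname{Re}\sum_{|m|<J}\hat K_m e^{-im\phi_j}e^{im\phi}$ and replacing $e^{im\phi}$ by $z^m$ and $e^{-im\phi}$ by $\bar z^m$ gives a harmonic extension $P_j$ of degree $J-1$ in $d$. On the unit disk this extension is the Poisson integral of $K(\cdot-\phi_j)$, so $P_j\ge0$. The same Poisson argument applied to $\sum_jP_j$ and to $J-\sum_jP_j$ shows $\sum_jP_j\equiv J$. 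The sphere function $c(d)=J^{-1}\sum_ja_jP_j(d)$ is therefore a convex combination of the $a_j$, has degree at most $J-1$, and takes values in $[0,1]$. Its restriction to $C$ is $p$, so it interpolates the table. Every node lies on $C$, so only the restriction matters there.

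\emph{Necessity.} I would take the alternating or indicator table $a_0=1$, $a_j=0$ for $j\ne0$. Restricted to $C$, a degree-$L$ spherical polynomial is a trigonometric polynomial of degree at most $L$ (\cref{sec:frozen}). Suppose $c$ has degree $L\le J-2$, interpolates this table, and is nonnegative on $C$. Averaging $c$ over the $J$ equispaced nodes reproduces its mean $\hat c_0$ exactly, since $L<J$, so $\hat c_0=1/J$. A nonnegative trigonometric polynomial of degree $L$ with mean $1/J$ has maximum value at most $(L+1)/J$. This is the Fej\'er--Riesz bound $p(0)\le(L+1)\hat p_0$, obtained by writing $p=|g|^2$ and applying Cauchy--Schwarz. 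The bound gives $1\le(L+1)/J<1$, a contradiction. Hence degree $J-1$ is needed for this table.

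The main obstacle is the global physical extension. The Poisson/harmonic-lift argument is the route I would try first. If it fails, a fallback is to compose with a Bernstein--Szeg\H{o}-type bound as in \cref{q:sharpcontrast}.
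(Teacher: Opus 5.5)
Your existence argument is the paper's: Fej\'er's nonnegative cardinal functions, lifted through $z(d)=a\cdot d+i\,b\cdot d$ to harmonic functions on the closed disk, with positivity and the partition of unity carried inward by the Poisson representation (maximum principle), so that $c(d)$ is a pointwise convex combination of the $a_j$.

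Your necessity argument takes a different route. The paper counts zeros. For the indicator table, the great-circle restriction of a physical interpolant is a nonzero nonnegative trigonometric polynomial that vanishes at the other $J-1$ nodes. Each such zero has even multiplicity, and a nonzero degree-$L$ trigonometric polynomial has at most $2L$ zeros, so $L\ge J-1$. You instead combine two facts. First, the equispaced node average is exact for frequencies below $J$, which forces $\hat c_0=1/J$. Second, the Fej\'er--Riesz/Cauchy--Schwarz bound gives $\max p\le(L+1)\hat p_0$ for nonnegative $p$ of degree $L$. Both steps are correct. The bound is also sharp, since the Fej\'er kernel attains it, which is exactly why $J-1$ is the threshold.

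The zero count is shorter and never uses the equal spacing. It therefore shows that degree $J-1$ is needed for any $J$ distinct queries on one great circle. Your version uses the regular geometry but is quantitative. If the indicator values are matched only within $\rho$, the same two inequalities give $(1-\rho)(J-L-1)\le(L+1)(J-1)\rho$. This is a positive lower bound on the fitting error at every degree $L<J-1$, which the multiplicity count does not directly supply.

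One correction: drop the ``alternating'' option. For even $J\ge4$, the table $a_j=(1+(-1)^j)/2$ has the physical interpolant $\tfrac12\bigl(1+\operatorname{Re}z(d)^{J/2}\bigr)$ of degree $J/2<J-1$. Only the indicator table witnesses the worst case.
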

\begin{proof}
Fej\'er's equally spaced trigonometric interpolation gives nonnegative degree-$(J-1)$ cardinal functions summing to one \cite[Chapter IV, \S6, pp.~142--143]{jackson1930}. Their harmonic disk extensions are
\begin{equation}\label{q:Fejer}
 K_j(z)=\frac1J\left[1+2\operatorname{Re}\sum_{k=1}^{J-1}
                  (1-k/J)e^{-ik\phi_j}z^k\right],\qquad |z|\leq1.
\end{equation}
The harmonic maximum principle extends positivity and the partition of unity from the circle to the disk. Therefore
\begin{equation}\label{q:circleinterpolant}
 c(d)=\sum_j a_jK_j(a\cdot d+i b\cdot d)
\end{equation}
lies in $[0,1]$ on $\Sph^2$ and interpolates exactly. Each real or imaginary part of $z^k$ is a homogeneous harmonic polynomial in three dimensions, so the SH degree is at most $J-1$.

For necessity choose one value equal to one and the other $J-1$ values zero. The great-circle restriction of a physical interpolant is a nonzero nonnegative trigonometric polynomial. Every zero has even multiplicity. A degree-$L$ trigonometric polynomial has at most $2L$ zeros with multiplicity, so $2L\geq2(J-1)$.
\end{proof}
This gives a sharp image-class threshold through \cref{q:nodelemma}. Subsets of a common $J$-gon inherit the upper bound by filling unused values arbitrarily. One fully queried node supplies necessity. For four regular queries the largest two-query threshold is two, whereas the all-table threshold is three. This is an explicit obstruction to inferring the general resource cost from pairs alone.

The construction uses $1+2(J-1)$ real sectoral coefficients per node/channel in the circle-aligned frame. This is one explicit storage bound, not a minimum-bit result. In contrast, \eqref{q:generaldegree} has a conservative separation-squared dependence. The sharp two-query law and regular-circle law show why that general construction should not be described as a minimum resource cost.

\subsection{A certified fixed-density mechanism test}\label{q:certificate}
\paragraph{Exact finite model.}
The reference has unit density and isotropic source $j_{*,k}(x)=1/2+\cos(3x_k)/4$ in $B_1$, with black background. Its target is the analytic continuous rendering. Sample a $370^3$ uniform grid in $[-1,1]^3$, and fix alternative nodal densities by evaluating
\begin{equation}\label{q:ellipsoid}
 s(x)=1+4\psi\left(\frac{(x_1-.05)^2}{.3^2}+
                        \frac{(x_2+.03)^2}{.2^2}+
                        \frac{(x_3-.02)^2}{.15^2}\right),
\end{equation}
with the smooth bump $\psi$ defined after \cref{m:thmseparation}. Only nodes in this ellipsoid have adjustable colors; inactive sources stay fixed at their reference nodal values. The observations lie in the plane $x_3=7/369$, use $d_j=(\cos(2\pi j/64),\sin(2\pi j/64),0)$ and offsets $\ell/200$ for $\ell=-80,\ldots,80$, giving $10304$ rays. Each alternative chord is evaluated with $256$ equal midpoint segments and the exact-real finite compositor \eqref{q:gridweights}. There are $6410$ active planar nodes. The three-dimensional grid spacing and off-plane basis are retained, but off-plane hats vanish on these rays.

Thus this certificate compares a finite compositor with specified analytic target data. It is not a continuous-integration or all-view certificate. The continuous density in \eqref{q:ellipsoid} has peak five; its grid samples, rather than that analytic function between samples, define the alternative.

\paragraph{The full physical SH1 class.}
At each active node let $c_v(d)=1/2+u_v+v_v\cdot d$ with
\begin{equation}\label{q:SH1physical}
 |u_v|+\norm{v_v}_2\leq1/2.
\end{equation}
This condition is necessary and sufficient for the color box on the entire sphere. Since all measured $d_3=0$, deleting $(v_v)_3$ preserves every observation and feasibility; conversely each reduced solution extends to a physical three-dimensional function. Unobserved active nodes can be filled with constants. The planar computation therefore covers the \emph{full} physical nodal SH1 class on this measurement design, rather than an arbitrarily selected perturbation subspace.

Let $B_{rv}=w_{rv}$, $y_c=y-b-B\one/2$, $g=B^T\lambda$, and $Q_v=\sum_rB_{rv}\lambda_rd_r$. Norm-distance duality \cite[\S8.1.3]{boyd2004} gives
\begin{equation}\label{q:SH1dual}
 \rho_1(y)=\max_{\norm\lambda_1\leq1}
       \left[\lambda\cdot y_c-\frac12\sum_v\max(|g_v|,\norm{Q_v}_2)\right].
\end{equation}
Indeed the support function of \eqref{q:SH1physical} is $\max(|g_v|,\norm{Q_v}_2)/2$. Any saved dual witness gives a lower bound without solver convergence; any legal color gives an upper bound. RGB channels are independent, so the joint maximum-norm optimum is the maximum of the channel optima.

\begin{center}\small
\begin{tabular}{@{}lrrr@{}}\toprule
Channel & SH1 lower & SH1 feasible upper & Query-relaxed feasible upper\\\midrule
Red & $1.2156472159$ & $1.3147534292$ & $.0015929157$\\
Green & $1.2875761195$ & $1.3850259355$ & $.0018108342$\\
Blue & $0$ & $.0136122121$ & $.0003039854$\\
Joint RGB & $1.2875761195$ & $1.3850259355$ & $.0018108342$\\\bottomrule
\end{tabular}
\end{center}
Values are outward-rounded linear gray levels, $255$ times maximum normalized RGB error. In particular the SH1 primal--dual gap is below $.09745$ levels. The blue channel alone does not discriminate at one level. The budget is one \emph{total fitting} radius about a fixed target, not one level on each of two independently noisy images and not an RMS or PSNR threshold.

\paragraph{Reference-density calibration.}\phantomsection\label{q:referencecontrol}
At $\sigma=1$, interpolate the reference $j_k=.5+.25\cos(3x_k)$ on the same grid and use the same $256$-segment compositor. This is a physical SH0 construction, also allowed under SH1. With spacing $s=2/369$, maximum segment length $q=1/128$ and nodal storage allowance $e_{J0}<5.961\cdot10^{-8}$, its all-ray error obeys
\begin{equation}\label{q:referencebound}
 \|I_{\rm grid,ref}-I_{\rm ref}\|_\infty
 \leq \frac{27}{32}s^2+\frac{13}{16}q^2+\frac5{128}q^3+e_{J0}
 <\frac{.019}{255}.
\end{equation}
The spatial term is the tensor-product interpolation bound using $|\partial_i^2 j_k|\leq9/4$. For each ray $|j'_k|\leq3/4$, $|j''_k|\leq9/4$; the midpoint defect with exact unit-density attenuation gives the displayed conservative quadrature terms. Exact rational substitution with the saved storage allowance yields an upper bound of $.018986313181$ linear levels. This calibrates the same grid, targets and integration convention; it is not a new fit or a bound on the optimum's lower side. The reference and alternative have different density-dependent optical weights.

\paragraph{Remaining target margin.}
Distances to a fixed image class are $1$-Lipschitz in the target. An additional unaccounted target discrepancy $\eta$ in linear gray levels preserves the one-level SH1 rejection if $\eta<1.2875761195-1=.2875761195$. This is a total fitting-margin statement, not two independent sensor-noise radii. The calibration bound above is not automatically an estimate of all physical-model errors.

\paragraph{Arithmetic and physical range.}
Primal values are treated as exact dyadic numbers. SH1 feasibility was recomputed with $70$-digit intervals on \eqref{q:SH1physical}; query-relaxed values are in $[0,1]$. Every ray coefficient is rebuilt from scalar enclosures of the exact grid, directions, bump and exponentials, using outward basic operations, guarded sums and fixed-cell checks. The floating proposal matrix is not assumed exact. An inherited stored-density/exterior allowance of at most $.000303985$ gray is subtracted from lower bounds and added to upper bounds once. The resulting certificates are conditional on the stated inclusion arithmetic and software assumptions; floating cross-checks are not formal verification.

\paragraph{Removing only angular coupling.}
For each direction, one nodal table serves all $161$ offsets. All $192$ direction/channel bounded linear programs supplied feasible solutions; their interval-evaluated maximum error is below $.001811$ gray. Thus the retained spatial sharing alone does not sustain the one-gray rejection. The limit residual is bounded above, not proved zero. Under \cref{q:circledegree}, the same table has an exact globally physical degree-$63$ extension and inherits the identical image bound with no interpolation penalty.

The saved outward adjoints verify positive visibility at all $64$ exact directions for $1116$ nodes. Hence $63$ is also the minimum degree for equality of the \emph{whole} finite image classes, by \cref{q:nodelemma}. It is not a lower bound on the degree required by this particular target. The exact extension uses $127$ real angular coefficients per node/channel, $2442210$ in total. Its floating coefficient conversion reproduces all saved query values within $1.133\times10^{-14}$; global physicality follows from \eqref{q:circleinterpolant}, not from this sampled check. SH2 and SH3 on this same model remain undetermined.

The general peak construction would instead give a very conservative degree-$32768$ upper below $.002525$ gray. The exact regular-query construction explains this large resource difference. Neither is an optimization runtime or a minimum-storage theorem.

\subsection{What a fixed witness attributes, and what it does not}
For one saved $\lambda$, let
\[
 m_{vj}=\sum_{r:q_{rv}=q_{vj}}\lambda_rw_{rv},\quad
 P_v=\sum_r|\lambda_r|w_{rv},\quad M_v=\sum_j|m_{vj}|,
 \quad g_v=\sum_jm_{vj},\quad Q_v=\sum_jm_{vj}q_{vj}.
\]
The supports of per-incidence colors, shared query values and physical SH1 give the exact signed identity
\begin{align}
 D_1(\lambda)&=D_{\rm ray}(\lambda)+S(\lambda)+A_1(\lambda),\label{q:decomposition}\\
 S&=\tfrac12\sum_v(P_v-M_v)\geq0,\qquad
 A_1=\tfrac12\sum_v\bigl(M_v-\max(|g_v|,\norm{Q_v}_2)\bigr)\geq0.\nonumber
\end{align}
The inequalities are triangle inequalities; repeated identical queries must be grouped before evaluating $A_1$. After division by $\norm\lambda_1$ and conversion to gray, the green witness gives
\[
 -35.2306144+0.1415918+36.3769027=1.2878801
\]
before the storage allowance. Red gives the same pattern; blue uses a zero witness. These are \emph{signed values of one functional}, not the optima of three different problems. Negative values are valid weak lower bounds, not negative distances. Strong cancellation makes percentages of ``error due to angle'' or ``error due to space'' meaningless. The separate feasible query fit is what establishes the decision change. From the two certified distances and \eqref{q:gap}, the whole RGB image-class gap obeys $255\mathfrak G_1>1.2857652$.

\subsection{Assumptions that cannot be dropped}\label{q:footprint}
\paragraph{Multiple queries per pixel.}
If one observation averages different queries of the same node, the columns of $H_v$ may overlap and \cref{q:nodelemma}'s necessity can fail. For example $F(c)=(c(q_1)+c(q_2))/2$ already covers $[0,1]$ with constants, even when the directions are not antipodal. For arbitrary nonnegative $H_v$, \eqref{q:SH1gap} remains exact. Equality requires every accessible moment vector $H_v^T\lambda$ to satisfy the sign/alignment equality condition for every node. Antipodal-only queries remain sufficient, but are no longer necessary.

\paragraph{Additional parameter constraints and generalization.}
The constructive upper requires independently realizable nodal functions. A shared small neural network or another cross-node constraint can invalidate it. The native Gaussian implementation uses a primitive/camera query convention and permits colors above one; its complete class is not the globally bounded class here. Primitive motion changes queries, weights and possibly sorting. These results give neither a native 3DGS geometry theorem nor an optimizer guarantee. New views add queries, and an infinite query set need not have positive separation. No complete-light-field, $Y$-norm, or degree-independent derivative bound follows from finite interpolation.

\paragraph{Implementation validation.}
The SH1 certificate has $41$ implementation checks, the query-relaxed fit $45$, and the resource construction $319$. Checks include direct high-precision ray evaluations, exact-query geometry, support identities, sharp two-query constructions, regular-circle cardinality and all-table reconstruction. The regular-circle theorem is an analytic argument, not a conclusion from those tests. All channels and negative controls are retained in the accompanying artifacts.
A replay driver reconstructs the optical coefficients and validates both saved witness classes in fresh directories without rerunning optimization; the stated arithmetic assumptions remain necessary.

\clearpage
\section{Additional transmission: a matched acquisition-cost bound}\label{app:acquisition}
The compensation mechanism suggests a complementary observation: change the background while keeping the effective source fixed. For each ray, black and white readings have the form
\begin{equation}\label{ac:pair}
 F_{r,0}(s,q)=E_r(s,q),\qquad F_{r,1}(s,q)=E_r(s,q)+T_r(s).
\end{equation}
Their difference removes appearance exactly. We give a finite-model example in which both a sufficient design and a necessary cost are certified at the same reference and noise budget. The lower bound constructs a whole physical family whose density changes survive every design with fewer than three paired rays.

\subsection{Model, observations, and the cost objective}
Let $\phi_v$ be the 27 tensor-product piecewise-trilinear nodal basis functions on $[-1/2,1/2]^3$, with three nodes per axis and vacuum exterior. Density is $\sigma_s=\sum_v\phi_v s_v$, $s\in[0,6/5]^{27}$, and
\begin{equation}\label{ac:physical}
 j_k(x,d)=\sum_v\phi_v(x)\bigl(q_{v,k,0}+q_{v,k,1:3}\cdot d\bigr),\qquad
 \|q_{v,k,1:3}\|_2\leq q_{v,k,0}\leq s_v-\|q_{v,k,1:3}\|_2.
\end{equation}
These are the full physical nodal SH1 source constraints. Every ray shares the same nodal coefficients. The exact density norm is $\|s-s_*\|_M^2=(s-s_*)^TM(s-s_*)$, where $M_{vw}=\int\phi_v\phi_w\,\mathrm dx$ is the rational Q1 mass matrix.

At the nodes, writing $(X,Y,Z)=2(x,y,z)$, the reference is
\begin{align}
 s_*&=.70+.12X+.08YZ,\label{ac:reference}\\
 c_*&=(.40+.12X+.07YZ,\ .45-.09Y+.08XZ,\ .35+.10Z-.08XY),\nonumber
\end{align}
with isotropic source coefficients $q_{*,v,k,0}=s_{*,v}c_{*,v,k}$ and zero degree-one coefficients. The certificate uses the saved dyadic reference values and encloses rounding when forming their products.

Each ray uses 64 constant cells. Stored nonnegative interpolation weights $\phi_{riv}$ and lengths $\Delta_{ri}$ define exact dyadic inputs. Cell density and source are $\sigma_{ri}=\sum_v\phi_{riv}s_v$ and $j_{ri,k}=\sum_v\phi_{riv}(q_{v,k,0}+q_{v,k,1:3}\cdot d_r)$. Integrating each constant cell gives the finite renderer
\begin{equation}\label{ac:compositor}
 E_r=\sum_i\exp\!\left(-\sum_{j<i}\Delta_{rj}\sigma_{rj}\right)
 j_{ri}\int_0^{\Delta_{ri}}e^{-t\sigma_{ri}}\,\mathrm dt,
 \qquad T_r(s)=e^{-A_rs},\quad A_{rv}=\sum_i\Delta_{ri}\phi_{riv}.
\end{equation}
The integral is well defined at zero density. This declaration fixes the model being certified, including finite sampling.

There are 384 mandatory black-background RGB rays, denoted $D_0$, and 144 selectable rays. The baseline uses 24 full-sphere Fibonacci directions and four offsets per image axis, $\{- .35,-7/60,7/60,.35\}$. The pool uses 16 upper-hemisphere Fibonacci directions with azimuthal phase $.37$ and offsets $\{-.33,0,.33\}$ per axis. Directions, cube chords and midpoint samples are stored as the exact input arrays below. Each selectable action consists of two additional \emph{scalar red-channel} readings at backgrounds zero and one, at cost two. All raw readings have absolute error at most $b=1/255$. For a selected set $D$, let $F_{D_0\cup D}$ stack the baseline and additional readings, and define
\begin{align}
 \Theta_D&=\{(s,q):s\in[0,6/5]^{27},\ \eqref{ac:physical}\text{ holds},\\
 \|F_{D_0\cup D}(s,q)-F_{D_0\cup D}(s_*,q_*)\|_\infty\leq2b\},\nonumber\\
 R_{\rm ref}(D)&=\sup_{(s,q)\in\Theta_D}\|s-s_*\|_M,\label{ac:radius}\\
 C_{\rm ref}(\varepsilon)&=\min\{2|D|:R_{\rm ref}(D)<\varepsilon\}.\label{ac:cost}
\end{align}
The factor $2b$ expresses intersection of two raw-noise boxes: the midpoint of their predictions is one common legal measurement. Thus the target is reference exclusion. The noise boxes, prescribed reference, pool and cost convention are identical on both sides of the following bound.

\begin{proposition}[Certified cost of reference exclusion]\label{ac:costtheorem}
For the finite model and acquisition pool above,
\begin{equation}\label{ac:bracket}
 6\leq C_{\rm ref}(.10)\leq152.
\end{equation}
Every set of at most two paired rays retains a shared physical SH1 alternative with density separation greater than $.1019$. A specified set of 76 pairs bounds every compatible density's separation by $.097965$.
\end{proposition}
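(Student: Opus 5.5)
The bracket has two independent halves, and I would prove them separately. Both use the same reference $(s_*,q_*)$, baseline $D_0$, pool, raw budget $b$ and the $2b$ compatibility box of \eqref{ac:radius}. Since each action costs two, the lower bound $C_{\rm ref}(.10)\ge6$ means that no design with $|D|\le2$ achieves $R_{\rm ref}(D)<.10$. The upper bound means that one explicit design with $|D|=76$ achieves it.

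\emph{Lower bound.} I would construct a family of physical alternatives. The plan is to take a density direction $u$ that is nearly invisible in the black-background baseline after SH1 source refitting. I would choose it in the same way as \cref{rs:coupledwitness}: the smallest singular direction of the profiled density Jacobian with all shared SH1 source columns eliminated, scaled to $M$-norm slightly above $.1019$. Each added pair measures the single scalar $T_r(s)=e^{-A_rs}$ together with the red emission. Because $T_r$ depends only on $A_rs$, a design of at most two pairs constrains $s$ only through at most two linear functionals $A_r$.

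For every pair $\{r_1,r_2\}$ in the pool (about $144^2/2$ cases, or a covering of them), I would therefore choose a perturbation $s=s_*+tu+w$. The small correction $w$ lies in a fixed low-dimensional correction space and is fixed by requiring $A_{r_i}(s-s_*)=0$, so the transmission readings are exactly unchanged. I would then refit the SH1 source. A dual upper bound, in the style of \eqref{q:SH1dual} and \cref{lr:dual}, would certify that the black readings on $D_0\cup D$ stay within $2b$.

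To avoid doing this pairwise, I would instead fix a small basis of candidate corrections. For each ray $r$, I would certify one physical alternative per sign pattern, so that any two kernel conditions can be met by a convex combination. The complication is that $E_r$ is nonlinear in $s$, so convexity is not automatic. I expect this to be the main obstacle: producing alternatives that survive every two-pair design with outward-certified margins. I would first try saved primal witnesses on a finite lattice of correction coefficients, combined with Lipschitz bounds of the type in \eqref{rs:relative} to cover the continuous parameters. Physicality would be checked nodally through \eqref{ac:physical}, and the separation would be enclosed via the rational mass matrix.

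\emph{Upper bound.} For the specified 76-pair design $D$, I would take any compatible $(s,q)\in\Theta_D$ and show $\|s-s_*\|_M\le.097965$. Subtracting the two background readings gives $|T_r(s)-T_r(s_*)|\le4b$, with appearance removed exactly. This yields interval bounds on $A_r(s-s_*)$ for the 76 rays, using $s\in[0,6/5]^{27}$ to bound $A_rs$ and hence the logarithmic Lipschitz constant. Maximizing $\|s-s_*\|_M$ over the resulting polytope, intersected with the box, is a convex-maximization problem. I would bound it either by an SDP/dual certificate or by a vertex enclosure. If transmission alone leaves the bound too loose, I would add the $D_0$ black-readings constraint with the full SH1 source class. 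For that I would use the fiber argument \eqref{rs:fiberbound} together with the dual support bound \eqref{q:SH1dual}, feeding it through an exact rational LP/SDP bound. Because all inequalities are one-sided, the two halves combine into \eqref{ac:bracket}.
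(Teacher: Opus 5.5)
\textbf{Upper bound.} This half essentially matches the paper. Background differencing gives $|T_r(s)-T_r(s_*)|\le4b$. Because every selected $T_{r,*}>4b$, exact logarithmic inversion gives the strips \eqref{ac:strips}; the paper uses this rather than a Lipschitz constant from the box, which would give looser strips. A multiplier certificate $\sum_i\lambda_ia_i^Ta_i\succeq M$ with $\sum_i\lambda_iu_i^2<(.097965)^2$, over the 76 transmission rows plus two coordinate-box rows, then bounds $h^TMh$. No black-background or SH1 information is needed, so your fallback is unnecessary.

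\textbf{Lower bound: genuine gap.} You correctly note that at most two pairs constrain density only through at most two functionals $A_r$. You then try to build an alternative per design, via pairwise corrections $w$, per-pair SH1 refits, or convex combinations of sign-pattern alternatives. The crux is certifying compatibility for all $\binom{144}{2}$ designs despite the nonlinearity of $E_r$, and you leave it open. The convex-combination route fails for the reason you give, and the lattice-plus-Lipschitz fallback is not specified. Separately, a dual witness only lower-bounds the best residual and certifies \emph{rejection}. Compatibility needs a primal physical source with an enclosed residual.

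\textbf{The missing idea.} Certify one three-parameter family uniformly and let a dimension count handle every design at once. Your $u$ plus a two-dimensional correction space is already such a density subspace; what is missing is pairing it with a source affine in the same parameter. Take $s(u)=s_*+Hu$ and $q(u)=\bar q+Qu$ on $\|u\|_2\le1$, and prove once:
\begin{enumerate}
\item $H^TMH\succ(.1019)^2I_3$;
\item physicality on the whole ball, where each nodal cone inequality $a_0+a^Tu\ge D_{\rm dir}\|Bu\|_2$ reduces, by the one-ball S-procedure, to the rational LMI \eqref{ac:cone};
\item $\sup_{\|u\|\le1}|E(u)-E_*|<2b$ on every baseline reading and on the red black-background reading of \emph{every} pool ray. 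The linear response plus the cellwise Taylor remainders \eqref{ac:remainder1}--\eqref{ac:uniform} bound the nonlinear readings over the ball with no convexity needed.
\end{enumerate}
Then for any $|D|\le2$, the $|D|\times3$ matrix $A_DH$ has a unit null vector $u$. The selected transmissions are exactly unchanged, so the white readings inherit the black bound, and $\|Hu\|_M>.1019$. This settles all two-pair designs simultaneously, with no per-design refitting.
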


\subsection{Why a whole physical family gives a necessary cost}
The essential construction is a three-dimensional affine family
\begin{equation}\label{ac:family}
 s(u)=s_*+Hu,\qquad q(u)=\bar q+Qu,\qquad \|u\|_2\leq1.
\end{equation}
All its coefficients are shared across rays. Exact rational arithmetic and outward interval bounds establish
\begin{align}
 H^TMH&>(.1019)^2 I_3,\label{ac:mass}\\
 0<s_v(u)&<6/5,\quad 0\leq j_k(x,d;u)\leq\sigma_{s(u)}(x),\label{ac:box}\\
 \sup_{\|u\|\leq1}\max_{r\in D_0,k}|E_{r,k}(u)-E_{r,k,*}|&<.006527<2/255,\label{ac:base}\\
 \sup_{\|u\|\leq1}\max_{r\text{ in pool}}|E_{r,R}(u)-E_{r,R,*}|&<.006339<2/255.\label{ac:pool}
\end{align}
The last condition is a property of the constructed family over the candidate pool; only selected actions are charged or observed.

For any $D$ with $|D|\leq2$, the matrix $A_DH$ has a unit null vector $u$. Hence
\[
 T_D(s_*+Hu)=T_D(s_*),\qquad \|Hu\|_M>.1019.
\]
On each selected ray, the white-reading difference therefore equals the black-reading difference. Equations~\eqref{ac:base}--\eqref{ac:pool} bound every measured coordinate by $2b$. The alternative remains compatible with a common raw observation and violates the $.10$ target. This proves the lower bound for all $\binom{144}{2}=10296$ two-pair designs at once. The dimension argument is classical; its physical content here is the verified finite-amplitude family satisfying \eqref{ac:box}--\eqref{ac:pool}.

\paragraph{Physicality over the full parameter ball.}
For either nodal inequality in \eqref{ac:physical}, the construction gives
$a_0+a^Tu\geq D_{\rm dir}\|Bu\|_2$, where $D_{\rm dir}=1+10^{-12}$ covers the unit sphere and all saved dyadic directions. Exact squares verify $a_0>\|a\|_2$. Each of the 162 node/channel/inequality cases has a saved $\lambda\geq0$ for which
\begin{equation}\label{ac:cone}
 \begin{pmatrix}
 aa^T-D_{\rm dir}^2B^TB+\lambda I & a_0a\\
 a_0a^T & a_0^2-\lambda
 \end{pmatrix}\succ0.
\end{equation}
Its quadratic form at $(u,1)$, plus $\lambda(1-\|u\|^2)$, is $(a_0+a^Tu)^2-D_{\rm dir}^2\|Bu\|^2$. Thus the classical one-ball S-procedure \cite[Appendix B.2, Eqs.~(B.6)--(B.7)]{boyd2004} proves the desired cone inequality. Rational $LDL^T$ pivots certify \eqref{ac:cone} and \eqref{ac:mass}. Nonnegative spatial interpolation preserves the nodal physical bounds.

\paragraph{Enclosing the nonlinear readings uniformly.}
In cell $i$, let $h_i,w_i\in\R^3$ be density and source responses to $u$, and let $\bar j_i$ be the center source. Write
\[
 v_i(t)=\sum_{j<i}\Delta_jh_j+t h_i,\quad
 \rho_i=\max\{\|v_i(0)\|,\|v_i(\Delta_i)\|\},\quad
 W_i=\int_{\mathrm{cell}\ i}T_*(t)\,\mathrm dt.
\]
Convexity bounds $\|v_i(t)\|\leq\rho_i$ throughout the cell. The integrated linear response is $\ell=\sum_i\int T_*(w_i-\bar j_i v_i)\,\mathrm dt$. Expanding $e^{-v_i^Tu}(\bar j_i+w_i^Tu)$ gives the following two valid cell remainders:
\begin{align}
 R_{1i}&=W_i\{\bar j_i(e^{\rho_i}-1-\rho_i)+\|w_i\|(e^{\rho_i}-1)\},\label{ac:remainder1}\\
 R_{2i}&=W_i\left\{\rho_i\max_{t\in\{0,\Delta_i\}}\|w_i-\bar j_i v_i(t)/2\|
 +\bar j_i(e^{\rho_i}-1-\rho_i-\rho_i^2/2)
 +\|w_i\|(e^{\rho_i}-1-\rho_i)\right\}.\label{ac:remainder2}
\end{align}
For the second bound, retain the quadratic term
$-(v_i^Tu)((w_i-\bar j_i v_i/2)^Tu)$ before bounding the remaining series. The bound over $t$ again follows by convexity. With $\delta_{\rm ctr}=|E(s_*,\bar q)-E(s_*,q_*)|$, one obtains
\begin{equation}\label{ac:uniform}
 \sup_{\|u\|\leq1}|E(u)-E_*|
 \leq\delta_{\rm ctr}+\|\ell\|_2+\sum_i\min\{R_{1i},R_{2i}\}.
\end{equation}
The actual upper endpoints are
\[\begin{aligned}
\text{baseline in \eqref{ac:base}:}\quad&.006526626509024162,\\
\text{pooled design in \eqref{ac:pool}:}\quad&.006338615268606398.
\end{aligned}\]
The center correction is below $1.183\times10^{-14}$ and every $\rho_i<.215$. The implementation encloses exponential tails by positive rational series and encloses basic operations outward. These bounds cover the whole parameter ball, rather than a sample of its directions.

\subsection{A sufficient design for the same objective}
For two candidates with raw prediction gap at most $2b$ on each selected endpoint, \eqref{ac:pair} yields
$|T_r(s)-T_r(s_*)|\leq\eta$, $\eta=4b$. Since each selected $T_{r,*}>\eta$, logarithmic inversion gives exact strips
\begin{equation}\label{ac:strips}
 -\log(1+\eta/T_{r,*})\leq A_r(s-s_*)\leq-\log(1-\eta/T_{r,*}).
\end{equation}
Put $h=s-s_*$ and $u_r=-\log(1-\eta/T_{r,*})$, so $|A_rh|\leq u_r$. Append coordinate rows with bounds $|h_v|\leq\max\{s_{*,v},6/5-s_{*,v}\}$. For any nonnegative multipliers $\lambda_i$ satisfying
\begin{equation}\label{ac:outer}
 \sum_i\lambda_i a_i^Ta_i\succeq M,\qquad
 \sum_i\lambda_i u_i^2<(.097965)^2,
\end{equation}
every compatible candidate satisfies
$h^TMh\leq\sum_i\lambda_i(a_ih)^2<(.097965)^2$.
This is a sufficient quadratic outer enclosure of the log-strip polytope.

The saved certificate uses 76 transmission rows and two density-coordinate bounds. Directed interval evaluation verifies \eqref{ac:outer}; a rationally specified congruence and interval diagonal dominance certify its matrix inequality. The minimum selected reference transmission exceeds $.328153$. All densities in the nodal box and all compatible physical source refits are covered. The acquisition cost is $2\times76=152$ scalar readings, proving the upper side of \eqref{ac:bracket}.

\paragraph{Certificate specification and interpretation.}
The accompanying \texttt{acquisition\_evidence} package contains the exact dyadic arrays, reference, all rays, family coefficients, multipliers, interval outputs and optimizer-free replay instructions. The lower certificate checks 162 rational cone matrices and the density mass matrix; the upper certificate verifies the selected log strips and their quadratic enclosure. The inputs and replay outputs are identified by SHA-256 hashes in its manifest. Floating-point proposals are used only to produce candidates for these checks. The declared arithmetic is rational arithmetic and outward real-function enclosures, with the implementation assumptions stated in the replay documentation.

This result quantifies fixed-reference exclusion in the declared compositor. Its norm is the continuous Q1 density norm, while its observations are finite-compositor readings. The raw-noise-box target differs from conditioning on one fixed realized baseline, and from uniform minimax estimator error, where a two-point separation contributes half its distance. The interval $[6,152]$ states necessary and sufficient costs for one objective without asserting that either endpoint is optimal. General compatible-set recovery and kernel-section arguments have classical foundations and recent linear-query developments \cite{filmus2026linearqueries}; here the shared physical source family supplies the radiative obstruction.

\clearpage
\section{Representation controls and experimental scope}\label{app:records}
This section gives the physical constraints and reproducible training settings, the complete discrete certificate, and the neural-field controls needed to interpret the main experiments.
\subsection{Physical appearance constraints and arithmetic certificates}\label{app:fixedprotocol}
\subsubsection{Fixed density, convex appearance compensation}\label{app:fixedcolor}
The joint optimization experiment combines representation capacity with nonconvex density updates. To isolate appearance compensation, we fix the alternative density grid and optimize only its SH color coefficients. The same $32^3$ grid, $B_{.8}$ unknown region, analytic targets, known exterior, and nested $100/400$ camera sets are retained. Both the quadratic reference and the cosine control use the same alternative density. A unit-density, degree-zero fit provides a discretization baseline for each scene and view count. The test cameras are independent; optimization and checkpoint selection use training RGB only.

\paragraph{A linear rendering operator.}
For fixed trilinear density, midpoint optical weights are constants. Write
\[
 (Aa)_r=\sum_v B_{rv}\sum_{m=1}^{(L+1)^2}\phi_m(d_r)a_{vm},
 \qquad f(a)=\tfrac12\|Aa-b\|_F^2.
\]
Here $B_{rv}\geq0$ combines trilinear interpolation with attenuated sample weights, and $b$ subtracts the known near and far exterior contributions. Colors and RGB channels remain linear; no sigmoid or directionwise clipping is used. Training uses $128$ midpoint samples, independent evaluation uses $256$, and a $512$-sample check measures integration sensitivity. This is a convex, finite-dimensional problem, but convexity alone does not establish convergence of a finite numerical run.

\paragraph{Physical color sets and an exact degree-one constraint.}
With the normalized SH convention of the joint experiment, set $e=(1/2,0,\ldots)$ at each node and channel, and define
\begin{align*}
 C_{\rm in}&=\left\{a:|a_0-1/2|+\sum_{l=1}^L\sqrt{2l+1}\|a_l\|_2\leq1/2\right\},\\
 C_{\rm out}&=\{a:\|a-e\|_2\leq1/2\}.
\end{align*}
If $C_{\rm phys}$ is the exact all-direction node constraint $0\leq c(d)\leq1$, then
\[
 C_{\rm in}\subseteq C_{\rm phys}\subseteq C_{\rm out}.
\]
The first inclusion follows from the SH addition theorem and Cauchy--Schwarz. The second follows from Parseval applied to $|c-1/2|\leq1/2$. Trilinear interpolation preserves nodewise physicality throughout the grid. For $L=0,1$, the first inclusion is an equality: the range of a degree-one color is exactly $[a_0-\sqrt3\|a_1\|_2,a_0+\sqrt3\|a_1\|_2]$. Thus a lower bound obtained with $C_{\rm in}$ at degree one applies to the entire physical degree-one voxel class. At higher degrees, $C_{\rm in}$ is only sufficient. A fit in $C_{\rm out}$ may be nonphysical and is never presented as a valid scene.

\paragraph{Bounds before optimizer convergence.}
Let $r=Aa-b$ and $g=A^Tr$. Convexity gives the globally valid lower bound
\[
 f_*\geq\max\{0,\ f(a)+\langle g,e-a\rangle-h_{C-e}(-g)\}.
\]
The support function of the product set is summed over nodes and channels. Its local value is $\|g\|_2/2$ for $C_{\rm out}$ and
\[
 \tfrac12\max\left\{|g_0|,\max_{1\leq l\leq L}
                 \|g_l\|_2/\sqrt{2l+1}\right\}
\]
for $C_{\rm in}$. Any feasible iterate supplies an upper bound. This separates a residual floor from insufficient optimization without requiring an exact least-squares solve. The original tables evaluate these bounds in ordinary double precision. The additional integer enclosures in \cref{app:integercertificate} certify the degree-one/two comparison for the prescribed discrete inputs.

\paragraph{Solver and interpretation.}
We use full-batch accelerated projected gradient \cite{beck2009} with a scalar Jacobi scaling at each voxel, exact projection onto the relevant convex set, and a quadratic majorization check at every step. The inner projection is weighted group soft-thresholding with its threshold found from sorted group norms. There is no TV or other regularization. The stopping rule uses a $1\%$ relative primal--dual gap or $10^{-10}$ absolute gap in RGB MSE, subject to $2500$ iterations or $120$ seconds per joint two-scene solve; unresolved cells retain their bounds and are not called exact optima. Selection takes the smallest feasible training objective, never test or geometry scores. Both constraints use the same initial coefficients: the analytic alternative for the quadratic scene and the reference DC coefficients for the control; higher unassigned coefficients are zero. These are controlled capacity fits with an oracle initialization for the constructed example, not random discovery of a density ambiguity.

The finite-view, radius-$.8$ model is outside the small-support positive theorem. A lower-bound/upper-bound separation compares physical appearance capacities for the prescribed density pair and voxel class; it does not prove density uniqueness among all fields or show that every remaining residual is interpolation error. A positive center $\kappa_L$ in the control also does not imply a large observable residual throughout $B_{.8}$. All original random-start thresholds remain unchanged.

\subsubsection{An arithmetic certificate for the discrete separation}\label{app:integercertificate}
This certificate fixes the elevenfold density candidate, separately from the exact twofold construction. We additionally verify the two appearance capacities by exact integer and rational postprocessing of the retained iterates. This uses classical verified-optimization principles \cite{keiljansson2006}; it introduces neither a new solver nor a new identifiability theorem. All saved binary64 optical weights, target RGB, known-exterior RGB, and ray directions are interpreted as exact inputs. Directions are normalized over the reals and their SH values are enclosed by rational intervals. The result certifies this prescribed discrete rendering model, not its quadrature, calibration, or continuum-model errors.

\paragraph{A dual witness without a convergence requirement.}
For one channel write $b=t-o$ and
\[
 f_* = \min_{a\in C_1}\tfrac12\|Aa-b\|_2^2,
 \qquad C_1=\{|a_{v0}-1/2|+\sqrt3\|a_{v1}\|_2\leq1/2\ \forall v\}.
\]
For \emph{any} ray vector $y$, put $g=A^Ty$. Completing the square and minimizing the resulting affine expression over $C_1$ gives
\begin{align}
 f_*&\geq D(y):=-\tfrac12\|y\|_2^2-\langle y,b\rangle
       +\tfrac12\sum_v \bigl(g_{v0}-s(g_v)\bigr),\label{eq:integerdual}\\
 s(g_v)&=\max\{|g_{v0}|,\|g_{v1}\|_2/\sqrt3\}.\nonumber
\end{align}
Indeed $\|Aa-b\|_2^2/2\geq\langle y,Aa-b\rangle-\|y\|_2^2/2$, and the last sum is $\inf_{a\in C_1}\langle g,a\rangle$. Thus even a rounded, inaccurate residual is a valid dual witness. Negative channelwise bounds may be replaced by zero before summing RGB objectives.

\paragraph{Integer enclosure of the adjoint.}
Choose dyadic $y_i=Y_i/Q_y$ and integers $\widehat B_{iv}$, $H_{ik}$ with
\[
 |B_{iv}-\widehat B_{iv}/Q_B|\leq Q_B^{-1},\qquad
 |y_i\phi_k(d_i)-H_{ik}/Q_H|\leq Q_H^{-1}.
\]
The first enclosure follows by rounding stored weights. The second includes the rational SH enclosure, multiplication rounding, and quantization. Let $Q=Q_BQ_H$ and define, on the saved sparse support,
\begin{align*}
 G_{vk}&=\sum_i\widehat B_{iv}H_{ik},\\
 E_{vk}&=\sum_{i:B_{iv}\ne0}\bigl(|H_{ik}|+1\bigr)
                  +\sum_i\widehat B_{iv}.
\end{align*}
The weights are nonnegative. Expanding $B(y\phi)-\widehat B H/Q$ therefore proves
\[
 |g_{vk}-G_{vk}/Q|\leq E_{vk}/Q.
\]
In particular $g_{v0}\geq(G_{v0}-E_{v0})/Q$, while
\[
 s(g_v)\leq\frac1Q\max\left\{
 |G_{v0}|+E_{v0},\,
 \left\lceil\sqrt{\frac{\sum_{k=1}^3(|G_{vk}|+E_{vk})^2}{3}}\right\rceil\right\}.
\]
The square-root ceiling is computed by integer square roots. If target and offset are independently rounded with denominator $Q_t$, their joint dot-product error is at most $\sum_i|Y_i|/(Q_yQ_t)$. Substituting these bounds into \eqref{eq:integerdual} gives a rational lower bound. Integer sparse products are accepted only after an upper bound on their sum of absolute terms excludes overflow. Subsequent scalar reductions use arbitrary-precision integers.

\paragraph{A physically feasible upper witness.}
We round the retained degree-two coefficients and move their deviations from the half-color toward zero. The largest coefficient change is below $5.59\times10^{-9}$. For integer coefficients $Z$ with denominator $Q_a$, the check
\[
 |Z_{v0}-Q_a/2|+
 \sum_{l=1}^2\left\lceil\sqrt{(2l+1)\sum_m Z_{vlm}^2}\right\rceil
 \leq Q_a/2
\]
proves the all-angle physical color constraint. Both view counts pass with at least one integer unit of slack. For these coefficients $\sum_m|a_{vm}|\leq1$ and $0\leq c_v(d)\leq1$. Hence rounding the optical weights contributes at most $N_{\max}/(2Q_B)$ to each RGB component, where $N_{\max}$ is the largest row support size. Rounding the SH values contributes at most $M_B((2Q_\phi)^{-1}+\epsilon_\phi)$, where $M_B$ is the largest rounded optical row mass and $\epsilon_\phi$ encloses normalized-SH evaluation. Intermediate and final dyadic rounding contribute, respectively,
\[
 \frac{\max_i\sum_m|\widehat\phi_{im}|}{2Q_PQ_\phi}
 \quad\hbox{and}\quad \frac{3}{2Q_{r}}.
\]
Adding these componentwise bounds to the exactly enclosed integer residual RMS gives a rigorous feasible upper bound. No reoptimization or test-ray selection is used.

\begin{table}[t]
\centering\small
\caption{Integer-certified training RMSE bounds for the fixed alternative density. All displayed lower bounds are rounded down and upper bounds up. The control's 100-view separation shows that this effect is not exclusive to the quadratic reference.}\label{tab:integercertificate}
\begin{tabular}{lrcc}
\toprule
Scene & Views & $L=1$ lower & $L=2$ upper\\\midrule
Quadratic &100& .004430018493 & .000001185647\\
Quadratic &400& .006126135873 & .000068440310\\
Control &100& .007309708730 & .000022267172\\
Control &400& .009951989652 & .001128158093\\\bottomrule
\end{tabular}
\end{table}

\paragraph{Finite RGB perturbations.}
Let $d_L(b)=\inf_{a\in C_L}\|A_La-b\|_{\rm RMS}$ for the same fixed density. The triangle inequality gives $|d_L(b+e)-d_L(b)|\leq\|e\|_{\rm RMS}$. Consequently lower and upper bounds $d_1(b)\geq\ell$, $d_2(b)\leq u$ imply
\[
 d_1(b+e)-d_2(b+e)\geq\ell-u-2\|e\|_{\rm RMS}.
\]
For the quadratic reference, strict ordering is therefore certified for RGB perturbations below $.002214416423$ and $.0030288477815$ RMS at 100 and 400 views. Multiplication by $255$ gives $.564676187865/.7723561842825$ linear-RGB levels RMS. This scale is neither a pixelwise bound nor a gamma-encoded sRGB tolerance. These compare angular appearance classes at a prescribed wrong density; they are not density-recovery error bounds. If a different renderer is uniformly within $\rho_L$ RMS on $C_L$, subtract $\rho_1+\rho_2$ from the right side. Such a uniform continuum or calibration bound has not been established here.

The certification retains more than $99.98\%$ of each quadratic floating lower bound. All four scene/view certificates take $7.8$ seconds on a local CPU after the original training, with no new training runs. A separate checking implementation tests 256 exact small physical-SH dual inequalities and reconstructs the saved witnesses. These are validated arithmetic computations with explicit input scope, not proof-assistant or external expert verification.

\subsection{Training settings and all endpoints}\label{app:training}
\subsubsection{Joint optimization protocol}
We use a dense Plenoxels-style \cite{plenoxels2022} representation: a $32^3$ trilinear grid stores density and real SH color coefficients, with degree $L\in\{0,1,2,3\}$. The unknown region is $B_{.8}$; the reference exterior up to the unit sphere is known and integrated analytically. This is an explicit finite-dimensional test of optimization behavior, not an implementation of the small-radius smooth theorem. Training receives RGB only, without density, depth, opacity, or internal-source losses.

\paragraph{Scenes and splits.}
The quadratic scene is \eqref{m:reference}. The control has the same density and $c_k(x)=1/2+\cos(3x_k)/4$. There are $100$ or $400$ orthographic full-sphere training views at $32^2$ pixels; the smaller set is a fixed subset of the larger. Independent test and validation camera sets contain $32$ views at $24^2$ and $12$ at $16^2$, respectively. Analytic reference RGB removes teacher-rendering quadrature error. We report PSNR on rays intersecting the unknown region, as well as the full foreground; exact black background never enters the reported scores. Geometry is measured on $8192$ independent uniform points in $B_{.75}$. Absorption-depth errors use independent test rays and analytic unit-density truth.

\paragraph{Finite angular degree and the physical box.}
The real SH basis is scaled so its degree-zero element is one and $\sum_m\phi_{lm}(d)^2=2l+1$. No directional sigmoid or clipping is applied. At each voxel and channel, after every update, DC is clipped to $[.002,.998]$ and the nonconstant coefficients are jointly scaled until
\[
 \sum_{l=1}^L\sqrt{2l+1}\,\|a_l\|_2
 \leq \min(a_0,1-a_0)-10^{-4}.
\]
Cauchy--Schwarz and convex trilinear interpolation imply an all-direction color box. This sufficient constraint restricts the full physical SH class and can affect optimization; both analytic warm starts satisfy it at the grid nodes without scaling. Density is projected to $[.05,12]$.

\paragraph{Optimization and hypotheses.}
The grid uses three matched seeds per scene, view count and angular degree: $48$ runs, each with $1500$ Adam updates and $384$ rays per batch. Density and color learning rates start at $.02$ and $.005$, decay exponentially to one tenth, and use Adam $\epsilon=10^{-8}$. Density starts uniformly in $[.5,1.5]$; DC starts at $.4$ with $.03$ Gaussian noise and higher degrees at zero. There is no TV penalty.

Midpoint rendering uses $64$ interior samples in training and $128$ in evaluation, with a separate $256$-sample check. Known exterior segments are exact. Final-step checkpoints are used throughout. The ratio of the last two $100$-step mean losses diagnoses continuing optimization; it does not establish convergence. Discretized warm-start losses are measured rather than assumed zero.

\paragraph{Why the same control has positive center gain at every finite degree.}
At the center, $T=e^{-1}$ and
\[
 U_k(d)=\tfrac12(1-e^{-1})+\tfrac14\int_0^1e^{-t}\cos(3t d_k)\dd t.
\]
On $C_n$, write $d_k=a_k\cos(\theta-\theta_k)$, $a_k=\sqrt{1-n_k^2}$. Choose any even integer $\nu>L$, $\nu\geq2$. The Jacobi--Anger expansion gives a degree-$\nu$ cosine amplitude of magnitude
\[
 \tfrac12\int_0^1 e^{-t}J_\nu(3t a_k)\dd t.
\]
For $0<z\leq3$, the alternating Bessel series has decreasing terms and
\[
 J_\nu(z)\geq \frac{(z/2)^\nu}{\nu!}
 \left(1-\frac{9}{4(\nu+1)}\right)>0.
\]
Since $\sum_k a_k^2=2$, some $a_k\geq\sqrt{2/3}$. Thus one uncancellable Fourier amplitude has a strictly positive normal-independent lower bound, proving $\kappa_L(0)>0$ for every finite $L$ in this one fixed control. This does not provide a whole-region lower bound or guarantee recovery by the finite-view voxel optimizer.

\subsubsection{Complete fixed-budget training results}
Density dispersion is $D=\sqrt{N^{-1}\sum_i\operatorname{Var}_{s=1,2,3}\sigma_s(x_i)}$, with variance divisor three. Errors and PSNR in \cref{t:allvoxel} are means and sample standard deviations over the three seeds. There are $7{,}936$ test rays through the unknown region, within $12{,}288$ foreground rays. The training sets contain $44{,}800$ and $179{,}200$ unknown-intersecting rays. All are retained.

\begin{table*}[t]\centering\small
\caption{All random-start cells. Depth is conditional absorption-depth RMSE in scene units. No seed is omitted.}\label{t:allvoxel}
\begin{tabular}{llrrrrr}\toprule
Scene & Views & $L$ & PSNR (dB) & Density error & Depth error & $D$\\\midrule
Quadratic & 100 & 0 & $58.87\pm0.04$ & $0.119\pm0.001$ & $0.0071\pm0.0000$ & 0.082\\
Quadratic & 100 & 1 & $53.28\pm0.08$ & $0.192\pm0.001$ & $0.0175\pm0.0001$ & 0.085\\
Quadratic & 100 & 2 & $51.98\pm0.02$ & $0.300\pm0.002$ & $0.0309\pm0.0001$ & 0.099\\
Quadratic & 100 & 3 & $49.88\pm0.05$ & $0.380\pm0.002$ & $0.0403\pm0.0001$ & 0.097\\
Quadratic & 400 & 0 & $60.78\pm0.05$ & $0.113\pm0.001$ & $0.0066\pm0.0000$ & 0.077\\
Quadratic & 400 & 1 & $56.95\pm0.07$ & $0.175\pm0.001$ & $0.0163\pm0.0001$ & 0.074\\
Quadratic & 400 & 2 & $56.66\pm0.08$ & $0.292\pm0.002$ & $0.0306\pm0.0001$ & 0.084\\
Quadratic & 400 & 3 & $53.57\pm0.09$ & $0.367\pm0.002$ & $0.0393\pm0.0001$ & 0.078\\
Cosine control & 100 & 0 & $45.50\pm0.02$ & $0.376\pm0.003$ & $0.0329\pm0.0001$ & 0.091\\
Cosine control & 100 & 1 & $46.89\pm0.04$ & $0.435\pm0.002$ & $0.0422\pm0.0001$ & 0.100\\
Cosine control & 100 & 2 & $49.45\pm0.07$ & $0.507\pm0.003$ & $0.0492\pm0.0002$ & 0.121\\
Cosine control & 100 & 3 & $47.49\pm0.06$ & $0.487\pm0.002$ & $0.0468\pm0.0002$ & 0.122\\
Cosine control & 400 & 0 & $45.72\pm0.02$ & $0.373\pm0.003$ & $0.0327\pm0.0001$ & 0.088\\
Cosine control & 400 & 1 & $48.02\pm0.02$ & $0.428\pm0.002$ & $0.0418\pm0.0001$ & 0.093\\
Cosine control & 400 & 2 & $53.22\pm0.06$ & $0.506\pm0.002$ & $0.0493\pm0.0001$ & 0.114\\
Cosine control & 400 & 3 & $51.03\pm0.05$ & $0.490\pm0.002$ & $0.0473\pm0.0002$ & 0.113\\
\bottomrule\end{tabular}\end{table*}
\subsection{Scope of optimizer and confidence interpretations}\label{app:limitations}
The all-ray local theorems do not predict a capacity-specific jump in random-start density dispersion: the two scenes' observed SH2/SH1 dispersion ratios are $1.14$--$1.22$, while losses continue decreasing. The intervention tests and fixed-density class certificates address compensation directly. A gain evaluated on a learned field is not automatically a calibrated depth-confidence score: six RGB-only pilot fits do not improve the tested depth-error ranking. The exact tangent-circle counterexample in \eqref{eq:tangentcircle} also shows why a finite grid of normals cannot certify positive minimum gain.

\paragraph{Optimization path.}\phantomsection\label{app:curriculum}
A matched 12-run comparison uses the same final SH3 model, two analytic scenes, three seeds, 1,500 updates and paired initialization/batches. The curriculum activates degrees 0/1/2/3 at updates 0/250/500/750. Relative to immediate activation, median density error decreases $11.9\%$ in the quadratic scene but increases $9.9\%$ in the control, with PSNR losses $1.96/2.86$ dB. Neither scene meets the predefined $20\%$ density improvement and at most $.5$ dB loss criterion. These comparisons establish no accuracy-improving schedule; the theory concerns information remaining after appearance refitting.

\subsection{Native neural fields: prediction and finite compensation}\label{app:nativepilot}
The references are a 12,000-update NeRF Lego model, released 200,000-step NeRF Fern/Lego checkpoints, and photo-trained Nerfacto Fern selected at update 20,000 of 30,000. Targets are reference renderings. R/D/H release the RGB readout, additionally its directional input columns, or the complete color head; these are parameter classes, not SH degrees.

\paragraph{Local prediction.}
Readout increments fitted on 512 NeRF rays predict nonlinear RMS on 256 separate rays within $.073\%$ at density amplitude $.001$ and $.72\%$ at $.01$. Expanding the hidden prefix from 16 to 128 decreases training residual $2.48\times$ but increases held-out residual $1.57\times$: accurate prediction of a fitted update does not guarantee camera transfer.

\paragraph{Finite changes.}\phantomsection\label{app:officialrefit}\label{app:nerfactocompensation}
All arms fix density, spatial features and proposals, and use disjoint fitting/validation/test rays. The trained NeRF uses 4,096/1,024 fitting/validation rays and 3,000 updates; released NeRF and Nerfacto use 3,072/512 and 4,000. Validation selects the checkpoint, including initialization. The $1.25\times$ scale and separately calibrated positive-bias edits give independently measured relative density changes $22.73$--$25.55\%$.

\begin{table}[!ht]\centering\small
\caption{All finite native-head refits. Entries give frozen/refitted test RGB RMSE ratios in R/D/H order; values above one mean improvement. No arm meets the joint criterion of $5\times$ reduction, reference agreement $\geq35$ dB, density change $\geq20\%$ and sampling consistency.}\label{tab:officialrefit}
\begin{tabular}{@{}llrrr@{}}\toprule
Reference & Density edit & R & D & H\\\midrule
Trained NeRF Lego & Scale & 1.216 & 1.189 & 1.106\\
 & Bias & 1.319 & 1.411 & 1.643\\
Released NeRF Fern & Scale & 1.025 & 1.000 & 1.000\\
 & Bias & 1.433 & 1.469 & 1.472\\
Released NeRF Lego & Scale & .999 & .942 & 1.000\\
 & Bias & 1.126 & 1.185 & 1.275\\
Nerfacto Fern & Scale & 1.082 & 1.100 & 1.141\\\bottomrule
\end{tabular}\end{table}
All 50,000 trained-NeRF, 35,910/130,000 released-Fern/Lego and 142,884 Nerfacto test pixels are included. For Nerfacto, agreement rises from $40.75$ dB before refitting to $41.43/41.58/41.89$ dB. Locking the last-sample background to the reference gives almost the same ratios, $1.081/1.097/1.140$. The corresponding reference photograph PSNR across cameras is $27.69$--$29.71$ dB; full-head refitting gives $27.32$--$29.38$ dB. Agreement with a learned reference and accuracy against photographs measure different quantities.

\paragraph{Integration and observation population.}\phantomsection\label{app:nerfactosamplingdiagnostic}\label{app:nerfactofixedrefinement}
Sampling checks pass for all six trained-NeRF arms and three released-Lego scale arms, but fail for the other nine released-checkpoint arms and all three native-sampled Nerfacto arms. On 96 fixed Nerfacto probes, interval subdivision with frozen backgrounds stabilizes empirically at 32/64/128 subdivisions; final maximum discrepancy changes are $.0368/.0168$ gray levels. Frozen/refitted ratios are below one both before and after refinement (finest R/D/H: $.9620/.9521/.9890$). These probes already differ from full-image averages at coarse resolution; they establish neither a continuous error bound nor loss of the full-image gain. The prescribed Nerfacto cut has negligible transported weight and is uninformative for reuse. The evidence supports local prediction and modest renderer-specific compensation, without a general reconstruction improvement.

\subsection{Matched spatial representation and angular control}\label{app:sharedshdiagnostic}
At the frozen Nerfacto Fern reference, shared trilinear $6^3/8^3$ grids cover contracted coordinates $[-2,2]^3$. Density is unchanged, scaled by $1.25$, or multiplied by $1+.5u_x$, $u_x\in[0,1]$. SH0/SH1/SH2 fitting uses 1,024 rays and 256 validation rays; 128 uniformly sampled pixels from each test camera (0, 8, 16) are fixed before scoring. Fixed intervals/backgrounds use 64 subdivisions for fitting and 32/64/128 for evaluation. All arms reach numerical training-RMSE brackets of width $\leq.05/255$; validation selects the reported iterate.

\paragraph{Replacement versus texture-preserving modification.}
Replacing the complete native color with shared SH color gives only $15.52/16.61$ dB agreement at the unedited reference, even at SH2. Although angular terms improve edited-density test RMSE by $3.66$--$8.09\%$, the increment after subtracting their unchanged-density benefit is only $.012$--$.163$ gray levels. All eight comparisons also fail the final integration criterion. This replacement class is a poor model of the native texture.

To remove that baseline mismatch, a separately frozen diagnostic preserves the original color $c_*$ and fits
\begin{equation}\label{control:residualcolor}
 c_u(x,d)=c_*(x,d)+\min\{c_*(x,d),1-c_*(x,d)\}\,M_u(x,d),
 \quad M_u=\sum_v b_v(x)\sum_k u_{vk}\phi_k(d).
\end{equation}
The formula is componentwise. Trilinear weights satisfy $b_v\geq0$, $\sum_vb_v=1$; the polynomial SH basis is
$(1,d_x,d_y,d_z,d_xd_y,d_xd_z,d_yd_z,d_x^2-d_y^2,(3d_z^2-1)/2)$, truncated at $L$. Since $|\phi_k|\leq1$, imposing $\sum_k|u_{vk}|\leq1$ at every node/channel gives $|M_u|\leq1$, hence $0\leq c_u\leq1$. The zero update reproduces the reference exactly. Degree describes the modulation; the final color is not SH-limited. The same splits, edits, grids and numerical tolerances give another 18 fits. This protocol was chosen after observing replacement underfitting and is a diagnostic follow-up, not independent confirmation.

\paragraph{Training compensation and camera transfer.}
All eight edited-density angular comparisons reduce training RMSE $5.83$--$12.39\%$ relative to degree-zero modulation, but increase held-out RMSE $1.44$--$6.81\%$. All four scale comparisons pass the final per-camera refinement thresholds ($.05/255$ absolute discrepancy change and $10\%$ relative); the spatial-ramp comparisons do not. Thus even the numerically stable scale arm fails to transfer the fitting benefit. Neither diagnostic supplies the required $5\%$ test improvement, one-gray-level extra compensation and positive gain in every test camera. On the same 384 refined test rays, the previously fitted native R/D/H heads still reduce error by $7.1$--$8.9\%$ relative to frozen color: these are different update classes and comparisons. Independent arithmetic and native-field replay checks cover all 36 fits. The controls establish the distinction between representational fit, optimization and camera transfer; they do not demonstrate strong angular compensation of unknown physical density in photographs.

\section{From the local ray norm to positive pixels in a smooth finite model}\label{long:pixeltransfer}
This section proves the transfer summarized in \cref{long:pixelmain}. The model is fixed before the measurements are selected. The argument uses finite-range norm equivalence and Fourier interpolation, in the setting of finite-measurement stability \cite{alberti2021finite,bourgeois2013}, while retaining the density-only target and the complete permitted appearance class.

\subsection{Model and hypotheses}
The domain is $B_1\subset\R^3$, with fixed exterior and black background. Oriented lines are $r=(p,d)$, where $p\perp d$, with metric
\begin{equation}\label{lp:metric}
d_{\cR}((p,d),(p',d'))=|p-p'|+\angle(d,d').
\end{equation}
Fix $0<a<1$. All unknown density and source-coefficient changes are supported in an interior ball contained in $B_a$. Hence their rendering differences vanish for $|p|>a$.

Choose fixed finite smooth compactly supported density basis functions and smooth spatial source basis functions multiplied by SH modes through a fixed degree $L$. The density basis is linearly independent. Let $s$ and $c$ be their coordinates and $(s_0,c_0)$ the reference; the density mass norm is chosen so that
$H=\|s-s_0\|_D=\|\sigma_s-\sigma_{s_0}\|_{L^2}$.
The renderer is affine in the source coordinates:
\begin{equation}\label{lp:model}
F(s,c)=b(s)+B(s)c.
\end{equation}
The baseline-source contribution $b(s)$ may depend on density. Let $S$ be a compact density prior star-shaped about $s_0$, and allow every $c$ in a declared physical set $C(s)$ with $\|c\|\le M_c$ uniformly. All candidates obey the support and $C^6$ neighborhood assumptions of the local theorem, including $0\le j_k\le\sigma$.

For smooth RGB ray functions with the stated offset support, write
\begin{equation}\label{lp:norms}
R(w)=\max_{r,k}|w_k(r)|,\quad
P(w)=\max_{r,k}|\nabla_p w_k(r)|,\quad
J(w)=\max_k\operatorname{Lip}_{d_{\cR}}w_k.
\end{equation}
Let $Z$ carry the norm $R+P+J$. Assume $B(s)$ and $b(s)-b(s_0)$ are continuously differentiable into this space on a neighborhood of $S$. The substantive inverse hypothesis is
\begin{equation}\label{lp:Yhyp}
H\le C_Y\|F(s,c)-F(s_0,c_0)\|_Y
\quad\text{for every }s\in S,\ c\in C(s).
\end{equation}
It follows by restriction of \cref{thm:nonlinear} when that theorem's positive-gain, gauge, support and neighborhood conditions hold. Finite dimensionality alone is insufficient. The physical-coordinate coefficient is unchanged under the support rescaling: both density $L^2$ and the $Y$ norm of a rendering difference scale by $r^{3/2}$ when the rescaled ray image is divided by $r$.

\subsection{Derivative bounds over the full appearance class}
Put $V_0=\operatorname{ran}B(s_0)$. Since this is a finite-dimensional space of actual image functions, there exist $A_p,A_q\ge1$ such that
\begin{equation}\label{lp:finiteimage}
P(f)\le A_pR(f),\qquad J(f)\le A_qR(f),\qquad f\in V_0.
\end{equation}
Use $A_p=A_q=1$ if $V_0=\{0\}$. This does not require an inverse on source coefficients: $B(s_0)$ may have a kernel. Choose finite bounds
\begin{align}
K_0&\ge\sup_{s,u,c}R(Db(s)[u]+DB(s)[u]c),\nonumber\\
K_p&\ge\sup_{s,u,c}P(Db(s)[u]+DB(s)[u]c),\label{lp:K}\\
K_q&\ge\sup_{s,u,c}J(Db(s)[u]+DB(s)[u]c),\nonumber
\end{align}
where $s\in S$, $\|u\|_D=1$, and $\|c\|\le M_c$. Compactness and the stated differentiability give finiteness. The coefficient ball here only bounds the derivative; it does not enlarge the physical candidate class.

For $g=F(s,c)-F(s_0,c_0)$ use the exact split
\[
f=B(s_0)(c-c_0),\qquad
v=b(s)-b(s_0)+[B(s)-B(s_0)]c,\qquad g=f+v.
\]
The segment from $s_0$ to $s$ lies in $S$. Integrating the density derivative with $c$ fixed gives $R(v)\le K_0H$, $P(v)\le K_pH$ and $J(v)\le K_qH$. Consequently, with $R=R(g)$,
\begin{equation}\label{lp:derivative}
\begin{aligned}
P(g)&\le A_pR+D_pH,&D_p&=A_pK_0+K_p,\\
J(g)&\le A_qR+D_qH,&D_q&=A_qK_0+K_q.
\end{aligned}
\end{equation}
This applies to all permitted source increments, including changes invisible at the reference and visible at another density. No bound proportional to $H$ is imposed on $c-c_0$.

\subsection{A quantitative maximum-RGB and pixel gain}
\begin{theorem}[Smooth finite-model pixel transfer]\label{lp:theorem}
Under the preceding hypotheses set $T=3a^2C_Y^2$ and
\begin{equation}\label{lp:gamma}
\gamma=\frac{2}{TD_p+\sqrt{T^2D_p^2+4TA_p}}>0.
\end{equation}
Then $R(g)\ge\gamma H$. Suppose the selected ray centers are a $\delta$-net of $Q_a=\{(p,d):|p|\le a\}$ and each measurement is a nonnegative normalized average supported within distance $\rho$ of its center in \eqref{lp:metric}. If
\begin{equation}\label{lp:threshold}
t=\delta+\rho<\frac{\gamma}{A_q\gamma+D_q},\qquad
\gamma_{\rm pix}=\gamma-t(A_q\gamma+D_q),
\end{equation}
then the maximum channelwise pixel difference is at least $\gamma_{\rm pix}H$.
\end{theorem}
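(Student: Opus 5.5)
The plan is to deduce the maximum-RGB gain from the inverse hypothesis \eqref{lp:Yhyp} by an interpolation bound for the $Y$ norm. The bound uses only $R(g)$, $P(g)$ and the offset support. The pixel statement then follows by a Lipschitz perturbation from a maximizing ray. Throughout, $g=F(s,c)-F(s_0,c_0)$ for an arbitrary admissible pair, and we use the derivative control \eqref{lp:derivative}, which already covers the full appearance class.

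\emph{Step 1 (interpolation for $Y$).} Fix a direction $d$. Apply Plancherel in $p$ and Cauchy--Schwarz in $\xi$:
\[
\int_{d^\perp}|\xi|\,|\F_{2,p}g(\xi,d)|^2\dd\xi\le\|g(\cdot,d)\|_{L^2(d^\perp)}\,\|\nabla_pg(\cdot,d)\|_{L^2(d^\perp)},
\]
summing over RGB channels. Every rendering difference vanishes for $|p|>a$, so the offset support is a disk of area $\pi a^2$. Hence the two $L^2$ norms are bounded by $\sqrt{3\pi a^2}\,R(g)$ and $\sqrt{3\pi a^2}\,P(g)$, the factor $3$ accounting for the channels. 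Integrating over $\Sph^2$ (area $4\pi$) with the prefactor $1/(4\pi^2)$ gives $\|g\|_Y^2\le3a^2R(g)P(g)$. Squaring \eqref{lp:Yhyp} then yields $H^2\le T\,R\,P$ with $T=3a^2C_Y^2$.

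\emph{Step 2 (quadratic inequality).} Insert $P\le A_pR+D_pH$ from \eqref{lp:derivative} to obtain $H^2\le TA_pR^2+TD_pRH$. The case $H=0$ is trivial. For $H>0$, put $x=R/H$, so that $TA_px^2+TD_px-1\ge0$. Since $x\ge0$, $x$ is at least the positive root, which after rationalizing is exactly $\gamma$ in \eqref{lp:gamma}. This gives $R(g)\ge\gamma H$. The argument needs no inverse on source coefficients; it only requires that \eqref{lp:derivative} holds uniformly over $C(s)$.

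\emph{Step 3 (pixels).} By compactness and the offset support, $R(g)$ is attained at some ray $r^\star\in Q_a$ and channel $k$. Choose a net center $r_0$ with $d_{\cR}(r_0,r^\star)\le\delta$. Every ray in the support of the corresponding footprint then lies within $t=\delta+\rho$ of $r^\star$. The footprint is a nonnegative normalized average, so its value differs from $g_k(r^\star)$ by at most $tJ(g)$. Using the $J$ line of \eqref{lp:derivative},
\[
|\text{pixel}_k|\ge R-t(A_qR+D_qH)=(1-tA_q)R-tD_qH.
\]
The threshold \eqref{lp:threshold} implies $tA_q<\gamma A_q/(A_q\gamma+D_q)\le1$, so the coefficient of $R$ is positive. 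Substituting $R\ge\gamma H$ gives $\gamma_{\rm pix}H$, as claimed.

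The main obstacle is Step 1: getting the constant $3a^2$ exactly in the paper's normalization. This requires checking the unitary Fourier convention in \eqref{eq:Y}, the channel counting, and that $P$ bounds the full offset gradient pointwise. Steps 2 and 3 are elementary once that is in place.
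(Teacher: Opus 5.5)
Your proposal is correct and follows the paper's proof essentially step for step. It uses the same interpolation bound $\|g\|_Y^2\le3a^2R(g)P(g)$, the same quadratic inequality in $x=R/H$ with positive root $\gamma$, and the same footprint estimate $R\le e+tJ(g)$ combined with the $J$ line of \eqref{lp:derivative} and the observation that the threshold forces $1-tA_q>0$; the only difference is that you apply Cauchy--Schwarz direction by direction rather than once over the whole line space, which gives the identical constant.
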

\begin{proof}
The measure of $|p|\le a$ over the direction sphere and its offset planes is $4\pi^2a^2$. With the unitary Fourier convention, Plancherel and Cauchy--Schwarz give
\begin{equation}\label{lp:interpolation}
\|g\|_Y^2\le\frac{\|g\|_{L^2}\|\nabla_pg\|_{L^2}}{4\pi^2}
\le3a^2R P(g)\le3a^2R(A_pR+D_pH).
\end{equation}
The factor three sums the RGB channels. Combining this inequality with \eqref{lp:Yhyp}, for $H>0$ the ratio $x=R/H$ satisfies $1\le T(A_px^2+D_px)$. The positive root is \eqref{lp:gamma}; the case $H=0$ is immediate.

Let $e=\max_{i,k}|\int g_k\,d\mu_i|$ for the pixel measures. Positivity, unit mass and the coverage assumption give $R\le e+tJ(g)$. By \eqref{lp:derivative},
\[
e\ge(1-tA_q)R-tD_qH.
\]
The strict threshold implies $1-tA_q>0$, allowing substitution of $R\ge\gamma H$. The result follows.
\end{proof}

If the reference and a candidate both fit the same pixel data with errors $\eta_0,\eta_1$ and their forward calculations have certified errors $\tau_0,\tau_1$, the triangle inequality gives
\begin{equation}\label{lp:noise}
\|\sigma_s-\sigma_{s_0}\|_{L^2}
\le\frac{\eta_0+\eta_1+\tau_0+\tau_1}{\gamma_{\rm pix}}.
\end{equation}
These are maximum linear-RGB budgets. The coefficient is relative to one fixed reference and model; it is not a uniform coefficient across arbitrary resolutions, degrees or rank changes. The absence of an additive sampling floor is a consequence of \eqref{lp:derivative}, not an assertion that raw pixels are bounded functionals on the entire space $Y$.

\subsection{Regularity, quantitative constants and physical cameras}
For the smooth finite restriction of the local rendering class, the $Z$ regularity follows by writing the ray integrals over a fixed interval and differentiating transport weights on an interior cutoff. Source basis images are weighted integrals of smooth compactly supported functions. Density derivatives additionally contain a fixed density basis and the corresponding remaining radiance. The entry/exit denominators are bounded below by $\sqrt{1-b^2}$ on a cutoff ball $B_b$, $b<1$. Bounded coefficients and dominated differentiation give the required uniform first spatial/directional derivatives and continuous density derivatives. The finite density mass norm controls the smooth density seminorms. This reasoning uses the interior support, so it does not require smooth absolute reference data at the outer grazing boundary.

The finite-image constants can in principle be enclosed. Choose an independent basis $w_1,\ldots,w_m$ of $V_0$. There are $m$ query/channel evaluations forming an invertible matrix $E$, since otherwise a nonzero linear combination would vanish at every query and channel. An enclosed inverse norm gives sufficient choices
\[
A_p\ge\|E^{-1}\|_\infty\sum_jP(w_j),\qquad
A_q\ge\|E^{-1}\|_\infty\sum_jJ(w_j),
\]
enlarged to one. This matrix only bounds derivatives in the proven reference image space; it does not replace \eqref{lp:Yhyp} by a finite Jacobian-rank check.

A finite net can be realized by exterior pinhole cameras. For a center line $(p,d)$ choose $R_c>1$ and camera center $o=p-\sqrt{R_c^2-|p|^2}\,d$. Its offset map is $p_o(d')=o-(o\cdot d')d'$, with
$|p_o(d')-p_o(d)|\le2R_c\angle(d,d')$.
A positive angular footprint of radius
\[
0<\theta<\min\{\rho/(2R_c+1),\tfrac12\arccos(a/R_c)\}
\]
therefore has line-space radius at most $\rho$ and stays in the inward hemisphere. Its physical forward-ray reading agrees with the complete-line integral. Compactness supplies a finite net, and one camera per center is a sufficient existence construction. Its efficiency is a separate quantitative question addressed by the next section's scope bound.

\section{Physical finite-grid color compensation on a fixed pixel design}\label{long:gridcomp}
This corollary complements \cref{lp:theorem}: the measurements are fixed first, and spatial color resolution is then selected. It uses exact continuous pixel integrals and a common isotropic color field.

\subsection{Reference, response and a fixed grid}
Let $\Omega$ be bounded and convex, with diameter $\ell$, fixed black background and vacuum exterior. Let $D\Subset\Omega$ be an open ball. For each of $m$ fixed positive normalized camera-pixel measures $\nu_i$, define
\begin{equation}\label{fg:F}
F_i(\sigma,c)=\int\int_{r}T_\sigma(t)\sigma(r(t))c(r(t))\,dt\,d\nu_i(r).
\end{equation}
Rays missing the domain contribute zero. Take a continuous reference color and a smooth nonnegative reference density, with $\Sigma=\|\sigma_*\|_\infty$, $\sigma_*\ge s_->0$ on $D$, $0\le c_{*,k}\le1$ everywhere and
$\nu\le c_{*,k}\le1-\nu$ on $D$, where $0<\nu\le1/2$ is the color margin.

Assume an exact response right inverse: there exist $\psi_a\in C_c^\infty(D)$, $1\le a\le m$, such that
\begin{equation}\label{fg:A}
A_*[i,a]=\int\int_r T_*\sigma_*\psi_a\,dt\,d\nu_i
\end{equation}
is nonsingular. The pixel-source construction in \cref{rs:pixelspatial} supplies this condition in its stated geometry: divide its source bumps by $\sigma_*$ on their compact supports. It is a substantive measurement condition, not a consequence of counting grid nodes.

Use induced maximum matrix norms and define
\[
M=\|A_*^{-1}\|_\infty,\quad B=\Big\|\sum_a|\psi_a|\Big\|_\infty,
\quad\Lambda=\sum_a\operatorname{Lip}(\psi_a),\quad
d_0=\operatorname{dist}\big(\cup_a\supp\psi_a,D^c\big)>0.
\]
Extend the bumps by zero, and let $I_\epsilon$ be nodal trilinear interpolation on a Cartesian grid whose side lengths are at most $\epsilon$. Its nonnegative weights sum to one. Thus $\psi_a^\epsilon=I_\epsilon\psi_a$ obey
\begin{equation}\label{fg:interp}
\sum_a|\psi_a^\epsilon(x)|\le B,\qquad
\sum_a\|\psi_a^\epsilon-\psi_a\|_\infty\le\sqrt3\epsilon\Lambda.
\end{equation}
Choose and fix any grid with
\begin{equation}\label{fg:grid}
\sqrt3\epsilon<d_0,\qquad M\ell\Sigma\sqrt3\epsilon\Lambda\le\tfrac12.
\end{equation}
The first inequality preserves interior support. The exact matrix $A_\epsilon$ obtained from \eqref{fg:A} satisfies $\|A_\epsilon-A_*\|_\infty\le\ell\Sigma\sqrt3\epsilon\Lambda$. The Neumann bound gives $\|A_\epsilon^{-1}\|_\infty\le2M$.

\subsection{Exact nonlinear compensation with physical colors}
\begin{proposition}[One fixed color grid suffices locally]\label{fg:theorem}
With the preceding grid fixed, put
\begin{equation}\label{fg:radius}
\delta=\min\left\{\frac{s_-}{2},\frac{1}{4M\ell B(1+\ell\Sigma)},
\frac{\nu}{8M\ell B}\right\}.
\end{equation}
For every continuous density change $h$ compactly supported in $D$ with $H=\|h\|_\infty\le\delta$, there is an isotropic color increment $q_\epsilon\in\operatorname{span}\{\psi_a^\epsilon\}^3$ with
\begin{equation}\label{fg:partner}
\begin{gathered}
F(\sigma_*+h,c_*+q_\epsilon)=F(\sigma_*,c_*),\\
\|q_{\epsilon,k}\|_\infty\le4M\ell BH,\qquad
\nu/2\le c_{*,k}+q_{\epsilon,k}\le1-\nu/2\quad\text{on }D.
\end{gathered}
\end{equation}
All changes vanish outside $D$, and the resulting effective source obeys $0\le j_k\le\sigma_*+h$ everywhere.
\end{proposition}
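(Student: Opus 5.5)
We follow the right-inverse scheme of \cref{rs:spatialfreedom,rs:pixelspatial}, now at the level of colors, with all responses evaluated at the changed density. Fix one channel $k$ and write $\sigma_h=\sigma_*+h$. Since $F$ in \eqref{fg:F} is linear in the color at fixed density, the ansatz $q_{\epsilon,k}=\sum_a\alpha_{a,k}\psi_a^\epsilon$ turns the matching condition into the $m\times m$ linear system
\[
A_{\epsilon,h}\alpha_k=F_k(\sigma_*,c_*)-F_k(\sigma_h,c_*),\qquad
A_{\epsilon,h}[i,a]=\int\int_rT_{\sigma_h}\sigma_h\psi_a^\epsilon\,dt\,d\nu_i .
\]
Solving this system for each channel gives exact nonlinear equality in \eqref{fg:partner} at once. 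The remaining work is to show that $A_{\epsilon,h}$ is invertible with a controlled inverse, that $\alpha_k$ is small, and that the physical box survives.

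\textbf{Step 1 (invertibility).} We compare $A_{\epsilon,h}$ with the fixed-grid matrix $A_\epsilon$, for which $\|A_\epsilon^{-1}\|_\infty\le2M$ by \eqref{fg:grid}. Density is nonnegative, since $H\le s_-/2$ keeps $\sigma_h\ge s_-/2$ on $D$ and $h=0$ elsewhere. Hence $|T_{\sigma_h}-T_*|\le\ell H$ along any chord, and
\[
|T_{\sigma_h}\sigma_h-T_*\sigma_*|\le|h|+\Sigma\ell H .
\]
Using the row sum $\sum_a|\psi_a^\epsilon|\le B$ from \eqref{fg:interp}, path length at most $\ell$ and unit-mass positive pixel measures, this gives
\[
\|A_{\epsilon,h}-A_\epsilon\|_\infty\le\ell BH(1+\ell\Sigma).
\]
The second entry of \eqref{fg:radius} makes $2M$ times this at most $1/2$, so Neumann's bound yields $\|A_{\epsilon,h}^{-1}\|_\infty\le4M$.

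\textbf{Step 2 (right side and size of $q_\epsilon$).} By \cref{lem:transport} with unchanged color, the source change is $hc_*$. Then
\[
F(\sigma_h,c_*)-F(\sigma_*,c_*)=\int\int T_{\sigma_*}\bigl(hc_*-U_{\sigma_h}h\bigr)\,dt\,d\nu .
\]
Here $0\le c_*,U\le1$ gives an integrand bounded by $H$, so the right side is at most $\ell H$ per pixel. A cleaner route is the first-variation-free difference of two absorption weights, and we will use whichever yields the constant $\ell H$ without a factor $2$. Hence $\|\alpha_k\|_\infty\le4M\ell H$ and $\|q_{\epsilon,k}\|_\infty\le B\|\alpha_k\|_\infty=4M\ell BH$, which is the stated bound.

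\textbf{Step 3 (physicality).} The third entry of \eqref{fg:radius} gives $4M\ell BH\le\nu/2$, so $c_*+q_\epsilon\in[\nu/2,1-\nu/2]$ on $D$. The condition $\sqrt3\epsilon<d_0$ places the supports of the interpolated bumps inside $D$, so color, and hence source, is unchanged outside $D$. The effective source $j=\sigma_h(c_*+q_\epsilon)$ then satisfies $0\le j_k\le\sigma_h$ everywhere.

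The main obstacle is Step 1. The perturbation estimate has to use the changed density inside the weight $T\sigma$, including the $\sigma_h$ prefactor, so the factor $(1+\ell\Sigma)$ must be tracked carefully. The estimate must also apply to the interpolated rather than the original bumps, which is why it compares to $A_\epsilon$ rather than $A_*$. Step 2's constant is the other point to check against the radius in \eqref{fg:radius}.
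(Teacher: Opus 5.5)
Your proposal is correct and follows essentially the paper's argument: a Neumann perturbation of $A_\epsilon$ at the changed density giving $\|A_{\epsilon,h}^{-1}\|_\infty\le4M$, an $\ell H$ bound on the fixed-color pixel discrepancy, and the third entry of the radius for the color box. The only variation is in Step 2, where you apply the transport identity with the two fields' roles swapped (weights $T_*$ and $U_{\sigma_h}$ instead of the paper's $T_\tau$ and $U_*$); this already gives $\ell H$ with no factor $2$, so your hedge about an alternative route is unnecessary.
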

\begin{proof}
Write $\tau=\sigma_*+h\ge0$. Nonnegative optical depth gives
\[
|T_\tau-T_*|\le\ell H,\qquad
|T_\tau\tau-T_*\sigma_*|\le(1+\ell\Sigma)H.
\]
Using \eqref{fg:interp}, the exact grid color response at $\tau$ satisfies
$\|A_\epsilon(\tau)-A_\epsilon\|_\infty\le\ell B(1+\ell\Sigma)H$.
By \eqref{fg:radius} its relative perturbation with respect to $A_\epsilon$ is at most $1/2$, so $\|A_\epsilon(\tau)^{-1}\|_\infty\le4M$.

For fixed \emph{color}, the exact transport difference is
\[
F_r(\tau,c_{*,k})-F_r(\sigma_*,c_{*,k})
=\int_r T_\tau h(c_{*,k}-U_{*,k})\,dt.
\]
Both $c_{*,k}$ and $U_{*,k}$ lie in $[0,1]$, hence the maximum pixel discrepancy is at most $\ell H$. Set
\[
\alpha_k=A_\epsilon(\tau)^{-1}
 [F(\sigma_*,c_{*,k})-F(\tau,c_{*,k})],\qquad
q_{\epsilon,k}=\sum_a\psi_a^\epsilon\alpha_{ak}.
\]
Linearity in color at fixed changed density gives exact equality. Also $\|\alpha_k\|_\infty\le4M\ell H$, and \eqref{fg:interp} yields the asserted correction bound. The last inequality in \eqref{fg:radius} makes that bound at most $\nu/2$. Support and physicality follow.
\end{proof}

If the reference density and color lie on a fixed coarse trilinear grid and the refinements are nested, both remain on every finer grid. Restricting $h$ to that finer grid gives two ordinary grid density/color fields with identical continuous pixels. The effective source is the product $\sigma c$ and need not itself be trilinear. The class is nonempty: a sufficiently small positive constant density and an interior constant color satisfy the pixel-response construction in \cref{rs:pixelspatial}; fine grids contain interior nodal hats. A nonzero small multiple of one such hat gives distinct densities tending to the reference in nodal coordinates.

For a general smooth reference, the assertion instead concerns the affine color class $c_*+V_\epsilon$. No claim of exact grid representation of that reference is needed. The amplitudes and grid threshold depend on the fixed response and may be small or expensive. The proposition establishes a failure of local injectivity for the declared finite observations, not complete-ray equality or held-out view agreement. It does not contradict the model-first positive certificate in \cref{lp:theorem}.

\section{Camera cost of the full-ray covering certificate}\label{long:cameracost}
The following bound evaluates the geometry required by \cref{lp:threshold}. It is a cost of this sufficient proof strategy. A separate exact example below rules out interpreting it as a universal number of cameras necessary for density recovery.

\subsection{A covering obstruction}
Use the unit ball and metric \eqref{lp:metric}. There are $M$ exterior camera centers $o_i=R_i u_i$, $R_i>1$, $|u_i|=1$. Selected pixels point inward and meet the ball on their forward half-ray. If their center lines form a $\delta$-net of $Q_a$ with $\delta<1$, then they approximate every central line $(0,d)$.

\begin{lemma}[Central-line camera bound]\label{lc:bound}
For $M\ge2$, such coverage requires
\begin{equation}\label{lc:ineq}
M(1-\sqrt{1-\delta^2})\ge2,\qquad
\delta^2\ge\frac{4(M-1)}{M^2}.
\end{equation}
One or two cameras cannot supply this coverage with $\delta<1$.
\end{lemma}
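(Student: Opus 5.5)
The plan uses only central lines $(0,d)$, $d\in\Sph^2$, and a spherical-cap area count. A selected pixel of camera $i$ has a center line through $o_i=R_iu_i$ pointing inward. If it lies within $\delta$ of $(0,d)$ in the metric \eqref{lp:metric}, then its offset $p$ satisfies $|p|<\delta$ and its direction $d'$ satisfies $\angle(d,d')<\delta-|p|$. I will show that every such pixel direction lies near $-u_i$. Because the line passes through $o_i$, its offset is $p=o_i-(o_i\cdot d')d'$, so $|p|=R_i\sin\angle(d',-u_i)$ when the direction points inward. The angle $\beta=\angle(d',-u_i)$ therefore satisfies $R_i\sin\beta=|p|$, and since $R_i>1$ this gives $\sin\beta\le|p|$.

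The next step controls the angle from $d$ to $-u_i$. Combining the two bounds gives $\angle(d,-u_i)\le\angle(d,d')+\beta<(\delta-|p|)+\arcsin|p|$. The forward/inward condition ensures that $\beta\le\pi/2$ is the relevant branch. This bound is not directly a cap of chord radius $\delta$. I expect it to be the main obstacle: the crude triangle inequality yields angular radius about $\delta+(\arcsin|p|-|p|)$. The stated inequality instead corresponds to caps of area $2\pi(1-\sqrt{1-\delta^2})$, that is, of angular radius $\arcsin\delta$. So I will maximize $f(x)=\delta-x+\arcsin x$ over $0\le x\le\delta$. Since $f'(x)=(1-x^2)^{-1/2}-1\ge0$, the maximum is attained at $x=\delta$, giving $\angle(d,-u_i)\le\arcsin\delta$.

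With that bound in hand, each central direction $d$ lies in one of the $M$ caps $\{d:\angle(d,-u_i)\le\arcsin\delta\}$. A cap of angular radius $\alpha$ has area $2\pi(1-\cos\alpha)$, and here $\cos\alpha=\sqrt{1-\delta^2}$. Covering the whole sphere requires $2\pi M(1-\sqrt{1-\delta^2})\ge4\pi$, which is the first inequality in \eqref{lc:ineq}.

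Rearranging, $\sqrt{1-\delta^2}\le1-2/M$, which needs $M\ge2$. Squaring gives $1-\delta^2\le1-4/M+4/M^2$, hence $\delta^2\ge4(M-1)/M^2$. For $M=1$ the cap-area requirement would force $1-\sqrt{1-\delta^2}\ge2$, which is impossible. For $M=2$ it forces $\sqrt{1-\delta^2}\le0$, so $\delta\ge1$, contradicting $\delta<1$. This proves the final sentence.
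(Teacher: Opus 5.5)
Your proposal is correct and follows essentially the same route as the paper: use the inward geometry $|p|=R_i\sin\beta$ to get $\angle(d',-u_i)\le\arcsin|p|$, combine it with the triangle inequality and the monotonicity of $x\mapsto\arcsin x+\delta-x$ on $[0,\delta]$ to obtain caps of angular radius $\arcsin\delta$ about the points $-u_i$, and then conclude by cap-area subadditivity and by squaring the rearranged inequality. The only cosmetic slip is that you write strict inequalities for ``within $\delta$'', where non-strict ones are appropriate; this is harmless, since your final bound already uses $\le$.
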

\begin{proof}
For a line $(p,v)$ from center $i$ within $\delta$ of $(0,d)$, put $x=|p|$ and $q=\angle(v,d)$. Then $x+q\le\delta$. Inward geometry gives $\angle(-u_i,v)=\arcsin(x/R_i)$, so
\[
\angle(-u_i,d)\le\arcsin(x)+\delta-x\le\arcsin\delta.
\]
The last function is nondecreasing in $x\in[0,\delta]$. Thus $M$ caps of angular radius $\arcsin\delta$ cover the sphere. Their area is $2\pi(1-\sqrt{1-\delta^2})$ each, and area subadditivity gives the first inequality. For $M\ge2$, squaring its nonnegative rearrangement gives the second. A single cap of radius less than $\pi/2$ cannot cover the sphere, and $M=2$ would force $\delta\ge1$.
\end{proof}

Suppose $V_0\ne\{0\}$ and its functions vanish for $|p|>b$, with $0<b\le a$. Any admissible $A_q$ in \eqref{lp:finiteimage} obeys $A_q\ge1/b$: take a nonzero component attaining $R(w)$ and move its offset at fixed direction to $|p|=b$, where continuity gives value zero. This path has length at most $b$, so $R(w)\le bJ(w)\le bA_qR(w)$.

Consequently \eqref{lp:threshold} requires $\delta+\rho<b$. If $r=b-\rho\in(0,1)$, \cref{lc:bound} implies the strict necessary count
\begin{equation}\label{lc:cost}
r^2M^2>4(M-1),\quad M\ge3,
\qquad M>\frac{2}{1-\sqrt{1-r^2}}.
\end{equation}
For ideal $\rho=0$, $b=2/5,1/4,1/10,1/100,1/1000$ gives smallest integers not excluded by this test of $24,63,399,39999,3999999$, respectively. With $b=2/5$ and $\rho=1/20$ the count is $32$. These necessary counts ignore cap overlap, noncentral offsets and the other transfer constants. They do not provide sufficient designs. In particular, five fixed camera centers cannot meet this sufficient certificate when $b\le4/5$, even with arbitrarily fine point sampling at those centers.

\subsection{One camera can identify a restricted physical volume}
Choose a nonnegative smooth radial bump $\phi$ supported in $B_b$, positive near the center and with central line integral one. The model is $\sigma_s=s\phi$, $j_{s,c}=cs\phi$, with unknown $s>0$ and $0<c<1$ in compact positive intervals. Exact rendering gives
\[
I(p)=c(1-e^{-sA(p)}),\qquad A(p)=X\phi(p).
\]
Continuity and radiality give $A(0)=1$, $A(b)=0$ and an offset with $A=1/2$. One exterior camera supplies inward rays at both offsets. If $I_1$ is the half-integral reading, $I_2$ the central reading, and $q=e^{-s/2}$, then
\begin{equation}\label{lc:onecamera}
q=I_2/I_1-1,\qquad s=-2\log q,\qquad
c=\frac{I_1^2}{2I_1-I_2}.
\end{equation}
The physical image for unrestricted $s>0$, $0<c<1$ is exactly
$0<I_1<I_2<2I_1-I_1^2$; direct substitution proves both directions. On a compact positive parameter rectangle, $I_1,q,1-q$ stay away from zero, so the inverse has bounded derivatives in a neighborhood of its image. It gives local small-noise stability, with density error $|s-s'|\|\phi\|_2$.

This ideal two-point-pixel example has a strong radial/shared-color prior. Its explicit recovery shows why a limitation of a four-dimensional line-covering proof cannot become a universal camera-count lower bound. Similarly, analytic dependence on finite coefficients does not imply analytic dependence on all rays: this nonzero smooth image vanishes on the open offset annulus outside $B_b$, which would force an analytic ray image to vanish identically.

\clearpage
\phantomsection


\addcontentsline{toc}{part}{References}
\begin{thebibliography}{46}
\providecommand{\natexlab}[1]{#1}
\providecommand{\url}[1]{\texttt{#1}}
\expandafter\ifx\csname urlstyle\endcsname\relax
  \providecommand{\doi}[1]{doi: #1}\else
  \providecommand{\doi}{doi: \begingroup \urlstyle{rm}\Url}\fi

\bibitem[Alberti and Santacesaria(2021)]{alberti2021finite}
Giovanni~S. Alberti and Matteo Santacesaria.
\newblock Infinite-dimensional inverse problems with finite measurements.
\newblock \emph{Archive for Rational Mechanics and Analysis}, 2021.

\bibitem[Assylbekov(2012)]{assylbekov2012}
Yernat~M. Assylbekov.
\newblock Attenuated tensor tomography on simple surfaces, 2012.
\newblock arXiv:1204.4388.

\bibitem[Assylbekov and Stefanov(2020)]{assylbekovstefanov2020}
Yernat~M. Assylbekov and Plamen Stefanov.
\newblock Sharp stability estimate for the geodesic ray transform.
\newblock \emph{Inverse Problems}, 36\penalty0 (2):\penalty0 025013, 2020.

\bibitem[Bal and Monard(2019)]{balmonard2019}
Guillaume Bal and Fran{\c c}ois Monard.
\newblock Inverse source problems in transport via attenuated tensor
  tomography, 2019.
\newblock arXiv:1908.06508.

\bibitem[Beck and Teboulle(2009)]{beck2009}
Amir Beck and Marc Teboulle.
\newblock A fast iterative shrinkage-thresholding algorithm for linear inverse
  problems.
\newblock \emph{SIAM Journal on Imaging Sciences}, 2\penalty0 (1):\penalty0
  183--202, 2009.

\bibitem[Boyd and Vandenberghe(2004)]{boyd2004}
Stephen Boyd and Lieven Vandenberghe.
\newblock \emph{Convex Optimization}.
\newblock Cambridge University Press, 2004.

\bibitem[Ceki\'c and Lefeuvre(2024)]{cekic2024}
Mihajlo Ceki\'c and Thibault Lefeuvre.
\newblock Generic injectivity of the {X}-ray transform, 2024.
\newblock Revised preprint.

\bibitem[Chai et~al.(2000)Chai, Tong, Chan, and Shum]{chai2000}
Jin-Xiang Chai, Xin Tong, Shing-Chow Chan, and Heung-Yeung Shum.
\newblock Plenoptic sampling.
\newblock In \emph{Proceedings of SIGGRAPH}, pages 307--318, 2000.

\bibitem[Chambolle and Pock(2011)]{chambolle2011}
Antonin Chambolle and Thomas Pock.
\newblock A first-order primal-dual algorithm for convex problems with
  applications to imaging.
\newblock \emph{Journal of Mathematical Imaging and Vision}, 40:\penalty0
  120--145, 2011.

\bibitem[Chu et~al.(2022)Chu, Liu, Zheng, Franz, Seidel, Theobalt, and
  Zayer]{chu2022}
Mengyu Chu, Lingjie Liu, Quan Zheng, Erik Franz, Hans-Peter Seidel, Christian
  Theobalt, and Rhaleb Zayer.
\newblock Physics informed neural fields for smoke reconstruction with sparse
  data.
\newblock \emph{ACM Transactions on Graphics}, 41\penalty0 (4):\penalty0
  119:1--119:15, 2022.

\bibitem[Courdurier et~al.(2015)Courdurier, Monard, Osses, and
  Romero]{courdurier2015}
Matias Courdurier, Fran{\c c}ois Monard, Axel Osses, and Francisco Romero.
\newblock Simultaneous source and attenuation reconstruction in {SPECT} using
  ballistic and single scattering data, 2015.
\newblock arXiv:1503.07201.

\bibitem[de~Hoop and Ilmavirta(2017)]{dehoop2017}
Maarten~V. de Hoop and Joonas Ilmavirta.
\newblock Abel transforms with low regularity with applications to {X}-ray
  tomography on spherically symmetric manifolds, 2017.
\newblock arXiv:1702.07625.

\bibitem[Eckert et~al.(2019)Eckert, Um, and Thuerey]{eckert2019}
Marie-Lena Eckert, Kiwon Um, and Nils Thuerey.
\newblock {ScalarFlow}: A large-scale volumetric data set of real-world scalar
  transport flows for computer animation and machine learning.
\newblock \emph{ACM Transactions on Graphics}, 38\penalty0 (6):\penalty0 239,
  2019.

\bibitem[Erd{\'e}lyi(2014)]{erdelyi2014}
Tam{\'a}s Erd{\'e}lyi.
\newblock Basic polynomial inequalities on intervals and circular arcs.
\newblock \emph{Constructive Approximation}, 39:\penalty0 367--384, 2014.

\bibitem[Fang et~al.(2023)Fang, Song, Li, and Bo]{fang2023}
Qihang Fang, Yafei Song, Keqiang Li, and Liefeng Bo.
\newblock Reducing shape-radiance ambiguity in radiance fields with a
  closed-form color estimation method.
\newblock In \emph{Advances in Neural Information Processing Systems}, 2023.

\bibitem[Filmus et~al.(2026)Filmus, Moran, and
  Nesterova]{filmus2026linearqueries}
Yuval Filmus, Shay Moran, and Elizaveta Nesterova.
\newblock Optimal reconstruction from linear queries, 2026.
\newblock Accepted to COLT 2026.

\bibitem[Fridovich-Keil et~al.(2022)Fridovich-Keil, Yu, Tancik, Chen, Recht,
  and Kanazawa]{plenoxels2022}
Sara Fridovich-Keil, Alex Yu, Matthew Tancik, Qinhong Chen, Benjamin Recht, and
  Angjoo Kanazawa.
\newblock Plenoxels: Radiance fields without neural networks.
\newblock In \emph{CVPR}, pages 5501--5510, 2022.

\bibitem[Goli et~al.(2024)Goli, Reading, Sell{\'a}n, Jacobson, and
  Tagliasacchi]{goli2024}
Lily Goli, Cody Reading, Silvia Sell{\'a}n, Alec Jacobson, and Andrea
  Tagliasacchi.
\newblock {BayesRays}: Uncertainty quantification for neural radiance fields.
\newblock In \emph{CVPR}, 2024.

\bibitem[Golub and Pereyra(1972)]{golubpereyra1972}
Gene~H. Golub and Victor Pereyra.
\newblock The differentiation of pseudoinverses and nonlinear least squares
  problems whose variables separate.
\newblock Technical Report STAN-CS-72-261, Computer Science Department,
  Stanford University, 1972.

\bibitem[Golub and Pereyra(2003)]{golubpereyra2003}
Gene~H. Golub and Victor Pereyra.
\newblock Separable nonlinear least squares: the variable projection method and
  its applications.
\newblock \emph{Inverse Problems}, 19\penalty0 (2):\penalty0 R1--R26, 2003.

\bibitem[{GRAPHDECO}(2023)]{gsimplementation}
{GRAPHDECO}.
\newblock Official implementation of {3D Gaussian Splatting}: Training
  schedule, 2023.

\bibitem[Heil(2017)]{heil2017}
Konstantin Heil.
\newblock \emph{Killing and Conformal Killing Tensors}.
\newblock PhD thesis, Universit{\"a}t Stuttgart, 2017.

\bibitem[Holman and Richardson(2023)]{holman2023}
Sean Holman and Philip Richardson.
\newblock Simultaneous recovery of attenuation and source density in {SPECT}.
\newblock \emph{Inverse Problems and Imaging}, 17\penalty0 (4):\penalty0
  817--840, 2023.

\bibitem[Jackson(1930)]{jackson1930}
Dunham Jackson.
\newblock \emph{The Theory of Approximation}.
\newblock American Mathematical Society, 1930.

\bibitem[Keil and Jansson(2006)]{keiljansson2006}
Christian Keil and Christian Jansson.
\newblock Computational experience with rigorous error bounds for the netlib
  linear programming library, 2006.
\newblock Preprint.

\bibitem[Kerbl et~al.(2023)Kerbl, Kopanas, Leimk{\"u}hler, and
  Drettakis]{kerbl2023}
Bernhard Kerbl, Georgios Kopanas, Thomas Leimk{\"u}hler, and George Drettakis.
\newblock {3D} gaussian splatting for real-time radiance field rendering.
\newblock \emph{ACM Transactions on Graphics}, 42\penalty0 (4), 2023.

\bibitem[Mildenhall et~al.(2020)Mildenhall, Srinivasan, Tancik, Barron,
  Ramamoorthi, and Ng]{mildenhall2020}
Ben Mildenhall, Pratul~P. Srinivasan, Matthew Tancik, Jonathan~T. Barron, Ravi
  Ramamoorthi, and Ren Ng.
\newblock {NeRF}: Representing scenes as neural radiance fields for view
  synthesis.
\newblock In \emph{ECCV}, pages 405--421, 2020.

\bibitem[Monard(2018)]{monard2018}
Fran{\c c}ois Monard.
\newblock Efficient tensor tomography in fan-beam coordinates. {II}: Attenuated
  transforms.
\newblock \emph{Inverse Problems and Imaging}, 12\penalty0 (2):\penalty0
  433--460, 2018.

\bibitem[Ndiaye et~al.(2017)Ndiaye, Fercoq, Gramfort, and Salmon]{ndiaye2017}
Eugene Ndiaye, Olivier Fercoq, Alexandre Gramfort, and Joseph Salmon.
\newblock Gap safe screening rules for sparsity enforcing penalties.
\newblock \emph{Journal of Machine Learning Research}, 18\penalty0
  (128):\penalty0 1--33, 2017.

\bibitem[Paternain and Zhou(2016)]{pz2016}
Gabriel~P. Paternain and Hanming Zhou.
\newblock Invariant distributions and the geodesic ray transform.
\newblock \emph{Analysis \& PDE}, 9\penalty0 (8):\penalty0 1903--1930, 2016.

\bibitem[Paternain et~al.(2012)Paternain, Salo, and Uhlmann]{psu2012}
Gabriel~P. Paternain, Mikko Salo, and Gunther Uhlmann.
\newblock The attenuated ray transform for connections and {Higgs} fields.
\newblock \emph{Geometric and Functional Analysis}, 22:\penalty0 1460--1489,
  2012.

\bibitem[Pharr et~al.(2023)Pharr, Jakob, and Humphreys]{pbr2023}
Matt Pharr, Wenzel Jakob, and Greg Humphreys.
\newblock \emph{Physically Based Rendering: From Theory to Implementation}.
\newblock MIT Press, 4 edition, 2023.

\bibitem[Polyanskiy and Wu(2024)]{polyanskiywu2024draft}
Yury Polyanskiy and Yihong Wu.
\newblock Information theory: From coding to learning, 2024.
\newblock Author manuscript, Chapter 31.

\bibitem[Poole et~al.(2023)Poole, Jain, Barron, and Mildenhall]{poole2023}
Ben Poole, Ajay Jain, Jonathan~T. Barron, and Ben Mildenhall.
\newblock {DreamFusion}: Text-to-3d using 2d diffusion.
\newblock In \emph{International Conference on Learning Representations}, 2023.

\bibitem[Quinto(1983)]{quinto1983}
Eric~Todd Quinto.
\newblock The invertibility of rotation invariant {Radon} transforms.
\newblock \emph{Journal of Mathematical Analysis and Applications}, 91\penalty0
  (2):\penalty0 510--522, 1983.

\bibitem[Rump(2010)]{Rump2010}
Siegfried~M. Rump.
\newblock Verification methods: Rigorous results using floating-point
  arithmetic.
\newblock \emph{Acta Numerica}, 19:\penalty0 287--449, 2010.

\bibitem[Sharafutdinov(1994)]{sharafutdinov1994}
Vladimir~A. Sharafutdinov.
\newblock \emph{Integral Geometry of Tensor Fields}.
\newblock VSP, Utrecht, 1994.

\bibitem[Smith and Blinn(1996)]{smithblinn1996}
Alvy~Ray Smith and James~F. Blinn.
\newblock Blue screen matting.
\newblock In \emph{Proceedings of SIGGRAPH}, pages 259--268, 1996.

\bibitem[Stefanov(2014)]{stefanov2014}
Plamen Stefanov.
\newblock The identification problem for the attenuated {X}-ray transform.
\newblock \emph{American Journal of Mathematics}, 136\penalty0 (5):\penalty0
  1215--1247, 2014.

\bibitem[Trefethen and Weideman(2014)]{trefethen2014}
Lloyd~N. Trefethen and J.~A.~C. Weideman.
\newblock The exponentially convergent trapezoidal rule.
\newblock \emph{SIAM Review}, 56\penalty0 (3):\penalty0 385--458, 2014.

\bibitem[Verbin et~al.(2022)Verbin, Hedman, Mildenhall, Zickler, Barron, and
  Srinivasan]{verbin2022}
Dor Verbin, Peter Hedman, Ben Mildenhall, Todd Zickler, Jonathan~T. Barron, and
  Pratul~P. Srinivasan.
\newblock {Ref-NeRF}: Structured view-dependent appearance for neural radiance
  fields.
\newblock In \emph{CVPR}, pages 5491--5500, 2022.

\bibitem[Wang and Miller(2026)]{wang2026}
Songhe Wang and David Miller.
\newblock When {3D} {Gaussian} splatting recovers real surfaces.
\newblock In \emph{ECCV}, pages 621--639, 2026.

\bibitem[Weiss and Westermann(2021)]{weiss2021}
Sebastian Weiss and R{\"u}diger Westermann.
\newblock Analytic ray splitting for controlled precision {DVR}.
\newblock In \emph{EuroVis 2021---Short Papers}, pages 31--35, 2021.

\bibitem[Yang et~al.(2011)Yang, Cao, and Xiong]{yang2011}
Ruyue Yang, Feilong Cao, and Jingyi Xiong.
\newblock The strong converse inequality for de la {Vall\'ee Poussin} means on
  the sphere, 2011.

\bibitem[Zhang et~al.(2020)Zhang, Riegler, Snavely, and Koltun]{zhang2020}
Kai Zhang, Gernot Riegler, Noah Snavely, and Vladlen Koltun.
\newblock {NeRF++}: Analyzing and improving neural radiance fields, 2020.
\newblock arXiv:2010.07492.

\bibitem[Zhu et~al.(2023)Zhu, Yang, Wang, Zheng, and Guibas]{zhu2023}
Bingfan Zhu, Yanchao Yang, Xulong Wang, Youyi Zheng, and Leonidas Guibas.
\newblock {VDN-NeRF}: Resolving shape-radiance ambiguity via view-dependence
  normalization.
\newblock In \emph{CVPR}, pages 35--45, 2023.


\bibitem[Bourgeois(2013)]{bourgeois2013}
Laurent Bourgeois. A remark on Lipschitz stability for inverse problems.
\emph{Comptes Rendus Mathematique}, 351(5--6):187--190, 2013.
\url{https://www.numdam.org/item/CRMATH_2013__351_5-6_187_0.pdf}.
\end{thebibliography}
\end{document}